\newtheorem{property}{Property}
\newtheorem{lemma}{Lemma}
\newtheorem{theorem}[lemma]{Theorem}
\newtheorem{definition}[lemma]{Definition}
\newtheorem{corollary}[lemma]{Corollary}
\newtheorem{claim}[lemma]{Claim}
\newcommand{\etal}{et al.\ }
\newcommand{\predm}{\hat{m}}
\newcommand{\predI}{\hat{I}}
\newcommand{\predG}{\hat{G}}
\newcommand{\predw}{\hat{\alpha}}
\newcommand{\predx}{\hat{x}}
\newcommand{\predA}{\widehat{Alloc}}
\newcommand{\learnA}{\hat{A}}
\newcommand{\cA}{{\cal A}}
\newcommand{\cD}{{\cal D}}
\newcommand{\cG}{{\cal G}}
\newcommand{\cI}{{\cal I}}
\newcommand{\cS}{{\cal S}}
\newcommand{\cW}{{\cal W}}
\newcommand{\g}{{\cal G}}
\newcommand{\C}{{\cal C}}
\newcommand{\D}{{\cal D}}
\newcommand{\E}{\mathbb{E}}
\newcommand{\R}{\mathbb{R}}
\newcommand{\Z}{\mathbb{Z}}
\newcommand{\bfo}{\mathbf{1}}
\newcommand{\alg}{\mathrm{ALG}}
\newcommand{\opt}{\textsc{OPT}{}}
\newcommand{\eps}{\epsilon}
\newcommand{\Gr}{\tilde{G}}
\newcommand{\Vr}{\tilde{V}}
\newcommand{\Er}{\tilde{E}}
\newcommand{\vr}{\tilde{v}}
\newcommand{\ur}{\tilde{u}}
\DeclareMathOperator{\val}{Val}
\DeclareMathOperator{\obj}{Obj}
\newcommand{\true}{\mathrm{True}}
\newcommand{\false}{\mathrm{False}}
\newcommand{\cond}{\mathbf{Cond}}
\newcommand{\poly}{\mathrm{poly}}
\newcommand{\lev}{\mathrm{Lev}}
\begin{document}

\title{Learnable and Instance-Robust Predictions for Online Matching, Flows and Load Balancing} 

\author{Thomas Lavastida\thanks{Carnegie Mellon University. 5000 Forbes Ave, Pittsburgh, PA 15213. Email: \{tlavasti, moseleyb\}@andrew.cmu.edu. Supported in part by a Google Research Award, an Infor Research Award, a Carnegie Bosch Junior Faculty Chair and NSF grants CCF-1824303,  CCF-1845146, CCF-1733873 and CMMI-1938909. }  \and Benjamin Moseley\footnotemark[1] \and R. Ravi\thanks{Carnegie Mellon University. 5000 Forbes Ave, Pittsburgh, PA 15213. Email: ravi@andrew.cmu.edu. Supported in part by the U. S. Office of Naval Research under award number N00014-21-1-2243 and the Air Force Office of Scientific Research under award number FA9550-20-1-0080.} \and Chenyang Xu\thanks{Zhejiang University. Hangzhou, Zhejiang, China 310007. xcy1995@zju.edu.cn.  Chenyang Xu is the corresponding author. Supported in part by Science and Technology Innovation 2030 –"The Next Generation of Artificial Intelligence" Major Project No.2018AAA0100902 and China Scholarship Council No.201906320329.}}
\date{}
\maketitle


\begin{abstract}

We propose a new model for augmenting algorithms with predictions by requiring that they are formally learnable and instance robust. 
Learnability ensures that predictions can be efficiently constructed from 
a reasonable amount of past data.  
Instance robustness ensures that the prediction is robust to modest changes in the problem input, where the measure of the change  may be problem specific. Instance robustness insists on a smooth degradation in performance as a function of the change. Ideally, the performance is never  worse than worst-case bounds. This also allows predictions to be  objectively compared.


We design online algorithms with predictions for a network flow allocation problem and restricted assignment makespan minimization. For both problems, two key properties are established:  high quality predictions can be learned from a small sample of prior instances and  these predictions are robust to errors that smoothly degrade as the underlying problem instance changes.

\end{abstract} 

\clearpage
\setcounter{page}{1}
\newpage

\section{Introduction}

Inspired by advances in machine learning, there is an interest in augmenting algorithms with predictions, especially in online algorithm design~\cite{DBLP:journals/siamcomp/GuptaR17,DBLP:conf/icml/BalcanDSV18,DBLP:conf/nips/BalcanDW18,ChawlaGTTZ19,Kraska,MLCachingLykouris,PurohitNIPS,DBLP:conf/soda/LattanziLMV20}. Algorithms augmented with predictions have had  empirical success in domains such as look-up tables~\cite{Kraska}, caching~\cite{MLCachingLykouris}, and bloom-filters~\cite{Mitzenmacher}.
These successes and the availability of data to make predictions using machine learning have motivated the development of new analysis models for going beyond worst-case bounds where an algorithm is supplied with accurate predictions.
In these models, an algorithm is given access to a prediction about the problem instance.  
The algorithm's performance is bounded in terms of the quality of this prediction.
Typically the algorithm learns such predictions from a limited amount of past data leading to  \emph{error-prone} predictions.
The algorithm with accurate predictions should result in better performance than the best worst-case bound. Ideally, the algorithm never performs asymptotically worse than the best worst-case algorithm even if the prediction error is large.  In-between,  there is a graceful degradation in performance as the prediction error increases. For example, competitive ratio or running time can be parameterized by prediction error. See \cite{MitzenmacherVassilvitskii} for a survey.


\subparagraph*{Learnable and Instance Robust Predictions:}  

The model proposed in this paper has two pillars for augmenting algorithms with predictions. (\textbf{Learnability}:) Predictions should be learnable from representative data.  (\textbf{Instance Robustness}\footnote{Note that the robustness here is different from the definition of robustness mentioned in previous work, which we refer to as parameter robustness. 
See Section~\ref{sec:model} for a discussion.
}:) Predictions should be robust to minor changes in the problem instance. As in prior models, determining what to predict remains a key algorithmic challenge.

Suppose  there is an unknown distribution $\cD$ over instances $\cI$.  Building on  data driven algorithm design~\cite{DBLP:journals/siamcomp/GuptaR17} and PAC-learning models, we require that predicted parameters are provably learnable  using a small number of sample instances from $\cD$.  The sample complexity of this task can be used to compare how difficult different predictions are to construct.  Practically, the motivation is that parameters are learned from  prior data (e.g.  instances of the problem). 

In practice, future problem instances may not come from the same distribution used to learn the parameters.
Therefore, we also desire predictions that are robust to modest changes in the input. 
In particular, if the predictions perform well on some instance $\cI$, then the performance on a nearby instance $\cI'$ should be bounded as a function of the distance between these instances.  This measure of the distance between instances is necessarily problem specific.
We note that learnability is rarely addressed in prior work and our robustness model differs from many prior works by bounding the error by differences in problem instances (instance robustness), rather than by the differences in the predictions themselves (parameter robustness).
We present learnable and instance-robust predictions for two concrete online problems.

\subparagraph*{Online Flow Allocation Problem:} We consider a general flow and matching problem.   The input is a Directed-Acyclic-Graph (DAG) $G$. Each node $v$  has an associated capacity $C_v$.  There is a sink node $t$, such that all nodes in the DAG can reach the sink.
Online source nodes arrive that have no incoming edges (and  never have any incoming edges in the future) and the other nodes are offline and fixed.  We will refer to online nodes $I$ as impressions.  
When impression $i$ arrives, it is connected to a subset of nodes $N_i$.  At arrival, the algorithm must decide a (fractional) flow from $i$ to $t$ of value at most 1 obeying the node capacities.  This flow is fixed the moment the node arrives.  The goal is to maximize the total flow that reaches $t$ without exceeding node capacities.
Instances are defined by the number of each \emph{type} of impression.  The type of an impression is given by the subset of the nodes of $G$ to which it has outgoing arcs.
We may consider specific worst-case instances or a distribution over types in our analysis.
This problem captures fractional versions of combinatorial problems such as online matching, unweighted Adwords, and finding a maximum independent set in a laminar matroid or gammoid.  We call the problem the Online Flow Allocation Problem.

\subparagraph*{Restricted Assignment Makespan Minimization:} In this problem, there are $m$ machines and $n$ jobs arrive in an online order.  When job $j$ arrives it must be immediately and irrevocably assigned to a machine.  The job has size $p_j$ and can only be assigned to a subset of machines specific to that job.  After $j$ is assigned the next job arrives.  A machine's load is the total size of all jobs assigned to it.  The goal is to minimize the makespan, or maximum load, of the assignment.




\subsection{Overview of Results for Flow Allocation and Restricted Assignment}

We first focus on the flow allocation problem and then we give similar results for the makespan minimization problem.

\subparagraph*{Node Parameters:} Our results on learnability and robustness are enabled by showing the existence of node weights which capture interesting combinatorial properties of flows.  
Inspired by the weights proven in~\cite{PropMatchAgrawal} for bipartite matching, we establish that there is a single weight 
for each node in the DAG that completely describe near optimal flows on a single problem instance.  
The weights determine an allocation of flow for each impression which is independent of the other impressions.
Each node in the DAG routes the flow leaving it proportionally to the weights of its outgoing neighbors.
Moreover, the flow is near optimal, giving a $(1-\eps)$-approximate solution for any constant $\eps >0$ (but requiring time polynomial in $1/\epsilon$ to compute). Given these weights, the flow can be computed in one forward pass for each impression in isolation.  Thus they can be used online if given as a prediction. These weights are also efficiently computable offline given the entire problem instance (see Theorem~\ref{thm:weight_existence_3_layer}).

  

\subparagraph*{Instance Robustness:}    
We measure the distance of the two instances as the difference of the number of impressions of each type (see Theorem~\ref{thm:dist_robustness}). We show that if the weights are near optimal for one instance, the performance degrades gracefully according to the distance between the two instances.  
This distance is defined for any two instances irrespective of whether they are generated by specific distributions
\footnote{We also show that our predictions for the online flow allocation problem have ``parameter robustness", the kind of robustness that has been considered in prior work (Theorem~\ref{thm:flow_param-rob}).}.

\subparagraph*{Learnability:} 

For learnability it is assumed that impressions are drawn from an  unknown distribution over types.  We show that learning near-optimal weights for this distribution 
has low sample complexity under two assumptions.  First, we assume the unknown distribution is a product distribution.  Second, we assume that the optimal solution of the ``expected instance'' (to be defined later) has at least a constant amount of flow routed through each node.  In the 2-layer case, this assumption can be simplified to requiring each node's capacity to be at least a constant (depending on $\frac{1}{\epsilon}$).\footnote{This is similar to the lower bound requirement on budgets in the online analysis of the AdWords problem~\cite{DBLP:conf/sigecom/DevenurH09}.} The number of samples is polynomial in the size of the DAG without the impressions. Note that in problems such as Adwords, the impressions are usually much larger than the fixed portion of the graph.


We now present our main theorem on the flow allocation problem. 

\begin{theorem}[Flow Allocation - Informal]
There exist algorithmic parameters for the Online Flow Allocation problem with the following properties:
\begin{enumerate}[(i)]
\item (Learnability) Learning near-optimal parameters has  sample complexity polynomial in $\frac{1}{\epsilon}$ and the size of the graph  excluding the impressions. These parameters result in an online algorithm that is a  $(1-\eps)$-approximate solution in expectation as compared to the expected optimal value on the distribution  for any  constant $\eps >0$. (Theorem~\ref{thm:DAG-learnability})
\item (Instance Robustness) 
Using the optimal parameters for an instance on another instance gives a competitive ratio that improves as their distance decreases, where the distance is proportional to the difference of impressions (Theorem~\ref{thm:dist_robustness}).
\item (Worst-Case Robustness) The competitive ratio of the online algorithm using the parameters is never worse than $\frac{1}{d+1}$, regardless of the distance between the two instances, where $d$ is the diameter of $G$. (Theorem~\ref{thm:dist_robustness})
\end{enumerate}
\end{theorem}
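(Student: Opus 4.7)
The foundation for all three parts is the node-weight characterization of Theorem~\ref{thm:weight_existence_3_layer}: given a weight $\alpha_v$ on each node of $G$, an impression $i$ with neighborhood $N_i$ sends its unit of flow to $v \in N_i$ proportional to $\alpha_v$, and each interior node splits its incoming flow across outgoing neighbors proportionally. This is a local, memoryless rule that requires no information about future impressions, and by that theorem it certifies a $(1-\eps)$-approximate flow on the instance for which the weights were computed. Let $\alg(\alpha,\cI)$ denote the total flow produced by this procedure on instance $\cI$.

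For parts (ii) and (iii), the plan is to compare $\alg(\alpha,\cI')$ with $\alg(\alpha,\cI)$ via a type-by-type accounting. Because the routing is local and proportional, adding or removing an impression of a type $\tau$ perturbs the flow leaving each node by a quantity that depends only on the share that type's impressions receive at that node and on the residual capacity. Summing over the symmetric difference of $\cI$ and $\cI'$ gives a bound linear in the per-type impression-count difference, which is exactly the distance envisioned in part (ii). Taking $\cI$ to be the empty instance and $\alpha$ arbitrary recovers the unconditional guarantee of part (iii): along any source-to-sink path through the DAG there are at most $d$ internal hops, and a layer-by-layer charging argument shows that proportional splitting combined with capacity saturation preserves at least a $\frac{1}{d+1}$ fraction of $\opt$.

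For part (i), I would apply part (ii) with $\cI$ chosen as the ``expected instance'' under $\cD$ and with $\cI'$ the random instance on which the learned rule is run. The product distribution assumption yields Chernoff/Bernstein concentration of impression counts around their expectations, so for sufficiently many samples $N$ the empirical instance is close in the sense of part (ii) to the expected instance, and the weights $\predw$ computed offline via Theorem~\ref{thm:weight_existence_3_layer} on the empirical instance are therefore close to the optimal weights for the expected instance. Plugging $\predw$ into $\alg(\predw,\cdot)$ on a fresh draw, using part (ii) to transfer the guarantee, and a final concentration step to relate a fresh draw back to the expected instance yield the claimed $(1-\eps)$-approximation.

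The main obstacle is the sample complexity bound in part (i). The number of impression \emph{types} can be exponential in $|V|$, so naively applying concentration at the level of types would force $N$ to depend exponentially on the graph size. The ``constant flow through each node'' hypothesis is what I would use to sidestep this: it should let the generalization argument be conducted in terms of the $|V|$ per-node load statistics rather than per-type counts, at the cost of requiring a stability bound for the weight computation under these coarser perturbations of the instance. Establishing this stability---that near-optimal weights for two nearby aggregated instances produce nearly identical values of $\alg(\alpha,\cdot)$---is where the bulk of the technical work lies.
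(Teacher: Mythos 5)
Your treatment of part (ii) is essentially the paper's: both the algorithm's value and the benchmark move by at most $\gamma=\|\cI-\cI'\|_1$ under a change of impression counts (the paper phrases the benchmark side through the vertex-cut certificate, you phrase it through $\opt$ directly), and chaining the two perturbations gives the $(1-\eps)\opt-2\gamma$ bound. The other two parts, however, have genuine gaps.

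For part (iii), the claim that ``proportional splitting combined with capacity saturation preserves at least a $\frac{1}{d+1}$ fraction of $\opt$'' for an \emph{arbitrary} weight vector is false for the raw allocation rule. Already in the bipartite case, if one node $a$ with $C_a=1$ is given overwhelming weight, every impression routes essentially all of its flow to $a$, only one unit reaches $t$, and the ratio is $1/n$. (Your suggestion to take $\cI$ empty in part (ii) also yields nothing: then $\gamma=|\cI'|$ and the bound $(1-\eps)\opt-2\gamma$ is negative.) The paper obtains the $\frac{1}{d+1}$ guarantee only after modifying the algorithm (Algorithm~\ref{alg:online_d_layer}): upon each arrival it repeatedly re-routes the unsent residual, discarding ``blocked'' vertices, until the impression is fully routed or cannot reach $t$. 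This forces the output to be a \emph{maximal} flow, and Lemma~\ref{lem:maximal_flow} then builds a feasible dual solution in which each $s$--$t$ path is covered by at most $d$ saturated vertex variables plus one impression variable, giving the $d+1$ factor. Some such maximality mechanism, plus the LP-duality (or an equivalent blocking-set) argument, is indispensable; a layer-by-layer charge on the unmodified proportional rule does not go through.

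For part (i), the route ``apply part (ii) with $\cI=\E[\cI]$'' cannot give polynomial sample complexity, for exactly the reason you flag: when the number of types in the support far exceeds $m$, the empirical type vector satisfies $\|\cI-\E[\cI]\|_1=\Theta(m)$ with high probability, so the additive $-2\gamma$ bound is vacuous. You correctly identify that the argument must instead be run on the $n$ per-node load statistics, but that step is not a refinement of the robustness theorem --- it is a different proof, and it is where all of the content of Theorem~\ref{thm:DAG-learnability} lives. Concretely, the paper shows (a) concentration of $Alloc_a(\alpha,I)$ around $\E[Alloc_a(\alpha,I)]$ for each node $a$ (sums of independent $[0,1]$ contributions, so Chernoff applies with $O(n/\eps^2)$ samples per weight vector), combined with a union bound over the \emph{finite} class of admissible weight vectors output by the algorithm; and (b) the inequality $\E[\min(Alloc_a,C_a)]\geq(1-O(\eps))\min(\E[Alloc_a],C_a)$ (Lemma~\ref{lem:bi_learn_2}), which is false in general and is precisely what the lower-bound assumption on capacities/loads buys, via a three-case Chernoff analysis. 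Neither (a) nor (b) is supplied by your outline, and (b) in particular is the step your phrase ``a final concentration step to relate a fresh draw back to the expected instance'' hides. A minor additional point: the learned weights need not be close to the optimal weights for the expected instance in any parameter sense (that is the point of instance versus parameter robustness); only their performance is comparable, so the argument must be conducted entirely in terms of $R(\alpha,\cdot)$ values.
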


The theorem states that weights are learnable and only a small number of samples are required to construct weights that are near optimal. 
These predictions break the worst-case $1-\frac{1}{e}$ bound on the competitive ratio for any randomized algorithm for online fractional matching, a special case.
Moreover, the difference in the types of impressions between two instances gives a metric under which we can demonstrate instance robustness.
Further the algorithm has worst-case guarantees, i.e.
the ratio is never worse than $\frac{1}{d+1}$, which is tight for deterministic integral online algorithms and $d$-layer graphs (see Theorem~\ref{thm:worst_case_bound}) even though we output fractional allocations.

We now discuss our results for makespan minimization.

\begin{theorem}[Restricted Assignment - Informal]
There exist algorithmic parameters for the Restricted Assignment Makespan Minimization problem with the following properties:
\begin{enumerate}[(i)]
    \item (Learnability) Learning the near optimal parameters has  sample complexity polynomial in $m$, the number of machines, and $\frac{1}{\epsilon}$. These parameters result in an online algorithm that is a  $(1+\eps)$ approximate solution in expectation as compared to the expected optimal value on the distribution  for any  constant $\eps >0$. (Theorem~\ref{thm:learnability_makespan})
    \item (Instance Robustness) 
    Using the optimal parameters for any instance on a nearby instance gives a competitive ratio for fractional assignment that is proportional to their distance, where the distance is proportional to the relative difference of job sizes of the same type.
    (Theorem~\ref{thm:makespan-robustness})
    \item (Worst-Case Robustness) The competitive ratio of the algorithm using the parameters is never worse than $O(\log m)$, matching the known $\Omega(\log m )$ lower-bound on any integral online algorithm. (Theorem~\ref{thm:makespan-robustness})
\end{enumerate}
\end{theorem}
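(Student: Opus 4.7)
The plan is to mirror the three-part structure already developed for the flow allocation problem, with machine-side weights playing the role of the node weights. First, I would establish the existence of parameters (one scalar $\alpha_i$ per machine) such that an online assignment rule that fractionally routes each arriving job $j$ across its eligible machines according to a smoothed function of the $\{\alpha_i\}$ produces a $(1+\epsilon)$-approximate fractional assignment on any given instance. The natural candidates are exponentiated dual variables of the LP relaxation at a $(1+\epsilon)$-guessed target makespan $T$; this is precisely the style of rule (multiplicative/exponential weights) used classically to obtain the $O(\log m)$-competitive online algorithm. Thus $\alpha$ can be computed offline from a complete instance, and given $\alpha$ the online rule is a single local computation per job.

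For learnability, I assume jobs are drawn i.i.d.\ from an unknown product distribution $\cD$ over job types and aim to learn parameters for the induced \emph{expected} instance. Although the number of job types can be as large as $2^m$, the parameters live in $\R^m$ and the quantities that matter for the objective are only the $m$ machine loads. I would therefore combine (i) a Chernoff/Hoeffding bound showing that $N = \poly(m,1/\epsilon)$ samples make the empirical expected instance uniformly close to the true one in the sense of per-machine aggregated job mass, with (ii) Lipschitz continuity of the online assignment rule in both the inputs and the parameters $\alpha$, to lift a near-optimal $\alpha$ for the empirical instance to a $(1+\epsilon)$-approximate $\alpha$ for the true distribution.

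Instance robustness follows by directly comparing the loads produced by a fixed $\alpha$ on two instances $\cI$ and $\cI'$. If $\cI'$ differs from $\cI$ in that the aggregate size of jobs of each type changes by a relative factor of at most $\delta$, then the load assigned to every machine under the proportional rule also changes by at most a $(1+\delta)$ factor, and dividing by the similarly perturbed optimum yields a competitive ratio that degrades smoothly with $\delta$. For worst-case robustness, I would fall back on the classical analysis of the exponential-weights online assignment rule: regardless of how poor $\alpha$ is, no machine is overloaded by more than an $O(\log m)$ factor relative to the optimum, matching the $\Omega(\log m)$ lower bound of Azar \etal

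The hardest step will be the learnability claim, since the sample complexity must be polynomial in $m$ despite the potentially exponential number of job types, and since the ``good'' $\alpha$ is defined only implicitly via an LP. I would address this by reparameterizing the learning task as matching an $m$-dimensional statistic (the expected per-machine load) and by establishing Lipschitz dependence of the near-optimal $\alpha$ on this statistic. A secondary obstacle is reconciling the fractional analysis underlying $\alpha$ with the integral assignment required by the problem; as in the corresponding statement, I would phrase the instance-robustness bound for the fractional assignment and separately invoke a standard rounding to obtain an integral makespan within an additive $p_{\max}$ of the fractional bound.
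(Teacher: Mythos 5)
Your existence, instance-robustness, and worst-case components are broadly aligned with the paper: the parameters are per-machine weights driving a proportional fractional assignment $x_{ij}(w)=w_i/\sum_{i'\in N(j)}w_{i'}$ (existence is inherited from Agrawal et al.\ and Lattanzi et al.), instance robustness is proved exactly by your load-scaling argument (though the paper picks up \emph{two} factors of $\eta$, one to transfer $w$ from $S$ to $S'$ and one to transfer the optimal weights $w'$ back from $S'$ to $S$ in order to compare against $\opt(S')$, giving $(1+\eps)^2\eta^2$), and the $O(\log m)$ fallback is taken from Lattanzi et al.\ --- note that the pure proportional rule with arbitrary weights has \emph{no} worst-case guarantee, so you must explicitly combine with a worst-case online algorithm rather than assert that bad $\alpha$ only costs $O(\log m)$.

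The genuine gap is in your learnability argument. You propose to (i) summarize the distribution by an $m$-dimensional per-machine statistic and (ii) lift a near-optimal $\alpha$ from the empirical instance to the distribution via ``Lipschitz dependence of the near-optimal $\alpha$ on this statistic.'' Both steps fail. The expected load of machine $i$ is $\sum_j \E[S_j]\,x_{ij}(w)$, which depends jointly on the (exponentially many) type masses and on $w$ through the neighborhood structure; an $m$-dimensional aggregate does not determine the loads under every candidate $w$, so matching it cannot certify performance. More importantly, near-optimal weights (and a fortiori LP dual variables, which you suggest exponentiating) are \emph{not} Lipschitz in the instance --- the paper explicitly flags that optimal duals can jump discontinuously under tiny instance perturbations, which is precisely why it introduces instance robustness as a separate notion. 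The paper's actual route avoids any such continuity claim: it stacks the $s$ sampled instances, computes near-optimal weights $w$ on the stacked instance from a \emph{finite} discretized class $\cW(R)$ with $|\cW(R)|=R^m$, and proves uniform convergence of the empirical loads $\frac{1}{s}\sum_\alpha L_i(w,S_\alpha)$ to $\E[L_i(w,S)]$ simultaneously for all $i$ and all $w\in\cW(R)$ via Hoeffding plus a union bound costing only $\log|\cW(R)|=O(m\log R)$ samples in the exponent. You are also missing the assumption $\E[\opt(S)]=\Omega(\frac{1}{\eps^2}\log(\frac{m}{\eps}))$, which the paper needs to show $\E[\max_i L_i(w,S)]\le(1+2\eps)\max_i\E[L_i(w,S)]$; without it the expected makespan (an expectation of a maximum) cannot be controlled by the per-machine expected loads, and the entire generalization argument breaks.
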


This theorem shows that the predictions of \cite{DBLP:conf/soda/LattanziLMV20} have much stronger properties than what is known and are learnable and instance robust. That paper left open the question if their predictions can be formally learned in any model.   Moreover, it was not known if they are instance robust.  
We remark that this theorem assumes fractional assignments, whereas the original problem (and the lower bound~\cite{OnlineAssignmentsAzar}) requires integer assignments.  Lattanzi et al.~\cite{DBLP:conf/soda/LattanziLMV20} shows that any fractional assignment can be rounded online while losing a $O((\log \log m)^3)$ factor in the makespan.   

\subsection{Related Work}\label{sec:relatedwork}



\subparagraph*{Algorithms with Predictions:}
In this paper, we consider augmenting the standard model of online algorithms with erroneous predictions.
Several online problems have been studied in this context, including caching~\cite{MLCachingLykouris,DBLP:conf/soda/Rohatgi20,DBLP:conf/icalp/JiangP020,WeiCaching20}, page migration~\cite{IMMR20}, metrical task systems~\cite{ACEPS20}, ski rental~\cite{PurohitNIPS,DBLP:conf/icml/GollapudiP19,anand2020customizing}, scheduling~\cite{PurohitNIPS}, load balancing~\cite{DBLP:conf/soda/LattanziLMV20}, online linear optimization~\cite{BhaskarOnlineLearning20}, speed scaling~\cite{bamas2020learning}, set cover~\cite{bamas2020primaldual}, and bipartite matching and secretary problems~\cite{AGKK20}.

 Antoniadis \etal\cite{AGKK20} studies online weighted bipartite matching problems with predictions.  The main aspect of this work which distinguishes it from ours is that it considers the random order arrival model, rather than adversarial orders.  

 Mahdian~et~al.~\cite{DBLP:journals/talg/MahdianNS12} focuses on the design of robust algorithms.  Rather than considering online algorithms which use a prediction, they consider two black box online algorithms, one optimistic and the other pessimistic.  The goal is to give an online algorithm which never performs much worse than the better of these two algorithms for any given instance.  This is shown for problems such as load balancing, facility location, and ad allocation.


The predictions utilized in our algorithm come in the form of vertex weights that guide a proportional allocation scheme.  Agrawal \etal\cite{PropMatchAgrawal} first studied proportional allocations for maximum cardinality fractional matching as well as weighted fractional matchings with high entropy.
 Lattanzi \etal\cite{DBLP:conf/soda/LattanziLMV20} utilize similar predictions based on proportional weights to give algorithms with predictions for online load balancing.  
 

\subparagraph*{Data-Driven Algorithm Design:} This paper considers the learnability of the predictions through the model of data-driven algorithms.  In classical algorithm design, the main desire is finding an algorithm that performs well in the worst case against all inputs for some measure of performance, e.g. running time or space usage.  Data-driven algorithm design~\cite{DBLP:journals/siamcomp/GuptaR17,BalcanDDKSV19,DBLP:conf/focs/BalcanDV18,DBLP:conf/icml/BalcanDSV18,DBLP:conf/nips/BalcanDW18,ChawlaGTTZ19}, in contrast, wants to find an algorithm that performs well on the instances that the user is typically going to see in practice.  This is usually formalized by fixing a class of algorithms and an unknown distribution over instances, capturing the idea that some (possibly worst case) instances are unlikely to be seen in practice.  The typical question asked is: how many sample instances are needed to guarantee you have found the best algorithm for your application domain?



\subparagraph*{Other Related Work:} Online matching and related allocation problems have been extensively studied in both the adversarial arrival setting~\cite{DBLP:conf/stoc/KarpVV90,DBLP:journals/tcs/KalyanasundaramP00,DBLP:conf/soda/DevanurJK13,DBLP:journals/jacm/MehtaSVV07} and with stochastic arrivals~\cite{DBLP:conf/sigecom/DevenurH09,DBLP:conf/esa/FeldmanHKMS10,DBLP:journals/mor/GuptaM16,DBLP:conf/icalp/MolinaroR12,DBLP:journals/ior/AgrawalWY14}.
A related but different setting to ours is the online algorithms with advice setting~\cite{Boyar}.  Here the algorithm has access to an oracle which knows the offline input.  The oracle is allowed to communicate information to the algorithm about the full input, and the goal is to understand how many bits of information are necessary to achieve a certain competitive ratio.  This has also been extended to the case where the advice can be arbitrarily wrong~\cite{DBLP:conf/innovations/AngelopoulosDJKR20}.  This can be seen as similar to our model, however the emphasis isn't on tying the competitive ratio to the amount of error in the advice.

\section{Algorithms with Learnable and Instance-Robust Predictions}
\label{sec:model}


\noindent \textbf{Learnability via Sample Complexity:}  We consider the following setup inspired by PAC learning and recently considered in data-driven algorithms.  Assume a maximization problem and let $\cD$ be an unknown distribution over problem instances.  Let $\alg(\cI,y)$ be the performance\footnote{In general this can be any performance metric, such as running time or solution value.  Here we focus on the value of some objective function such as the size of a fractional flow.} of an algorithm using parameters $y$ on instance $\cI$.  The ideal prediction for this distribution is then $y^* := \arg\max_{y} \E_{ \cI \sim \cD} [\alg( \cI,y)]$.  Since we assume that $\cD$ is unknown, we wish to learn from samples.  In particular, we wish to use some number $s$ of independent samples to compute a parameter $\hat{y}$ such that $\E_{ \cI \sim \cD}[\alg( \cI,\hat{y})] \geq (1-\epsilon)\E_{ \cI \sim \cD}[\alg( \cI,y^*)]$ with probability $1-\delta$, for any $\eps,\delta \in (0,1)$.  The sample complexity $s$ depends on the problem size as well as $1/\epsilon$ and $1/\delta$.  As is standard in learning theory, we require the sample complexity to be polynomial in these parameters\footnote{For more difficult problems, we can relax the $1-\epsilon$ requirement to be a weaker factor.}.

Inspired by competitive analysis, we also compare to the following stronger benchmark in this paper.  For any instance $ \cI$, let $\opt( \cI)$ be the value of an optimal solution on $ \cI$.  We give learning algorithms producing predicted parameters $\hat{y}$ such that $\E_{ \cI \sim \cD}[\alg( \cI,\hat{y})] \geq (1-\epsilon) \E_{ \cI \sim \cD}[\opt( \cI)]$
and polynomial sample complexity under the assumptions on $\cD$ described earlier.  Note that this guarantee implies the first one.

\medskip
\noindent \textbf{Instance Robustness:}
Let $\cI$ and $\cI'$ be two problem instances, and consider running the algorithm on instance $\cI'$ with the prediction $y^*(\cI)$.  We bound the performance of the algorithm as a function of the difference between these two instances. 
In contrast, prior work focuses on differences in the predicted parameters  $y^*$ and  $y'$ for the same instance $\cI$.  
Moreover, it is desirable that the algorithm never performs worse than the best worst-case algorithm. 

For example, in online flow allocation, we can consider an instance as a vector of types,  i.e. $\cI_i$ is the number of impressions of type $i$.  Then we can take the difference between the instances as $\gamma =\|\cI-\cI'\|_1$. 
Say $y^*(\cI)$ can be used to give a  $c$-competitive algorithm on instance $\cI$.  
Let $\alpha$ be the best competitive ratio achievable in the worst-case model.
We desire
an algorithm that is $\max\{f(c,\gamma), \alpha\}$-competitive where 
$f$ is a monotonic function depending on $c$ and $\gamma$.We remark that the online model requires  $\cI'$ to arrive in a worst-case order.

\subsection{Putting the Model in Context}


\subparagraph*{Relationship to Prior Predictions Model:}

The first main difference in this model as compared to prior work is  \emph{learnability}.  With the notable exception of \cite{anand2020customizing},   prior work  has introduced predictions without establishing they are learnable.  Without this requirement there is no objective metric to detail if a prediction is reasonable or not.  To see this shortcoming, imagine simply predicting the optimal solution for the problem instance.  This is  often not reasonable because the optimal solution is too complex to learn and use as a prediction.  We introduce  bounded sample complexity for learning predictions in our model to ensure predictions can be provably learned.

Next difference is in how to measure error.  The performance of the algorithm is bounded in terms of the error in the prediction in the prior model.  For example, say the algorithm is given a predicted vector $\hat{y}(\cI)$ for problem instance $\cI$ and the true vector that should have been predicted is $y^*(\cI)$. One can define $\eta_{\hat{y}( \cI)} = \|\hat{y}( \cI) - y^*( \cI) \|_p$  to be the \emph{error} in the parameters for some norm $p\geq 1$.  The goal is to give an algorithm that is $f(\eta_{\hat{y}( \cI)})$-competitive for an online algorithm where $f$ is some non-decreasing function of $\eta_{\hat{y}( \cI)}$: the better the function $f$, the better the algorithm performance.
One could also consider run time or approximation ratio similarly. Notice the bound is worst-case for a given error in the prediction. This we call \emph{parameter robustness}.

It is perhaps more natural to define a difference between two problem instances as in our model rather than the difference between two predicted parameters. Indeed, consider predicting optimal dual linear program values. These values can be different for problem instances that are nearly identical. Therefore, accurate parameters will not be sufficient to handle inconsequential changes in the input.  
Instance robustness allows for more accurate comparison of two predictions on similar problem instances. More practically, instance closeness is easier to monitor than closeness of the proposed predictions to an unknown optimal prediction for the whole instance.

\subparagraph*{Learning Algorithm Parameters:}  
Learning algorithmic parameters has distinct advantages over learning an input distribution. In many cases it can be easier to learn a decision rule than it is to learn a distribution. For example, consider the unweighted $b$-matching problem in bipartite graphs for large $b$ in the online setting. In this problem there is a bipartite graph $G = (I \cup A,E)$ with capacities $b \in \Z_+^A$.  The objective is to find a collection of edges such that each node in $I$ is matched at most once and each node $a\in A$ is matched at most $b_a$ times.  Nodes on one side of the graph arrive online and must be matched on arrival.  Say the nodes are i.i.d. sampled from an unknown discrete distribution over types. A type is defined by the neighbors of the node.  Let $s$ be the number of types. Then the sample complexity of learning the distribution is $\Omega(s)$. Notice that $s$ could be superlinear in the number of nodes.  
In our results, the sample complexity is independent of the number of types in the support of the distribution.
The phenomenon that it is sometimes easier to learn good algorithmic parameters rather than the underlying input distribution 
has been observed in several prior works.  See \cite{DBLP:journals/siamcomp/GuptaR17,BalcanDDKSV19,DBLP:conf/focs/BalcanDV18,DBLP:conf/icml/BalcanDSV18,DBLP:conf/nips/BalcanDW18,ChawlaGTTZ19} for examples. 

Table~\ref{tab:previous_work} illustrates how our paper relates to prior work, focusing on the two pillars for augmenting algorithms 
emphasized in our model.

\begin{table}[t]
\captionsetup{justification=centering}
\centering
\begin{tabular}{p{120pt}p{90pt}p{60pt}p{80pt}}
\hline 
 Problem & Parameter  Robustness & Learnability & Instance  Robustness \\
\hline

 Caching & \cite{MLCachingLykouris,DBLP:conf/soda/Rohatgi20,DBLP:conf/icalp/JiangP020,WeiCaching20} & - & -\\
 Completion Time Scheduling  & \cite{PurohitNIPS} & - & \cite{PurohitNIPS}\\
 Ski Rental & \cite{PurohitNIPS,anand2020customizing,DBLP:conf/icml/GollapudiP19} & \cite{anand2020customizing}& \cite{PurohitNIPS,anand2020customizing,DBLP:conf/icml/GollapudiP19} \\
 \hline
 Restricted  Assignment  & \cite{DBLP:conf/soda/LattanziLMV20} &   This Paper  & This Paper \\
   $b$-Matching &  This Paper  & This Paper & This Paper \\
 Flow~Allocation &  This Paper  &  This Paper &  This Paper\\
 \hline
 \end{tabular}
\caption{Relationship to Prior Work} 
\label{tab:previous_work} \vspace{-.6cm}
\end{table}


\subparagraph*{Paper Organization:}
For online flow allocation, 
both learnability and instance-robustness rely on showing the existence of node predictions which is described in
Section~\ref{sec:flows_existence}, followed by learnability and robustness in Sections~\ref{sec:flows_learnability} and~\ref{sec:flows_robustness} respectively.
While these sections give technical overviews, full proofs are in the appendix. We show the existence of predictions in 3-layer DAGs in Appendix~\ref{sec:existweight} and give the results for general DAGs in Appendix~\ref{sec:apdxexistweight}.
We first prove the learnability of predictions in 2-layered graphs in Appendix~\ref{sec:learnability} and then generalize this result to general DAGs in Appendix~\ref{sec:apdxlearnability}.
The proofs about the instance- and parameter-robustness are in Appendices~\ref{sec:robustness} and~\ref{app:robustness} respectively. 
For load balancing, the results are in Section~\ref{sec:load_balancing_results} and the corresponding proofs are in Appendix~\ref{sec:apdx_scheduling}.


\section{Matchings and Flows: Existence of Weights} \label{sec:flows_existence}


Consider a directed acyclic graph $G=(\{s,t\} \cup V, E)$, where each vertex $v\in V$ has capacity $C_v$ and is on some $s-t$ path. Our goal is to maximize the flow sent from $s$ to $t$ without violating vertex capacities. 
Before considering the general version, 
we examine the 3-layered version. Say a graph is $d$-layered if the vertices excluding $s,t$ can be partitioned into $d$ ordered sets where all arcs go from one set to the next.
Then the $3$-layered case is defined as follows. The vertices in $V$ are partitioned into 3 sets $I,A$, and $B$. $s$ is connected to all of $I$ and $t$ is connected from all of $B$, while the remaining edges are only allowed to cross from $I$ to $A$ and from $A$ to $B$.  Let $N_u := \{v \in V \mid (u,v) \in E\}$ be $u$'s out-neighbors.  We have the following result generalizing the prior work of Agrawal~et~al~\cite{PropMatchAgrawal} on 2-layered graphs.


\begin{theorem} \label{thm:weight_existence_3_layer}
	Let $G = (\{s,t\} \cup V, E)$ be a 3-layered DAG.  For each edge $(u,v) \in E$, let $x_{uv}$ be the proportion of flow through $u$ which is routed along $(u,v)$.  For any $\epsilon \in (0,1)$, there exist weights $\{\alpha_v\}_{v\in V}$ such that setting $x_{uv} = \frac{\alpha_v}{\sum_{v' \in N_u} \alpha_{v'}}$ yields a $(1-\epsilon)$-approximate flow. Moreover, these weights can be obtained in time $O(n^4 \log(n/\epsilon) / \epsilon^2)$.
\end{theorem}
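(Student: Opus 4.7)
The plan is to recover the weights from an entropy-regularized version of the vertex-capacitated max-flow LP written in path-flow form. Introduce a variable $g_{iab} \ge 0$ for each directed path $i \to a \to b$ in $G$ and consider
\[
  \max_{g \ge 0}\ \sum_{i,a,b} g_{iab} \ -\ \epsilon' \sum_{i,a,b} g_{iab}\log g_{iab},
\]
subject to the three capacity families $\sum_{a,b} g_{iab} \le 1$, $\sum_{i,b} g_{iab} \le C_a$, and $\sum_{i,a} g_{iab} \le C_b$, where $\epsilon'>0$ will be tuned below. The negative entropy makes the objective strictly concave and gives a clean KKT stationarity condition: writing $\lambda_i$, $\mu_a$, $\nu_b$ for the three families of dual multipliers and setting $\partial/\partial g_{iab}$ to zero, one obtains $g_{iab}^\star = c_i\, e^{-\mu_a/\epsilon'}\,e^{-\nu_b/\epsilon'}$ with $c_i$ depending only on $i$.

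Define $\beta_a := e^{-\mu_a/\epsilon'}$, $\gamma_b := e^{-\nu_b/\epsilon'}$, and finally the theorem's weights $\alpha_b := \gamma_b$ for $b\in B$ and $\alpha_a := \beta_a\sum_{b\in N_a}\gamma_b$ for $a\in A$. A direct calculation then checks the proportional structure at both layers: the marginal $f_{ia}^\star := \sum_{b\in N_a} g_{iab}^\star$ satisfies $f_{ia}^\star/z_i^\star = \alpha_a/\sum_{a'\in N_i}\alpha_{a'}$, and the conditional split $g_{iab}^\star/f_{ia}^\star = \gamma_b/\sum_{b'\in N_a}\gamma_{b'} = \alpha_b/\sum_{b'\in N_a}\alpha_{b'}$, crucially independent of $i$. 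Hence $g^\star$ is exactly the flow induced by the proportional routing rule of the theorem for this choice of $\alpha$, which is the existence content.

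The approximation guarantee then reduces to controlling the entropy contribution. Since each per-impression flow $z_i$ lies in $[0,1]$, a standard per-impression entropy calculation gives $-\sum_{a,b} g_{iab}\log g_{iab} \le z_i \log K_i + O(1)$, where $K_i \le n^2$ is the number of paths through $i$; summing over $i$ bounds the full entropy by $O(\opt \log n + n)$ at any feasible $g$. Choosing $\epsilon' = \Theta(\epsilon/\log n)$ (and handling the $\opt = O(1)$ corner case directly) then ensures that the plain objective value of $g^\star$ is at least $(1-\epsilon)\opt$, so the proportional flow is $(1-\epsilon)$-approximate. For the run-time bound I would compute $(\mu_a,\nu_b)$ by mirror descent / multiplicative-weights update on the exponentiated duals: each iteration is one forward pass through the DAG that recomputes the proportional allocation and the induced vertex overloads, and the standard MWU analysis for packing LPs gives $\tilde O(\log n/\epsilon^2)$ iterations of $O(n^2)$ work each, with a further $\log(n/\epsilon)$ factor for bit-precision, yielding the claimed $O(n^4 \log(n/\epsilon)/\epsilon^2)$ total.

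The main obstacle I anticipate is the approximation bound in the third step: the naive entropy bound $O(n^3 \log n)$ (number of paths times log-range) is far too weak, and one must use the refined per-impression argument above so that the regularization cost scales with the true flow value rather than with the path count. Getting this right is what allows $\epsilon' = \Theta(\epsilon/\log n)$ rather than $\Theta(\epsilon/\poly(n))$, and hence what keeps the running time polynomial in $1/\epsilon$. Once the KKT calculation and this entropy estimate are in hand, everything else reduces to standard convex duality plus the well-known MWU convergence for packing LPs.
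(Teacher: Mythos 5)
Your route is genuinely different from the paper's. The paper constructs the weights combinatorially: it initializes all weights to $1$, repeatedly decreases the weight of any over-allocated node by a $(1+\eps)$-type factor (with a much smaller step $\eps_A=\eps_B/(2n)$ in the middle layer and a ``forced decrease'' rule coupling the two layers), and proves near-optimality by exhibiting a vertex cut whose capacity is within $(1-O(\eps))$ of the flow the weights achieve, using monotonicity properties of the iteration and an averaging argument over ``gaps'' in the weight levels. Your derivation of the product form $g^\star_{iab}=c_i\beta_a\gamma_b$ from the KKT conditions of the entropy-regularized path LP is a clean alternative proof of \emph{existence}: the check that the conditional split at $a$ is independent of $i$ is exactly the nontrivial structural point, and the choice $\alpha_a=\beta_a\sum_{b\in N_a}\gamma_b$, $\alpha_b=\gamma_b$ correctly reproduces the proportional rule at both layers. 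Two repairs are needed even for this half. First, $g^\star$ is not literally ``the flow induced by the proportional rule'': the rule in the theorem pushes a full unit out of every impression and truncates at capacities, whereas $g^\star$ may route $z_i^\star<1$ from impression $i$; you need the (easy) monotonicity observation that raising every impression's outflow to $1$ while keeping the same splitting fractions can only increase $\sum_b \min(Alloc_b,C_b)$. Second, your entropy bound $O(\opt\log n+n)$ is too weak: with $\eps'=\Theta(\eps/\log n)$ it costs $\eps\,\opt+\eps n/\log n$, so the bad regime is $\opt\ll n/\log n$, not merely $\opt=O(1)$. The fix is to sum the $z_i\log(1/z_i)$ terms globally; by concavity this is at most $\opt\log(|I|/\opt)+O(1)$, giving a total entropy of $O(\opt\log n+1)$ and eliminating the additive loss.

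The substantive gap is the algorithmic claim, which is part of the theorem statement. You assert that mirror descent/MWU on the exponentiated duals converges in $\tilde O(\log n/\eps^2)$ iterations and, implicitly, that the resulting \emph{approximate} duals still yield $(1-\eps)$-approximate proportional weights. Neither is immediate: the regularized problem is a three-marginal Sinkhorn-type scaling, and the statement you actually need is a stability bound showing that weights within some multiplicative accuracy of the exact dual-induced ones lose only $O(\eps)$ in flow value under the truncate-and-forward semantics. Quantifying exactly that kind of loss is where the paper spends most of its effort (the increasing/decreasing monotonicity properties, layer dominance, forced-decrease exemption, and the cut/gap construction), and your proposal supplies no substitute for it. Until you either prove a convergence-plus-stability statement for the regularized duals or fall back on an exact convex solve with an explicit runtime, the ``can be obtained in time $O(n^4\log(n/\eps)/\eps^2)$'' clause remains unproven.
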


We can generalize this theorem to $d$-layered graphs.  In particular, our algorithm for the $d$-layered case produces weights with additional properties, which we leverage to handle general DAGs.  See Section~\ref{sec:existweight} for precise statements.
Notice that the number of weights is proportional to the number of nodes in the graph and not the number of edges. We believe it is an interesting combinatorial property that such succinct weights on the nodes can encode a good flow on the asymptotically larger number of edges and is of independent interest.

\subsection*{Technical Overview}

Here we give a technical overview.  The full proof is in  Section~\ref{sec:existweight}. 

\medskip 
\noindent \textbf{A Simple Algorithm for Layered Graphs:} Prior work \cite{PropMatchAgrawal}  showed that there exists a set of weights giving nearly the same guarantees we show, but only for bipartite graphs.  The existence of such weights can be generalized to  $d$-layer graphs easily as follows.  First find an (optimal) maximum flow $f$.  For each vertex $v$, let $f(v)$ be the flow going through $v$.  Reset the vertex capacity of $v$ to be $f(v)$.  For each pair of adjacent layers find the weights between the two layers independently using the algorithm of \cite{PropMatchAgrawal}, treating nodes $v$ on the left hand side as $f(v)$ individual impressions.
By the previous result, each layer only loses a negligible portion of the total flow which can be compounded to yield a low loss for these set of weights.

The above reduction does not generalize to general DAGs.  One can arrange a DAG into layers, but there are fundamental algorithmic challenges with constructing weights that arise when edges cross layers.
One of this paper's algorithmic contributions is showing how to construct such weights for general DAGs.  Moreover, as an intermediate result, we show how to compute the weights directly extending the approach of  \cite{PropMatchAgrawal} for multi-layer graphs without first solving a flow problem optimally as in the above reduction.

\medskip 
\noindent \textbf{Finding Weights for Bipartite Graphs:}  We begin by first simplifying the algorithm of \cite{PropMatchAgrawal} for bipartite graphs.  Let $G=(\{s,t\}\cup I\cup A,E )$ be such a graph.  In this case, the fraction of flow $u \in I $ sends  to  $v \in A$ simplifies to $ x_{uv} =  \frac{\alpha_v}{ \sum_{v'\in N_u} \alpha_{v'}  }$ where $\{\alpha\}_{v \in I \cup A}$ are the set of weights.  Initially all of the weights are $1$ for a vertex $a \in A$.  Some of the nodes receive more flow than their capacity in this initial proportional allocation according to the weights.   We say a node for which the current proportional allocation of flow exceeds its capacity by a $1+\eps$ factor is \emph{overallocated}.  In an iteration, the algorithm decreases the weights of these nodes by a $1+\eps$ factor.\footnote{Prior work~\cite{PropMatchAgrawal} performed this operation as well as {\em increasing} the weights of nodes whose allocation was significantly below the capacity.  Our simplification to allow only decreases helps with the generalization to more complex graphs and correcting for error in the weights.}  After this process continues for a  poly-logarithmic number of iterations, we will be able to show the resulting weights result in a near optimal flow.

To prove that the final weights are near optimal, we show that the weights can be directly used to identify a
vertex cut whose value matches the proportional flow given by the weights. 
In particular, we will partition the nodes in $A$ based on their weight values.  For a parameter $\beta$, we say a node is `above the gap' if its weight is larger than $\beta n /\epsilon$.  A node of weight less than $\beta$ is below the gap.  All others are in the gap.  The parameter $\beta$ is chosen such that the nodes in the gap contribute little to the overall flow and they can essentially be discarded via an averaging argument\footnote{This averaging is what necessitates the poly-logarithmic number of weight update iterations in the algorithm}. Assume this set is empty for simplicity.  Let $\g(A)^+$ and $\g(A)^-$ be the sets of vertices in $A$ above and below the gap, respectively.

We now describe a cut.  Let $I_0\subseteq I$ be the impression nodes adjacent to at least one node in $\g(A)^+$.
Then the vertex cut is $I_0  \cup \g(A)^-$. Since all paths must either cross $I_0$ or $\g(A)^-$, this is a valid vertex cut. We now show the cut value is close to the flow achieved by the weights, completing the analysis using the weaker direction of the max-flow min-cut theorem.

First, nodes in $I_0$ are cut.  Due to the way flow is sent based on the weight proportions, for any vertex $I_0$, at least a $(1-\epsilon)$ proportion of its flow will be sent to $\g(A)^+$.  Since the nodes in $\g(A)^+$ did not decrease the weights at least once, at some point they were not over-allocated.  We claim that because of this, they will never be over-allocated hence and, therefore, nodes in $I_0$ send nearly all of their flow to the sink successfully. Next nodes in $\g(A)^-$ are cut. These nodes decreased their weights (almost) every iteration 
because they are at or above their allocation.  Thus, for all these nodes we get flow equal to their total capacity.
The fraction of this flow through $\g(A)^-$ coming from paths using $I_0$ is negligible because of the weight proportions so this flow is almost disjoint from that of $I_0$.
Thus, we have found a proportional flow nearly matching the value of the cut identified.

\medskip 
\noindent \textbf{General Graphs:}   Now we consider the more general algorithm. To convey intuition, we will only consider directly computing weights for a 3-layered graph $G=(\{s\} \cup I\cup A\cup B\cup \{t\} ,E )$ where edges are between adjacent layers. This  will highlight several of the new ideas.  As before, weights of all nodes are initially one. And as before, a node decreases its weight if it is over-allocated, which we will refer now to as a \emph{self-decrease}.    Now though, whenever a node in $A$ decreases its weight it does so by a $(1+\eps')$ factor and those in $B$ decrease at a $(1+\eps)$ factor where $\eps' \leq \eps$.

A new challenge is that a node $b$ in layer $B$ may be over allocated and it may not be enough for $B$ to reduce its weight. Indeed, $B$ may need some neighbors in $A$ to reduce their allocation.  For instance, if $b \in B$ has neighbors in $A$ for which it is the only neighbor, then reducing $b$'s weight does not change its allocation and the flow needs to be redistributed in the first layer.    In this case, the nodes in $B$ will specify that some nodes in $A$ need to decrease their allocation.   We call this a \emph{forced decrease}.  This set has to be carefully chosen and intuitively only the nodes in $A$ that are the largest weight as compared to $b\in B$ are decreased.   We run this procedure for a polylogarithmic number of steps and again we seek to find a cut matching the achieved flow.

We discuss the need for different  $\eps$ and $\eps'$.  In the bipartite case when a node decreased its weight, that node is guaranteed to receive no more allocation in the next round (it could remain the same though). Intuitively, this is important because in the above proof for bipartite graphs we want that if a node is in $\g(A)^+$, above the gap, if it was ever under-allocated then it never becomes over-allocated in later iterations.  Our update ensures this will be the case since self-decreases will continue henceforth to keep the load of such a node below its capacity.  Consider setting $\eps= \eps'$ for illustration. Because of the interaction between layers, a node $i$ in $B$ could receive more allocation even if it decreases its weight in an iteration. This is because the nodes in $A$ and $B$ could change their weights.  Nodes in $A$ changing their weight can give up to an extra $(1+\eps)$ allocation (via predecessors of $i$ that are not decreased), and the same for $B$ for a total of $(1+\eps)^2$ extra allocation arriving at $i$.   The node decreasing its weight reduces its allocation by a $(1+\eps)$ factor for a total  change of a $(1+\eps)^2 \cdot \frac{1}{(1+\eps)} >1$ factor.  By choosing $\eps$ and $\eps'$ to be different, as well as the characterization of which nodes decrease during a forced decrease, we can show any node will not receive less allocation if its weight does not decrease and will not receive more allocation if it performs a decrease. We call these properties ``Increasing monotonicity'' (Property~\ref{ppt:1}) and ``Decreasing monotonicity"  (Property~\ref{ppt:2}) in Section~\ref{sec:existweight}.

As in the bipartite case, we can find a gap in layers $A$ and $B$, which gives sets above the gap $\g(A)^+$ and $\g(B)^+$ in $A$ and $B$, respectively. Let the sets $\g(A)^-$ and $\g(B)^-$ be the nodes below the gap.  As before nodes in $\g(A)^-$ (resp., $\g(B)^-$) decreased their weight many more iterations than  $\g(A)^+$ (resp. , $\g(B)^+$,).  For simplicity, assume this partitions the nodes of the entire graph, so no nodes are inside either gap.  Let $I_0\subseteq I$ be the nodes adjacent to
at least one node in $\g(A)^+$. Let $A_0$ be nodes in $\g(A)^-$, below the gap, that have an edge to $\g(B)^+$ (the analogue of $I_0$ in the $A$-layer) .
Any flow path that crosses the $A$ layer at $\g(A)^+, A_0$ and $\g(A)^- \setminus A_0$ are blocked by the sets $I_0,A_0$ and $\g(B)^-$ respectively showing that this set forms a vertex cut.

We show the flow obtained is nearly the value of the vertex cut $I_0 \cup A_0 \cup  \g(B)^-$.
As before, nodes in $I_0$ (resp. $A_0$) send almost all their flow to nodes above the gap in the next layer $\g(A)^+$ (resp. $\g(B)^+$).  Like before $\g(A)^+$ and $\g(B)^+$ do not decrease every round, so we can show they are not allocated more than their capacity (see Lemma~\ref{lem:b_lowerbound} and Lemma~\ref{lem:a_lowerbound}).  The algorithmic key is that, by choosing the forced decrease carefully, we can show each node in $\g(A)^+$ has a neighbor in $\g(B)^+$.  This ensures almost all of $\g(A)^+$'s flow reaches the sink because all of these nodes will send essentially all their the flow to $\g(B)^+$ and these nodes are not at capacity. Thus, $I_0 $ can send all its flow to the sink successfully.
Similarly, $A_0$ sends its flow to $\g(B)^+$ and then to the sink.
Finally, as before, $\g(B)^-$ (as well as  $\g(A)^-$) are sets of nodes near their allocation because they decreased essentially every iteration (see Lemma~\ref{lem:b_upperbound} and Lemma~\ref{lem:a_upperbound}).
Thus, $\g(B)^-$ sends its flow directly to the sink.
Moreover, we ensure that only a negligible fraction of this flow is double counted by the definition of the large weight reduction across the gaps (see Lemma~\ref{lem:gap}); therefore we have  found a flow allocation obtained by the weights whose value nearly matches the value of an identified cut.

This analysis generalizes to layered DAGs where edges do not cross layers.  To extend the existence of these weights to general DAGs we reduce the problem to finding weights with additional structural  properties on a layered DAG. From the input DAG we make additional copies of nodes that have ancestors in earlier layers and link these copies via a path to the original neighbor. We convert any edge that crosses many layers to one in the layered DAG from its original head node to the copy of its tail node in the next layer.  Then we argue that a key functional relation exists between the weights of any original node and its copies in earlier layers. 
This allows us to transfer the weights computed in the auxiliary layered DAG to the original DAG (see Section~\ref{subsec:existgenweight} for more details).
\section{Matchings and Flows: Learnability of Predictions} \label{sec:flows_learnability}

We show that the weights are efficiently learnable.  Assuming that each arriving impression is i.i.d. sampled from an unknown distribution, we want to learn a set of weights from a collection of past instances and examine their expected performance on a new instance from the same distribution\footnote{We can also analyze the performance on similar distributions by applying techniques from our instance robustness result}.
A direct approach might be to learn the unknown distribution from samples and utilize known ideas from stochastic optimization (where knowledge of the distribution is key).
A major issue though is that there can be a large number of possible types (potentially exponential in the number of nodes of the DAG).  A distribution that is sparse over types is not easy to learn with a small number of samples.
	

We claim that the weights are efficiently learnable, even if the distribution of types of impressions is not. We show that this task has low sample complexity and admits an efficient learning algorithm.  Consequently, if there is an unknown arbitrary distribution that the impressions are drawn from, then only a small number of instances is required to compute the weights. The number of samples is proportional to size of the DAG without the impressions.  In most problems such as Adwords, the number of arriving impressions is much larger than the fixed (offline) portion of the graph.

Before stating our results, we introduce two necessary assumptions. The first assumption is that each impression is i.i.d. sampled from an unknown distribution $\cD$. Where no ambiguity will result, we also say an instance is sampled from $\cD$ if each impression is an i.i.d sample from $\cD$. The second assumption is related to the expected instance of the distribution $\cD$. The \textbf{expected instance} of a distribution is the instance where the number of each type of impressions is exactly the expected value.\footnote{Note this could be a fractional value.} We assume that in the optimal solution of the expected instance, the load of each node is larger than a constant. Namely, it cannot happen that in the optimal flow, there exist many vertices which obtain very small amount of flow.

\begin{theorem}
\label{thm:DAG-learnability}
	Under the two assumptions above,
	for any $\epsilon,\delta \in (0,1)$,
	there exists a learning algorithm such that, after observing $O(\frac{n^2}{\epsilon^2}\ln(\frac{n\log n}{\delta}))$ instances,
	 returns  weights $\{ \predw \}$, satisfying that with probability at least $1-\delta$,
	$ \E_{I \thicksim \D}[R(\predw,I)] \geq (1-\epsilon) \E_{I \thicksim \D}[R(\alpha^*,I)]$
	where $R(\alpha,I)$ is the value of the fractional flow obtained by applying $\alpha$ to instance $I$ and $\alpha^* = \arg\max\limits_{\alpha } \E_{I \thicksim \D}[ R(\alpha,I) ]$.
\end{theorem}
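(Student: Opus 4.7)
The plan is to reduce learnability to \emph{instance robustness} via an empirical aggregation step: I would draw a collection of sample instances, combine them into a single ``empirical expected instance,'' compute near-optimal weights on that single large instance using Theorem~\ref{thm:weight_existence_3_layer}, and argue that the weights so obtained are nearly as good as the optimal weights for the true expected instance, which in turn are nearly as good as $\alpha^*$.

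First, I would introduce the (fractional) \textbf{expected instance} $\bar{I}$ in which each type $\tau$ appears with multiplicity $n \cdot \Pr_{\cD}[\tau]$, where $n$ is the number of impressions per instance. Theorem~\ref{thm:weight_existence_3_layer} (and its DAG extension) guarantees weights $\bar{\alpha}$ such that $R(\bar{\alpha},\bar{I}) \geq (1-\epsilon)\opt(\bar{I})$. By concavity of the optimal flow LP value in the type-count vector (since adding impressions can only preserve or increase the optimal flow) combined with Jensen's inequality, $\opt(\bar{I}) \geq \E_{I\sim\cD}[\opt(I)]$, so $\bar\alpha$ is a good \emph{benchmark} prediction: it suffices to produce $\hat\alpha$ whose expected flow on random $I$ is close to $R(\bar\alpha, \bar I)$.

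Second, I would define the learning algorithm as: draw $s = O(\tfrac{n^2}{\epsilon^2} \ln \tfrac{n\log n}{\delta})$ instances from $\cD$, concatenate them into one large empirical instance $\tilde I$ whose type-count vector is the average of the samples, and run the weight-finding algorithm of Theorem~\ref{thm:weight_existence_3_layer} on $\tilde I$ to obtain $\hat\alpha$. By a Chernoff/Hoeffding bound applied node-by-node (not type-by-type, which would have support-size dependence), with probability $1-\delta$ the per-node flow in $\tilde I$ under \emph{any} proportional allocation is within $\epsilon$-multiplicative factor of its counterpart in $\bar I$. Crucially, this is where the assumption that each node in $\bar I$ carries a constant amount of flow in the optimal solution enters: that lower bound on expected per-node flow turns additive concentration error into multiplicative error, and the union bound runs over $O(n)$ nodes rather than over the (possibly exponential) support of $\cD$.

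Third, I would chain the approximations. Using instance robustness (Theorem~\ref{thm:dist_robustness}) twice: once to say that $\hat\alpha$, being optimal on $\tilde I$, performs well on $\bar I$ because $\tilde I$ and $\bar I$ differ only slightly (in the per-node-load sense established above), and a second time to transfer a bound on $\E[R(\hat\alpha,\bar I)]$ to $\E_{I\sim\cD}[R(\hat\alpha,I)]$ via concentration of $I$ itself around $\bar I$ (exploiting again the product-distribution assumption and the constant-flow assumption). Composing the three $(1-O(\epsilon))$ losses and rescaling $\epsilon$ yields the claimed bound $\E_{I\sim\cD}[R(\hat\alpha,I)] \geq (1-\epsilon)\E_{I\sim\cD}[R(\alpha^*,I)]$ with probability $1-\delta$.

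The main obstacle I anticipate is avoiding sample complexity that scales with the support size of $\cD$: a direct empirical-distribution argument would require seeing each frequent type many times, which can be exponential in $n$. The key trick must be that weights are $n$-dimensional objects whose quality depends only on \emph{aggregate} per-node loads, so the correct quantities to concentrate are the $O(n)$ per-node expected flows rather than the per-type probabilities. Making this precise likely requires showing that the iterative weight-update procedure from Section~\ref{sec:flows_existence} is Lipschitz-stable in the per-node load vector, so that small deviations between the empirical and expected per-node loads translate into small deviations in the final weights. The constant-flow-per-node assumption is what makes this Lipschitz bound non-vacuous, since it prevents a single node's load from being dominated by a small tail event.
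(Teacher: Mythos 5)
Your learning algorithm (stack the samples into one averaged instance and compute near-optimal weights on it) is exactly the paper's Algorithm~\ref{alg:learning}, and your benchmark step via concavity of $\min(\cdot,C_a)$ plus Jensen matches the paper's first inequality. The gaps are in how you close the loop.

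The main gap is the step transferring $R(\hat\alpha,\bar I)$ to $\E_{I\sim\cD}[R(\hat\alpha,I)]$. You propose to do this with instance robustness (Theorem~\ref{thm:dist_robustness}) plus ``concentration of $I$ around $\bar I$,'' but instance robustness charges an additive $2\|I-\bar I\|_1$ in type-count space, and a random instance does \emph{not} concentrate around its expectation in that norm when the support is large: $\E\|I-\bar I\|_1$ can grow with the number of types, reintroducing exactly the support-size dependence you are trying to avoid. The paper instead proves Lemma~\ref{lem:bi_learn_2}, a per-node, three-case Chernoff argument showing $\E[\min(Alloc_a(\hat\alpha,I),C_a)]\geq(1-O(\epsilon))\min(\E[Alloc_a(\hat\alpha,I)],C_a)$; this is precisely where the constant-flow / capacity assumption is consumed, because Jensen only gives the useless direction $\E[\min(X,C_a)]\leq\min(\E[X],C_a)$ and the reverse fails without a lower bound on $C_a$. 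Relatedly, your proposed ``Lipschitz stability'' of the iterative weight-update procedure in the per-node load vector is neither needed nor likely true: the update thresholds on over-allocation and is discontinuous in the loads. The paper never compares the weights computed on $\tilde I$ to those computed on $\bar I$; it only compares the performance of a \emph{fixed} weight vector across instances.

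A secondary gap: your uniform concentration claim (``per-node flow in $\tilde I$ under \emph{any} proportional allocation is within an $\epsilon$ factor of its counterpart in $\bar I$, union bound over $O(n)$ nodes'') is not justified as stated. Since $\hat\alpha$ depends on the samples, the concentration must hold simultaneously over the whole class of admissible weight vectors, so the union bound must range over the $O((\log n)^n)$ weight vectors the algorithm can output, not merely over the $n$ nodes; this is where the extra factor of $n$ and the $\log n$ inside the logarithm in the stated sample complexity actually come from.
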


\noindent\textbf{Technical Overview:}
Here we overview the analysis. The full proof is in  Section~\ref{sec:learnability}.  To show that the weights are learnable we utilize a model similar to that of data-driven algorithm design.  To illustrate our techniques we focus on the 
case 
when the instance is a bipartite graph $G = (I\cup A,E)$ with capacities $C_a$ for each $a \in A$ (also recall that $|A| = n$).

In this setting there is an unknown distribution $\cD$ over instances of $I$ of length $m$.  The $t$'th entry of $I$ represents the $t$'th impression arriving online.  Our goal is to find a set of weights that performs well for the distribution $\cD$.  In particular let $\alpha^* := \arg\max_{\alpha \in \cS} \E_{I \sim \cD}[R(\alpha,I)]$ be the \emph{best} set of weights for the distribution.  Define $R(\alpha,I)$ to be the value of the matching using weights $\alpha$ on instance $I$.  Here $\cS$ is a set of ``admissible'' weights, and in particular we only consider weights output by a proportional algorithm similar to the algorithm of Agrawal \etal\cite{PropMatchAgrawal}.  We are allowed to sample $s$ independent samples $I_1,I_2,\ldots,I_s$ from $\cD$ and use these samples to compute a set of weights $\hat{\alpha}$.  We say that a learning algorithm $(\epsilon,\delta)$-learns the weights if with probability at least $1-\delta$ over the samples from $\cD$, we compute a set of weights $\hat{\alpha}$ satisfying $\E_{I \sim \cD}[R(\hat{\alpha},I)] \geq (1-\epsilon)\E_{I \sim \cD}[R(\alpha^*,I)].$

This definition is similar to PAC learning~\citep{DBLP:journals/cacm/Valiant84}.  Also note we are aiming for a relative error guarantee rather than an absolute error.
The main quantity of interest is then the \emph{sample complexity}, i.e. how large does $s$ need to be as a function of $n$, $m$, $\epsilon$, and $\delta$ in order to $(\epsilon,\delta)$-learn a set of weights?  Ideally, $s$ only depends polynomially on $n$, $m$, $1/\epsilon$, and $1/\delta$, and smaller is always better.

The standard way to understand the sample complexity for this type of problem is via the \emph{pseudo-dimension}.  Intuitively, pseudo-dimension is the natural extension of VC-dimension to a class of real valued functions.  In our case the class of functions is $\{R(\alpha,\cdot) \mid \alpha \in \cS\}$, i.e. we are interested in the class of function mapping each instance $I$ to the value of the fractional matching given by each fixed set of weights $\alpha$.  If the pseudo-dimension of this class of functions is $d$, then $s \approx \frac{d}{\epsilon^2} \log(1/\delta)$ samples are needed to $(\epsilon,\delta)$ learn the weights, given that we are able to approximately optimize the empirical \emph{average} performance~\cite{anthonyNNL}.

The good news for our setting is that the pseudo-dimension of our class of functions is bounded.  Each node in the set $A$ can only have one of $T$ different weight values for some parameter $T$.  Then since the number of nodes in $A$ is $n$, there can only be at most $T^n$ different ``admissible'' weights.  It is well known that the pseudo-dimension of a finite class of $k$ different functions is $\log_2(k)$.  Thus the pseudo-dimension of our class of functions is $d = n\log_2(T)$.  As long as $T$ isn't growing too fast as a function of $n$ and $m$, we get polynomial sample complexity.

Unfortunately, finding weights to optimize the average performance across the $s$ sampled instances is  complicated.  Note that for a fixed instance $I$, the value of the matching as a function of the weights, $R(\cdot,I)$, is non-linear in the weights since we are using proportional allocation.  Moreover, it is neither convex nor concave in the parameters $\alpha$ so applying a gradient descent approach will not work. Due to this, it is difficult to analyze the learnability via known results on pseudo-dimension.

The main tool we have at our disposal is that for a fixed instance $I$, we can compute weights $\alpha$ such that $R(\alpha,I) \geq (1-\epsilon)\opt(I)$.  This motivates the following natural direct approach.  Take the $s$ sampled instances $I_1,I_2,\ldots,I_s$ and take their union to form a larger ``stacked'' instance $\hat{I}$.  We then run the aforementioned algorithm on $\hat{I}$ to get weights $\hat{\alpha}$.  Intuitively, if $s$ is large enough, then by standard concentration inequalities $\hat{I} \approx s\E[I]$, i.e. the stacked instance approaches $s$ copies of the  ``expected'' instance.  Then to complete the analysis, we need to show that $\E[R(\predw,I)] \approx R(\predw,\E[I])$.  
In general, it is not true that $\E[R(\predw,I)] \approx R(\predw,\E[I])$.  Using more careful analysis, we show that when the distribution $\cD$ is a product distribution and our two assumptions hold, this is in fact the case.

\section{Matching and Flows: Robustness} \label{sec:flows_robustness}


\subparagraph*{Instance Robustness: } To show the instance robustness, we assume that we can describe the instance directly. Say we have a description of the entire instance denoted by a vector $\predI =(\predm_1,\ldots,\predm_i,\ldots)$, where $\predm_i$ is the number of impressions of type $i$. 
We show that if a set of weights performs well in instance $\predI$, it can be transferred to a nearby instance $I$ robustly.




	

	\begin{theorem} \label{thm:dist_robustness}

		For any  $\eps> 0$, if a set of weights $\predw$ returns a $(1-\eps)$-approximated solution in instance $\predI$, it yields an online flow allocation on instance $I$ whose value is at least
		$\max\{ (1-\eps)\opt - 2\gamma, \opt/(d+1) \} . $
		Here $\opt$ is the maximum flow value on instance $I$, $d$ is the diameter of this graph excluding vertex $t$, and $\gamma$ is the difference between two instances, defined by $|| \predI-I||_1$.
		

	\end{theorem}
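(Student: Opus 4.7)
The plan is to prove the two lower bounds inside the maximum separately, as they come from independent arguments: the first exploits closeness of $\hat{I}$ to $I$, while the second is a worst-case guarantee on proportional allocation that does not use any structural relationship between the two instances.

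For the first bound $(1-\epsilon)\opt - 2\gamma$, the key observation is that the proportional-allocation flow value depends only on the multiset of impressions, not their order, since each impression sends its single unit of flow based solely on its type and the fixed weights $\predw$. I would establish two $1$-Lipschitz estimates in $\gamma = \|\predI - I\|_1$. First, $|\opt(I) - \opt(\predI)| \leq \gamma$, which is immediate since each impression carries at most one unit of any feasible flow. Second, $|R(\predw, I) - R(\predw, \predI)| \leq \gamma$: adding or deleting a single impression $i$ perturbs the aggregated inflow at each internal node $v$ by at most the share $i$ routes to $v$, so the capped contribution $\min(f_v, C_v)$ changes by at most that share, and these shares sum to at most one along any prefix of the DAG (for general DAGs this is propagated through the recursive proportional routing). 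Chaining the hypothesis $R(\predw, \predI) \geq (1-\epsilon)\opt(\predI)$ with these two Lipschitz bounds gives $R(\predw, I) \geq R(\predw, \predI) - \gamma \geq (1-\epsilon)\opt(\predI) - \gamma \geq (1-\epsilon)\opt(I) - 2\gamma$.

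For the second bound $\opt/(d+1)$, I would prove the unconditional statement that the algorithm using any positive weights achieves at least $\opt(I)/(d+1)$ on any instance $I$, where $d$ is the diameter of $G$ excluding $t$. The approach is a path-based charging argument: decompose an optimal flow on $I$ into $s$-$t$ paths $P_k$ of length at most $d+1$ carrying amounts $f_k$. For each such path, either the proportional algorithm routes a comparable amount along $P_k$ (contributing to $R(\predw, I)$), or some internal node $v \in P_k$ is saturated in the algorithm's allocation, in which case the full capacity $C_v$ is realized at $v$. Charge each unit of $\opt$ flow along $P_k$ either to its matching unit of algorithm flow or to the saturation at such a $v$. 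Since each internal node lies on $\opt$-paths of length at most $d+1$, each unit of algorithm flow is charged at most $d+1$ times, yielding $\opt(I) \leq (d+1)\cdot R(\predw, I)$.

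The main obstacle is formalizing the charging in the second bound, specifically showing that a saturated internal node $v$ is not overcharged by downstream $\opt$ flow. The diameter constraint limits how many paths can route through $v$ in $\opt$, but one must also argue that the proportional scheme's saturation at $v$ truly blocks the corresponding $\opt$ mass and that the contribution $C_v$ propagates to the sink with at most a bounded dilution along paths of length $\leq d+1$; the rest of the proof is then a direct combination of the two bounds.
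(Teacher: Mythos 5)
Your first bound is correct and in fact takes a slightly different, more elementary route than the paper. You chain $R(\predw,I)\ge R(\predw,\predI)-\gamma\ge(1-\eps)\opt(\predI)-\gamma\ge(1-\eps)(\opt(I)-\gamma)-\gamma\ge(1-\eps)\opt(I)-2\gamma$, using two $1$-Lipschitz facts (for $\opt$ and for the proportional value $R(\predw,\cdot)$); both facts do hold, since each impression carries at most one unit and the perturbation it induces propagates through sub-stochastic proportional splits with $\min(\cdot,C_v)$ being $1$-Lipschitz. The paper instead invokes the vertex-cut certificate $\C$ from Theorem~\ref{thm:weight_DAG} with $\val(\predw,\predG)\ge(1-\eps)C(\C,\predG)$ and Lipschitz-transfers both the value and the cut capacity from $\predG$ to $G$. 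Your version needs only the stated hypothesis (a $(1-\eps)$-approximation on $\predI$) rather than the cut structure of the specific weights, which is a small gain in generality; the two arguments are otherwise of comparable difficulty.

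The second bound is where there is a genuine gap. You claim that ``the algorithm using any positive weights achieves at least $\opt(I)/(d+1)$,'' with a charging argument whose key step is: if an $\opt$-path $P_k$ is not matched by the algorithm, ``some internal node $v\in P_k$ is saturated.'' This is false for the plain proportional allocation. Already for $d=1$ (bipartite), take one impression type adjacent to $a_1,a_2$ with $\predw_{a_1}\gg\predw_{a_2}$, $C_{a_1}=1$, $C_{a_2}=n$, and $n+1$ impressions: the proportional rule sends essentially everything to $a_1$, the realized value is about $1$ while $\opt=n+1$, and the nodes on the $\opt$-paths through $a_2$ are never saturated. The flow is lost at a saturated node \emph{off} the $\opt$-path, so nothing blocks the charge and the ratio is unbounded. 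The paper's $\frac{1}{d+1}$ guarantee is for a \emph{modified} algorithm (Algorithm~\ref{alg:online_d_layer}) that, upon each arrival, repeatedly reroutes the unassigned portion away from ``blocked'' vertices until the impression is fully routed or no residual path remains; only then is the output a \emph{maximal} flow, for which the LP-duality argument (Lemma~\ref{lem:maximal_flow}) applies: set $\gamma_i$ to the assigned fraction of $i$ and $y_v$ to the used fraction of $C_v$, check dual feasibility from maximality, and note each path meets at most $d$ vertices of $V$ so the dual objective is at most $(d+1)\val$. Moreover, since the theorem asserts a single allocation achieving the \emph{maximum} of the two bounds, you also need the paper's Lemma~\ref{lem:improved_online} showing the rerouting modification never decreases the value relative to the plain proportional scheme, so that your first bound survives the modification; your proposal treats the two bounds as independent and omits this reconciliation.
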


This theorem can be interpreted as follows. If we sample the instance and compute the weights in it, these weights will work well and break through worst-case bounds when the type proportions are sampled well.  Indeed, the weights will perform well in nearby instances with similar type proportions.  Moreover, the algorithm never performs worse than a $\frac{1}{d+1}$ factor of optimal.   We remark that this is the best competitive ratio a deterministic integral algorithm can achieve because we show a lower bound on such algorithms in Section~\ref{subsec:worst_case_bound}. This example builds a recursive  version of the simple lower bound of $\frac{1}{2}$ on the competitive ratio of deterministic online  algorithms for the matching problem.

The technical proof is deferred to Appendix~\ref{sec:robustness}.  Recall that the key to showing existence of the weights was to construct a cut whose capacity is nearly the same as the value of the flow given by the weights.  To show robustness against nearby instances, we observe how this proof can be extended to nearby cuts.  This allows us to argue about the optimal value of the new instance. Standard calculations then let us connect the value of the predicted weights to this optimal value while losing only $O( \gamma)$ in the value of the flow.  To ensure the algorithm is never worse than a $\frac{1}{d+1}$ factor of the optimal, we guarantee that the algorithm always returns a maximal allocation.  

\subparagraph*{Parameter Robustness: } For the parameter robustness, we show that the performance degrades linearly in the relative error of the weight parameter.  
Thus, our algorithm has the same robustness guarantees shown in other works as well. 
	
	
	\begin{theorem}
	\label{thm:flow_param-rob}	
		Consider a prediction of  $\predw_v$ for each vertex $v\in V$. Due to scale invariance, we can assume that the minimum predicted vertex weight $\predw_{min}=1$.
		Define the prediction error $\eta := \max_{v\in V} (\frac{\predw_v}{\alpha^*_v} ,\frac{\alpha_v^*}{\predw_v} ),$
		where $\{ \alpha_v^* \}_{v\in V}$ are vertex weights that can achieve an $(1-\epsilon)$-approximate solution and $\alpha^*_{min}=1$ for any fixed $\eps >0$.
		Employing predictions $\predw$, we can obtain a solution with competitive ratio
		$ \max(\frac{1}{d+1},\frac{1-\epsilon}{\eta^{2d}}), $
		where $d$ is the diameter of this graph excluding vertex $t$.
		
	\end{theorem}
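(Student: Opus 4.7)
The plan is to lower-bound the online algorithm's flow value by the two quantities inside the maximum separately. The $\frac{1}{d+1}$ bound follows from the same maximality argument used in Theorem~\ref{thm:dist_robustness}: the online algorithm using $\predw$ always returns a maximal fractional allocation (no impression can increase its flow without violating a downstream capacity), and in any DAG of diameter $d$ every maximal fractional allocation is within a factor of $d+1$ of the offline optimum by a path-charging argument.

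For the $\frac{1-\epsilon}{\eta^{2d}}$ bound, the first step is a per-edge comparison of forwarding proportions. Let $p^*_{uv} := \alpha^*_v/\sum_{w \in N_u} \alpha^*_w$ and $\hat{p}_{uv} := \predw_v/\sum_{w \in N_u} \predw_w$. The normalization $\alpha^*_{\min} = \predw_{\min} = 1$ together with the definition of $\eta$ gives $\predw_v/\alpha^*_v \in [1/\eta, \eta]$ for every $v$, so both the numerator and the denominator of $\hat{p}_{uv}/p^*_{uv}$ lie in $[1/\eta, \eta]$, whence $\hat{p}_{uv} \in [p^*_{uv}/\eta^2,\,\eta^2\, p^*_{uv}]$. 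Iterating this bound along any impression-to-sink path of length $\ell \leq d$, the product of forwarding proportions using $\predw$ differs from the product using $\alpha^*$ by at most a factor of $\eta^{2\ell} \leq \eta^{2d}$.

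Next, let $x^*_i$ denote the amount sent from impression $i$ by the algorithm using $\alpha^*$, so that $\sum_i x^*_i \geq (1-\epsilon)\opt$. Define the target allocation $y_i := x^*_i/\eta^{2d}$. Since routing $y_i$ units from impression $i$ via $\predw$ places on each edge at most $\eta^{2d}$ times the flow that routing $y_i$ units via $\alpha^*$ would, the factor $1/\eta^{2d}$ in $y_i$ exactly cancels the $\eta^{2d}$ blowup, so the induced per-impression flow is edge-wise at most the flow from impression $i$ with $x^*_i$ units under $\alpha^*$. Summing over $i$, the target is dominated edge-wise by the $\alpha^*$-flow and is therefore feasible, with total value $\sum_i y_i \geq (1-\epsilon)\opt/\eta^{2d}$.

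The main obstacle is the last step: showing that the online algorithm, which uses only $\predw$ and has no knowledge of $x^*_i$, attains at least the aggregate target value. Because each $y_i \leq 1$ and the target is globally feasible under $\predw$-routing, the algorithm can be designed (either by enforcing maximality while preventing early impressions from exhausting downstream capacity, or via an inductive invariant on the remaining capacity after each arrival) to send at least $y_i$ units for impression $i$. Combining this with the maximality-based $\opt/(d+1)$ bound yields the stated competitive ratio $\max\bigl(1/(d+1),\,(1-\epsilon)/\eta^{2d}\bigr)$.
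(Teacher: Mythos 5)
Your normalization step and the per-edge bound $\hat{p}_{uv} \in [p^*_{uv}/\eta^2,\ \eta^2 p^*_{uv}]$ are correct and coincide with the paper's starting point, and the $\frac{1}{d+1}$ part via maximality of the returned allocation is also the paper's argument. The gap is in how you pass from the per-edge bound to the value bound. You define a target $y_i = x^*_i/\eta^{2d}$, argue it is feasible, and then need the online algorithm --- which routes each arriving impression proportionally by $\predw$ with no knowledge of $\alpha^*$ or of the $x^*_i$ --- to attain at least $\sum_i y_i$. Your final paragraph asserts that ``the algorithm can be designed'' to do this but supplies no mechanism: enforcing a per-impression quota $y_i$ would require knowing $x^*_i$, i.e.\ the optimal weights, and a generic maximality invariant does not deliver it (maximality only buys the $1/(d+1)$ factor, not $(1-\epsilon)/\eta^{2d}$). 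There is also a secondary problem in the domination step itself: the value $(1-\epsilon)\opt$ is attained by the \emph{truncated} flow under $\alpha^*$, where each node forwards only $\min(Alloc_v,C_v)$ of what it receives, so the flow arriving at a node is not a sum over paths of products of forwarding proportions, and the clean ``$\eta^{2\ell}$ per path of length $\ell$'' comparison does not directly transfer to the quantity you need to bound.

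The paper avoids both issues by a direct induction over layers on the quantity the algorithm actually produces. Writing $\predA_a$ and $Alloc^*_a$ for the (possibly over-capacity) allocations under $\predw$ and $\alpha^*$ in the layered reduction, one proves for every node $a$ in layer $A_j$ that $\min(\predA_a,C_a) \geq \eta^{-2j}\min(Alloc^*_a,C_a)$: the base case $j=1$ is exactly your per-edge bound, and the inductive step uses that the contribution of $a'\in A_j$ to $a\in A_{j+1}$ is $\min(\predA_{a'},C_{a'})\,\predx_{a'a} \geq \eta^{-2j}\min(Alloc^*_{a'},C_{a'})\cdot\eta^{-2}x^*_{a'a}$, with the monotonicity of $\min(\cdot,C_a)$ absorbing the truncation at each node. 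Summing over the last layer gives $\val \geq \eta^{-2d}\sum_{a\in A_d}\min(Alloc^*_a,C_a) \geq \frac{1-\epsilon}{\eta^{2d}}\opt$, with no target allocation and no achievability claim needed. Replacing your last two paragraphs with this induction closes the gap.
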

	
When the prediction error $\eta$ approaches one, the performance smoothly approaches  optimal.	
While the above theorem involves comparing the propagation of the prediction errors in the performance analysis, we also investigate how inaccurate predictions can be adaptively corrected and improved in the 2-layered Adwords case.
For that case, we give an improved algorithm that can correct error in the weights, so that the loss is only $O(\log \eta)$. This result is in Appendix~\ref{subsec:bi_robust}.
Moreover, we show that the way we adapt is \emph{tight} and the best possible up to constant factors for any algorithm given predicted weights with error $\eta$.



The parameter robustness follows almost directly from the definition of the proportional assignment given by the weights.
In each layer, the over allocation (potentially above the capacity) can be easily bounded by a $\eta^2$ factor, resulting in a loss of at most $\eta^{2d}$ on a $d$ layer graph. We remark that directly using the weights ensures the algorithm is never worse than a $\frac{1}{d+1}$ factor of the optimal solution.
The technical proof, along with the weight-adapting technique for the 2-layered case, is deferred to Appendix~\ref{app:robustness}.

\section{Results on Load Balancing} \label{sec:load_balancing_results}

Next we  show results for restricted assignment load balancing problem  in our model.  In particular, we study the instance robustness and learnability of proportional weights for this problem.  The existence of useful weights and parameter robustness for predicting these weights were shown in prior work~\cite{PropMatchAgrawal,DBLP:conf/soda/LattanziLMV20}.  As discussed before, we focus on analyzing fractional assignments.

To describe our results we introduce the following notation.  Let $[m]$ denote the set of machines and $S$ denote a set of jobs.  Each job $j \in S$ has a size $p_j$ and a neighborhood $N(j)$ of feasible machines.  Given a set of positive weights $\{w_i\}_{i \in [m]}$ on the machines, we define a fractional assignment for each job $j$ by setting $x_{ij}(w) = \frac{w_i}{\sum_{i' \in N(j)} w_{i'}}$ for each $i \in N(j)$.  Let $\alg(S,w)$ be the fractional makespan on the jobs in $S$ with weights $w$ and similarly let $\opt(S)$ be the optimal makespan on the jobs in $S$.  Prior work~\cite{PropMatchAgrawal,DBLP:conf/soda/LattanziLMV20} shows that for any set of jobs $S$ and $\epsilon >0$ there exists weights $\alpha$ such that $\alg(S,w) \leq (1+\epsilon)\opt(S)$.

To describe our instance robustness result we consider an instance of the problem as follows.  Consider instances with $n$ jobs.  The type of a job is the subset of machines to which it can be assigned.  Let $S_j$ be the total size of jobs of type $j$ in instance $S$.  We consider relative changes in the instance and define the difference between instances $S$ and $S'$ as $\eta(S,S'):= \max_j \max \{ \frac{S_j}{S'_j}, \frac{S'_j}{S_j}\}$.  Our instance robustness result is given in the following theorem.

\begin{theorem} \label{thm:makespan-robustness}
	For any instance $S$ and $\epsilon >0$, let $w$ be weights such that $\alg(S,w) \leq (1+\eps)\opt(S)$.  Then for any instance $S'$ we have $\alg(S',w) \leq (1+\eps)^2\eta(S,S')^2\opt(S')$.  
\end{theorem}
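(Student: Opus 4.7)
The plan is to observe that both $\alg(\cdot,w)$ and $\opt(\cdot)$ depend on the input instance only through the aggregated type-sizes $(S_j)_j$, and then to push the relative-error parameter $\eta = \eta(S,S')$ through each of these two quantities separately, giving one factor of $\eta$ from the algorithmic side and one from the optimal side.

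First I would write out the load of the proportional allocation explicitly. Since each job of type $j$ is split according to $x_{ij}(w) = w_i/\sum_{i'\in N(j)} w_{i'}$, the total load on machine $i$ induced by instance $S$ is
\[
L_i(S,w) \;=\; \sum_{j:\, i\in N(j)} S_j \cdot \frac{w_i}{\sum_{i'\in N(j)} w_{i'}},
\]
which is linear in the type-size vector $(S_j)_j$. By the definition of $\eta$, we have $S'_j \le \eta \cdot S_j$ for every type $j$, so term-by-term $L_i(S',w) \le \eta \cdot L_i(S,w)$, and hence
\[
\alg(S',w) \;=\; \max_i L_i(S',w) \;\le\; \eta \cdot \alg(S,w) \;\le\; (1+\eps)\, \eta \cdot \opt(S).
\]

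Next I would bound $\opt(S)$ in terms of $\opt(S')$ by a rescaling argument. Fix any optimal fractional assignment for $S'$; since only the per-type totals matter (jobs of the same type are interchangeable for fractional assignment), it can be written as a vector of fractions $(y^*_{ij})$ with $\sum_i y^*_{ij} = 1$ for every type $j$. Applying the same fractions on instance $S$ gives an assignment whose load on machine $i$ is $\sum_j S_j\, y^*_{ij} \le \eta \sum_j S'_j\, y^*_{ij} \le \eta\, \opt(S')$, using $S_j \le \eta S'_j$. Therefore $\opt(S) \le \eta \cdot \opt(S')$. Chaining the two estimates yields
\[
\alg(S',w) \;\le\; (1+\eps)\,\eta \cdot \opt(S) \;\le\; (1+\eps)\,\eta^2 \cdot \opt(S'),
\]
which is at most $(1+\eps)^2 \eta(S,S')^2 \opt(S')$ as claimed.

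There is no real obstacle: the main conceptual point is simply that the proportional-allocation load and the fractional optimum are both \emph{linear} (resp.\ positively homogeneous under type-wise scaling) in the type-size vector, so a component-wise multiplicative perturbation bounded by $\eta$ propagates at most multiplicatively by $\eta$ through each quantity. If anything, one should double-check that in the setup of Theorem~\ref{thm:makespan-robustness} jobs of the same type are genuinely interchangeable for fractional assignment (so that $\opt$ and $\alg$ truly depend only on $(S_j)_j$ and not on the individual job sizes); this is precisely the content of the prior weight-existence results of \cite{PropMatchAgrawal,DBLP:conf/soda/LattanziLMV20} invoked in the theorem's hypothesis.
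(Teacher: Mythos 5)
Your proof is correct and takes essentially the same route as the paper: both arguments use that the proportional loads $L_i(\cdot,w)$ are linear in the type-size vector, so they scale by at most $\eta$ under the perturbation, and then relate the optima of $S$ and $S'$. The only difference is that the paper passes through the near-optimal weights $w'$ for $S'$ (costing an extra $(1+\eps)$ at that step), whereas you compare directly to an optimal per-type fractional assignment of $S'$, which in fact yields the slightly stronger bound $\alg(S',w)\le(1+\eps)\,\eta(S,S')^2\,\opt(S')$.
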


Let $w$ be as in the statement of the theorem and $w'$ be weights such that $\alg(S',w') \leq (1+\eps)\opt(S')$.  Intuitively, we lose the first factor of $\eta(S,S')$ by bounding the performance of $w$ on $S'$ and the second factor by bounding the performance of $w'$ on $S$.

Next we study learnability. We give the first result showing these weights are learnable in any model.  In order to understand the sample complexity of learning the weights we need to consider an appropriate discretization of the space of possible weights.  For integer $R >0 $ and $\epsilon >0$, let $\cW(R) = \{ \alpha \in \R^{m} \mid \alpha_i = (1+\epsilon)^k, i \in [m], k \in \{0,1,\ldots,R\}\}$.  Additionally, let $p_{\max}$ be an upper bound on all jobs sizes.  The following theorem characterizes the learnability of the weights for restricted assignment load balancing.

\begin{theorem} \label{thm:learnability_makespan}
Let $\epsilon,\delta \in (0,1)$ be given and set $R = O(\frac{m^2}{\eps^2}\log(\frac{m}{\eps}))$ and let $\cD = \prod_{j=1}^n \cD_j$ be a product distribution over $n$-job restricted assignment instances such that $\E_{S \sim \cD}[\opt(S)] \geq \Omega(\frac{1}{\eps^2} \log(\frac{m}{\eps}))$.  There exists an algorithm which finds weights $w \in \cW(R)$ such that

\noindent$\E_{S \sim \cD}[\alg(S,w)] \leq (1+\eps)\E_{S \sim \cD}[\opt(S)]$with probability at least $1- \delta$ when given access to $s = \tilde{O}(\frac{m^3}{\eps^2}\log( \frac{1}{\delta}))$ independent samples $S_1,S_2,\ldots,S_s \sim \cD$.
\end{theorem}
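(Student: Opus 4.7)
The plan is to mirror the stacking-and-concentration strategy of Theorem~\ref{thm:DAG-learnability}, specialized to the min-makespan objective. First, I would verify that restricting attention to the discrete grid $\cW(R)$ costs only a $(1+\epsilon)$ factor: rounding any weight vector down to the nearest power of $(1+\epsilon)$ perturbs every share $x_{ij}(w)$, and hence every machine's load, by at most a $(1\pm\epsilon)$ factor, and the choice $R = \tilde O(m^2/\epsilon^2)$ is large enough to cover the dynamic range of the near-optimal weights produced by the algorithm of \cite{PropMatchAgrawal,DBLP:conf/soda/LattanziLMV20}. Note that $\log |\cW(R)| = m \log_2(R+1) = \tilde O(m)$. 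The learning algorithm then draws $s$ i.i.d.\ samples $S_1,\dots,S_s \sim \cD$, concatenates them into a single instance $\hat S = \bigcup_k S_k$, and runs the prior algorithm on $\hat S$ to obtain $\hat w \in \cW(R)$ with $\alg(\hat S,\hat w) \leq (1+\epsilon)\,\opt(\hat S)$.

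The heart of the argument is a uniform concentration bound combined with a short chain of inequalities. Since $\cD$ is a product distribution, for every fixed $w$ and machine $i$, $L_i(\hat S, w) = \sum_{k} L_i(S_k, w)$ is a sum of independent bounded random variables; a multiplicative Chernoff bound, union-bounded over the $m \cdot |\cW(R)|$ events, yields $L_i(\hat S, w) = (1\pm\epsilon)\, s\, \mu_i(w)$ uniformly over $w$ and $i$ with probability $\geq 1-\delta$, where $\mu_i(w):=\E_S[L_i(S,w)]$, provided $s\cdot\mu_i(w)$ exceeds $(m\log R + \log(1/\delta))/\epsilon^2$ up to constants. An analogous Hoeffding bound applied to the independent $\{\opt(S_k)\}_k$ gives $\sum_k \opt(S_k) \leq (1+\epsilon)\, s\, \E_S[\opt(S)]$. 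Setting $\alg(\bar S,w):=\max_i \mu_i(w)$, the proof concludes by chaining
\begin{align*}
\E_S[\alg(S,\hat w)] \;&\leq\; (1+O(\epsilon))\,\alg(\bar S,\hat w)
\;\leq\; (1+O(\epsilon))\,\alg(\hat S,\hat w)/s \\
&\leq\; (1+O(\epsilon))\,\opt(\hat S)/s
\;\leq\; (1+O(\epsilon))\,\E_S[\opt(S)],
\end{align*}
where the four steps use, in order, upper-tail concentration of $\max_i L_i(S,\hat w)$ on a single $S$; lower-tail concentration of the stacked loads; the algorithm's guarantee on $\hat S$; and sub-additivity $\opt(\hat S)\leq\sum_k \opt(S_k)$ combined with the Hoeffding bound. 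Rescaling $\epsilon$ by a constant and working through the required Chernoff thresholds produces the stated $s = \tilde O(m^3 \log(1/\delta)/\epsilon^2)$.

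The main obstacle is that each concentration step requires the relevant expected quantity to exceed its Chernoff threshold; in particular, machines with small $\mu_i(\hat w)$ fluctuate too much relatively to let the first inequality of the chain go through by multiplicative concentration alone. I would handle this by splitting the machines into \emph{heavy} ones with $\mu_i(\hat w) \geq \Omega(\alg(\bar S,\hat w)/m)$, where multiplicative Chernoff applies directly, and \emph{light} ones whose aggregate contribution to $\max_i L_i(S,\hat w)$ is at most an additive $O(\epsilon)\,\alg(\bar S,\hat w)$ via Bernstein's inequality; the tiny tail probability that some $L_i$ massively overshoots contributes a polynomially small term to $\E_S[\alg(S,\hat w)]$ that is absorbed into the $O(\epsilon)$ slack. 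The hypothesis $\E_S[\opt(S)] \geq \Omega(\log(m/\epsilon)/\epsilon^2)$ is used precisely here: combined with $\alg(\bar S,\hat w) \geq \opt(\bar S)$ and a McDiarmid-type estimate $\opt(\bar S) \geq (1-O(\epsilon))\,\E_S[\opt(S)]$ (valid because $\opt$ is $p_{\max}$-Lipschitz per job and convex on the job-count vector via subadditivity plus positive homogeneity), it guarantees that $\alg(\bar S,\hat w)$ exceeds the concentration threshold, so that the heavy-machine bound dominates and the clean $(1+\epsilon)$ factor survives.
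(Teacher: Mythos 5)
Your proposal is correct and follows essentially the same route as the paper's proof (Theorem~\ref{thm:learn_the_weights}): stack the samples, compute near-optimal proportional weights on the stacked instance, establish uniform convergence of the empirical machine loads via a union bound over the $m\cdot|\cW(R)|$ events, concentrate $\frac{1}{s}\sum_\alpha \opt(S_\alpha)$ separately, and chain the inequalities. The one place you diverge is the step $\E_S[\alg(S,\hat w)]\leq(1+O(\eps))\max_i\E_S[L_i(S,\hat w)]$: your heavy/light machine split with Bernstein's inequality and the McDiarmid estimate for $\opt(\bar S)$ are unnecessary, because (as in the paper's Lemma~\ref{lem:commute_avg_max}) one can apply the upper Chernoff bound to every machine with the deviation threshold set to $(1+\eps)\max_{i'}\E[L_{i'}(\hat w,S)]$ rather than $(1+\eps)\E[L_i(\hat w,S)]$; since that maximum is at least $n/m$ and at least the assumed lower bound on $\E[\opt(S)]$, the tail probability is at most $\eps/m^2$ uniformly, and the rare overshoot contributes only an additive $\eps n/m\leq\eps\max_{i'}\E[L_{i'}]$ to the expectation. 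Your more elaborate argument would also work, so this is a matter of economy rather than a gap.
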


Our techniques here are similar to that of the online flow allocation problem in that we use the samples to construct a ``stacked'' instance then compute a set of near optimal weights on this instance.  We then have to show that these weights work well in expectation with high probability.  This step necessitates the two assumptions in the theorem statement. First, we need that the expected optimal makespan is reasonably large so that the expected makespan is close to the maximum of the expected loads of the machines.  Second, we need that the instance is drawn from a product distribution so that the stacked instance converges in some sense to $s$ copies of the ``expected'' instance.  See Section~\ref{sec:apdx_scheduling} for complete arguments.

\newpage

\bibliographystyle{plainnat}
\bibliography{proportional_flows}
\newpage
\appendix

\section{Existence of Useful Weights for Max Flow in 3-layer DAGs}\label{sec:existweight}

In this section, we show the existence of useful weights for 3-layer DAGs. 
We first state the generalized theorem for online flow allocation on DAGs formally and give a reduction from DAGs to layered graphs such that if the weights in layered graphs satisfy some properties, the theorem can be proved. Then we focus on the 3-layered case and give the detailed proof of Theorem~\ref{thm:weight_existence_3_layer}. 


Consider a directed acyclic graph $G=(\{s,t\} \cup V, E)$, where each vertex $v\in V$ has capacity $C_v$. Our goal is to maximize the flow sent from $s$ to $t$ without violating any vertex capacity constraint. For any vertex $v$, let $d_v$ be the longest distance from $s$ to $v$. Define
$d_{uv} := d_{v}-d_{u}-1$ for each edge $(u,v)$. We claim the following theorem:

\begin{theorem}\label{thm:weight_DAG}
	For any edge $(u,v)\in E$, let $x_{uv}$ be the proportion of flow crossing this edge in all flow received by $u$.
        For any given $\epsilon\in (0,1)$,
		there exists a weight $\alpha_v$ for each vertex $v$ such that we can obtain a $(1-\epsilon)$-approximate solution by setting the proportion of flow out of $u$ to neighbor $v$ to be $ x_{uv} =  \frac{\alpha_v^{1/(2n)^{d_{uv}}}}{ \sum_{v'\in N_u} \alpha_{v'}^{1/(2n)^{d_{uv'}}}  },$
		where $n$ is the number of vertices and $N_u$ is the set of vertices pointed by $u$. Moreover, these weights can be obtained in a time of $O(d^2n^{d+1}\log(n/\epsilon)/\epsilon^2 )$ where $d$ is the diameter of this graph excluding vertex $t$.
\end{theorem}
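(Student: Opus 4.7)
The plan is to reduce the general-DAG problem to a layered-DAG problem, apply a strengthened layered version of Theorem~\ref{thm:weight_existence_3_layer}, and then translate the resulting weights back to $G$ using the exponential substitution that appears in the theorem statement.

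\emph{Reduction to a layered DAG.} Partition $V$ by the longest-distance labels into layers $L_k = \{ v : d_v = k\}$. Build an auxiliary DAG $G'$ as follows: for each $v \in V$, introduce copies $v^{(1)},\ldots,v^{(d_v)}$ living respectively in layers $1,\ldots,d_v$, where $v^{(d_v)} = v$ is the unique copy carrying the capacity $C_v$; link consecutive copies with edges $v^{(k)} \to v^{(k+1)}$ so that all flow entering a copy is propagated down to $v$; and replace each original edge $(u,v) \in E$ with $u \in L_{d_u}$ by an edge $u \to v^{(d_u+1)}$ in $G'$. The resulting $G'$ is a layered DAG with all edges between adjacent layers and at most $nd$ vertices.

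\emph{Strengthened layered theorem.} Run a $d$-layered extension of the algorithm behind Theorem~\ref{thm:weight_existence_3_layer}, with a layer-dependent decrement schedule (deeper layers use larger decrement factors, analogous to the $\epsilon$ vs.\ $\epsilon'$ distinction in the 3-layer proof) on $G'$. This yields weights $\beta$ on $G'$ such that (i) the proportional allocation defined by $\beta$ is $(1-\epsilon)$-approximate, and (ii) the coupled invariant $\beta_{v^{(k)}} = \beta_{v^{(k+1)}}^{1/(2n)}$ holds for every $v$ and every $1 \leq k < d_v$. The invariant is maintainable because each intermediate copy $v^{(k)}$ has a single out-neighbor $v^{(k+1)}$, so initializing the copies with this relation and updating all copies of $v$ in lockstep preserves both the proportional-allocation structure and the monotonicity properties (Properties~\ref{ppt:1} and~\ref{ppt:2}) needed to run the 3-layer gap-and-cut argument between each consecutive pair of layers.

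\emph{Translation back.} Define $\alpha_v := \beta_{v^{(d_v)}}$. By (ii), $\beta_{v^{(k)}} = \alpha_v^{1/(2n)^{d_v - k}}$. For any original edge $(u,v) \in E$ the corresponding edge in $G'$ is $u \to v^{(d_u+1)}$, so the proportion of flow leaving $u$ toward $v$ is
\[
x_{uv} = \frac{\beta_{v^{(d_u+1)}}}{\sum_{v' \in N_u} \beta_{v'^{(d_u+1)}}} = \frac{\alpha_v^{1/(2n)^{d_{uv}}}}{\sum_{v' \in N_u} \alpha_{v'}^{1/(2n)^{d_{uv'}}}},
\]
which is precisely the formula claimed in the theorem. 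Since flow entering any copy $v^{(k)}$ in $G'$ reaches $v$ losslessly, the flow value realized in $G$ equals that realized in $G'$, which is at least $(1-\epsilon)\opt$.

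\emph{Main obstacle.} The difficult step is (ii): designing update rules on $G'$ that preserve the coupling $\beta_{v^{(k)}} = \beta_{v^{(k+1)}}^{1/(2n)}$ across all copies while still yielding a $(1-\epsilon)$-approximate allocation. This amounts to extending the 3-layer monotonicity analysis to $d$ layers under coupled weights, handling both self-decreases and forced decreases consistently, and verifying that the constructed vertex cut (built from ``above-gap'' and ``below-gap'' sets in each layer) still nearly matches the proportional flow value despite the exponentially compressed weight scaling across copies. The running-time bound $O(d^2 n^{d+1} \log(n/\epsilon)/\epsilon^2)$ then follows from bounding the number of coupled update iterations by $O(d \log(n/\epsilon)/\epsilon)$ and the per-iteration flow-propagation cost on $G'$.
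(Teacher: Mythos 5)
Your proposal follows essentially the same route as the paper: the same copy-based reduction to a layered DAG, the same two required properties (near-optimality plus the coupled invariant $\beta_{v^{(k)}} = \beta_{v^{(k+1)}}^{1/(2n)}$, which the paper enforces by giving virtual copies infinite capacity so they only force-decrease in lockstep with their unique successor under the layer-dependent schedule $\epsilon_j = \epsilon_{j+1}/(2n)$), and the same translation back via $\alpha_v = \beta_{v^{(d_v)}}$. The one quantitative slip is your iteration count: it is not $O(d\log(n/\epsilon)/\epsilon)$ but rather $O(d^2 n^{d}\log(n/\epsilon)/\epsilon^2)$, because the coupling forces $\epsilon_1 = \epsilon_d/(2n)^{d-1}$ and the gap-averaging argument requires $T = \Omega\left(n\log(n/\epsilon_{\max})/(\epsilon_{\max}\epsilon_{\min})\right)$ — this exponentially small $\epsilon_{\min}$ is precisely where the $n^{d+1}$ factor in the stated running time comes from.
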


To prove Theorem~\ref{thm:weight_DAG},
we first reduce our model to a maximum flow problem in a $d$-layered graph where an algorithm must return the weights that obey certain structural properties. We then find such weights in the $d$-layered graph in Appendix~\ref{sec:apdxexistweights_d} within the claimed time and use them to find weights on the original graph. We provide the key ingredients for proving the $d$-layered result by proving the 3-layered case in Appendix~\ref{subsec:exist3weight} below.

\subsection{Reduction from DAGs to Layered DAGs}
\label{subsec:existgenweight}

For any directed acyclic graph $G=(\{s,t\} \cup V, E)$ with vertex capacities, we construct a $d$-layered DAG $\Gr=(\{s,t\} \cup \Vr, \Er)$ where all arcs go from one layer to the next and
$d:=\max_{v\in V} d_v$.

We initialize $d$ empty vertex layers between $s$ and $t$ in $G^r$.
Copy vertex $t$ $d$ times and add them to $d$ different layers. They all have infinite capacities.
Use $\tilde{t}_j$ to represent the copy in the $j$-th layer.
For each vertex $v\in V$, copy $d_v$ times and add them to the previous $d_v$ layers.
Similarly,
use $\vr_j$ to represent the copy in the $j$-th layer.
We call the copy in the $d_v$-th layer the real copy of $v$ and other vertices the virtual copies of $v$.
Set the capacity of the real copy to be $C_v$ and the capacities of virtual copies to be infinite: $C_{\vr_{d_v}}:= C_v$ and $C_{\vr_j} = \infty$ for $j< d_v$.

Now we construct the edge set $\Er$.
For each $v \in V\cup \{t\}$, connect its two copies in every two neighboring layers.
Use $\Er_1$ to represent the set of these edges.
For each edge $(u,v) \in E$, connect the real copy of $u$ to the copy of $v$ in the $(d_u+1)$-th layer.
Use $\Er_2$ to represent the set of these edges.
In other words, we add $(\ur_{d_u}, \vr_{d_u+1})$ to $\Er_2$ for each $(u,v)\in E$.
Let $\Er:=\Er_1 \cup \Er_2$.
Finally, remove all vertices that can not be reached by $s$ in the new graph.
According to our construction, the edges can only occur in two neighboring layers, indicating that $\Gr$ is an ($s$-$t$) $d$-layered graph (see Fig~\ref{fig:reduction} as an illustration).

\begin{figure}[t]
	\centering
	\includegraphics[width=0.8\linewidth]{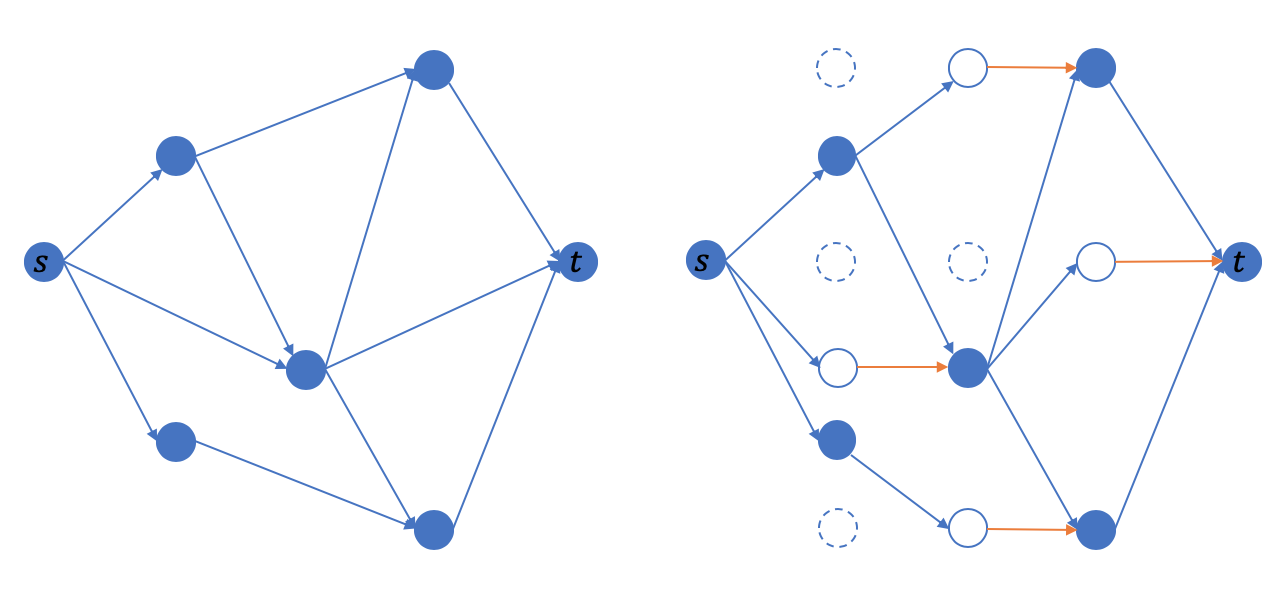}
	\caption{An illustration of the reduction. On the left is the original graph $G$ and on the right is the new $d$-layered graph $\Gr$. In graph $\Gr$, blue vertices are the real copies while the white ones are virtual. The dotted vertices are the vertices removed in the last stage.
		We color the edges in $\Er_1$ orange and the edges in $\Er_2$ blue.  }
	\label{fig:reduction}
\end{figure}

Clearly, any feasible $s$-$t$ flow in $G$ corresponds to a feasible $s$-$t$ flow in the new graph $\Gr$ and vice-versa. Thus, the values of the maximum flow in these two graphs are the same.

In order to obtain the vertex weights in the original DAG, we compute the vertex weights in the reduced $d$-layered graph and transfer these weights back to the original graph.  To be able to transfer the weights back to the original graph, we will need that the weights satisfy certain structural properties. In general, without these properties the weights compute on the $d$-layered graph will not be useful for obtaining weights in the original graph.  \footnote{As mentioned in the Techniques section, the vertex weights for $\Gr$ can be computed by a simple algorithm. However, using that simple algorithm, we may not be able to transfer these vertex weights back to the original graph.  In particular they may not satisfy the properties stated.}


Now we state two properties and show that if there exists  weights for $\Gr$ with these two properties, then we can prove Theorem~\ref{thm:weight_DAG}.  The key property is the virtual-weight dependence, which ensures that the weights given to copies of a node created in the reduction can be interpreted in the original graph.

\begin{property}[Near optimality]\label{ppt:near_optimal}
	For the $d$-layered graph $\Gr$, a set of vertex weights $\{ \alpha_v \}$ is near optimal if we can obtain a near optimal solution by setting $x_{uv}=\frac{\alpha_v}{ \sum_{v'\in N_u} \alpha_{v'} }$
	for each edge $(u,v)\in E$.
\end{property}

\begin{property}[Virtual-weight dependence]\label{ppt:virtual_dependence}
	For the $d$-layered graph $\Gr$, a set of vertex weights $\{ \alpha_v \}$ has dependent virtual weights if for any two neighboring copies $\vr_j,\vr_{j+1}$ of any vertex $v\in G$, we have $\alpha_{\vr_{j}} = (\alpha_{\vr_{j+1}})^{\rho}$ for some $\rho > 0$ .
	
\end{property}


\begin{proof}[\textbf{Proof of Theorem~\ref{thm:weight_DAG}}]

	Assume that we have a set of vertex weights $\{\alpha\}$ of $\Gr$ with the two properties.
	The vertex weights $\{ \beta \}$ in $G$ are constructed by letting $\beta_v = \alpha_{\vr_{d_v}}$ for each $v\in V$. Namely, for each vertex $v$, its weight $\beta_v$ is the weight of $v$'s  copy in $\Gr$. Each edge $(u,v)$ in $G$ corresponds to edge $(\ur_{d_u}, \vr_{d_u+1})$ in $\Gr$.  We show the values of flow crossing the edge in both graphs with their respective weights are the same. This will prove that the weights $\{\beta\}$ return a near optimal solution in $G$.
	
	Consider any edge pair $(u,v)$ and $(\ur_{d_u}, \vr_{d_u+1})$. According to the allocation rules in Theorem~\ref{thm:weight_DAG}, for the original graph $G$, we have $x_{uv} =  \frac{\beta_v^{\rho^{d_{uv}}}}{ \sum_{w\in N_u} \beta_{w}^{\rho^{d_{uw}}}  },$ and according to the allocation rules in the near optimal property, for the reduced graph $\Gr$, we have $x_{u'_{d_u} v'_{d_u+1}} =  \frac{\alpha_{\vr_{d_u+1}} }{ \sum_{w\in N_u} \alpha_{\tilde{w}_{d_u+1}}  }.$ For any two neighboring copies $\vr_j,\vr_{j+1}$ of a vertex $v$, $\alpha_{\vr_{j}} = (\alpha_{\vr_{j+1}})^{\rho}$, we have  $\alpha_{\vr_{d_u+1}} = \alpha_{\vr_{d_v}} ^{\rho^{(d_v-d_u-1)}} =  \alpha_{\vr_{d_v}}^{ \rho^{d_{uv}} } = \beta_v^{ \rho^{d_{uv}} } . $ Thus, using weights $\{\beta_v\}$, for each edge pair, we have $x_{uv} = x_{\ur_{d_u} \vr_{d_u+1}} $, completing the proof of weight existence. 

To finish the proof, we will have to argue the existence of weights with these above two properties in $d$-layered DAGs and that they can be computed in the time claimed in the theorem. We use  $\rho = 1/(2n)$  to achieve these properties. We supply these proofs in Section~\ref{sec:apdxexistweights_d} in the Appendix to complete the proof.
\end{proof}

In the following, we will show how to obtain the weights with the two properties for the special case of 3-layered graphs. 

\subsection{Max Flow in 3-layered Graphs}
\label{subsec:exist3weight}

This section gives an algorithm to compute the requisite weights in three-layered graphs and this uses many of the key ideas needed for general DAGs.
If the reader is interested in an even simpler proof, we provide a simplified algorithm to compute vertex weights for two-layered (bipartite) graphs in Appendix~\ref{sec:apdxexistweights_bi}.

We first focus on the near optimality property (Theorem~\ref{thm:weight_existence_3_layer}), showing that for any 3-layered graphs, our algorithm returns near optimal weights.
Then we claim that if the 3-layered graph is a reduction graph $\Gr$, the vertex weights satisfy the virtual-weight dependence property. Particularly, in the proof of the virtual-weight dependence property, the parameter $\rho$ is set to be $1/2n$.

\begin{figure}[t]
	\centering
	\captionsetup{justification=centering}
	\includegraphics[width=0.5\linewidth]{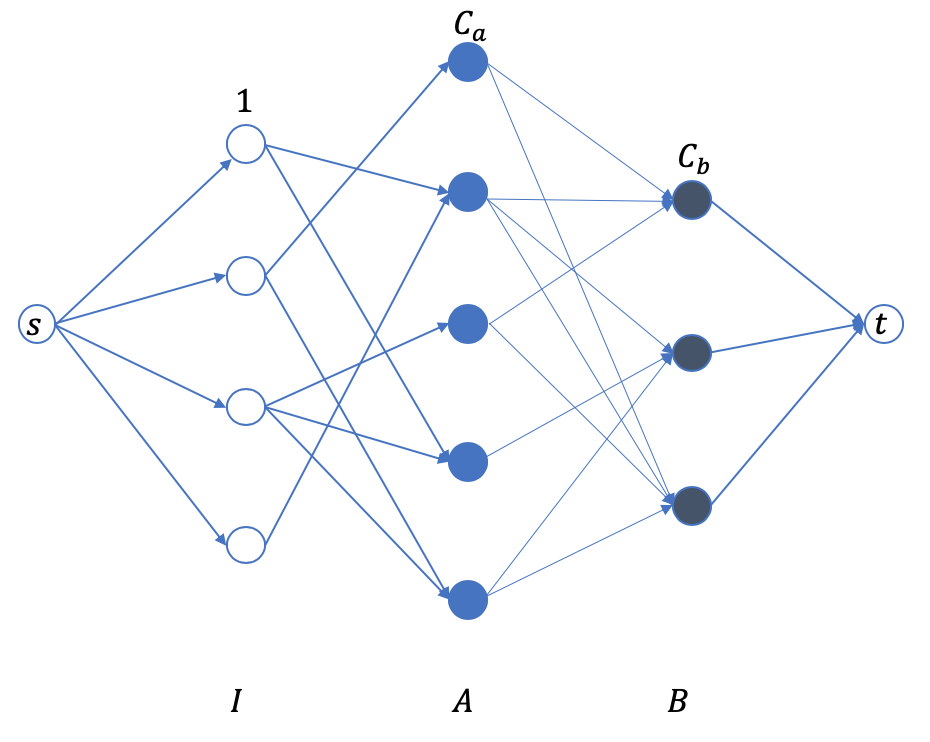}
	\caption{An $(s$-$t)$ 3-layered graph.}
	\label{fig:g3graphstructure}
\end{figure}

Consider an $(s$-$t)$ 3-layered graph $G=(\{s,t\}\cup I\cup A \cup B, E)$ (see Fig~\ref{fig:g3graphstructure}).
Each vertex $i$, $a$, $b$ in the $I$ layer, the $A$ layer and the $B$ layer, respectively, have capacity $1$, $C_a$ and $C_b$.

Use $N_{(i,A)}$, $N_{(a,I)}$, $N_{(a,B)}$ and $N_{(b,A)}$ to represent $i$'s neighborhood in $A$, $a$'s neighborhood in $I$, $a$'s neighborhood in $B$ and $b$'s neighborhood in the $A$ respectively.
For each $(i,a)\in E$, we use $x_{i,a}$ to denote the proportion of the flow sent from $i$ to $a$. Similarly, we define $y_{a,b}$ for each $(a,b)\in E$.
For each vertex $v$, use $Alloc_v$ to represent the total amount of flow sent to it.
The algorithm runs iteratively.
We use superscript $(t)$ to denote the value of variables in the end of iteration $t$.
For example, let $Alloc_v^{(t)}$ be the value of $Alloc_v$ in the end of iteration $t$.
But for simplicity,  we will use $Alloc_v$ directly to represent $Alloc_v^{(T)}$.
Let $\epsilon_{max} = \max(\epsilon_A,\epsilon_B)$ and $\epsilon_{min}=\min(\epsilon_A,\epsilon_B)$.

Our framework is stated in Algo~\ref{alg:3_layer_framework}.
We set weights for both layers and update them iteratively. Each vertex $b\in B$ updates its weight according to its allocation while each vertex $a\in A$ updates the weight, not only because of itself (self-decrease), but also due to its neighborhood in $B$ (forced-decrease).
$\cond$ is a Boolean function of $N_{(a,B)}$ for any $a\in A$. Namely, only when $N_{(a,B)}$ satisfies some conditions (represented by $\cond$), we let $\alpha_a$ do forced-decrease.
By defining different $\epsilon_A$, $\epsilon_B$ and $\cond$, we can obtain different algorithms.

\begin{algorithm}[t]
	\caption{The framework of the algorithm in $(s-t)$ 3-layered graphs}
	\label{alg:3_layer_framework}
	\KwIn{ $G= (\{s,t\}\cup I\cup A\cup B,\mathbb{E})$, $\{ C_a \}_{a \in A}$, $\{ C_b \}_{b \in B}$, parameter $\epsilon_A, \epsilon_B $ }
	
	Initialize $\alpha_a = 1$ $\forall a \in A$ and $\beta_b = 1$ $\forall b \in B$. \

	\For {iteration $1,2,...,T$ }
	{
		For each $(i,a)\in \mathbb{E}$ and $(a,b) \in \mathbb{E}$, let $x_{i,a} =\frac{\alpha_a}{\sum_{a'\in N_{(i,A)}} \alpha_{a'} } $ and $y_{a,b} =  \frac{\beta_b}{\sum_{b' \in N_{(a,B)} } \beta_{b'} }$. \
		
		For each vertex in the $A$ layer and the $B$ layer, let $Alloc_a = \sum_{i\in N_{(a,I)}} x_{i,a} $ and $Alloc_b = \sum_{a \in N_{(b,A)} } \min(Alloc_a, C_a) y_{a,b} $ respectively. \
		
		\For {each $b \in \mathbb{B}$ }
		{
			
			\If {$Alloc_b > (1+\epsilon_B)(1+\epsilon_A) C_b $}
			{
				$\beta_b \leftarrow \beta_b / (1+\epsilon_B) $. \

			}						
		}
		
		\For{each $a \in \mathbb{A}$}
		{

			\If{ $Alloc_a > (1+\epsilon_A)C_a$ }
			{

				$\alpha_a \leftarrow \alpha_a / (1+\epsilon_A) $  /* Self-decrease */

			}
			\ElseIf{ $\cond(N_{(a,B)}) =\true$}
			{

				$\alpha_a \leftarrow \alpha_a / (1+\epsilon_A) $ /* Forced-decrease */

			}
			
		}

	}
	\KwOut{ $\{\alpha_a\}_{a\in A}$ and $\{\beta_{b}\}_{b\in B}$ }
\end{algorithm}

Since each weight decreases at most once per iteration, the potential minimum weight $\alpha_{min}$ in the $A$ layer and $\beta_{min}$ in the $B$ layer are $\frac{1}{(1+\epsilon_A)^T}$ and $\frac{1}{(1+\epsilon_B)^T}$ respectively after $T$ iterations. Then we can partition vertices in $A$ and $B$ into several classes (we also call them levels in the following):
\[ B(k) := \{ b\in B | \beta_b = (1+\epsilon_B)^k \beta_{min} \}, \]
\[ A(k) := \{ a\in A | \alpha_a = (1+\epsilon_A)^k \alpha_{min} \}. \]
For any vertex $v$, use $\lev(v)$ to denote the level that it belongs to. Similarly, define $\lev^{(t)}(v)$ be the value of $\lev(v)$ in iteration $t$. Clearly, if $\lev(v)=T$, we know that the weight of vertex $v$ has not decreased so far, while $\lev(v)=0$ indicates that its weight decreased in every iteration.

We now introduce four properties such that for any algorithm under this framework, if it satisfies these four properties, it will return a $(1-O(\epsilon))$-approximated solution when $T = \poly(n,\epsilon)$ ($n$ is the number of vertices).

\begin{property}[Increasing monotonicity]\label{ppt:1}
	For any vertex $v$ in $A \cup B$, in an iteration $t$, if its weight does not decrease (i.e. $\alpha_a^{(t)} = \alpha_a^{(t-1)}$ ), then we have $Alloc_v^{(t)} \geq Alloc_v^{(t-1)}$.
\end{property}

\begin{property}[Decreasing monotonicity]\label{ppt:2}
	For any vertex $v$ in $A \cup B$, in an iteration $t$, if its weight decreases (i.e. $\alpha_a^{(t)} < \alpha_a^{(t-1)}$), then we have $Alloc_v^{(t)} \leq Alloc_v^{(t-1)}$.
\end{property}

\begin{property}[Layer dominance]\label{ppt:3}
	For any vertex $a$ in the $A$ layer, in any iteration $t$, there exists at least one vertex $b\in N_{(a,B)}$ such that $\lev^{(t)}(b) \geq \lev^{(t)}(a)$.
\end{property}

\begin{property}[Forced decrease exemption]\label{ppt:4}
	
	For any vertex $a$ in the $A$ layer, in any iteration $t$,
	if there exists one vertex $b\in N_{(a,B)}$ with $\lev^{(t)}(b)=T$ or satisfying $\lev^{(t)}(b) - \lev^{(t)}(a) \geq \log(n/\epsilon_{max})/\epsilon_{min}$, then $\cond(N_{(a,B)})=\false$.
\end{property}

These properties will be used to show the following theorem:
\begin{theorem}\label{thm_g3}
	For any algorithm under our framework with the four properties, if $T=O(\frac{n\log(n/\epsilon_{max})}{\epsilon_{max}\epsilon_{min}} )$, the algorithm will return a $(1-O(\epsilon_{max}))$-approximated solution.
\end{theorem}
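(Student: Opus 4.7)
The plan is to extend the min-cut argument sketched for bipartite graphs in the technical overview. Starting from the weights produced after $T$ iterations, I will partition the $A$ and $B$ layers by level, then use a pigeonhole/averaging argument to locate two ``gaps'' of consecutive level indices (one in $A$, one in $B$), each of width roughly $\log(n/\epsilon_{max})/\epsilon_{min}$, that contain a negligible fraction of the total flow. The choice $T = \Theta(n\log(n/\epsilon_{max})/(\epsilon_{max}\epsilon_{min}))$ is dictated precisely by this averaging: since there are at most $n$ vertices per layer but $T$ levels, we can find such a wide empty (or nearly empty) window. Across the two gaps, weights differ by at least a factor of $n/\epsilon_{max}$, which will let us control ``cross-gap leakage'' in the flow.

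Having fixed the gaps, define $\g(A)^+$, $\g(A)^-$ (and likewise $\g(B)^+$, $\g(B)^-$) as the vertices above and below their respective gap. Then set $I_0 := \{ i \in I : N_{(i,A)} \cap \g(A)^+ \neq \emptyset\}$ and $A_0 := \{ a \in \g(A)^- : N_{(a,B)} \cap \g(B)^+ \neq \emptyset\}$. I will verify that $I_0 \cup A_0 \cup \g(B)^-$ is a valid $s$-$t$ vertex cut: every $s$-$t$ path must pass through exactly one vertex in $A$ and one in $B$, and any such path either hits $\g(A)^+$ (hence its $I$-endpoint lies in $I_0$), or it uses a vertex in $\g(A)^-$, in which case it either exits to $\g(B)^+$ (and the $A$-endpoint lies in $A_0$) or exits to $\g(B)^-$ directly. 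Bounding $\opt$ by this cut is standard; the core of the proof is showing that the proportional allocation from the output weights achieves nearly the cut value.

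The quantitative matching argument proceeds layer by layer. For the $\g(A)^+$ and $\g(B)^+$ sides: since any vertex at level strictly above $0$ failed to self-decrease in some iteration, Decreasing monotonicity (Property~\ref{ppt:2}) propagated back through all subsequent iterations lets me conclude these vertices are never more than a $(1+\epsilon_A)$ or $(1+\epsilon_B)(1+\epsilon_A)$ factor above capacity; combined with the weight separation across the gap, essentially all flow from $I_0$ reaches $\g(A)^+$, and by Layer dominance (Property~\ref{ppt:3}) plus Forced decrease exemption (Property~\ref{ppt:4}), every $a \in \g(A)^+$ has a neighbor in $\g(B)^+$ (otherwise $a$ would have been forced down into or below the gap during the last $T - \lev(a)$ iterations), so this flow funnels into $\g(B)^+$ and onward to $t$. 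For the $\g(A)^-$ and $\g(B)^-$ sides: vertices at level $0$ decreased in essentially every iteration, so Increasing monotonicity (Property~\ref{ppt:1}) together with the self-decrease rule forces their allocation to lie within a $(1+\epsilon_{max})$ factor of capacity at the end; in particular $A_0$ sends nearly its full allocation into $\g(B)^+$, and $\g(B)^-$ is loaded essentially to capacity, delivering its flow straight to $t$. Finally the weight separation across each gap ensures the three contributions are almost disjoint (the double-counted flow across gaps is $O(\epsilon_{max})$ of $\opt$).

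The main obstacle, in my view, is the ``forced decrease exemption'' step establishing that every $a \in \g(A)^+$ has a neighbor in $\g(B)^+$. Naively one might fear that the choice of $\cond$ in the framework could leave $a$ stuck high in $A$ while all of $N_{(a,B)}$ sits at or below the gap; Property~\ref{ppt:4} is exactly the statement that $a$ would then have been eligible for forced decrease in every iteration after a certain point and would have cascaded down below the gap, contradicting $a \in \g(A)^+$. Turning this into a clean telescoping bound on $\lev(a) - \lev(b)$ across iterations, while simultaneously tracking that the self-decrease condition ($Alloc_a > (1+\epsilon_A) C_a$) does not interfere, is the delicate bookkeeping step. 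Once this is in place, the rest is arithmetic: summing the capacity bounds over the cut and absorbing the $(1+\epsilon)$ factors, $T^{-1}$ averaging loss, and cross-gap leakage into a single $(1-O(\epsilon_{max}))$ factor.
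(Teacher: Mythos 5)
Your proposal is correct and follows essentially the same route as the paper's proof: the averaging choice of the gap location, the vertex cut $I_0 \cup A_0 \cup \g(B)^-$ (the paper's $N_{(\g(A)^+,I)} \cup \g(A^-,B^+) \cup \g(B)^-$), the four monotonicity/dominance/exemption lemmas bounding allocations above and below the gaps, and the weight-separation argument controlling cross-gap leakage and double counting. The only minor difference is attribution of the ``every $a \in \g(A)^+$ has a neighbor in $\g(B)^+$'' step, which the paper derives directly from layer dominance (Property~\ref{ppt:3}) rather than from the forced-decrease exemption; this does not affect correctness.
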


To prove Theorem~\ref{thm_g3}, we consider a new graph $G'$, a smaller graph by removing some vertices and edges.
We first construct an $s$-$t$ cut in $G'$. Then prove that the value of our solution in $G$ is at least $(1-O(\epsilon_{max}))$ times the value of this cut, thus at least $(1-O(\epsilon_{max}))$ times the maximum flow in $G'$. Finally, we show that
the optimal value in $G'$ is close to that in $G$, completing the proof.

We now define some vertex sets in order to construct $G'$. Given any integer $1+\frac{\log(n/\epsilon_{max})}{\epsilon_{min} }\leq \ell \leq T-1-\frac{\log(n/\epsilon_{max})}{\epsilon_{min} }$, we can define a gap in the $A$ layer: $\g(A):= \bigcup_{k=\ell}^{\ell'} A(k)$, where $\ell'=\ell+\frac{\log(n/\epsilon_{max})}{\epsilon_{min} }$. As shown in Fig~\ref{fig:3layergap}, all vertices in the $A$ layer are partitioned into three parts: $\g(A)$, $\g(A)^-$ and $\g(A)^+$, where $\g(A)^-:=\bigcup_{k=0}^{\ell-1} A(k)$ and $\g(A)^+:=\bigcup_{k=\ell'+1}^{T} A(k)$. Similarly, we can define $\g(B)$, $\g(B)^-$ and $\g(B)^+$ using the same $\ell$. The length of these two gaps are both $\frac{\log(n/\epsilon_{max})}{\epsilon_{min} }$ in order to make sure that regardless of the layer, any vertex above the gap has weight at least $\frac{n}{\epsilon_{max}}$ times than the weight of any vertex below the gap.

\begin{figure}[t]
		\centering
		\captionsetup[subfigure]{justification=centering}
		\begin{subfigure}[t]{0.45\linewidth}
			\centering
			\includegraphics[height=7cm]{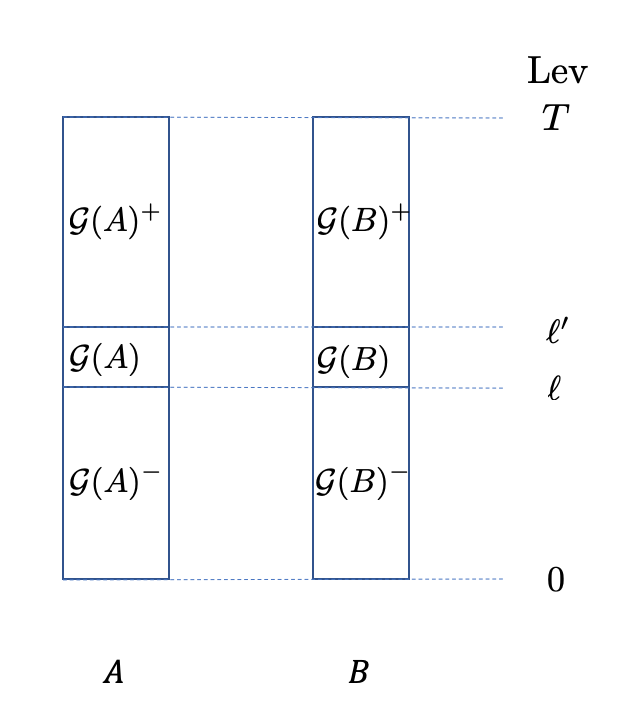}
			\caption{}
			\label{fig:3layergap}
		\end{subfigure}
		\begin{subfigure}[t]{0.5\linewidth}
			\centering
			\includegraphics[height=6.25cm]{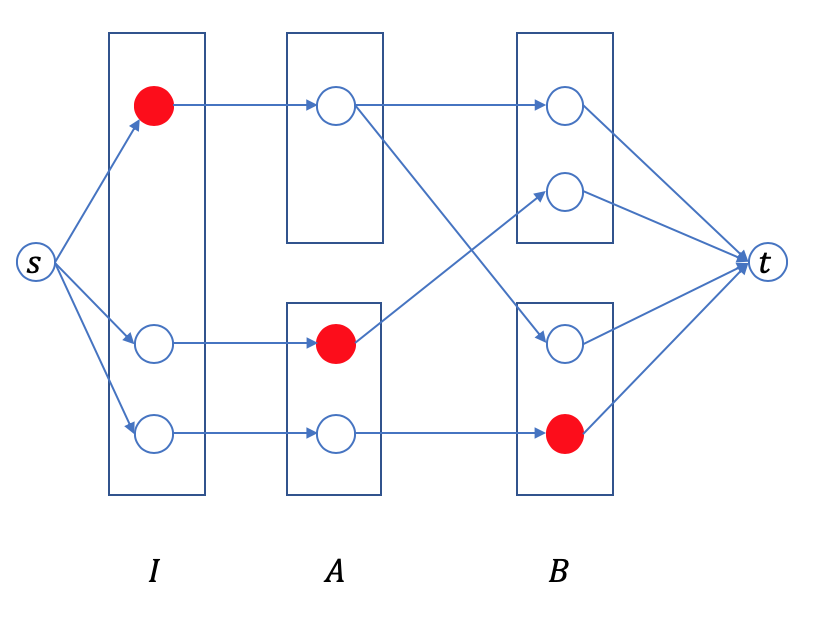}
			\caption{ }
			\label{fig:3layercut}
		\end{subfigure}
		\caption{Fig~(a) is an illustration of the partition of the $A$ layer and the $B$ layer. Removing $\g(A)$ and $\g(B)$ can get the graph $G'$. Fig~(b) is an illustration of the vertex cut in graph $G'$. According to our rules, all red vertices are added to $\C$.}
		\label{fig:3layergap_and_cut}
\end{figure}

The graph $G'$ is created by removing all vertices in $\g(A)$ and $\g(B)$ and all edges adjacent to them. We give two rules to construct an $s$-$t$ vertex cut $\C$:

\begin{enumerate}
\item For any $s$-$t$ path crossing both $\g(A)^-$ and $\g(B)^-$, add the vertex $b$ in the $B$ layer to $\C$.
\item Let $\g^+ = \g(A)^+ \cup \g(B)^+$. For any $s$-$t$ path crossing at least one vertex in $\g^+$, find the first vertex $v$ in $\g^+$ and add the vertex before $v$ to $\C$.
\end{enumerate}

See Fig~\ref{fig:3layercut} as an illustration.
Observe that $\C$ is a feasible $s$-$t$ vertex cut, meaning that all $s$-$t$ paths in $G'$ will be blocked if removing $\C$.
Use $\g(A^-,B^+)$ to represent the vertices in $\g(A)^-$ which are adjacent to at least one vertex in $\g(B)^+$. Then according to the two rules, we can compute the value of this cut:

\begin{equation}
C(\C) =  C(\g(B)^-) + C(\g(A^-,B^+)) + |N_{(\g(A)^+,I)}|  . \nonumber
\end{equation}

The value of our solution can be computed easily using the flow into $t$:
\begin{equation}
\val = \sum_{b\in \g(B)^-} \min(Alloc_b,C_b) + \sum_{b\in \g(B)} \min(Alloc_b,C_b)+ \sum_{b\in \g(B)^+} \min(Alloc_b,C_b)\nonumber
\end{equation}

In the following, we first show that $\sum_{b\in \g(B)^-} \min(Alloc_b,C_b)$ and $C(\g(B)^-)$ are close, and then establish the relationship between $\sum_{b\in \g(B)} \min(Alloc_b,C_b)+ \sum_{b\in \g(B)^+} \min(Alloc_b,C_b)$ and $ C(\g(A^-,B^+))+|N_{(\g(A)^+,I)}|.$ To prove the first statement, we need the following lemma:

\begin{lemma}\label{lem:b_lowerbound}
	If the increasing monotonicity property holds, after iteration $T$, $\forall b \in \bigcup_{k=0}^{T-1} B(k)$, we have $Alloc_b \geq C_b$.
\end{lemma}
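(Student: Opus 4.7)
The plan is to identify the last iteration $t^*$ in which $\beta_b$ is decreased, show that $Alloc_b$ is still at least $C_b$ at the end of that iteration, and then propagate this lower bound forward to iteration $T$ using Property~\ref{ppt:1}. Fix any $b \in \bigcup_{k=0}^{T-1} B(k)$; then $\beta_b$ has been decreased at least once during the $T$ iterations, so $t^*$ is well defined. By the decrease rule in Algorithm~\ref{alg:3_layer_framework}, the allocation triggering this decrease satisfies $Alloc_b^{(t^*-1)} > (1+\epsilon_A)(1+\epsilon_B)\,C_b$.

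The core step is to show that $Alloc_b^{(t^*)} \geq C_b$, which I will do by bounding the multiplicative change across iteration $t^*$ of each factor appearing in $Alloc_b = \sum_{a \in N_{(b,A)}} \min(Alloc_a, C_a)\,y_{a,b}$. For $y_{a,b}$, the numerator $\beta_b$ drops by exactly $(1+\epsilon_B)$, while the denominator $\sum_{b' \in N_{(a,B)}}\beta_{b'}$ can only drop as well (all $\beta$-weights are non-increasing across iterations), so $y_{a,b}^{(t^*)} \geq y_{a,b}^{(t^*-1)}/(1+\epsilon_B)$. For $\min(Alloc_a, C_a)$ I will split into cases on the behavior of $\alpha_a$ during iteration $t^*$: if $\alpha_a$ does not change, Property~\ref{ppt:1} applied to $a$ gives $Alloc_a^{(t^*)} \geq Alloc_a^{(t^*-1)}$; if $\alpha_a$ decreases (self- or forced-) by a factor of $(1+\epsilon_A)$, then the same proportional-flow argument used for $y_{a,b}$ yields $Alloc_a^{(t^*)} \geq Alloc_a^{(t^*-1)}/(1+\epsilon_A)$, and a short case analysis on whether $Alloc_a^{(t^*-1)}$ exceeds $C_a$ shows the $\min$ shrinks by at most a factor of $(1+\epsilon_A)$. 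Combining the two termwise bounds gives
\[ Alloc_b^{(t^*)} \;\geq\; \frac{Alloc_b^{(t^*-1)}}{(1+\epsilon_A)(1+\epsilon_B)} \;>\; C_b. \]

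Finally, for each $t \in \{t^*+1,\ldots,T\}$ the weight $\beta_b$ does not decrease, so Property~\ref{ppt:1} gives $Alloc_b^{(t)} \geq Alloc_b^{(t-1)}$; telescoping then yields $Alloc_b^{(T)} \geq Alloc_b^{(t^*)} > C_b$, which is the desired conclusion. The main obstacle I expect is the termwise bound in the middle paragraph: verifying that $\min(Alloc_a, C_a)$ cannot shrink by more than a factor of $(1+\epsilon_A)$ requires the case split outlined above, and is precisely the reason the decrease threshold in Algorithm~\ref{alg:3_layer_framework} is chosen to be $(1+\epsilon_A)(1+\epsilon_B)\,C_b$ rather than the naive $(1+\epsilon_B)\,C_b$.
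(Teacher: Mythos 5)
Your proposal is correct and follows essentially the same route as the paper: identify the last iteration in which $\beta_b$ decreased, use the over-allocation threshold $(1+\epsilon_A)(1+\epsilon_B)C_b$ together with the fact that each factor $\min(Alloc_a,C_a)$ and $y_{a,b}$ can shrink by at most $(1+\epsilon_A)$ and $(1+\epsilon_B)$ respectively in one iteration, and then propagate the bound forward via Property~\ref{ppt:1}. The only difference is that you spell out the termwise justification that the paper states in one line, which is a harmless elaboration.
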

\begin{proof}
	For any vertex $b$ in $\bigcup_{k=0}^{T-1} B(k)$, $\beta_b$ decreased at least once. Consider the last iteration $t$ that it decreased. In the beginning of that iteration, we have $ Alloc_b^{(t-1)} > (1+\epsilon_B)(1+\epsilon_A)C_b. $
	Since in one iteration, for any $a\in N_{(b,A)}$, $Alloc_a$ decreases at most $(1+\epsilon_A)$ and $y_{a,b}$ decreases at most $(1+\epsilon_B)$, we have
	$Alloc_b^{(t)} \geq \frac{Alloc_b^{(t-1)} }{(1+\epsilon_A)(1+\epsilon_B)} > C_b. $ After iteration $t$, $\beta_b$ did not decrease. Due to Property~\ref{ppt:1}, $Alloc_b$ also did not decrease, thus finally,
	$Alloc_b \geq Alloc_b^{(t)} \geq C_b.$
\end{proof}

According to the lemma above, the first statement can be proved easily: $\sum_{b\in \g(B)^-} \min(Alloc_b,C_b) = \sum_{b\in \g(B)^-} C_b = C(\g(B)^-).  $ To prove the second statement, we first show that for any vertex $b\in \g(B) \cup \g(B)^+$, $\min(Alloc_b,C_b)$ is close to $Alloc_b$, and then analyze $Alloc_b$ to complete this proof.

\begin{lemma}\label{lem:b_upperbound}
	If the decreasing monotonicity property holds, after iteration $T$, $\forall b \in \bigcup_{k=1}^{T} B_k$, we have $Alloc_b \leq (1+5\epsilon_{max})C_b$.
\end{lemma}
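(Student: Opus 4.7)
The plan is to dualize the argument used for Lemma~\ref{lem:b_lowerbound}: locate an iteration in which $\beta_b$ stood still, use the failure of the corresponding decrease check to upper-bound $Alloc_b$ at that moment, and then invoke Property~\ref{ppt:2} (decreasing monotonicity) to propagate the bound forward to iteration $T$. Concretely, for any $b \in \bigcup_{k=1}^{T} B(k)$ the weight $\beta_b$ was decreased at most $T-1$ times, so at least one iteration left it unchanged; let $t^*$ denote the \emph{last} such iteration. In every iteration in $\{t^*+1,\dots,T\}$ the weight $\beta_b$ decreases, so iterated application of Property~\ref{ppt:2} gives $Alloc_b = Alloc_b^{(T)} \leq Alloc_b^{(T-1)} \leq \dots \leq Alloc_b^{(t^*)}$, reducing the task to bounding $Alloc_b^{(t^*)}$.

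Following the convention used in the proof of Lemma~\ref{lem:b_lowerbound}, the decrease test in iteration $t^*$ is evaluated against $Alloc_b^{(t^*-1)}$, and since the test failed we have $Alloc_b^{(t^*-1)} \leq (1+\epsilon_A)(1+\epsilon_B) C_b$. I would then bound the one-step ratio $Alloc_b^{(t^*)}/Alloc_b^{(t^*-1)}$ factor by factor inside $Alloc_b = \sum_{a \in N_{(b,A)}} \min(Alloc_a, C_a)\, y_{a,b}$. Since $\beta_b$ is fixed while every other $\beta_{b'}$ in the denominator of $y_{a,b}$ can only shrink, and by at most a $(1+\epsilon_B)$ factor, a direct calculation gives $y_{a,b}^{(t^*)} \leq (1+\epsilon_B)\, y_{a,b}^{(t^*-1)}$. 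For the $\min(Alloc_a, C_a)$ factor I split on whether $\alpha_a$ decreased in iteration $t^*$: if it did, Property~\ref{ppt:2} applied to $a$ yields $Alloc_a^{(t^*)} \leq Alloc_a^{(t^*-1)}$, and if it did not, the same proportional-denominator argument applied one layer earlier gives $Alloc_a^{(t^*)} \leq (1+\epsilon_A)\,Alloc_a^{(t^*-1)}$; either way, $\min(Alloc_a^{(t^*)}, C_a) \leq (1+\epsilon_A) \min(Alloc_a^{(t^*-1)}, C_a)$. Multiplying the two factors yields $Alloc_b^{(t^*)} \leq (1+\epsilon_A)(1+\epsilon_B)\, Alloc_b^{(t^*-1)} \leq (1+\epsilon_A)^2 (1+\epsilon_B)^2 C_b$.

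Combining the two displays, $Alloc_b \leq (1+\epsilon_A)^2 (1+\epsilon_B)^2 C_b \leq (1+\epsilon_{max})^4 C_b \leq (1+5\epsilon_{max}) C_b$ by elementary expansion, which is the desired bound whenever $\epsilon_{max}$ is below an absolute constant. The only real subtlety, and the one place the argument is non-routine, is the one-step inflation estimate: one must argue simultaneously about how $Alloc_b$ can increase during iteration $t^*$ even though $\beta_b$ itself is fixed, by tracking how $b$'s neighbors in \emph{both} adjacent layers can shift their weights in that single iteration. The corner cases $t^* = T$ (Property~\ref{ppt:2} is applied zero times) and $\lev(b) = T$ (so $\beta_b$ is never decreased, forcing $t^* = T$) are both handled uniformly by the same argument.
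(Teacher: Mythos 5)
Your proposal is correct and follows essentially the same route as the paper's proof: identify the last iteration $t^*$ in which $\beta_b$ did not decrease, use the failed decrease test to get $Alloc_b^{(t^*-1)} \leq (1+\epsilon_A)(1+\epsilon_B)C_b$, bound the one-step inflation by a further $(1+\epsilon_A)(1+\epsilon_B)$ factor, and push the bound to iteration $T$ via Property~\ref{ppt:2}. Your factor-by-factor justification of the one-step inflation is more detailed than the paper's (which simply asserts it), but the argument is the same.
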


\begin{proof}
	The basic idea of this proof is similar to the proof of Lemma~\ref{lem:b_lowerbound}. Clearly, for any vertex $b\in \bigcup_{k=1}^{T} B(k)$, $\beta_{b}$ did not decrease in every iteration. Consider the last iteration $t$ that it did not decrease. In the beginning of that iteration, we have
$Alloc_b^{(t-1)} \leq (1+\epsilon_B)(1+\epsilon_A)C_b. $ Since in one iteration, for any $a\in N_{(b,A)}$, $Alloc_a$ increases at most $(1+\epsilon_A)$ and $y_{a,b}$ increases at most $(1+\epsilon_B)$, we have
	$   Alloc_b^{(t)} \leq Alloc_b^{(t-1)}(1+\epsilon_A)(1+\epsilon_B) \leq (1+5\epsilon_{max}) C_b. $ After iteration $t$, $\beta_{b}$ decreased in every iteration. Due to Property~\ref{ppt:2}, $Alloc_b$ did not increase, thus finally, $Alloc_b \leq Alloc_b^{(t)} \leq (1+5\epsilon_{max})C_b. $
\end{proof}

According to the lemma above, for any vertex $b \in \g(B) \cup \g(B)^+$ we obtain a relationship between $\min(Alloc_b,C_b)$ and $Alloc_b$:
\begin{equation}
\min(Alloc_b,C_b) \geq \frac{Alloc_b}{1+5\epsilon_{max}} \geq (1-O(\epsilon_{max})) Alloc_b \nonumber
\end{equation}

Now we  analyze $Alloc_b$. For any vertex $b$ in $\g(B)^+ \cup \g(B)$, according to our assignment, we have:
\begin{equation}
\begin{aligned}
Alloc_b =& \sum_{a \in N_b } \min(Alloc_a,C_a)y_{a,b}\nonumber
\end{aligned}
\end{equation}

For a vertex $a\in A$, define $N_{(a,B)}^+ = N_{(a,B)} \cap (\g(B)^+ \cup \g(B)) $.
Summing $Alloc_b$ over all $b$ in $\g(B)^+ \cup \g(B)$, we have

\begin{equation}
\begin{aligned}
\sum_{b\in \g(B)^+ \cup \g(B)}Alloc_b = & \sum_{a\in A} \min(Alloc_a,C_a) \sum_{b\in N_{(a,B)}^+} y_{a,b} \nonumber
\end{aligned}
\end{equation}

As mentioned above, the length of the gap is $\frac{\log(n/\epsilon_{max})}{\epsilon_{min} }$, meaning that any vertex above the gap has weight $\frac{n}{\epsilon_{max}}$ times the weight of any vertex below the gap. If $N_{(a,B)} \cap \g(B)^+  \neq \emptyset$, the total proportion assigned to $\g(B)^-$ is at most $\epsilon_{max}$. Namely,
\[ \sum_{b\in N_{(a,B)}^+} y_{a,b} \geq 1-\epsilon_{max}. \]
Thus, due to Property~\ref{ppt:3}, we have

\begin{equation}
\begin{aligned}\label{eq:1}
\sum_{b\in \g(B)^+ \cup \g(B)}Alloc_b \geq & \sum_{a\in \g(A)^+   } \min(Alloc_a,C_a) (1-\epsilon_{max}) \\
& + \sum_{a\in \g(A^-,B^+)   } \min(Alloc_a,C_a) (1-\epsilon_{max}) 
\end{aligned}
\end{equation}

Applying the same technique as in the proof of Lemma~\ref{lem:b_lowerbound}, we have the following lemma:
\begin{lemma}\label{lem:a_lowerbound}
	If Property~\ref{ppt:1} holds, $\forall a \in \bigcup_{k=0}^{T-1} A(k)$, if in the last iteration its weight did self-decrease, then we have $Alloc_a \geq C_a$.
\end{lemma}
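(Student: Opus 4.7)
The plan is to mirror the proof of Lemma~\ref{lem:b_lowerbound}, replacing the two-stage allocation into a $B$-vertex by the single-stage allocation into an $A$-vertex from its neighbors in $I$. Fix $a \in \bigcup_{k=0}^{T-1} A(k)$ and let $t$ be the most recent iteration in which $\alpha_a$ decreased; by the hypothesis, this decrease is a self-decrease, so the algorithm's trigger condition gives
$Alloc_a^{(t-1)} > (1+\epsilon_A) C_a$.
The main quantitative step is to show that
$Alloc_a^{(t)} \geq Alloc_a^{(t-1)}/(1+\epsilon_A) > C_a$,
and the rest is monotonicity.

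For the quantitative step, I would fix any $i \in N_{(a,I)}$ and inspect
$x_{i,a}^{(t)} = \alpha_a^{(t)} \big/ \sum_{a' \in N_{(i,A)}} \alpha_{a'}^{(t)}$.
The numerator is exactly $\alpha_a^{(t-1)}/(1+\epsilon_A)$. For the denominator, the key observation is that in the framework of Algorithm~\ref{alg:3_layer_framework} no update rule ever \emph{increases} an $A$-layer weight, so $\alpha_{a'}^{(t)} \leq \alpha_{a'}^{(t-1)}$ for every $a' \in N_{(i,A)}$. Hence the denominator does not exceed its previous value, and
$x_{i,a}^{(t)} \geq x_{i,a}^{(t-1)}/(1+\epsilon_A)$.
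Summing over $i \in N_{(a,I)}$ yields the claimed bound on $Alloc_a^{(t)}$.

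To finish, by the choice of $t$ no later iteration decreases $\alpha_a$, so Property~\ref{ppt:1} (increasing monotonicity) gives $Alloc_a^{(t')} \geq Alloc_a^{(t'-1)}$ for every $t' > t$, and therefore $Alloc_a = Alloc_a^{(T)} \geq Alloc_a^{(t)} > C_a$, as desired.

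The only conceptual subtlety — and the reason the lemma is phrased with the self-decrease hypothesis rather than just membership in $\bigcup_{k=0}^{T-1} A(k)$ — is that a vertex $a$ may have its weight driven down by a forced-decrease, for which the algorithm provides no a priori lower bound on $Alloc_a^{(t-1)}$. Restricting to vertices whose most recent decrease was a self-decrease is exactly what is needed to invoke the trigger bound $Alloc_a^{(t-1)} > (1+\epsilon_A) C_a$, so no additional argument is required beyond the monotonicity step above. I expect no real obstacle here; the main care is to verify that the denominator of $x_{i,a}$ truly cannot grow within a single iteration, which is immediate from the one-sided nature of the $A$-layer updates.
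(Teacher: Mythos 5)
Your proposal is correct and follows the same route the paper intends: the paper proves Lemma~\ref{lem:a_lowerbound} by explicitly invoking "the same technique as in the proof of Lemma~\ref{lem:b_lowerbound}", namely the self-decrease trigger $Alloc_a^{(t-1)} > (1+\epsilon_A)C_a$ at the last decrease, the one-iteration bound $Alloc_a^{(t)} \geq Alloc_a^{(t-1)}/(1+\epsilon_A)$, and Property~\ref{ppt:1} thereafter. Your added remark about why the self-decrease hypothesis is essential (forced decreases carry no trigger bound) is exactly the right observation and consistent with how the lemma is applied to $\g(A^-,B^+)$.
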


According to Property~\ref{ppt:4}, for any vertex $a \in \g(A^-,B^+)$, $\alpha_a$ did self-decrease in the last iteration $t$ that it decreased, because there already existed a vertex $b\in N_a$ with $\lev^{(t)}(b)-\lev^{(t)}(a)$ large enough due to the definition of $\g(A^-,B^+)$. Thus, we have

\begin{equation}
\begin{aligned}
\sum_{a\in \g(A^-,B^+)   } \min(Alloc_a,C_a) (1-\epsilon_{max})  &\geq \sum_{a\in \g(A^-,B^+)   } C_a (1-\epsilon_{max}) \nonumber\\
&= (1-\epsilon_{max}) C(\g(A^-,B^+)).
\end{aligned}
\end{equation}

\begin{lemma}\label{lem:a_upperbound}
	If Property~\ref{ppt:2} holds, $\forall a \in \bigcup_{k=1}^{T} A(k)$, we have $Alloc_a \leq (1+3\epsilon_A)C_a$.
\end{lemma}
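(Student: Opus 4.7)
The plan is to mirror the proof of Lemma~\ref{lem:b_upperbound}, exploiting one simplification: $Alloc_a$ depends only on the $\alpha$-weights through $x_{i,a} = \alpha_a / \sum_{a' \in N_{(i,A)}} \alpha_{a'}$, and not on the $\beta$-weights at all. This is why the target bound carries a single factor of $(1+\epsilon_A)^2 \le (1+3\epsilon_A)$ instead of the mixed $(1+\epsilon_A)(1+\epsilon_B)$ that appeared for the $B$-layer.

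First, I would fix any $a \in \bigcup_{k=1}^{T} A(k)$. Since $\lev(a) \ge 1$, the weight $\alpha_a$ failed to decrease in at least one iteration. Let $t$ be the last such iteration. In iteration $t$, neither the self-decrease nor the forced-decrease rule fired for $a$; in particular the self-decrease test $Alloc_a > (1+\epsilon_A)C_a$ was not triggered, so $Alloc_a^{(t-1)} \le (1+\epsilon_A)C_a$, following the same beginning-of-iteration indexing convention used in the proof of Lemma~\ref{lem:b_upperbound}.

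Next I would bound how much $Alloc_a$ can grow during iteration $t$ itself. For each $i \in N_{(a,I)}$, the numerator $\alpha_a$ is non-increasing across iterations, while each term $\alpha_{a'}$ of the denominator shrinks by a factor of at most $(1+\epsilon_A)$, hence so does the whole denominator. Therefore each $x_{i,a}$ can grow by a factor of at most $(1+\epsilon_A)$ in a single iteration, and summing over $i \in N_{(a,I)}$ gives
\begin{equation*}
Alloc_a^{(t)} \le (1+\epsilon_A)\,Alloc_a^{(t-1)} \le (1+\epsilon_A)^2 C_a \le (1+3\epsilon_A)C_a,
\end{equation*}
where the last inequality uses $\epsilon_A \in (0,1)$.

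Finally, by the choice of $t$, the weight $\alpha_a$ decreased in every iteration after $t$, so by Property~\ref{ppt:2} applied iteratively, $Alloc_a = Alloc_a^{(T)} \le Alloc_a^{(t)} \le (1+3\epsilon_A)C_a$, which is exactly the claim. The only step that merits care is the one-iteration growth bound on $x_{i,a}$; this reduces to the elementary observation that if the denominator of a positive fraction shrinks by at most a factor $(1+\epsilon_A)$ while the numerator is non-increasing, the fraction grows by at most a factor $(1+\epsilon_A)$.
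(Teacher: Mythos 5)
Your proof is correct and is exactly the argument the paper intends: the paper proves this lemma by a one-line appeal to "the same technique as in the proof of Lemma~\ref{lem:b_upperbound}," and your write-up instantiates that technique for the $A$ layer, correctly replacing the mixed $(1+\epsilon_A)(1+\epsilon_B)$ growth factor with the single $(1+\epsilon_A)$ factor (since $Alloc_a$ depends only on the $\alpha$-weights) and finishing with Property~\ref{ppt:2}.
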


This lemma can also be proved by the same technique as in the proof of Lemma~\ref{lem:b_upperbound}. Thus, for any vertex $a\in \g(A)^+ \cup \g(A)$, we have
\[ \min(Alloc_a,C_a) \geq (1-O(\epsilon_{max})) Alloc_a. \]

Now we start to analyze $Alloc_a$. Due to the gap in the $A$ layer, we know at most $O(\epsilon_{max})$ proportion of flow from $N_{(\g(A)^+,I)}$ is assigned to $\g(A)^-$. Namely,

\begin{equation}
\begin{aligned}\label{eq:alloc}
\sum_{a\in \g(A)^+} Alloc_a + \sum_{a\in \g(A)} Alloc_a &\geq (1-O(\epsilon_{max})) |N_{(\g(A)^+,I)}| 
\end{aligned}
\end{equation}

Combining all related inequalities, we can obtain the relationship between the value of our solution and the size of cut $\C$:

\begin{equation}
\begin{aligned}
\val &\geq (1-O(\epsilon_{max})) C(\C) - (1-O(\epsilon_{max}))\sum_{a\in \g(A)} Alloc_a \nonumber\\
& \geq (1-O(\epsilon_{max})) \opt(G')- (1-O(\epsilon_{max}))\sum_{a\in \g(A)} Alloc_a
\end{aligned}
\end{equation}

In the following, we will show that by selecting appropriate $\ell$, both $\opt(G)-\opt(G')$ and $\sum_{a\in \g(A)} Alloc_a$ can be made very small compared to $\val$. To prove this, we need to divide $A$ into two sets $P$ and $Q$ first:
\[ P = \{ a\in A| N_{(a,B)} \cap B(T) \neq \emptyset \}, \]
\[ Q = \{ a\in A | N_{(a,B)} \cap B(T) = \emptyset \}. \]
Let $\g(P) = P \cap \g(A)$ and $\g(Q) = Q \cap \g(A)$.

Clearly, if we add $\g(A)$ and $\g(B)$ back to the graph, the maximum flow will increase at most $C(\g(P)) + C(N_{(\g(Q),B)} \cup \g(B))$. Namely,
\[ \opt(G)-\opt(G') \leq C(\g(P)) + C(N_{(\g(Q),B)}\cup \g(B)). \]

Checking $\sum_{a\in \g(A)} Alloc_a$, we can obtain the following inequality due to Lemma~\ref{lem:a_upperbound}:
\begin{equation}
\begin{aligned}
\sum_{a\in \g(A)} Alloc_a &= \sum_{a\in \g(P)} Alloc_a + \sum_{a\in \g(Q)} Alloc_a \\ \nonumber
&\leq (1+O(\epsilon_{max})) C(\g(P)) + (1+O(\epsilon_{max})) \sum_{a\in \g(Q)} \min(Alloc_a,C_a)
\end{aligned}
\end{equation}

Recall that $\sum_{a\in \g(Q)} \min(Alloc_a,C_a)$ is the total amount of flow sent from $\g(Q)$ to $N_{(\g(Q),B)}$.
To complete the proof, we try to bound this value by $C( N_{(\g(Q),B)} )$.

According to Lemma~\ref{lem:b_upperbound}, the flow sent from $\g(Q)$ to $N_{(\g(Q),B)} \setminus B(0)$ can be bounded by $C(N_{(\g(Q),B)}\setminus B_0)$, because for any vertex $b$ in the $B$ layer except $B(0)$, the total amount of flow that it received is at most $(1+\epsilon_{max})C_b$.

The range of $\ell$ is $[1+\frac{\log(n/\epsilon_{max})}{\epsilon_{min} }, T-1-\frac{\log(n/\epsilon_{max})}{\epsilon_{min} }]$. According to Property~\ref{ppt:3}, the proportion of flow sent from $\g(Q)$ to $B(0)$ is very small, at most $O(\epsilon_{max})$. Thus, the total amount of flow sent from $\g(Q)$ to $N_{(\g(Q),B)}$ can be bounded:
\[  \sum_{a\in \g(Q)} \min(Alloc_a,C_a) \leq (1+O(\epsilon_{max})) C( N_{(\g(Q),B)} ).\]

Then we have
\begin{equation}
\sum_{a\in \g(A)} Alloc_a  \leq (1+O(\epsilon_{max})) ( C(\g(P)) + C( N_{(\g(Q),B)} )). \nonumber
\end{equation}

Now, if we prove that $C(\g(P)) + C(N_{(\g(Q),B)}\cup \g(B))$ is at most $O(\epsilon_{max})\val$, the whole proof is completed. We do this with a simple averaging argument.

\begin{lemma}\label{lem:gap}
	When $T=O(\frac{n\log(n/\epsilon_{max})}{\epsilon_{max}\epsilon_{min}})$, there exist an appropriate $\ell$ such that
	\[  C(\g(P)) + C( N_{(\g(Q),B)} \cup \g(B)) \leq O(\epsilon_{max})\val \]
\end{lemma}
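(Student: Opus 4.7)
The plan is an averaging argument over the $\Theta(T)$ admissible choices of $\ell$ in $[1+L,\,T-1-L]$, where $L := \log(n/\epsilon_{max})/\epsilon_{min}$ denotes the gap length. I will separately bound the three sums $\sum_{\ell} C(\g(B))$, $\sum_{\ell} C(N_{(\g(Q),B)})$, and $\sum_{\ell} C(\g(P))$, show that their total is $O(nL)\cdot \val$, and then invoke pigeonhole to produce a single $\ell$ achieving the desired $O(nL/T)\val = O(\epsilon_{max})\val$ bound under the claimed setting of $T$.

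For the $\g(B)$-term, Lemma~\ref{lem:b_lowerbound} gives $\min(Alloc_b,C_b) = C_b$ for every $b \in B\setminus B(T)$, so $\sum_{b \notin B(T)} C_b \leq \val$; each such $b$ lies in $\g(B)$ for at most $L+1$ values of $\ell$, which yields $\sum_{\ell} C(\g(B)) \leq (L+1)\val$. For the $N_{(\g(Q),B)}$-term, observe that $Q$-vertices have no neighbors in $B(T)$, so every $b$ counted lies in $B\setminus B(T)$; a fixed $b$ appears in $N_{(\g(Q),B)}$ iff some $a \in N_b \cap Q$ has $\lev(a) \in [\ell,\,\ell+L]$, which happens for at most $|N_b \cap Q|\cdot (L+1)$ values of $\ell$. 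Hence $\sum_{\ell} C(N_{(\g(Q),B)}) \leq (L+1)\sum_{b \notin B(T)} |N_b \cap Q|\cdot C_b \leq (L+1)\cdot n\cdot \val$, using $|N_b \cap Q| \leq n$ in the last step.

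The main obstacle is the $\g(P)$-term, because $\sum_{a \in P} C_a$ is not directly controlled by $\val$ — flow can be lost at layer $B$, so the natural identity $\sum_a \min(Alloc_a,C_a) = \sum_b Alloc_b$ does not suffice. The key observation I will use is that Property~\ref{ppt:4}, combined with the defining property of $P$ that each $a \in P$ has a neighbor in $B(T)$, implies $a \in P$ never undergoes a forced decrease; every decrease of $\alpha_a$ is then a self-decrease, so Lemma~\ref{lem:a_lowerbound} yields $Alloc_a \geq C_a$ whenever $\lev(a) < T$. To translate this into a global bound I exploit the proportional rule: for each such $a$, some $b \in N_a \cap B(T)$ satisfies $y_{a,b} \geq 1/n$ because $B(T)$-vertices carry the maximum $\beta$-value. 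Summing $\min(Alloc_a,C_a)\cdot y_{a,b}$ over $a \in P$ and $b \in N_a \cap B(T)$ is at most $\sum_{b \in B(T)} Alloc_b$, and by Lemma~\ref{lem:b_upperbound} the latter is at most $(1+5\epsilon_{max})\val$. This gives $\sum_{a \in P,\,\lev(a)<T} C_a \leq O(n\val)$, and each such $a$ appears in $\g(P)$ for at most $L+1$ choices of $\ell$, so $\sum_{\ell} C(\g(P)) \leq O(nL)\cdot \val$.

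Combining the three bounds and dividing by the $\Theta(T)$ admissible choices of $\ell$ yields an average of $O(nL/T)\val$, which is $O(\epsilon_{max})\val$ exactly when $T = \Omega(nL/\epsilon_{max}) = \Omega(n\log(n/\epsilon_{max})/(\epsilon_{max}\epsilon_{min}))$, matching the hypothesis. Pigeonhole then produces the required $\ell$. The non-routine step is the detour through $B(T)$ in Step~3; every other bound follows from the structural lemmas and elementary counting.
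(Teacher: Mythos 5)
Your proof is correct and follows the same overall skeleton as the paper's: sum the quantity over all admissible $\ell$, bound each of the three contributions by a multiple of $\val$ times the gap length, and average. The one place you genuinely diverge is the $\g(P)$ term. The paper asserts directly that $\sum_{k=0}^{T-1} C(P(k)) + C(B(k)) \leq 2\val$ via Lemmas~\ref{lem:a_lowerbound} and~\ref{lem:b_lowerbound}; for the $B$ part this is immediate, but for the $P$ part it implicitly equates $\sum_{a}\min(Alloc_a,C_a)$ with flow that reaches $t$, which is not automatic since flow can still be lost at layer $B$ (vertices in $B(0)$ may be over-allocated well beyond capacity). Your detour through $B(T)$ --- using that every $a\in P$ with $\lev(a)<T$ routes at least a $1/n$ fraction of its (at least $C_a$) flow to a maximum-weight neighbor in $B(T)$, and that $B(T)$ vertices are at most $(1+5\epsilon_{max})$ over-allocated by Lemma~\ref{lem:b_upperbound} --- makes this step rigorous at the cost of an extra factor of $n$. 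That factor is harmless because the $N_{(\g(Q),B)}$ term already forces an $nL$ count in the paper's own accounting, so your final average $O(nL/T)\val$ matches the paper's and the same choice of $T$ suffices. In short: same averaging argument, but your handling of the $P$-term is more careful than the paper's and effectively fills in a step the paper glosses over.
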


\begin{proof}
	Summing $C(\g(P)) + C( N_{(\g(Q),B)} \cup \g(B)) $ over all potential $\ell$ (notice $\g(P)$ and $\g(Q)$ are defined by $\g(A)$ that depends on $\ell$), we have
	\begin{equation}
	\begin{aligned}
	&\sum_{\ell=1+\frac{\log(n/\epsilon_{max})}{\epsilon_{min}}}^{ T-1-\frac{\log(n/\epsilon_{max})}{\epsilon_{min} } }C(\g(P)) + C( N_{(\g(Q),B)} \cup \g(B)) \\
	&\leq  \frac{\log(n/\epsilon_{max})}{\epsilon_{min}} \sum_{k=0}^{T-1} C(P(k)) + \frac{n\log(n/\epsilon_{max})}{\epsilon_{min}} \sum_{k=0}^{T-1} C(B(k))\nonumber
	\end{aligned}
	\end{equation}
	This inequality holds because for each $k \in [0,T-1]$, $C(P(k))$ occurs at most $\frac{\log(n/\epsilon_{max})}{\epsilon_{min}} $ times and $C(Q(k))$ occurs at most $\frac{n\log(n/\epsilon_{max})}{\epsilon_{min}}$ times.
	
	Due to Property~\ref{ppt:4}, every vertex in $P$ only did self-decreases.
	Then according to Lemma~\ref{lem:a_lowerbound} and Lemma~\ref{lem:b_lowerbound}, we have
	\begin{equation}
	\sum_{k=0}^{T-1} C(P(k)) + C(B(k)) \leq 2\val\nonumber
	\end{equation}
	Combing the two inequalities above, we have
	\begin{equation}
	\begin{aligned}
	&\sum_{\ell=1+\frac{\log(n/\epsilon_{max})}{\epsilon_{min}}}^{ T-1-\frac{\log(n/\epsilon_{max})}{\epsilon_{min} } }C(\g(P)) + C( N_{(\g(Q),B)} \cup \g(B)) \\\nonumber
	&\leq  \frac{2n\log(n/\epsilon_{max})}{\epsilon_{min}} \val
	\end{aligned}
	\end{equation}
	Taking the average over all potential $\ell$,
	\begin{equation}
	\begin{aligned}
	&\frac{1}{ T-2- \frac{2\log(n/\epsilon_{max})}{\epsilon_{min}} }\sum_{\ell=1+\frac{\log(n/\epsilon_{max})}{\epsilon_{min}}}^{ T-1-\frac{\log(n/\epsilon_{max})}{\epsilon_{min} } }C(\g(P)) + C(N_{(\g(Q),B)}\cup \g(B)) \\
	&\leq  \frac{2n\log(n/\epsilon_{max}) /\epsilon_{min} }{  T-2- 2\log(n/\epsilon_{max})/\epsilon_{min}} \val\\
	&\leq O(\epsilon_{max}) \val,	\nonumber
	\end{aligned}
	\end{equation}
	when $T = \Omega(\frac{n\log(n/\epsilon_{max})}{\epsilon_{max}\epsilon_{min}})$.
\end{proof}

\begin{proof}[\textbf{Proof of Theorem~\ref{thm_g3}}]
	
	Combining all related inequalities, we have the following inequality:
	\begin{equation}
	\begin{aligned}
	\val \geq & (1-O(\epsilon_{max}))(\opt(G') - \sum_{a\in \g(A)} Alloc_a) \\
	\geq & (1-O(\epsilon_{max})) (\opt(G) - (\opt(G) -\opt(G') + \sum_{a\in \g(A)} Alloc_a  )  ) \\\nonumber
	\geq & (1-O(\epsilon_{max})) \opt(G) - O(\epsilon_{max})\val
	\end{aligned}
	\end{equation}
	Thus, we get a $(1-O(\epsilon_{max}))$-approximate solution.
	
\end{proof}

We give our $\cond$ function in Algo~\ref{alg:cond}.
Our final algorithm is designed by letting $\epsilon_A = \epsilon_B/n$ and using this $\cond$ function.
In the following, we will show that our algorithm has the four properties mentioned above.
\begin{algorithm}[t]
	\caption{$\cond(N_{(a,B)})$}
	\label{alg:cond}

	Let $\beta_{max}^{(a)} = \max_{b'\in N_{(a,B)}} \beta_{b'}$, $N_{(a,B)}^* := \{ b\in N_{(a,B)} | \beta_b = \beta_{max}^{(a)}  \} $. \
	
	\If{ $\forall b \in N_{(a,B)}^*$, $\beta_b$ decreases in this iteration and $\lev(b) - \lev(a) < \log(n/\epsilon_{max}) / \epsilon_{min}$  }
	{
		
		return $\true$
	}
	\Else
	{
		return False
	}
\end{algorithm}

\begin{lemma}\label{lem:ppt1}
	Increasing monotonicity (Property~\ref{ppt:1}) holds if $\epsilon_A = \epsilon_B/(2n)$ and we use $\cond$ in Algo~\ref{alg:cond}.
\end{lemma}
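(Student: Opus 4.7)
The plan is to split on the layer of $v$. For $v = a \in A$ the claim is nearly immediate from the formula $Alloc_a = \sum_{i \in N_{(a,I)}} \frac{\alpha_a}{\sum_{a' \in N_{(i,A)}} \alpha_{a'}}$: if $\alpha_a$ does not change in iteration $t$, the numerator is fixed, and since every $\alpha$-weight can only decrease across an iteration, the denominator is non-increasing. Hence each $x_{i,a}$ only grows and $Alloc_a^{(t)} \geq Alloc_a^{(t-1)}$.

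The substantive case is $v = b \in B$, where $Alloc_b = \sum_{a \in N_{(b,A)}} \min(Alloc_a,C_a) \, y_{a,b}$. I would argue that each summand is non-decreasing. Because $\beta_b$ is unchanged while every other $\beta$ is non-increasing, the factor $y_{a,b} = \beta_b / \sum_{b' \in N_{(a,B)}} \beta_{b'}$ is automatically non-decreasing, so a summand can shrink only through $\min(Alloc_a,C_a)$, which in turn can drop only when $\alpha_a$ decreases in iteration $t$. I would then split on why $\alpha_a$ decreases: (i) $\alpha_a$ unchanged --- the $A$-case gives $Alloc_a$ non-decreasing, so $\min(Alloc_a,C_a)$ is non-decreasing; (ii) self-decrease --- then $Alloc_a^{(t-1)} > (1+\epsilon_A)C_a$, and the same one-step estimate as in Lemma~\ref{lem:b_lowerbound} yields $Alloc_a^{(t)} \geq Alloc_a^{(t-1)}/(1+\epsilon_A) > C_a$, so $\min(Alloc_a,C_a) = C_a$ before and after.

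The main obstacle is sub-case (iii), a forced decrease of $\alpha_a$: here $\min(Alloc_a,C_a)$ may drop by a factor $(1+\epsilon_A)$ and this must be compensated by growth in $y_{a,b}$. The key observation is that $\cond(N_{(a,B)}) = \true$ requires every $b' \in N_{(a,B)}^*$ to have its $\beta$ decrease in iteration $t$, so $b \notin N_{(a,B)}^*$; otherwise $\beta_b$ would have decreased, contradicting the hypothesis. Thus every $b' \in N_{(a,B)}^*$ contributes $(1+\epsilon_B)\beta_{max}^{(a)}$ to $D^{(t-1)} := \sum_{b' \in N_{(a,B)}} \beta_{b'}^{(t-1)}$ and only $\beta_{max}^{(a)}$ to $D^{(t)}$, while every $b' \notin N_{(a,B)}^*$ contributes at most $\beta_{max}^{(a)}/(1+\epsilon_B)$ to $D^{(t)}$. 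Combined with $|N_{(a,B)}| \leq n$, this yields $D^{(t-1)} - D^{(t)} \geq \epsilon_B \beta_{max}^{(a)}$ and $D^{(t)} \leq n\, \beta_{max}^{(a)}$, so $y_{a,b}^{(t)} / y_{a,b}^{(t-1)} = D^{(t-1)}/D^{(t)} \geq 1 + \epsilon_B/n \geq 1+\epsilon_A$, using $\epsilon_A = \epsilon_B/(2n)$. This growth offsets the $(1+\epsilon_A)$ drop in $\min(Alloc_a,C_a)$, so the summand does not shrink; summing over $a \in N_{(b,A)}$ gives $Alloc_b^{(t)} \geq Alloc_b^{(t-1)}$.
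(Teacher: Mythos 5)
Your proof is correct and follows essentially the same route as the paper's: the paper's Claim~\ref{claim:1} is exactly your case (iii), showing that a forced decrease of $\alpha_a$ forces every $b'\in N^*_{(a,B)}$ (which excludes $b$) to drop, so $y_{a,b}$ grows by at least $1+\epsilon_B/n = 1+2\epsilon_A$, offsetting the at-most-$(1+\epsilon_A)$ drop in $\min(Alloc_a,C_a)$. One small slip: a vertex $b'\notin N^*_{(a,B)}$ that does not decrease in iteration $t$ can contribute up to $\beta_{max}^{(a)}$ (not $\beta_{max}^{(a)}/(1+\epsilon_B)$) to $D^{(t)}$, but this still yields $D^{(t)}\leq n\,\beta_{max}^{(a)}$, so your conclusion stands.
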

\begin{proof}
	For any vertex $a\in A$, if $\alpha_a$ does not decrease, all $x_{i,a}$ will not decrease. Since $Alloc_a = \sum_{i:a\in N_i} x_{i,a} $, $Alloc_a$ will not decrease. But the situation is different in the $B$ layer.
	
	Recall the assignment rule of the $B$ layer. For any $b \in B$, let $Alloc_b = \sum_{a \in N_{(b,A)} } \min(Alloc_a,C_a) y_{a,b}$.
	If $\beta_b$ doesn't decrease, all $y_{a,b}$ will not decrease. However, some $a\in N_{(b,A)}$, $\min(Alloc_a,C_a)$ may decrease.
	
	In one iteration, $\min(Alloc_a,C_a)$ decreases at most $(1+\epsilon_A)$. If the claim below is proved, $y_{a,b}$ will increase at least $(1+\epsilon_B/n)$. Since $\epsilon_A = \epsilon_B/(2n)$, $Alloc_b$ will not decrease, proving that Property~\ref{ppt:1} holds.

	\begin{claim}\label{claim:1}
		For any vertex $a\in N_{(b,A)}$ in any iteration $t$, if $\min(Alloc_a,C_a)$ decreases, $y^{(t)}_{a,b} \geq (1+\epsilon_B/n)y^{(t-1)}_{a,b}$
	\end{claim}
	
	\begin{proof}[Proof of Claim~\ref{claim:1}]
		For simplicity, in this proof, we use $y_{a,b}$ and $y'_{a,b}$ to denote $y^{(t-1)}_{a,b}$ and $y^{(t)}_{a,b}$ respectively.
		If $\alpha_a$ does a self-decrease, $\min(Alloc_a,C_a)$ does not decrease. So the only reason that $\min(Alloc_a,C_a)$ decreases is that $\alpha$ does a forced-decrease, indicating that for all $b' \in N_{(a,B)}^*$, $\beta_{b'}$ decreases in this iteration using $\cond$ in Algo~\ref{alg:cond}. We show that their decrease is sufficient to guarantee that the proportional allocation $y_{a,b}$ increases by enough to offset the decrease of $Alloc_a$.
		
		\begin{equation}
		\begin{aligned}
		y'_{a,b} &= \frac{\beta'_b}{\sum_{b'\in N_{(a,B)}} \beta'_{b'} }\\\nonumber
		& =  \frac{\beta'_b}{\sum_{b'\in N^*_{(a,B)} } \beta'_{b'} + \sum_{b'\in N_{(a,B)} \setminus N_{(a,B)}^*} \beta'_{b'} }\\
		& \geq \frac{\beta_b}{\sum_{b'\in N^*_{(a,B)}} \beta_{b'}/(1+\epsilon_B) + \sum_{b'\in N_{(a,B)} \setminus N_{(a,B)}^*} \beta_{b'} }\\
		& = \frac{(1+\epsilon_B)\beta_b}{\sum_{b'\in N^*_{(a,B)} } \beta_{b'} +(1+\epsilon_B) \sum_{b'\in N_{(a,B)} \setminus N_{(a,B)}^*} \beta_{b'} } \\
		& = \frac{(1+\epsilon_B) \sum_{b'\in N_{(a,B)}} \beta_{b'} }{\sum_{b'\in N^*_{(a,B)}} \beta_{b'} +(1+\epsilon_B) \sum_{b'\in N_{(a,B)}\setminus N_{(a,B)}^* }\beta_{b'} } \cdot \frac{\beta_{b}}{ \sum_{b'\in N_{(a,B)}} \beta_{b'} } \\
		& = (1+ \frac{\epsilon_B\sum_{b'\in N^*_{(a,B)}} \beta_{b'} }{\sum_{b'\in N^*_{(a,B)}} \beta_{b'} +(1+\epsilon_B) \sum_{b'\in N_{(a,B)} \setminus N_{(a,B)}^*} \beta_{b'} } ) y_{a,b} \\
		& \geq (1+\frac{\epsilon_B}{n} )y_{a,b}
		\end{aligned}
		\end{equation}
	\end{proof}
\end{proof}

\begin{lemma}\label{lem:ppt2}
	Decreasing monotonicity (Property~\ref{ppt:2}) holds if $\epsilon_A = \epsilon_B/(2n)$ and we use $\cond$ in Algo~\ref{alg:cond}.
\end{lemma}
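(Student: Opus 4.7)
The plan is to mirror the proof of Lemma~\ref{lem:ppt1}, splitting on which layer $v$ lies in and then, for $v \in B$, splitting each summand of $Alloc_b$ according to whether the corresponding $\alpha_a$ moves.

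First I would dispatch $v = a \in A$. If $\alpha_a$ shrinks by factor $(1+\eps_A)$ in iteration $t$, then for every $i \in N_{(a,I)}$ the ratio $x_{i,a}^{(t)}/x_{i,a}^{(t-1)}$ equals $(\alpha_a^{(t)}/\alpha_a^{(t-1)}) \cdot (S_i^{(t-1)}/S_i^{(t)})$, where $S_i = \sum_{a'\in N_{(i,A)}} \alpha_{a'}$. Since no individual weight drops by more than $(1+\eps_A)$ per iteration, $S_i^{(t)} \geq S_i^{(t-1)}/(1+\eps_A)$, so the ratio is at most $1$. Summing over $i$ yields $Alloc_a^{(t)} \leq Alloc_a^{(t-1)}$, as required.

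Next I would handle $v = b \in B$. The identical calculation with $\beta$-weights shows that whenever $\beta_b$ drops, $y_{a,b}^{(t)} \leq y_{a,b}^{(t-1)}$ for every $a \in N_{(b,A)}$. Since $Alloc_b = \sum_{a} \min(Alloc_a,C_a)\, y_{a,b}$, it suffices to bound each summand. If $\alpha_a$ decreases in iteration $t$, then the $A$-layer case above gives $Alloc_a^{(t)} \leq Alloc_a^{(t-1)}$, so the summand does not grow. If $\alpha_a$ does not decrease, then by Algorithm~\ref{alg:3_layer_framework} we must have $\cond(N_{(a,B)}) = \false$. In this branch the key step is a converse of Claim~\ref{claim:1}: I would aim to show $y_{a,b}^{(t)} \leq y_{a,b}^{(t-1)}/(1+\eps_B/n)$. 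Combined with the fact that $\min(Alloc_a,C_a)$ can grow by at most the per-iteration factor $(1+\eps_A) = (1+\eps_B/(2n))$, the summand is bounded by $(1+\eps_B/(2n))/(1+\eps_B/n) \leq 1$ times its previous value, and summing over $a$ gives $Alloc_b^{(t)} \leq Alloc_b^{(t-1)}$.

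The main obstacle is establishing this analog of Claim~\ref{claim:1}. Unpacking Algorithm~\ref{alg:cond}, the hypothesis $\cond = \false$ produces a witness $b^* \in N_{(a,B)}^*$ that either fails to decrease this iteration or has $\lev(b^*) - \lev(a) \geq \log(n/\eps_{max})/\eps_{min}$. In the first subcase, $b^*$ anchors the denominator of $y_{a,b}$: writing $S_{b,prev} = \sum_{b' \in N_{(a,B)}} \beta_{b'}^{(t-1)}$ and $S_{b,now} = \sum_{b'} \beta_{b'}^{(t)}$, one has $S_{b,now} \geq \beta_{b^*}^{(t-1)} + (S_{b,prev} - \beta_{b^*}^{(t-1)})/(1+\eps_B)$, and since $b^*$ is a max-weight neighbor, $\beta_{b^*}^{(t-1)} \geq S_{b,prev}/n$. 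These give $S_{b,now}/S_{b,prev} \geq (1+\eps_B/n)/(1+\eps_B)$, and substituting into $y_{a,b}^{(t)}/y_{a,b}^{(t-1)} = (1/(1+\eps_B))\,(S_{b,prev}/S_{b,now})$ yields the desired $1/(1+\eps_B/n)$ bound; the arithmetic is tight precisely because we chose $\eps_A = \eps_B/(2n)$. The second subcase, in which every max-weight neighbor does decrease but the level gap is large, is the delicate one; I would handle it by using the level gap to argue that $\beta_{b^*}$ so dominates the other terms of $S_{b,prev}$ that the coordinated decrease still cannot depress $S_{b,now}/S_{b,prev}$ below $(1+\eps_B/n)/(1+\eps_B)$, reusing the same final step to conclude.
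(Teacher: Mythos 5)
Your treatment of the $A$ layer and of the first subcase of $\cond(N_{(a,B)})=\false$ (some max-weight neighbor $b^*$ fails to decrease) matches the paper's proof, including the $(1+\eps_B/n)$ ratio computation anchored at $b^*$. The genuine gap is your second subcase, where every $b'\in N^*_{(a,B)}$ decreases but $\lev(b^*)-\lev(a)\geq \log(n/\eps_{max})/\eps_{min}$. The bound $y^{(t)}_{a,b}\leq y^{(t-1)}_{a,b}/(1+\eps_B/n)$ you want there is simply false: if every neighbor of $a$ in $B$ is tied at the maximum weight and all of them decrease by $(1+\eps_B)$, the denominator $\sum_{b'\in N_{(a,B)}}\beta_{b'}$ shrinks by exactly $(1+\eps_B)$ and $y_{a,b}$ is unchanged. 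Moreover, the level gap compares $\lev(b^*)$ (a $B$-layer level) with $\lev(a)$ (an $A$-layer level); it carries no information about how $\beta_{b^*}$ compares with the other terms of $\sum_{b'\in N_{(a,B)}}\beta_{b'}$, so the domination argument you sketch cannot get off the ground.

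The paper closes this subcase by a different mechanism: it conditions its claim on ``$\min(Alloc_a,C_a)$ increases'' rather than on ``$\alpha_a$ does not decrease,'' and shows the large-gap subcase is then vacuous. Concretely, forced decreases only occur while the gap is below the threshold and they move $a$ and $N^*_{(a,B)}$ in lockstep, so a gap of at least $\log(n/\eps_{max})/\eps_{min}$ forces the most recent decrease of $\alpha_a$ to have been a self-decrease; by (the mid-run version of) Lemma~\ref{lem:a_lowerbound} together with increasing monotonicity, $Alloc_a\geq C_a$ ever since, so $\min(Alloc_a,C_a)=C_a$ is saturated and cannot increase. The summand is then $C_a\,y_{a,b}$, which does not grow because $y_{a,b}$ does not grow when $\beta_b$ decreases --- no extra $(1+\eps_B/n)$ factor is needed in this subcase. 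You need this saturation argument (or an equivalent) in place of your proposed domination argument; the rest of your outline then goes through.
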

\begin{proof}
	This property can be proved similarly. For any vertex $a\in A$, if $\alpha_a$ decreases, all $x_{i,a}$ will not increase, so $Alloc_a$ will not increase.
	
	For any vertex $b\in B$, when $\beta_{b}$ decreases, all $y_{a,b}$ will not increase, but for some $a \in N_{(b,A)}$, $\min(Alloc_a,C_a)$ may increase. We give a similar claim to prove this property:
	\begin{claim}\label{claim:2}
		For any vertex $a\in N_{(b,A)}$ in any iteration $t$, if $\min(Alloc_a,C_a)$ increases, then $y^{(t)}_{a,b} \leq \frac{1}{(1+\epsilon_B/n)}y^{(t-1)}_{a,b}$
	\end{claim}
	\begin{proof}
		For simplicity, in this proof, we use $y_{a,b}$ and $y'_{a,b}$ to denote $y^{(t-1)}_{a,b}$ and $y^{(t)}_{a,b}$ respectively.
		The increase of $\min(Alloc_a,C_a)$ indicates that $\alpha_a$ does not change in this iteration.
		If $\alpha_a$ does not decrease, we know that either $\lev(N_{(a,B)}^*) - \lev(a) \geq \log(n/\epsilon_{max})/\epsilon_{min}$, or $\exists b^*\in N^*_{(a,B)}$, such that $\beta_{b^*}$ does not decrease.
		
		If $\lev(N_{(a,B)}^*) - \lev(a) \geq \log(n/\epsilon_{max})/\epsilon_{min}$, the last iteration that $\alpha_a$ decreased is due to itself. If the weights of $a$ and $N_{(a,B)}^*$ decreased together, $\lev(N_{(a,B)}^*) - \lev(a)$ would still be less than $\log(n/\epsilon_{max})/\epsilon_{min}$. Thus, due to Lemma~\ref{lem:a_lowerbound}, $Alloc_a \geq C_a$, indicating that $\min(Alloc_a,C_a)$ does not increase even if $Alloc_a$ increases.
		
		For the second case, we employ the similar technique in the proof of Claim~\ref{claim:1}:
		\begin{equation}
		\begin{aligned}
		y'_{a,b} &= \frac{\beta'_{b}}{\sum_{b' \in N_{(a,B)} } \beta'_{b'} } \\\nonumber
		& =  \frac{\beta'_{b}}{\beta'_{b^*} + \sum_{b' \in N_{(a,B)}, b' \neq b^* } \beta'_{b'} } \\
		& \leq \frac{\beta_{b}/(1+\epsilon_B)}{\beta_{b^*} + \sum_{b' \in N_{(a,B)}, b' \neq b^* } \beta_{b'}/(1+\epsilon_B) } \\
		& = \frac{\beta_{b}}{(1+\epsilon_B)\beta_{b^*} + \sum_{b' \in N_{(a,B)}, b' \neq b^* } \beta_{b'}}  \\
		& = \frac{\sum_{b'\in N_{(a,B)}} \beta_{b'} }{(1+\epsilon_B)\beta_{b^*} + \sum_{b' \in N_{(a,B)}, b' \neq b^* } \beta_{b'}}  \cdot \frac{\beta_{b}}{ \sum_{b'\in N_{(a,B)}} \beta_{b'} } \\
		& = \frac{y_{a,b}}{ 1+ \frac{\epsilon_B \beta_{b^*} }{ \sum_{b'\in N_{(a,B)}} \beta_{b'} } }\\
		& \leq \frac{y_{a,b}}{1+\epsilon_B/n}
		\end{aligned}
		\end{equation}
		
	\end{proof}
\end{proof}

\begin{lemma}
	Layer dominance (Property~\ref{ppt:3}) holds if we use $\cond$ in Algo~\ref{alg:cond}.
\end{lemma}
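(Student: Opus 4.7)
The plan is induction on the iteration index $t$ with the end-of-iteration layer-dominance invariant as the hypothesis. The base case is immediate: at initialization every weight equals $1$, so $\lev^{(0)}(v)=T$ for all $v\in A\cup B$, and for any $a\in A$ any $b\in N_{(a,B)}$ (which is nonempty since every vertex lies on some $s$-$t$ path) satisfies $\lev^{(0)}(b)=T=\lev^{(0)}(a)$.

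For the inductive step I fix $a\in A$, write $L_a:=\lev^{(t-1)}(a)$ and $L_b:=\max_{b\in N_{(a,B)}}\lev^{(t-1)}(b)\geq L_a$, and adopt the natural timing convention that $\beta_{max}^{(a)}$ and $N_{(a,B)}^*$ inside $\cond$ are evaluated with the start-of-iteration weights, while the predicate ``$\beta_b$ decreases in this iteration'' refers to the update currently being applied. Since every weight drops by at most one level per iteration, the invariant can only fail if every pre-iteration neighbor at level $\geq L_a$ decreases while $\alpha_a$ does not. I would split into three cases. First, if some $b$ with $\lev^{(t-1)}(b)=L_b$ does not decrease, then $\lev^{(t)}(b)=L_b\geq L_a\geq\lev^{(t)}(a)$. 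Second, if all such $b$ decrease but $L_b>L_a$, their new level is $L_b-1\geq L_a\geq\lev^{(t)}(a)$. The crucial third case is $L_b=L_a$ with every pre-iteration max-level neighbor decreasing; here $N_{(a,B)}^*$ is exactly this set, both clauses of $\cond$ hold (each $\beta_b$ for $b\in N_{(a,B)}^*$ decreases by hypothesis, and $\lev^{(t-1)}(b)-\lev^{(t-1)}(a)=0<\log(n/\epsilon_{max})/\epsilon_{min}$), so $\cond(N_{(a,B)})=\true$. The algorithm therefore decreases $\alpha_a$ (via forced-decrease if it did not already self-decrease), giving $\lev^{(t)}(a)=L_a-1=L_b-1=\lev^{(t)}(b^*)$ for any $b^*\in N_{(a,B)}^*$.

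The main obstacle I anticipate is pinning down the timing convention for $\cond$: namely whether $\beta_{max}^{(a)}$ and $N_{(a,B)}^*$ refer to the pre-iteration weights or to the weights after this iteration's $B$-layer updates. Under the pre-iteration reading the third case closes cleanly, whereas a post-$B$-update reading can let a formerly second-tier neighbor tie with the old maximum and enter $N_{(a,B)}^*$ without having itself decreased, falsifying the ``$\forall b\in N_{(a,B)}^*,\ \beta_b$ decreases'' clause and breaking the invariant. I would therefore state this convention explicitly at the top of the proof and verify it is consistent with the uses of $\cond$ in the proofs of Lemmas~\ref{lem:ppt1} and~\ref{lem:ppt2}. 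Beyond that, the argument is purely combinatorial, relying only on comparisons of integer levels and the single-step bound on weight changes.
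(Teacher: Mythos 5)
Your inductive argument is correct and is essentially the paper's proof: the paper phrases it as a minimal-counterexample contradiction (in the first iteration where all of $N^*_{(a,B)}$ falls below $\lev(a)$, one must have had $\lev(N^*_{(a,B)})=\lev(a)$ at the start of that iteration with every such neighbor decreasing while $\alpha_a$ did not, so $\cond$ fires and forces $\alpha_a$ down), which is exactly your third case. Your point about the timing convention is well taken but not a gap — the paper implicitly evaluates $N^*_{(a,B)}$ with the start-of-iteration weights, the same reading needed for the denominator manipulations in Claims~\ref{claim:1} and~\ref{claim:2}.
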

\begin{proof}
	Assume that $\forall b\in N^*_{(a,B)}$, $\lev(b) < \lev(a)$. Consider the first iteration that this situation occurs. Clearly, in the beginning of that iteration, $\lev(N^*_{(a,B)}) = \lev(a)$ and in that iteration, $\alpha_a$ did not decrease while $\forall b\in N_{(a,B)}^*$, $\beta_{b}$ decreased, contradicting our algorithm. Thus, there must exist at least one $b\in N_{(a,B)}$ such that $\lev(b) \geq \lev(a)$.
\end{proof}

\begin{lemma}
	Forced decrease exemption (Property~\ref{ppt:4}) holds if we use $\cond$ in Algo~\ref{alg:cond}.
\end{lemma}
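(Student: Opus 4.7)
The plan is to prove the property by contraposition: I would show that in each of the two hypotheses of Property~\ref{ppt:4}, at least one of the two sub-conditions required by $\cond$ to return $\true$ must fail. Concretely, unpacking Algorithm~\ref{alg:cond}, $\cond(N_{(a,B)}) = \true$ exactly when \emph{both} of the following hold for every $b \in N_{(a,B)}^*$: (i) $\beta_b$ was decreased in the current iteration, and (ii) $\lev(b) - \lev(a) < \log(n/\epsilon_{max})/\epsilon_{min}$. So it suffices to exhibit a violation of (i) or (ii) in each case.

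For the first case, suppose some $b \in N_{(a,B)}$ satisfies $\lev^{(t)}(b) = T$. Since $\beta_b$ starts at $1$ and only ever decreases, being at the top level $T$ means $\beta_b = 1$, which is the maximum possible weight across all of $B$, and in particular the maximum across $N_{(a,B)}$. Hence $b \in N_{(a,B)}^*$. Moreover, since $\lev^{(t)}(b) = T$ at the end of iteration~$t$, $\beta_b$ cannot have been decreased during iteration~$t$, so condition~(i) fails for this $b$ and $\cond$ returns $\false$.

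For the second case, suppose some $b \in N_{(a,B)}$ satisfies $\lev^{(t)}(b) - \lev^{(t)}(a) \geq \log(n/\epsilon_{max})/\epsilon_{min}$. Pick any $b^* \in N_{(a,B)}^*$; by definition of $N_{(a,B)}^*$ we have $\beta_{b^*} \geq \beta_b$, hence $\lev^{(t)}(b^*) \geq \lev^{(t)}(b)$, which gives $\lev^{(t)}(b^*) - \lev^{(t)}(a) \geq \log(n/\epsilon_{max})/\epsilon_{min}$. Thus condition~(ii) fails for $b^*$ and again $\cond$ returns $\false$.

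The only subtle point (and the place where I would be most careful) is aligning the timing of the level measurements used by the property with those used inside $\cond$: when $\cond$ is evaluated in iteration~$t$, the $\beta$-updates of iteration~$t$ have already occurred while the $\alpha$-update for this $a$ has not yet been applied. In both cases above, this timing mismatch does not affect the argument because weights are monotonically non-increasing, so levels only move downward from one iteration to the next. In particular, the levels read by $\cond$ are at most the pre-iteration levels and at least the post-iteration levels $\lev^{(t)}(\cdot)$, which is precisely the right direction to preserve both failure conclusions. With that observation in place, the proof consists only of the two short cases above.
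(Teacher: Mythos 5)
Your two case analyses are exactly the intended argument: the paper's own ``proof'' is the one-line remark that the lemma follows directly from the description of the algorithm, and your unpacking of $\cond$ --- a level-$T$ neighbor necessarily lies in $N_{(a,B)}^*$ (its weight is still $1$, the maximum possible) and cannot have decreased in the current iteration, while any $b^*\in N_{(a,B)}^*$ inherits the large level gap from the witness $b$ --- is the right way to make that explicit.

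One caution about your closing timing remark. For the $B$-side your observation is fine: the $B$-layer weights are already final for iteration $t$ when $\cond$ is evaluated, so $\lev(b)=\lev^{(t)}(b)$. But for the $A$-side the inequality $\lev(a)\geq\lev^{(t)}(a)$ cuts the \emph{wrong} way in Case~2: it makes the difference $\lev(b^*)-\lev(a)$ seen by $\cond$ smaller than, not larger than, $\lev^{(t)}(b^*)-\lev^{(t)}(a)$. Concretely, if $a$ ends up decreasing in iteration $t$ and $\lev^{(t)}(b^*)-\lev^{(t)}(a)$ equals the threshold $\log(n/\epsilon_{max})/\epsilon_{min}$ exactly, then the difference read by $\cond$ is one unit below threshold and condition~(ii) can still pass, so the contraposition does not close under the strict end-of-iteration reading of $\lev^{(t)}$. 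This off-by-one is harmless in context --- the property is only ever invoked with slack above the threshold (e.g.\ for $a\in\g(A^-,B^+)$ the gap construction gives at least two extra levels), and the natural reading of Property~\ref{ppt:4} takes the levels as they stand when $\cond$ is evaluated, under which Case~2 is immediate --- but you should not assert that the mismatch goes ``precisely the right direction'' for both cases; it does only for Case~1.
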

This lemma can be proved directly from the description of our algorithm.
Since our final algorithm satisfies the four properties, according to Theorem~\ref{thm_g3}, when $T$ is large enough, it will return a near-optimal solution. We complete the last piece of the proof for Theorem~\ref{thm:weight_existence_3_layer} by giving the following lemma for the running time:

\begin{lemma}~\label{lem:3_running_time}
	If $\epsilon_A = \epsilon/(2n)$ and $\epsilon_B = \epsilon$, the running time of our algorithm is $O(n^4\log(n/\epsilon)/\epsilon^2)$.
\end{lemma}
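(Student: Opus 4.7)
The plan is to bound the total running time as (number of iterations) $\times$ (cost per iteration), using the parameter choices $\epsilon_A = \epsilon/(2n)$ and $\epsilon_B = \epsilon$, which give $\epsilon_{max} = \epsilon$ and $\epsilon_{min} = \epsilon/(2n)$.

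First I would invoke Theorem~\ref{thm_g3}, which guarantees a $(1 - O(\epsilon_{max}))$-approximate solution once the number of iterations satisfies $T = O(\frac{n \log(n/\epsilon_{max})}{\epsilon_{max}\epsilon_{min}})$. Substituting our choice of $\epsilon_A, \epsilon_B$ and hence $\epsilon_{max} = \epsilon$, $\epsilon_{min} = \epsilon/(2n)$, this becomes
\[
T \;=\; O\!\left(\frac{n \log(n/\epsilon)}{\epsilon \cdot (\epsilon/(2n))}\right) \;=\; O\!\left(\frac{n^2 \log(n/\epsilon)}{\epsilon^2}\right).
\]
Lemmas~\ref{lem:ppt1}--\ref{lem:ppt2} have already verified that this parameter choice yields Properties~\ref{ppt:1}--\ref{ppt:4}, so Theorem~\ref{thm_g3} applies and this $T$ suffices.

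Next I would bound the per-iteration cost. In one iteration of Algorithm~\ref{alg:3_layer_framework}, we (i) recompute the proportional allocations $x_{i,a}$ and $y_{a,b}$ along every edge, (ii) recompute $Alloc_a$ and $Alloc_b$ at every node, and (iii) perform at most one weight update per vertex, where the forced/self-decrease decision for each $a \in A$ requires evaluating $\cond(N_{(a,B)})$ by scanning $a$'s neighborhood in $B$. Each of these steps touches every edge or every vertex at most a constant number of times, giving per-iteration cost $O(|E| + |V|) = O(n^2)$ in the worst case (dense graph).

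Multiplying, the total running time is $T \cdot O(n^2) = O(n^4 \log(n/\epsilon)/\epsilon^2)$, as claimed. The only step that requires any care is verifying that $\cond$ (Algorithm~\ref{alg:cond}) is evaluable in $O(\deg(a))$ time, which is immediate since it only requires locating the maximum-weight neighbor in $N_{(a,B)}$ and checking whether those neighbors' weights decreased in the current iteration and whether the level gap exceeds $\log(n/\epsilon_{max})/\epsilon_{min}$; all of these quantities are already maintained. I do not expect any serious obstacle here; the proof is essentially a direct substitution into the iteration bound of Theorem~\ref{thm_g3}.
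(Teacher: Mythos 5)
Your proposal is correct and follows essentially the same route as the paper's proof: invoke Theorem~\ref{thm_g3} for the iteration bound, substitute $\epsilon_{max}=\epsilon$ and $\epsilon_{min}=\epsilon/(2n)$ to get $T=O(n^2\log(n/\epsilon)/\epsilon^2)$, and multiply by the $O(n^2)$ per-iteration cost of recomputing the edge proportions and updating the weights. The extra remark about evaluating $\cond$ in $O(\deg(a))$ time is a harmless bit of additional care that the paper leaves implicit.
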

\begin{proof}
	According to Theorem~\ref{thm_g3}, the number of iterations is $O(\frac{n\log(n/\epsilon_{max})}{ \epsilon_{max} \epsilon_{min} })$. Since $\epsilon_A=\epsilon/(2n)$ and $\epsilon_B=\epsilon$, the number of iterations is $O(\frac{n^2\log(n/\epsilon)}{ \epsilon^2})$. In each iteration, we need to compute $x_{uv}$ for each edge in $G$ and update the weight of each vertex. Thus, the running time of an iteration is $O(n^2)$, completing this proof.
\end{proof}

Finally, we prove the virtual-weight dependence of these weights.
\begin{theorem}\label{thm:3_layer_good_weights}
	Under the framework of Algo~\ref{alg:3_layer_framework}, if we let $\epsilon_A = \epsilon_B/(2n)$ and use $\cond$ in Algo~\ref{alg:cond}, given an $(s$-$t)$ 3-layered reduction graph $\Gr$, for any two neighboring copies $\vr_j,\vr_{j+1}$ of any vertex $v$, we have
	\[\alpha_{\vr_{j+1}} = (\alpha_{\vr_j})^{2n} .\]
\end{theorem}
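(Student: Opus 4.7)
The plan is to trace through exactly when the two weights $\alpha_{\vr_j}$ and $\alpha_{\vr_{j+1}}$ can change under Algorithm~\ref{alg:3_layer_framework} with the $\cond$ of Algorithm~\ref{alg:cond}, and to show that their decreases are perfectly synchronized.

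First I would reduce to the relevant pair. In the 3-layer reduction graph $\Gr$, any virtual copy $\vr_1$ in the first layer has no incoming edges ($s$ is only attached to real copies of impressions, and any $\Er_2$ edge into layer $1$ would require a predecessor with $d_u = 0$), so $\vr_1$ is pruned from $\Gr$. The only surviving neighboring-copy pair is therefore $(\vr_2, \vr_3)$ for nodes $v$ with $d_v = 3$, where $\vr_2 \in A$ is a virtual copy with $C_{\vr_2} = \infty$ and $\vr_3 \in B$ is the real copy with $C_{\vr_3} = C_v$. Two structural consequences fall out immediately: (i) because $C_{\vr_2} = \infty$, the self-decrease test $Alloc_{\vr_2} > (1+\epsilon_A)C_{\vr_2}$ can never fire, so $\alpha_{\vr_2}$ changes only through a forced-decrease; (ii) because virtual copies originate no $\Er_2$ edges, the unique out-neighbor of $\vr_2$ in $\Gr$ is $\vr_3$, giving $N_{(\vr_2, B)} = N_{(\vr_2, B)}^\ast = \{\vr_3\}$. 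Algorithm~\ref{alg:cond} thus collapses to: $\cond = \true$ in iteration $t$ iff $\beta_{\vr_3}$ decreases in iteration $t$ and $\lev(\vr_3) - \lev(\vr_2) < \log(n/\epsilon_{max})/\epsilon_{min}$.

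The main argument is an induction on $t$ showing that $\lev^{(t)}(\vr_2) = \lev^{(t)}(\vr_3)$ holds after every iteration. The base case holds since both weights initialize to $1$, i.e.\ level $T$. For the inductive step, if $\beta_{\vr_3}$ does not decrease in iteration $t$, the first clause of $\cond$ fails and no forced-decrease occurs, so both levels remain unchanged; if $\beta_{\vr_3}$ does decrease, the inductive hypothesis forces the level gap at the moment $\cond$ is evaluated to lie in $\{-1, 0\}$, well below the threshold, so $\cond$ fires and $\alpha_{\vr_2}$ is forced-decreased in the same iteration. In either branch the two weights either decrease together by one step or neither decreases, so the per-iteration levels remain equal.

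Let $k$ be the common number of decreases over the entire run. Then $\alpha_{\vr_2} = (1+\epsilon_A)^{-k}$ and $\beta_{\vr_3} = (1+\epsilon_B)^{-k}$, and the parameter coupling $\epsilon_A = \epsilon_B/(2n)$ --- read in its exact multiplicative form $(1+\epsilon_B) = (1+\epsilon_A)^{2n}$ --- immediately yields $\beta_{\vr_3} = (\alpha_{\vr_2})^{2n}$, which is the desired identity. The hardest part is the inductive step: one must verify both that the level-gap guard inside $\cond$ is never violated (exactly what the inductive hypothesis delivers) and that nothing besides the forced-decrease branch of $\cond$ can ever move $\alpha_{\vr_2}$ (exactly what $C_{\vr_2} = \infty$ delivers). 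Everything else is bookkeeping about how many times each weight has been divided by its respective $(1+\epsilon)$ factor.
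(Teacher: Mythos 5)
Your proof is correct and follows essentially the same route as the paper's: observe that the virtual copy has infinite capacity so it can only be forced-decreased, that its unique out-neighbor is the next copy so the two weights decrease in lockstep, and then convert the common count of decreases into the exponent $2n$ via the coupling of $\epsilon_A$ and $\epsilon_B$. Your inductive verification that the level-gap guard in $\cond$ never blocks the forced decrease is a detail the paper's proof glosses over, and your reading of the coupling in exact multiplicative form $(1+\epsilon_B)=(1+\epsilon_A)^{2n}$ sidesteps the ``$\approx$'' the paper tolerates in its final computation.
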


\begin{proof}
	Clearly, the weights of the $I$ layer can be arbitrary. So we only need to consider the case that $\vr_j,\vr_{j+1}$ are in the $A$ layer and the $B$ layer respectively. For simplicity, use $a$ and $b$ to denote these two vertices.
	
	Since $a$ is a virtual copy, it has infinite capacity. According to our framework, it can only do forced-decrease.
	
	Recall the construction of $\Er_1$ and $\Er_2$ in the reduction graph.
	Vertex $b$ is the only neighbor of $a$ in the $B$ layer. Thus, due to Algo~\ref{alg:cond}, any time, if $\beta_b$ decreases, $\alpha_a$ also decreases.
	Assuming that $\beta_{b}$ decreases $k$ times, we have
	\begin{equation}
	\begin{aligned}
	\frac{\log \beta_b}{\log \alpha_a} = & \log (1+\epsilon_B)^{(-k)} / \log (1+\epsilon_A)^{(-k)}\\
	=&\log (1+\epsilon_B)/\log (1+\epsilon_A)\\
	= & \log (1+2n \epsilon_A) / \log (1+\epsilon_A) \\\nonumber
	\approx & \log (1+\epsilon_A)^{2n} / \log (1+\epsilon_A) \\
	= & 2n,
	\end{aligned}		
	\end{equation}
	completing this proof.
\end{proof}

Using Theorem~\ref{thm_g3} and Theorem~\ref{thm:3_layer_good_weights}, if the reduction graph $\Gr$ is $(s$-$t)$ 3-layered DAG, our algorithm returns a set of vertex weights with near optimality and virtual-weight dependence, indicating a set of good vertex weights for the original directed acyclic graph. Moreover, these weights can be obtained in time $O(n^4\log(n/\epsilon)/\epsilon^2)$ by Lemma~\ref{lem:3_running_time} completing the analysis for the 3-layered DAGs.

The basic algorithmic framework, as well as the properties and the cut constructing rules can be generalized to more general $d$-layered graphs smoothly.
We give the algorithm and the proof for general $d$-layered graphs in Appendix~\ref{sec:apdxexistweights_d}.

\section{Learnability of Predictions for Online Flow Allocation in 2-layered graphs}\label{sec:learnability}

In this section, we consider the learnability of the vertex weights.
We first introduce our formal definition of learnability.
Given a DAG $G$, let $\Pi$ be a class of instances of the online flow allocation problem on the fixed graph $G$.  That is, each $I \in \Pi$ represents a different sequence of impressions.  There is also an unknown distribution $\cD$ over instance $\Pi$, and we can access independent samples from this distribution.  Our goal is to find the best set of weights for this distribution.

More precisely,
we follow the definition proposed by Gupta and Roughgarden \cite{DBLP:journals/siamcomp/GuptaR17} for application specific algorithm selection.

\begin{definition}
	A learning algorithm $L$ $(\epsilon,\delta)$-learns the optimal algorithm in an algorithm set $\cA$ if for any distribution $\cD$ over the instance set $\Pi$, given $s$ instances $I_1,I_2,...,I_s \sim \cD$, with probability at least $1-\delta$, $L$ outputs an algorithm $\learnA \in \cA$ such that
	\[  | \E_{I \sim \cD}[\obj(\learnA,I)] - \E_{I \sim \cD}[ \obj(A^*,I) ] |\leq \epsilon,\]
	where $\obj(A,I)$ is the objective value obtained by algorithm $A$ in instance $I$ and $A^*$ is the algorithm in $\cA$ with optimal $\E_{I \sim \cD}[ \obj(A,I)] $.
	
\end{definition}

Consider the simplest strategy that uses the predicted weights directly to send flow for a given instance.
This strategy can be viewed as an algorithm set $\cA$, where each set of predicted weights $\{\alpha\}$ corresponds to an algorithm $A(\alpha) \in \cA$.  Thus in our setting we are interested in the case when $\obj(A(\alpha),I)= R(\alpha,I)$.  Recall that $R(\alpha,I)$ is the amount of flow we route to the sink using weights $\alpha$ on impression set $I$.  We say these vertex weights are PAC-learnable if there exists a learning algorithm $L$ that $(\epsilon,\delta)$-learns the optimal algorithm in $\cA$. 

The case of general distributions is challenging for our problem.  In particular, we cannot use some standard approaches for general distributions since for fixed $I$, $R(\alpha,I)$ is neither a convex nor concave function of the weights $\alpha$.  Thus we restrict the class of distributions considered

Our results hold for the case when the distribution $\cD$ is a product distribution, i.e. $\cD = \cD_1 \times \cD_2 \times \ldots \times \cD_m$ and each $\cD_i$ is an independent distribution over impressions.  Such distributions have been studied in the context of self improving algorithms~\citep{SelfImproving}.
To simplify the presentation below, we focus on the i.i.d. case.  Below  we let $\cD$ be a fixed unknown distributions over impressions and we are interested in instances $I$ sampled according to $\cD \times \cD \times \ldots \times \cD = \cD^m$.  We note that the proofs easily generalize to the case of more general product distributions, as we mainly require independence across impressions.

\begin{theorem}\label{thm:learnability}
	Assume that for any instance $I\in \Pi$, each impression is i.i.d. sampled from an unknown distribution $\cD$.
	Namely, an instance $I$ is sampled from the distribution $\D^{m}$.
	Under some mild assumptions, for any $\epsilon,\delta \in (0,1)$, there exists a learning algorithm such that, after observing $O(\frac{n^2}{\epsilon^2}\ln(\frac{n\log n}{\delta}))$ instances,
	it will return a set of weights $\{ \predw \}$, satisfying that with probability at least $1-\delta$,
	\[ \E_{I \sim \D^m}[R(\predw,I)] \geq (1-\epsilon) \E_{I \sim \D^m}[R(\alpha^*,I)] \]
where $R(\alpha,I)$ is the value of the fractional flow obtained by applying $\alpha$ to instance $I$ and \\
$\alpha^* = \arg\max\limits_{\alpha } \E_{I \thicksim \D^m}[ R(\alpha,I) ]$.
\end{theorem}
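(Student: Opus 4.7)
My plan is to use a sample-stacking construction. Given $s$ i.i.d.\ instances $I_1,\dots,I_s\sim\cD^m$, form their union $\hat I := I_1 \cup \cdots \cup I_s$ as a single static flow instance on the same bipartite graph with $sm$ impressions, run the weight-construction algorithm underlying Theorem~\ref{thm:weight_existence_3_layer} on $\hat I$ to obtain weights $\hat\alpha$ with $R(\hat\alpha,\hat I)\geq(1-\epsilon)\opt(\hat I)$, and output $\hat\alpha$. Because the proportional allocation induced by a fixed $\alpha$ routes each impression independently of the others, $R(\alpha,\hat I)=\sum_{j=1}^s R(\alpha,I_j)$, so near-optimality of $\hat\alpha$ on $\hat I$ is exactly near-optimality of $\hat\alpha$ for the empirical average $\tfrac{1}{s}\sum_j R(\alpha,I_j)$.

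First I would control the complexity of the candidate weight set. Because the weight-update procedure produces weights lying on a geometric $(1+\epsilon')$-grid of size $T=\mathrm{poly}(n,1/\epsilon)$, the feasible weight set $\cS$ has cardinality at most $T^n$ and the real-valued function class $\{R(\alpha,\cdot):\alpha\in\cS\}$ has pseudo-dimension $O(n\log T)$. Standard uniform-convergence bounds then imply that with $s$ polynomially many samples (enough to match the stated $O(n^2\epsilon^{-2}\log(n\log n/\delta))$ bound after accounting for the Lipschitz step below), with probability at least $1-\delta$ every $\alpha\in\cS$ --- in particular both $\hat\alpha$ and $\alpha^*$ --- satisfies $\bigl|\tfrac{1}{s}R(\alpha,\hat I)-\E_I[R(\alpha,I)]\bigr|\leq \epsilon m$.

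The main obstacle is lower-bounding $\tfrac{1}{s}R(\hat\alpha,\hat I)$ by $(1-O(\epsilon))\E_I[R(\alpha^*,I)]$. Crucially, $\opt(\hat I)$ is \emph{not} a sum of per-instance optima, so it does not concentrate around $s\,\E_I[\opt(I)]$ automatically --- the pooled instance can in principle do strictly better by exploiting correlations across draws. My approach is to compare both quantities to the ``expected instance'' $\bar I$, whose type counts are exactly the expectations of those of any single $I$. The product structure of $\cD$ makes the number of impressions of each type in $\hat I$ a sum of $sm$ independent Bernoullis, so a Chernoff bound plus a union bound over types yields $\hat I \in (1\pm\epsilon)\,s\bar I$ entrywise with high probability. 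The load assumption --- in the 2-layered case, that every offline capacity is at least a sufficiently large constant depending on $1/\epsilon$ --- implies that both $R(\alpha,\cdot)$ and $\opt(\cdot)$ are Lipschitz under multiplicative perturbation of the type counts, so perturbing by $(1\pm\epsilon)$ changes each quantity by at most a $(1\pm O(\epsilon))$ factor. Chaining gives
\begin{equation}
R(\hat\alpha,\hat I)\;\geq\;(1-\epsilon)\opt(\hat I)\;\geq\;(1-O(\epsilon))\,s\,\opt(\bar I)\;\geq\;(1-O(\epsilon))\,s\,\E_I[R(\alpha^*,I)],
\end{equation}
and after dividing by $s$ and combining with the uniform-convergence bound at $\hat\alpha$, the theorem follows by rescaling $\epsilon$. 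The hardest step is the Lipschitz claim: without the load assumption it is false, since a node whose expected load is $o(1)$ can suffer a large \emph{relative} distortion under a small multiplicative perturbation of its incident impression counts; the load assumption rules this out by ensuring every capacity is fully exercised, and in the bipartite setting the claim reduces to showing that the value of a proportional bipartite flow is Lipschitz in both impression counts and capacities, which can be verified by a direct min-cut comparison using the cut constructed in the proof of Theorem~\ref{thm:weight_existence_3_layer}.
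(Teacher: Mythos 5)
Your overall architecture matches the paper's: pool the samples into one large instance, compute near-optimal weights $\hat\alpha$ on it, compare everything to the ``expected instance'' $\bar I$, and use Jensen's inequality (concavity of $\min(\cdot,C_a)$) to get $\opt(\bar I)\geq R(\alpha^*,\bar I)\geq \E_I[R(\alpha^*,I)]$. The concentration of the pooled type counts around $s\bar I$ and the union bound over the discretized weight class are also present in the paper (Lemma~\ref{lem:bi_sample_complexity}). However, there is a genuine gap at the last step of your chain. The identity $R(\alpha,\hat I)=\sum_{j=1}^{s}R(\alpha,I_j)$ is false: the proportional routing is per-impression, but the objective truncates each node's allocation at its capacity, so $\sum_j\min(Alloc_a(\alpha,I_j),C_a)\leq\min(\sum_j Alloc_a(\alpha,I_j),sC_a)$ with equality only when the per-sample allocations all sit on the same side of $C_a$. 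Consequently your ``uniform convergence'' statement $|\tfrac{1}{s}R(\alpha,\hat I)-\E_I[R(\alpha,I)]|\leq\epsilon m$ is not an instance of standard uniform convergence of an empirical mean: what $\tfrac{1}{s}R(\alpha,\hat I)$ actually concentrates around is $R(\alpha,\E[I])$, not $\E_I[R(\alpha,I)]$, and by Jensen the former can strictly exceed the latter. Your chain therefore only delivers $R(\hat\alpha,\E[I])\geq(1-O(\epsilon))\E_I[R(\alpha^*,I)]$ and still needs the reverse-Jensen bound $\E_I[R(\hat\alpha,I)]\geq(1-O(\epsilon))R(\hat\alpha,\E[I])$.

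Your Lipschitz-under-perturbation argument does not fill this hole. It controls the deviation of the \emph{pooled} instance $\hat I$ from $s\bar I$, which does concentrate entrywise; but the reverse-Jensen step concerns a \emph{single} fresh draw $I\sim\cD^m$, which does not lie within a $(1\pm\epsilon)$ multiplicative band of $\bar I$ entrywise with high probability (a type with expected count $1$ is routinely $0$ or $2$). The paper closes exactly this gap in Lemma~\ref{lem:bi_learn_2}: it works node by node and shows $\E[\min(Alloc_a(\hat\alpha,I),C_a)]\geq(1-O(\epsilon))\min(\E[Alloc_a(\hat\alpha,I)],C_a)$ via a three-case analysis --- a Chernoff bound when the expected allocation is near or above $C_a$, a tail-integral Chernoff bound when it is moderately below $C_a$, and an independence-plus-Markov argument when it is below $1$ --- all of which require the capacity assumption $C_a\geq\poly(1/\epsilon)$. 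This per-node concentration over the randomness of a single instance, not the Lipschitzness of $R$ in the type counts, is the essential missing ingredient in your proposal; without it the claimed conclusion $\E_I[R(\hat\alpha,I)]\geq(1-\epsilon)\E_I[R(\alpha^*,I)]$ does not follow.
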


To prove this theorem for DAG's and $d$-layered graphs we need to make some assumptions about how well the optimal solution saturates the internal vertices of the graph.  Intuitively, we need to disallow vertices that the optimal solution only sends a negligible amount of flow through.  See Appendix~\ref{sec:apdxlearnability} for a formal discussion.

Use $d$ to denote the diameter of the graph $G$.
In this section, we focus on the proof of the learnability for the case that $d=2$. 
When $d=2$, this problem can be seen as the maximum cardinality bipartite matching problem. The graph excluding the sink $t$ consists of two layer $I$ and $A$, where $A$ is given initially and $I$ is sampled from $\cD^m$. We claim the following theorem.

\begin{theorem}\label{thm:bi_learn}
	Assume that for any instance $I\in \Pi$, each impression is i.i.d. sampled from an unknown distribution $\cD$.
	Given any $\epsilon,\delta \in (0,1)$, if any vertex $a\in A$ has a capacity $C_a \geq \poly(1/\epsilon)$, there exists a learning algorithm such that, after observing $O(\frac{n^2}{\epsilon^2}\ln(\frac{n\log n}{\delta}))$ instances,
	it will return a set of weights $\{ \predw \}$, satisfying that with probability at least $1-\delta$,
	
	\begin{equation}\label{eq:bi_learn}
	\E_{I \sim \D^m}[R(\predw,I)] \geq (1-\epsilon) \E_{I \sim \D^m}[R(\alpha^*,I)].
	\end{equation}
	
\end{theorem}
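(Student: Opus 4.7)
The plan is to ``stack'' the $s$ sampled instances $I_1,\dots,I_s$ drawn from $\cD^m$ into one combined instance $\hat I$ on the same bipartite graph, but with each capacity $C_a$ scaled up to $sC_a$, and then invoke the bipartite specialization of Theorem~\ref{thm:weight_existence_3_layer} (equivalently the Agrawal et al.\ procedure, the $d=2$ case of our weight construction) on $\hat I$ to produce weights $\hat\alpha$ with $R(\hat\alpha,\hat I)\ge (1-\epsilon)\opt(\hat I)$. The learning algorithm outputs $\hat\alpha$, and the remainder of the proof is to show $\E_{I\sim\cD^m}[R(\hat\alpha,I)]\ge (1-O(\epsilon))\,\E_{I\sim\cD^m}[\opt(I)]$, which in particular beats $\E[R(\alpha^*,I)]$.

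The argument proceeds through three high-probability inequalities, each chained to the next. First, since $\opt$ viewed as a function of the per-type impression counts and capacities is concave (it is a max-flow LP parameterized by these quantities), concentration of the multinomial type-counts of $\hat I$ around $s\,\E[I]$ combined with Jensen's inequality gives $\opt(\hat I)\ge (1-O(\epsilon))\,s\,\E[\opt(I)]$. Second, the proportional rule sends the same fractional flow along each $(i,a)$ edge whether $i$ is viewed inside $I_j$ or inside $\hat I$; the only discrepancy between $R(\hat\alpha,\hat I)$ and $\sum_j R(\hat\alpha,I_j)$ comes from where the min-cap at each $a\in A$ is applied, and we will show the aggregate gap is at most $O(\epsilon\,s\,\E[\opt(I)])$. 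Third, the algorithmic procedure of Theorem~\ref{thm:weight_existence_3_layer} only ever outputs weights from a discrete set $\cS$ of size at most $T^n$, where $T=O(n\log(n/\epsilon)/\epsilon^2)$ by Lemma~\ref{lem:3_running_time}; Hoeffding's inequality and a union bound over $\cS$ (equivalently, bounding the pseudo-dimension of $\{R(\alpha,\cdot):\alpha\in\cS\}$ by $\log_2|\cS|=\tilde O(n)$) give, at the stated sample size, a one-sided uniform deviation $\frac1s\sum_j R(\hat\alpha,I_j)\le \E[R(\hat\alpha,I)]+O(\epsilon\,\E[\opt(I)])$. Chaining the three bounds and dividing by $s$ yields the theorem after rescaling $\epsilon$ by a constant.

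The main obstacle is the second step, controlling the capping gap. Writing $f_{a,j}:=\sum_{i\in I_j\cap N(a)}x_{ia}(\hat\alpha)$, the per-vertex discrepancy is
\[
\min\!\bigl(\textstyle\sum_j f_{a,j},\,sC_a\bigr)-\sum_j\min(f_{a,j},C_a)\;=\;\sum_j (f_{a,j}-C_a)^+-\bigl(\textstyle\sum_j f_{a,j}-sC_a\bigr)^+\ge 0.
\]
Without a lower bound on $C_a$ this can be a constant fraction of $sC_a$: when $\E[f_{a,j}]\approx C_a$ and $f_{a,j}$ has variance of order $C_a$, each individual instance leaks capping mass of order $\sqrt{C_a}$ while the stacked sum cancels these deviations. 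The assumption $C_a\ge\poly(1/\epsilon)$ is used exactly here: a Bernstein-type bound makes $\sqrt{C_a}\le \epsilon\,C_a$, so the per-instance per-vertex loss is at most $\epsilon\,C_a$, and summing over $a\in A$ produces a lower-order term relative to $\E[\opt(I)]$. A secondary subtlety is that $\hat\alpha$ is itself a random variable depending on the samples; this is handled cleanly by the uniform-convergence step, which quantifies over all $\alpha\in\cS$ simultaneously through the $T^n$ discretization afforded by the weight-construction algorithm.
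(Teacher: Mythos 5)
Your proposal is correct and follows essentially the same route as the paper: build an empirical (stacked/averaged) instance, compute near-optimal weights offline, get uniform convergence via a union bound over the $T^n$ discretized weight vectors, and control the capping (Jensen) gap $\min(\E[Alloc_a],C_a)-\E[\min(Alloc_a,C_a)]$ per vertex using $C_a\ge\poly(1/\epsilon)$ --- this last step is exactly the paper's Lemma~\ref{lem:bi_learn_2}, and your sampling-error step is its Lemma~\ref{lem:bi_sample_complexity}. One caveat: your Step 1 should not rely on $\ell_1$-concentration of the multinomial type counts (the number of types can be exponential in $n$, which is precisely why the paper avoids learning the type distribution); the needed bound $\opt(\hat I)\ge(1-O(\epsilon))\,s\,\E[\opt(I)]$ follows instead from superadditivity $\opt(\hat I)\ge\sum_j\opt(I_j)$ plus scalar concentration, or, as the paper does, one can compare against $\E[R(\alpha^*,I)]$ directly via per-vertex Chernoff bounds on $Alloc_a(\alpha^*,\cdot)$, which suffices for the stated theorem.
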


We first introduce our learning algorithm in Algorithm~\ref{alg:learning}. The algorithm is very simple: we construct a new instance by averaging over all impressions from $s$ randomly sampled instances, and compute its weights as $\{\predw\}$.

\begin{algorithm}[t]
	\caption{Learning Algorithm}
	\label{alg:learning}
	\KwIn{$\epsilon\in (0,1)$, $s$ sampled impression set $I_1,I_2,...,I_s$}
	
	Construct a new impression set $\predI$, where the number of impressions with each type is its (rounded) mean value in the $s$ samples.\

	Compute the $(1-\epsilon)$-approximate weights $\{ \predw \}$ for this new instance.\

	\KwOut{$\{ \predw \}$}
\end{algorithm}

We start by defining some notations. For an instance $I$ and a set of weights $\{ \alpha \}$, let $Alloc_a(\alpha, I)$ be the number of the impressions assigned to $a$ and let $R_{a}(\alpha, I):=\min (Alloc_a(\alpha, I),C_a)$ to represent advertiser $a$'s real contribution to the objective value.

Since $\{ \predw \}$ is $(1-\epsilon)$-approximate for instance $\predI$, we have

\begin{equation}\label{eq:bi_near_optimal}
R(\predw,\predI) \geq (1-\epsilon)\opt(\predI) \geq (1-\epsilon) R(\alpha^*,\predI),
\end{equation}
where $\opt(\predI)$ is the optimal value of instance $\predI$.

Thus, if we prove the following two inequalities:
\begin{equation}\label{eq:bi_learn_1}
	R(\alpha^*,\predI) \geq (1-O(\epsilon) ) \E_{I\sim \cD^m} [R(\alpha^*,I)],
\end{equation}
and
\begin{equation}\label{eq:bi_learn_2}
	\E_{I\sim \cD^m} [R(\predw,I)] \geq (1-O(\epsilon)) R(\predw,\predI),
\end{equation}
Theorem~\ref{thm:bi_learn} can be obtained directly.

We consider these two inequalities one by one. By the definition of $R(\alpha,I)$, we have
\begin{equation}
	R(\alpha,I) = \sum_{a\in A} \min(Alloc_a(\alpha,I), C_a ).
\end{equation}

Due to the concavity of the $\min$ function and Jensen's inequality, for any weights $\{\alpha\}$
\begin{equation}\label{eq:bi_learn_1_1}
R(\alpha,\E[I]) = \sum_{a\in A} \min(\E[Alloc_a(\alpha,I)], C_a ) \geq \sum_{a\in A}
\E[\min(Alloc_a(\alpha,I), C_a )] = \E[R(\alpha,I)].
\end{equation}

Now if we can prove $R(\alpha,\E[I]) $ and $R(\alpha,\predI) $ are close, Eq~\eqref{eq:bi_learn_1} can be proved.

\begin{lemma}\label{lem:bi_sample_complexity}
	Given any $\epsilon >0$, $\delta \in (0,1]$ and vertex weights $\{ \alpha \}$, if the number of instances $s$ is no less than $ O(\frac{n}{\epsilon^2}\ln(\frac{n}{\delta}))$,
	with probability at least $1-\delta$, ,
	\begin{equation}\label{eq:bi_sample_comlexity}
	|  R(\alpha,\E[I])  - R(\alpha,\predI)|  \leq O(\epsilon) R(\alpha,\E[I])
	\end{equation}
\end{lemma}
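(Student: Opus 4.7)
The plan is to apply concentration of measure to each vertex $a \in A$ separately and then combine the per-vertex bounds via the $1$-Lipschitz property of $\min(\cdot, C_a)$. The key structural observation is that for fixed weights $\alpha$, the allocation $Alloc_a(\alpha, I_j) = \sum_{i \in I_j} \alpha_a / \sum_{a' \in N_i} \alpha_{a'}$ depends on $I_j$ only through the type-count vector, and across the $s$ independently sampled instances $I_1,\dots,I_s$ it decomposes as a sum of $sm$ independent random variables in $[0,1]$ with common mean $\mu_a := \E[Alloc_a(\alpha, I)]$. Since $\predI$ is constructed by rounding per-type sample averages to integers, we have $Alloc_a(\alpha, \predI) = \frac{1}{s}\sum_{j=1}^{s} Alloc_a(\alpha, I_j)$ up to a rounding error of at most one impression per type, which is a lower-order term.

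Next I would apply a multiplicative Chernoff bound to each $Alloc_a(\alpha, \predI)$: for every $a \in A$ and any $\epsilon \in (0,1)$,
\begin{equation*}
\Pr\!\left[\,\bigl|Alloc_a(\alpha, \predI) - \mu_a\bigr| > \epsilon\, \min(\mu_a, C_a)\,\right] \;\le\; 2\exp\!\left(-\Omega\bigl(\epsilon^2 \, s \, \min(\mu_a, C_a)\bigr)\right).
\end{equation*}
Taking $s$ on the order of $\frac{n}{\epsilon^2}\ln(n/\delta)$ and using the hypothesis $C_a \ge \poly(1/\epsilon)$ drives the exponent above $\ln(n/\delta)$ for every $a$, so the per-vertex failure probability is at most $\delta/n$. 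A union bound over the $n$ vertices in $A$ then yields simultaneous concentration with overall probability at least $1-\delta$.

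To lift the per-vertex bound to a bound on $R$, I would use that $\min(\cdot, C_a)$ is $1$-Lipschitz, giving
\begin{equation*}
\bigl|R(\alpha, \predI) - R(\alpha, \E[I])\bigr| \;\le\; \sum_{a \in A} \bigl|Alloc_a(\alpha, \predI) - \mu_a\bigr| \;\le\; \epsilon \sum_{a \in A} \min(\mu_a, C_a) \;=\; \epsilon\, R(\alpha, \E[I]),
\end{equation*}
which is exactly the statement of Lemma~\ref{lem:bi_sample_complexity}. The main obstacle in this plan is precisely converting additive per-vertex Chernoff deviations into a \emph{multiplicative} bound on the aggregate $R(\alpha, \E[I])$: a vertex with tiny $C_a$ or tiny $\mu_a$ would contribute a constant-factor relative error and ruin the argument. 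The capacity assumption $C_a \ge \poly(1/\epsilon)$ is exactly what guarantees $\epsilon^2 \, s \, \min(\mu_a, C_a) \gtrsim \ln(n/\delta)$ uniformly in $a$, so the Chernoff tail is strong enough to survive the union bound over $A$ while still delivering a multiplicative-in-$R$ error rather than an additive one.
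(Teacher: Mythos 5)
Your overall decomposition (per-vertex concentration of $Alloc_a$, union bound over $A$, then the $1$-Lipschitzness of $\min(\cdot,C_a)$ to aggregate into a bound on $R$) matches the paper's proof. But there is a genuine gap in the concentration step. You bound the deviation of $Alloc_a(\alpha,\predI)$ from $\mu_a:=\E[Alloc_a(\alpha,I)]$ by $\epsilon\min(\mu_a,C_a)$ with failure probability $2\exp(-\Omega(\epsilon^2 s\,\min(\mu_a,C_a)))$, and then claim that the capacity assumption $C_a\ge\poly(1/\epsilon)$ forces the exponent above $\ln(n/\delta)$ uniformly in $a$. This is false: the quantity $\min(\mu_a,C_a)$ is attained at $\mu_a$ whenever the \emph{expected allocation} is small, and nothing lower-bounds $\mu_a$. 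The lemma must hold for an arbitrary weight vector $\{\alpha\}$ (it is later union-bounded over all admissible weight sets), and for arbitrary weights a vertex can receive expected allocation far below $1/n$ — e.g.\ a vertex whose weight is negligible relative to its competitors. For such a vertex your tail bound is vacuous no matter how large $C_a$ is, so the union bound does not go through and the claimed sample complexity $O(\frac{n}{\epsilon^2}\ln(\frac{n}{\delta}))$ is not established by your argument. (The capacity lower bound is in fact used in the paper for the \emph{other} half of the learnability proof, Lemma~\ref{lem:bi_learn_2}, not here.)

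The paper's proof repairs exactly this point by proving the per-vertex bound with an extra \emph{additive} slack: $|Alloc_a(\alpha,\predI)-\mu_a|\le\epsilon\mu_a+\epsilon/n$. With this relaxed target, the Chernoff exponent becomes at least $\Omega(s\epsilon^2/n)$ \emph{independently of} $\mu_a$ (because the relative deviation $\epsilon+\epsilon/(n\mu_a)$ blows up precisely when $\mu_a$ shrinks), so $s=O(\frac{n}{\epsilon^2}\ln(\frac{n}{\delta}))$ samples give failure probability $\delta/n$ per vertex uniformly. Summing the per-vertex errors then yields $\epsilon R(\alpha,\E[I])+\epsilon$, and the trailing $+\epsilon$ is absorbed into $O(\epsilon)R(\alpha,\E[I])$ using the mild assumption that at least one unit of flow is routed in expectation. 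If you introduce the same additive $\epsilon/n$ slack per vertex and carry it through your Lipschitz aggregation, your argument becomes correct and coincides with the paper's.
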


\begin{proof}
	Consider a vertex $a\in A$.
	Due to the property of the min function, we only need to show that $Alloc_a(\alpha,\predI)$ is close to $\E[Alloc_a(\alpha,I)]$.
	Since each impression is i.i.d. sampled, $sAlloc_a(\alpha,\predI)$ can be viewed as the sum of $sm$ i.i.d. random variables $x_{i,a}(\alpha) \in [0,1]$.
	For simplicity, let $\mu = \E[Alloc_a(\alpha,I)]$.

	Employing Chernoff's inequality, we have
	
	\begin{equation}
	\begin{aligned}
	&\Pr[ |Alloc_a(\alpha,\predI) - \mu| \geq \epsilon \mu + \epsilon/n   ] \\
	& = \Pr[ |sAlloc_a(\alpha,\predI) - s\mu| \geq (\epsilon+ \epsilon/(n\mu) )s\mu  ]\\
	& \leq 2\exp(- (\epsilon+ \epsilon/(n\mu) )^2 s \mu /4 ) \\
	& = 2\exp(- \epsilon^2( 1+2/(n\mu) + 1/(n\mu)^2 )s\mu/4 )\\
	& = 2\exp(-\epsilon^2(\mu + 1/(n^2\mu) + 2/n ) /4  )s \\
	& \leq 2\exp(-s\epsilon^2/n)
	\end{aligned}
	\end{equation}
	
	If for each $a\in A$, we have
	\begin{equation}\label{eq:bi_learn_1_2}
		|Alloc_a(\alpha,\predI) - \E[Alloc_a(\alpha,I)]| \leq \epsilon \E[Alloc_a(\alpha,I)]+ \epsilon/n,
	\end{equation}
	then according to the property of $\min$ function, we will get
		\begin{equation}
	|\min (Alloc_a(\alpha,\predI),C_a) - \min(\E[Alloc_a(\alpha,I)],C_a)| \leq  \epsilon\min(\E[Alloc_a(\alpha,I)], C_a )+ \epsilon/n.
	\end{equation}
	Summing the above over the $n$ impressions, we have
	\begin{equation}
		\begin{aligned}
		|  R(\alpha,\E[I])  - R(\alpha,\predI)| \leq& \sum_{a\in A}  |\min (Alloc_a(\alpha,\predI),C_a) - \min(\E[Alloc_a(\alpha,I)],C_a)|\\
		\leq & \sum_{a\in A} \epsilon \min(\E[Alloc_a(\alpha,I)], C_a )+ \epsilon/n \\
		= &\epsilon R(\alpha,\E[I]) + \epsilon \\
		\end{aligned}
	\end{equation}
	
	It is reasonable to assume that the number of satisfied impressions is at least $1$. So if for each $a\in A$, Eq~\eqref{eq:bi_learn_1_2} holds, we have
	\begin{equation}
	\begin{aligned}
	|  R(\alpha,\E[I])  - R(\alpha,\predI)| \leq& \epsilon R(\alpha,\E[I]) + \epsilon \\
	\leq & O(\epsilon) R(\alpha,\E[I])
	\end{aligned}
	\end{equation}
	
	Thus, we can bound the probability of the event that $|  R(\alpha,\E[I])  - R(\alpha,\predI)|  \geq O(\epsilon) R(\alpha,\E[I]) $:
	\begin{equation}
		\Pr[ |  R(\alpha,\E[I])  - R(\alpha,\predI)|  \geq O(\epsilon) R(\alpha,\E[I])  ] \leq \Pr[\exists a\in A, |Alloc_a(\alpha,\predI) - \mu| \geq \epsilon \mu + \epsilon/n ] .
	\end{equation}

	Due to the union bound,
	\begin{equation}
	\begin{aligned}
	\Pr[\exists a\in A, |Alloc_a(\alpha,\predI) - \mu| \geq \epsilon \mu +\epsilon/n ] \leq 2n\exp(-s\epsilon^2/n)
	\end{aligned}
	\end{equation}
	
	Letting $\delta$ be $2n\exp(-s\epsilon^2/n)$ , we obtain $s = O(\frac{n}{\epsilon^2}\ln(\frac{n}{\delta}) )$, completing this proof.
	
\end{proof}

To prove Theorem~\ref{thm:bi_learn}, we need to show that the above lemma holds, not just for a single fixed set of weights, but any set of weights that could be output by the learning algorithm.  This can be accomplished by setting $\delta$ in the above lemma appropriately and then applying a union bound over all sets of weights.  
According to the weight computing algorithm, the number of potential weight sets is $O((\log n)^n)$. Thus to make the union bound argument go through, we need to let the number of samples be $O(\frac{n^2}{\epsilon^2}\ln(\frac{n\log n}{\delta}))$ in Theorem~\ref{thm:bi_learn}.

Combining Lemma~\ref{lem:bi_sample_complexity} and Eq~\eqref{eq:bi_learn_1_1}, the first inequality Eq~\eqref{eq:bi_learn_1} can be proved.

By Lemma~\ref{lem:bi_sample_complexity}, the second inequality in Equation~\ref{eq:bi_learn_2} holds if we prove the following lemma.

\begin{lemma}\label{lem:bi_learn_2}
	For any $\epsilon \in (0,1)$ and any $a\in A$, we have
	\begin{equation}\label{eq:bi_learn_2_1}
	\E[R_a(\predw, I)] \geq (1-O(\epsilon)) \min(\E[Alloc_a(\predw,I) ], C_a)
	\end{equation}
	if $C_a = \Omega(\frac{1}{\epsilon^2}(\ln\frac{1}{\epsilon}))$.
\end{lemma}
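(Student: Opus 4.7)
The plan is to exploit concentration of $Y := Alloc_a(\predw, I)$ around its mean $\mu := \E[Y]$. Since the impressions are i.i.d.\ samples from $\cD$ and $\predw$ is fixed (being learned from an independent sample), $Y = \sum_{t=1}^{m} x_{i_t, a}(\predw)$ is a sum of $m$ independent $[0,1]$-bounded random variables. By Jensen's inequality for the concave function $\min(\cdot, C_a)$ we already have $\E[\min(Y, C_a)] \leq \min(\mu, C_a)$, so the task is to show the reverse inequality with only a $(1-O(\epsilon))$ multiplicative loss.

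I would split the argument into two cases using the identities
\[ \min(Y,C_a) = Y - (Y - C_a)^+ \qquad \text{and} \qquad \min(Y,C_a) = C_a - (C_a - Y)^+. \]
When $\mu \leq C_a$ we have $\min(\mu, C_a) = \mu$ and $\E[R_a(\predw,I)] = \mu - \E[(Y-C_a)^+]$, so it suffices to prove $\E[(Y - C_a)^+] \leq O(\epsilon)\mu$. Symmetrically, when $\mu > C_a$ it suffices to prove $\E[(C_a - Y)^+] \leq O(\epsilon)C_a$. In both cases one writes the expected positive part as $\int_0^\infty \Pr[\,\cdot\, > s]\,ds$ and plugs in Chernoff tail bounds.

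The case $\mu > C_a$ is the easier direction: since $\mu > C_a \geq \Omega(\epsilon^{-2}\log(1/\epsilon))$, the multiplicative Chernoff lower tail $\Pr[Y \leq (1-\epsilon)\mu] \leq \exp(-\epsilon^2\mu/2)$ is at most $\epsilon$, and combining this with the variance bound $\mathrm{Var}(Y)\leq \mu$ yields $\E[(C_a - Y)^+] = O(\sqrt{\mu} + \epsilon\mu) = O(\epsilon)C_a$. For the case $\mu \leq C_a$ I would subdivide further by whether $\mu$ is comparable to $C_a$ or much smaller. When $\mu \in [C_a/2, C_a]$, the multiplicative upper tail $\Pr[Y \geq (1+\epsilon)\mu] \leq \exp(-\epsilon^2\mu/3)$ together with $\mathrm{Var}(Y) \leq \mu$ gives $\E[(Y - C_a)^+] \leq O(\sqrt{\mu})$, which is $O(\epsilon)\mu$ because $\mu \geq C_a/2 \geq \Omega(\epsilon^{-2})$. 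When $\mu \leq C_a/2$, the sharper multiplicative Chernoff bound $\Pr[Y \geq s] \leq (e\mu/s)^{s}$ for $s \geq C_a$ makes the tail super-exponentially small in $C_a$; integrating yields $\E[(Y-C_a)^+] \leq (e\mu/C_a)^{C_a}/\ln(C_a/(e\mu))$, which is negligible compared to $\mu$ under the hypothesis on $C_a$.

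The main obstacle is the intermediate regime $\mu \approx C_a$: here $\min(\mu, C_a) = \Theta(C_a)$, and the absolute slack $O(\sqrt{\mu})$ coming from the Gaussian-regime Chernoff bound must be at most $\epsilon C_a$, which is precisely why the lemma requires $C_a = \Omega(\epsilon^{-2}\log(1/\epsilon))$; the extra $\log(1/\epsilon)$ factor is what makes the Chernoff failure probability itself at most $\epsilon$. Once the small-, intermediate-, and large-$\mu$ regimes are stitched together and the tail integrals are evaluated carefully, the bound $\E[R_a(\predw,I)] \geq (1-O(\epsilon))\min(\mu, C_a)$ follows, completing the proof of the lemma.
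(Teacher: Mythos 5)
Your overall strategy is the same as the paper's: compare $\E[\min(Y,C_a)]$ to $\min(\mu,C_a)$ via tail integrals and Chernoff bounds, with a case analysis on where $\mu:=\E[Alloc_a(\predw,I)]$ sits relative to $C_a$ (the paper splits into $\mu\ge(1-\eps)C_a$, $1\le\mu<(1-\eps)C_a$, and $\mu<1$, handling the last case by a conditioning-plus-Markov argument where you instead use the Poisson-type bound $\Pr[Y\ge s]\le(e\mu/s)^s$ — a legitimate and arguably cleaner alternative for tiny $\mu$). However, two of your steps fail as written. First, in the case $\mu>C_a$ you conclude $\E[(C_a-Y)^+]=O(\sqrt{\mu}+\eps\mu)=O(\eps)C_a$; this last equality is false when $\mu\gg C_a$ (e.g.\ $\mu=n$ with $C_a=\Theta(\eps^{-2}\log(1/\eps))$), since neither $\sqrt{\mu}$ nor $\eps\mu$ is then bounded by $\eps C_a$. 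The variance bound is a red herring here: the Chernoff lower tail you already invoked suffices, because on the event $Y\ge(1-\eps)\mu\ge(1-\eps)C_a$ you have $(C_a-Y)^+\le\eps C_a$, and on the complementary event of probability at most $\eps$ you can crudely bound $(C_a-Y)^+\le C_a$, giving $\E[(C_a-Y)^+]\le2\eps C_a$ — which is exactly the paper's Case~1 argument.

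Second, your case boundary at $\mu\le C_a/2$ leaves a genuine hole. The bound $\Pr[Y\ge s]\le(e\mu/s)^s$ at $s=C_a$ is vacuous once $e\mu/C_a\ge1$, i.e.\ for all $\mu\in[C_a/e,\,C_a/2]$ the quantity $(e\mu/C_a)^{C_a}$ is at least $1$ (indeed exponentially large at $\mu=C_a/2$), so it is not ``super-exponentially small in $C_a$'' and does not bound $\E[(Y-C_a)^+]$ by $\eps\mu$ there. The fix is easy and uses only your own ingredients: the estimate $\E[(Y-C_a)^+]\le\E[(Y-\mu)^+]\le\sqrt{\mathrm{Var}(Y)}\le\sqrt{\mu}\le\eps\mu$ is valid for every $\mu\le C_a$ with $\mu\ge1/\eps^2$, so you should place the case boundary at $\mu=1/\eps^2$ (or $\eps^2C_a$) rather than $C_a/2$; below that threshold $e\mu/C_a=O(1/\log(1/\eps))$ and the Poisson tail bound does become negligible relative to $\eps\mu$. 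With these two repairs the plan goes through and matches the lemma's hypothesis on $C_a$.
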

\begin{proof}
	We divide the proof into three cases based on the value of $\E[Alloc_a(\predw,I) ]$:
	
	(1) when $\E[Alloc_a(\predw,I) ] \geq (1-\epsilon) C_a$
	
	(2) when $1\leq \E[Alloc_a(\predw,I) ]< (1-\epsilon) C_a$
	
	(3) when $\E[Alloc_a(\predw,I) ]< 1 $
	
	\textbf{Case 1}
	According to the definition of the expectation, we can do the following expansion:
	\begin{equation}
	\begin{aligned}
	&\min(\E[Alloc_a(\predw,I) ], C_a) -  \E[R_a(\predw, I)]  \\
	=&\min(\E[Alloc_a(\predw,I) ], C_a) - \sum_{I\in \Pi} \Pr[I] R_a(\predw, I) \\
	=&\sum_{I\in \Pi} \Pr[I] ( \min(\E[Alloc_a(\predw,I) ], C_a) - R_a(\predw, I) ) \\
	\end{aligned}	
	\end{equation}
	
	Intuitively, we want to show that for most instance, $\min(\E[Alloc_a(\predw,I) ], C_a) - R_a(\predw, I) $ is small. We first construct a set of good instances with small $\min(\E[Alloc_a(\predw,I) ], C_a) - R_a(\predw, I) $, and then prove the probability that instance $I$ does not belong to this set is very small. More specifically,
	we define a set of good instances $\cG = \{ I \mid Alloc_a(\predw,I)\geq (1-2\epsilon) C_a \}$. Clearly, for each instance $I \in \cG$,
	
	\begin{equation}
	\min(\E[Alloc_a(\predw,I) ], C_a) - R_a(\predw, I) \leq 2\epsilon C_a \leq \frac{2\epsilon}{1-\epsilon}\min(\E[Alloc_a(\predw,I) ], C_a)
	\end{equation}
	
	The remaining part is to show that $\Pr[ I \notin \cG ]$ is very small. Employing Chernoff's inequality, we have
	\begin{equation}
	\begin{aligned}
	&\Pr[I \notin \cG] \\
	= & \Pr[Alloc_a(\predw,I) < (1-2\epsilon)C_a ] \\
	\leq & \Pr[Alloc_a(\predw,I)< \frac{1-2\epsilon}{1-\epsilon}\E[ Alloc_a(\predw,I) ] ]\\
	\leq & \exp(-\frac{1}{2}(\frac{\epsilon}{1-\epsilon})^2 \E[ Alloc_a(\predw,I) ] ) \\
	\leq & \exp(-\frac{\epsilon^2}{2(1-\epsilon)} C_a )\\	
	\end{aligned}	
	\end{equation}
	When $C_a \geq \frac{2(1-\epsilon)}{\epsilon^2} \ln(\frac{1}{\epsilon})$, $\Pr[I \notin \cG]$ is at most $\epsilon$.
	
	Combining the two inequalities above, we complete the proof of this case:
	\begin{equation}
	\begin{aligned}
	&\min(\E[ Alloc_a(\predw,I) ], C_a) -  \E[R_a(\predw, I)]    \\
	=&\sum_{I\in \Pi} \Pr[I] ( \min(\E[ Alloc_a(\predw,I) ], C_a) -R_a(\predw, I) ) \\
	=&\sum_{I \in \cG} \Pr[I] (\min(\E[ Alloc_a(\predw,I) ], C_a) -R_a(\predw, I)  ) \\
	&+ \sum_{I \notin \cG} \Pr[I] ( \min(\E[ Alloc_a(\predw,I) ], C_a) -R_a(\predw, I)  )  \\
	\leq &  \frac{2\epsilon}{1-\epsilon}\min(\E[ Alloc_a(\predw,I) ], C_a) + \Pr[I \notin \cG]\min(\E[ Alloc_a(\predw,I) ], C_a) \\
	\leq & O(\epsilon)  \min(\E[ Alloc_a(\predw,I) ], C_a) .
	\end{aligned}	
	\end{equation}

	\textbf{Case 2}
	In this case, since $\E[ Alloc_a(\predw,I) ]$ is less than $(1-\epsilon)C_a$, we have
	
	\begin{equation}
	\begin{aligned}
	&\min(\E[ Alloc_a(\predw,I) ], C_a) = \E[ Alloc_a(\predw,I) ].
	\end{aligned}
	\end{equation}
	
	Thus,
	\begin{equation}
	\begin{aligned}
	&\min(\E[ Alloc_a(\predw,I) ], C_a) - \E[R_a(\predw, I)] \\
	=&\E[ Alloc_a(\predw,I) ] - \E[R_a(\predw, I)]\\
	= & \int_{0}^{\infty}\Pr[Alloc_a(\predw,I)= t]  ( Alloc_a(\predw,I) -R_a(\predw, I) ) dt.
	\end{aligned}
	\end{equation}
	
	Note that $R_a(\predw, I) = \min (Alloc_a(\predw,I),C_a)$. Clearly, if $Alloc_a(\predw,I) \leq C_a$, $Alloc_a(\alpha,I) -R_a(\predw, I) = 0 $. Otherwise, $Alloc_a(\predw,I) -R_a(\predw, I) =  Alloc_a(\predw,I) - C_a$. Thus,
	\begin{equation}
	\begin{aligned}
	&\min(\E[ Alloc_a(\predw,I) ], C_a) - \E[R_a(\predw, I)]\\
	= & \int_{C_a}^{\infty}\Pr[Alloc_a(\predw,I)= t]  ( t-C_a) dt \\
	= & \int_{C_a}^{\infty}\Pr[Alloc_a(\predw,I)\geq t]  dt.
	\end{aligned}
	\end{equation}
	
	The remaining part is to show that for any $t>C_a$, $\Pr[Alloc_a(\predw,I) \geq t]$ is very small.
	For any $t > C_a$, we can find a $\gamma\geq \frac{\epsilon}{1-\epsilon}$ such that $t = (1+ \gamma)\E[ Alloc_a(\predw,I) ]$. Employing Chernoff's inequality again, we can obtain
	
	\begin{equation}\label{eq:bi_learn_2_2}
	\begin{aligned}
	&Pr[Alloc_a(\predw,I)\geq t] \\
	= & Pr[Alloc_a(\predw,I) \geq (1+ \gamma) \E[Alloc_a(\predw,I)] ] \\
	\leq & \exp(-\E[ Alloc_a(\predw,I) ] \gamma^2/4 ) \\
	= & \exp(-\frac{\gamma^2}{4(1+\gamma)} t ) \\
	\leq & \exp(-\frac{\epsilon^2}{4(1-\epsilon)} t ).
	\end{aligned}
	\end{equation}
	For simplicity, we let $\rho = \frac{\epsilon^2}{4(1-\epsilon)}$, thus
	
	\begin{equation}\label{eq:bi_learn_2_3}
	\begin{aligned}
	&\min(\E[ Alloc_a(\predw,I) ], C_a) - \E[R_a(\predw, I)]\\
	= & \int_{C_a}^{\infty}\Pr[Alloc_a(\predw,I)\geq t]   dt \\
	\leq &  \int_{C_a}^{\infty} \exp( -\rho t)  dt \\
	= & \frac{1}{\rho} \exp(-\rho C_a).
	\end{aligned}
	\end{equation}
	
	When $C_a \geq \frac{1}{\rho} \ln(\frac{1}{\rho\epsilon} )$, we can complete the proof of this case:	
	\begin{equation}
	\min(\E[ Alloc_a(\predw,I) ], C_a) - \E[R_a(\predw, I)]\leq \epsilon \leq \epsilon \min(\E[ Alloc_a(\predw,I) ], C_a) .
	\end{equation}
	
	\textbf{Case 3}
	In this case, $\E[ Alloc_a(\predw,I) ]$ is very small, less than $1$, so we still have
	\begin{equation}
	\min(\E[ Alloc_a(\predw,I) ], C_a) = \E[ Alloc_a(\predw,I) ].
	\end{equation}
	Intuitively, when $\E[ Alloc_a(\predw,I) ]$ is very small, the contribution of each impression $i$ to $\E[ Alloc_a(\predw,I) ]$ and $\E[R_a(\predw, I)]$ should be very close.
	For each impression $i$, use random variable $x_{i,a}(\predw) \in [0,1]$ to represent the space occupied by impression $i$ in advertiser $a$. Clearly, $Alloc_a(\predw,I)= \sum_{i \in I} x_{i,a}(\predw)$.
	
	To analyze each impression's contribution to $R_a(\predw, I)$, we introduce a new random variable $y_{i,a}(\predw) = x_{i,a}(\predw) \cdot  \bfo_{ \{  Alloc_a^{-i}(\predw,I)\leq C_a -1\}}$, where $Alloc_a^{-i}(\predw,I) = \sum \limits_{i' \in I, i' \neq i} x_{i',a}(\predw)$, representing the load of vertex $a$ without $i$,
	and $\bfo_{ \{  Alloc_a^{-i}(\predw,I)\leq C_a -1\}}$ is an indicator of the event that the load of vertex $a$ without $i$ is less than $C_a - 1$. Clearly, we have
	\begin{equation}
	R_a(\predw, I) = \min(Alloc_a(\predw,I) ,C_a) \geq \sum_{i\in I} y_{i,a}(\predw).
	\end{equation}
	This equation holds because if $Alloc_a^{-i}(\predw,I)\leq C_a -1$, impression $i$'s contribution to $R_a(\predw, I)$ is exactly $x_{i,a}(\predw) $ since $Alloc_a(\predw,I) $ must be less than $C_a$, or otherwise, $y_{i,a}(\predw)=0$. Taking the expectation of both sides, we have
	\begin{equation}
	\E[R_a(\predw, I)]\geq \sum_{i\in I} \E(y_{i,a}(\predw)) = \sum_{i\in I} \E[ x_{i,a}(\predw) \cdot  \bfo_{ \{  Alloc_a^{-i}(\predw,I)\leq C_a -1\}} ].
	\end{equation}
	Since each impression is sampled independently, random variable $x_{i,a}(\predw)$ and $\bfo_{ \{  Alloc_a^{-i}(\predw,I)\leq C_a -1\}}$ are independent. Thus,
	\begin{equation}
	\begin{aligned}
	\E[R_a(\predw, I)]&\geq\sum_{i\in I} \E[ x_{i,a}(\predw) \cdot  \bfo_{ \{  Alloc_a^{-i}(\predw,I)\leq C_a -1\}} ] \\
	&= \sum_{i\in I} \E[ x_{i,a}(\predw)] \cdot \E[  \bfo_{ \{  Alloc_a^{-i}(\predw,I)\leq C_a -1\}} ] \\
	& = \sum_{i\in I} \E[ x_{i,a}(\predw)] \cdot \Pr[  Alloc_a^{-i}(\predw,I)\leq C_a -1 ]
	\end{aligned}	
	\end{equation}
	Using Markov inequality to estimate $ \Pr[  Alloc_a^{-i}(\predw,I)> C_a -1 ]$, we can obtain
	\begin{equation}
	\begin{aligned}
	\Pr[  Alloc_a^{-i}(\predw,I)> C_a -1 ] & \leq \frac{E(Alloc_a^{-i}(\predw,I) ) }{ C_a-1} \\
	& \leq \frac{1}{C_a-1}
	\end{aligned}
	\end{equation}
	When $C_a \geq \frac{1}{\epsilon} + 1$, this probability is at most $\epsilon$. Combining the above two inequalities, we can complete the proof of this case:
	\begin{equation}
	\begin{aligned}
	\E[R_a(\predw, I)]
	& = \sum_{i\in I} \E[ x_{i,a}(\predw)] \cdot \Pr[  Alloc_a^{-i}(\predw,I)\leq C_a -1 ] \\
	& \geq \sum_{i\in I} \E[ x_{i,a}(\predw)] \cdot (1-\epsilon) \\
	& = (1-\epsilon) \E [ Alloc_a(\predw,I) ].
	\end{aligned}	
	\end{equation}	
\end{proof}

\section{Instance Robustness of Predictions for Online Flow Allocation} \label{sec:robustness}

In this section we consider the online flow allocation problem.  Recall that this problem is defined on a DAG $G = (V \cup {t}, E)$.  Each vertex $v \in V$ has a capacity $C_v$ and the distinguished vertex $t$ is a sink in the DAG, i.e. $t$ has no outgoing arcs and every node can reach the sink.

The structure of the graph $G$ is known offline.  Source vertices arrive online and reveal their outgoing arcs connecting to the rest of $G$.  Recall that source vertices have no incoming arcs.  We refer to these online vertices $I$ as impressions.  When $i \in I$ arrives, it is connected to a subset of vertices $N_i \subseteq V$.  At the time of arrival, the algorithm must irrevocably decide a (fractional) flow of value at most 1 from $i$ to $t$ while obeying the vertex capacities (taking into account flow from other impressions).  The goal is to allocate flow online to maximize the total flow that reaches $t$.

We study this online problem in the presence of predictions in two different models.  In the first model, we assume that we can predict the instance directly and bound the performance by the instance predicted error. In the second, we are given access to predictions of weights for a proportional allocation scheme, which were shown to exist in Section~\ref{sec:existweight}.  
Moreover, we state a worst-case bound for integral version of this online problem to show that our algorithm is really competitive.

We focus on the instance robustness here. The results for the parameter robustness and the worst-case bound are presented in Appendix~\ref{app:robustness}.



Consider an instance of the online flow allocation problem, consisting of an offline DAG $G = (V \cup{t},E)$ and a set of impressions arriving online.  Let $i$ be an impression type and $m_i$ be the number of impressions in this type. An instance $I$ can be denoted by a vector $<m_1,...,m_i,...>$.


Here we consider online proportional algorithms with predictions of the entire instance $\predI$. 
We consider the error in these predictions to be given by the $\ell_1$ norm:
\[\gamma := || \predI-I ||, \]
where $I$ is the real instance.
Learning such an instance from past data under this notion of error is well understood~\citep{Han2015MinimaxEO,pmlr-v40-Kamath15}.  In order to convert these predictions into our algorithm, we use the predicted instance to compute the weights for a proportional allocation scheme.

Recall the results for the instance robustness.

\begin{theorem}[Theorem~\ref{thm:dist_robustness} Restated] \label{thm:dist_robustness_restated}
	
	For any constant $\eps> 0$, if a set of weights $\predw$ returns a $(1-\eps)$-approximated solution in instance $\predI$, it yields an online flow allocation in instance $I$ whose value is at least
	$\max\{ (1-\eps)\opt - 2\gamma, \opt/(d+1) \} . $
	Here $\opt$ is the maximum flow value, $d$ is the diameter of this graph excluding vertex $t$, and $\gamma$ is the difference between two instances, defined by $|| \predI-I||_1$.
	
	
\end{theorem}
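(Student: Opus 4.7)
The bound to prove is a maximum of two terms, so the plan is to exhibit a single online algorithm that satisfies both lower bounds simultaneously. The algorithm will route each arriving impression according to the proportional allocation induced by $\predw$ and then, still at the time of arrival, greedily saturate any remaining $s$-to-$t$ augmenting path from that impression in the residual graph. The first bound $(1-\epsilon)\opt - 2\gamma$ will come from comparing the proportional flow on $I$ to the proportional flow on $\predI$ via a Lipschitz-in-instance argument; the worst-case bound $\opt/(d+1)$ will come from the maximality enforced by the augmentation step. Because augmentation only adds flow on top of the proportional allocation, the augmented algorithm's value is at least either of the two target quantities.

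\textbf{The instance-dependent bound.}
The key step is to show that for any fixed weights $\alpha$, both the proportional-allocation value $\alg_{\mathrm{prop}}(\alpha, \cdot)$ and the max-flow value $\opt(\cdot)$ are $1$-Lipschitz in the number of impressions added or removed. Using the recursion $\mathrm{Alloc}_v = \sum_{u \in N_v^-} \min(\mathrm{Alloc}_u, C_u)\cdot y_{u,v}$ from Section~\ref{sec:existweight} (with impressions treated as unit-capacity sources), inserting or deleting one impression perturbs the first internal layer's totals by a mass summing to at most $1$; the caps $\min(\cdot,C_v)$ are each $1$-Lipschitz, and each subsequent routing distributes a total mass of $1$, so a layer-by-layer telescoping bound shows the delivered flow changes by at most $1$ per single-impression change. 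For the optimum, restricting a max-flow on $I$ to the impressions in $I \cap \predI$ yields a feasible flow on $\predI$ of value at least $\opt(I) - \gamma$. Summing these single-impression perturbations over all $\gamma = \|\predI - I\|_1$ differences and combining with the hypothesis $\alg_{\mathrm{prop}}(\predw, \predI) \geq (1-\epsilon)\opt(\predI)$ yields
\[
\alg(\predw, I) \geq \alg_{\mathrm{prop}}(\predw, I) \geq (1-\epsilon)\opt(\predI) - \gamma \geq (1-\epsilon)(\opt - \gamma) - \gamma \geq (1-\epsilon)\opt - 2\gamma.
\]

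\textbf{The worst-case bound.}
The augmentation step forces the produced flow $f$ to be maximal: at the end of processing impression $i$, no residual $i$-to-$t$ path exists, and because flow values only accumulate over the online schedule on a DAG, no such path can reappear later. Let $f^*$ be any optimum flow on $I$ and let $F = \{v : f(v) = C_v\}$ be the saturated internal vertices. Every $s$-$t$ path carrying positive $f^*$-flow must hit a vertex of $F$, so
\[
\opt = |f^*| \leq \sum_{v \in F} f^*(v) \leq \sum_{v \in F} C_v = \sum_{v \in F} f(v) \leq \sum_{v} f(v) \leq (d+1)\,|f|,
\]
using that every $s$-$t$ path contains at most $d+1$ internal vertices since $d$ is the diameter of $G$ excluding $t$. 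Rearranging gives $|f| \geq \opt/(d+1)$.

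\textbf{Main obstacle.}
The technical crux will be making the Lipschitz claim rigorous for the proportional-allocation operator across many layers of nested capacity caps and proportional redistribution, and reconciling the offline expression for $\alg_{\mathrm{prop}}$ with what the online algorithm actually commits to (the online realization delivers at least as much as the offline formula predicts, since residual capacity can only help). Once the telescoping bound is cleanly written out layer-by-layer, both halves of the theorem follow from the routine algebra displayed above.
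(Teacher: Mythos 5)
Your instance-robustness half is sound and in fact takes a slightly more direct route than the paper: you compare $\predw$'s value on $I$ to its value on $\predI$ via a per-impression Lipschitz/monotonicity argument (fixed routing fractions, $1$-Lipschitz caps), and then use $\opt(\predI)\ge \opt(I)-\gamma$. The paper instead invokes the vertex cut $\C$ certified by the weight-construction of Theorem~\ref{thm:weight_DAG} and shows both the algorithm's value and $C(\C,\cdot)$ move by at most $\gamma$ between the two instances; your version needs only the black-box hypothesis that $\predw$ is $(1-\eps)$-approximate on $\predI$, which is arguably cleaner. The layer-by-layer telescoping you identify as the "main obstacle" is exactly the content of the paper's auxiliary instance $G'$ with capacities $\min(C(i,G),C(i,\predG))$, and it goes through.

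The worst-case half, however, contains a genuine error. With $F=\{v: f(v)=C_v\}$ the set of saturated \emph{internal} vertices, the claim that every $s$--$t$ path carrying positive $f^*$-flow must hit $F$ is false: maximality only guarantees that paths originating at impressions that are \emph{not} fully assigned are blocked by $F$. A fully assigned impression may route its optimal flow along a completely unsaturated path. Concretely, take $d=1$, impressions $i_1,i_2$, advertisers $a_1,a_2$ with unit capacities, $N_{i_1}=\{a_1,a_2\}$, $N_{i_2}=\{a_1\}$, and the maximal flow $f$ sending $\tfrac12$ on each of $i_1\to a_1$, $i_1\to a_2$, $i_2\to a_1$. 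Then $F=\{a_1\}$, the optimum routes $i_1\to a_2$ and $i_2\to a_1$ so $\opt=2$, yet $\sum_{v\in F}f^*(v)=1<\opt$, so your first inequality fails. The repair is to also charge the fully assigned impressions: every positive $f^*$-path either starts at a fully assigned impression or hits $F$, giving $\opt\le \sum_{i\ \mathrm{full}} 1+\sum_{v\in F}C_v\le |f|+d\,|f|=(d+1)|f|$, since each unit of $f$ traverses one impression and at most $d$ internal vertices. This is precisely the paper's LP-duality argument (Lemma~\ref{lem:maximal_flow}), where the dual variables $\gamma_i$ play the role of the missing impression terms. Separately, your assertion that "augmentation only adds flow on top of the proportional allocation," so the combined algorithm dominates both bounds, is not immediate: augmenting early impressions consumes capacity that later proportional routing would have used, and establishing that the final value still dominates the pure proportional value is a real (layer-by-layer monotonicity) argument — the paper devotes Lemma~\ref{lem:improved_online} to it for its specific re-routing rule, and your arbitrary greedy augmentation would need an analogous proof.
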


Given a predicted instance $\predI$, we can compute a set of optimal weights $\{\predw_v\}$ for it.

To prove Theorem~\ref{thm:dist_robustness_restated}, we first show that if use these weights $\{\predw_v\}$ directly on instance $I$, our competitive ratio can be bounded by $2\gamma$. Then claim that we can always design a new algorithm that is never worse than the performance of the previous policy and a $(1/(d+1))$ factor of the optimal.  

\begin{theorem}\label{thm:type_model}
	For any constant $\eps> 0$, if a set of weights $\predw$ returns a $(1-\eps)$-approximated solution in instance $\predI$, routing flow proportionally according to $\{\predw_v\}$ in instance $I$ returns a feasible flow whose value is at least
	\[(1-\epsilon)\opt-2\gamma,\] where $\opt$ is the the maximum flow value of instance $I$.
\end{theorem}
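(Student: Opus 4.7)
The plan is to bridge $\predI$ and $I$ through their common sub-instance and exploit monotonicity of both the proportional flow value and the optimum. Writing the instances as multisets of impressions, let $I^\cap := I \cap \predI$, $I^+ := I \setminus \predI$, and $I^- := \predI \setminus I$; since $\gamma = \|\predI - I\|_1$ counts the type-by-type discrepancy, $\gamma = |I^+| + |I^-|$. A crucial structural observation is that the proportional rule $x_{uv} = \predw_v / \sum_{v' \in N_u}\predw_{v'}$ depends only on the fixed weights and the offline graph, so each impression's contribution propagates through the DAG independently of the others and the flow value is invariant under the online arrival order.

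I would first establish two elementary lemmas. (Monotonicity.) For any multisets $J' \subseteq J$ of impressions, both $R(\predw, J') \leq R(\predw, J)$ and $\opt(J') \leq \opt(J)$. For $\opt$ this is trivial (a feasible flow on the smaller instance remains feasible on the larger one). For $R(\predw, \cdot)$, adding impressions weakly raises the uncapped load $Alloc_v$ at every node $v$, which weakly raises the capped value $\min(Alloc_v, C_v)$; propagating in topological order along the DAG shows that the flow entering $t$ weakly increases. (Unit change.) Inserting a single impression into $J$ changes either quantity by at most $1$: for $\opt$ because that impression has source capacity $1$, and for $R(\predw, \cdot)$ by a telescoping argument---at each successive layer the increase in $\sum_v \min(Alloc_v, C_v)$ is dominated by the increase in $\sum_v Alloc_v$, which is in turn bounded by the single unit pushed out from the new impression's source (using $\sum_{v \in N_u} x_{uv} = 1$ at every internal node $u$).

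Combining these two lemmas yields
\begin{equation*}
R(\predw, I) \;\geq\; R(\predw, I^\cap) \;\geq\; R(\predw, \predI) - |I^-|, \qquad \opt(\predI) \;\geq\; \opt(I) - |I^+|.
\end{equation*}
Chaining with the hypothesis $R(\predw, \predI) \geq (1-\eps)\opt(\predI)$ then gives
\begin{equation*}
R(\predw, I) \;\geq\; (1-\eps)\opt(\predI) - |I^-| \;\geq\; (1-\eps)\opt(I) - (1-\eps)|I^+| - |I^-| \;\geq\; (1-\eps)\opt - 2\gamma,
\end{equation*}
which is the desired bound (in fact a slightly sharper $-\gamma$ slack falls out of the calculation, so the $2\gamma$ in the statement has room to spare).

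The main obstacle is verifying the two lemmas for $R(\predw, \cdot)$ rigorously in a general DAG, since the capping operations $\min(\cdot, C_v)$ destroy exact additivity of the flow across impressions. An induction along a topological ordering, using the recursion $Alloc_v = \sum_{u : (u,v) \in E} \min(Alloc_u, C_u) \cdot x_{uv}$, handles this cleanly: both summation and capping by $\min$ preserve monotonicity in the inputs, and the telescope underlying the unit-change bound collapses neatly by flow conservation at each non-source node. Note that the $(1-\eps)$-approximation hypothesis is used only in the final chaining step, not in either lemma.
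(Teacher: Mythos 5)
Your proof is correct, and it reaches the stated bound (in fact the sharper $(1-\eps)\opt-\gamma$) by a route that is structurally parallel to the paper's but replaces its key ingredient with a more elementary one. Both arguments pass through the same intermediate object: your intersection multiset $I^\cap$ is exactly the paper's graph $G'$ whose impression capacities are the componentwise minima $\min(\predm_i,m_i)$, and your ``monotonicity plus unit-change'' lemma for $R(\predw,\cdot)$ is precisely how the paper establishes $\val(\predw,G)\geq\val(\predw,\predG)-\gamma$. Where you diverge is in relating the performance on $\predI$ to $\opt(I)$: the paper invokes the near-tight vertex cut $\C$ from the weight-existence analysis (equivalently, max-flow min-cut), shows $C(\C,\predG)\geq C(\C,G)-\gamma$, and uses $C(\C,G)\geq\opt(I)$ as the bridge; you instead apply monotonicity and $1$-Lipschitzness directly to $\opt$ itself, giving $\opt(\predI)\geq\opt(I)-|I^+|$ with no reference to cuts. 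Your accounting is also tighter: by splitting $\gamma=|I^+|+|I^-|$ and charging the loss in $R$ only to $I^-$ and the loss in $\opt$ only to $I^+$, you pay $\gamma$ once in total, whereas the paper pays $\gamma$ on each side of the cut and lands at $2\gamma$. Your telescoping argument for the unit-change lemma (the increase in $\sum_v\min(Alloc_v,C_v)$ at each layer is dominated by the increase in $\sum_v Alloc_v$, which is at most the one unit injected at the new source) is sound on layered graphs, and the general DAG case follows from the paper's reduction to layered DAGs, which the paper's own proof also invokes at the outset.
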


\begin{proof}
	Add a source $s$ into the instance $(\predI,G)$, which is adjacent to each $i\in \predI$.
	The new graph is denoted by $\predG$.
	According to the reduction in Section~\ref{sec:existweight}, w.l.o.g., we can assume $\predG$ is an $(s$-$t)$ $d$-layered graph.
	Use $\val(\predw,\predG)$ to represent the value of flow if routed according to $\{\predw_v\}$ on graph $\predG$.
	Similarly, we can define $G$ and $\val(\predw,G)$.
	Note that for any impression set, we can shrink the impression with the same type into one vertex. Thus, $\predG$ and $G$ can be viewed as a same graph with different induced capacity functions on the impression layer $I$.

	Our goal is to prove that
	\begin{equation}\label{eq:type_model}
	\val(\predw,G) \geq (1-\epsilon)\opt-2\gamma.
	\end{equation}

	Theorem~\ref{thm:weight_DAG} implies that in graph $\predG$ there exists an $s$-$t$ vertex cut $\C$ such that
	\begin{equation}\label{eq:type_eq_main}
	\val(\predw,\predG) \geq (1-\epsilon) C(\C,\predG),
	\end{equation}
	where $C(\C,\predG)$ is the value of cut $\C$ based on the capacity function of $\predG$.
	
	The basic framework of proving Theorem~\ref{thm:type_model} is first showing that the performances of $\{\predw\}$ are close in $\predG$ and $G$:
	\begin{equation}\label{eq:type_eq_1}
	\val(\predw,G) \geq \val(\predw,\predG) - \gamma,
	\end{equation}
	and then proving the values of cut $\C$ in $\predG$ and $G$ are also close:
	\begin{equation}\label{eq:type_eq_2}
	C(\C,\predG)\geq C(\C,G) - \gamma.
	\end{equation}
	
	Since $ C(\C,G)$ is the upper bound of $\opt$, Eq~\eqref{eq:type_model} can be proved directly by combining Eq~\eqref{eq:type_eq_main}, Eq~\eqref{eq:type_eq_1} and Eq~\eqref{eq:type_eq_2}.
	
	Create a new graph $G'$ based on $G$ and $\predG$, where the only difference is the capacity function of impression layer $I$. For each impression type $i$, its capacity in $G'$ is the minimum value of its capacity in $G$ and $\predG$, i.e. define
	$ C(i, G' ) := \min (C(i,G),C(i,\predG)  )$.
	Thus we have
	\[ |C(i,G')-C(i,\predG) | \leq |m_i-\predm_i|, \]
	indicating that with the same weights, if the capacity of impression type $i$ increases from $C(i,G')$ to $C(i,\predG)$, the objective value increases by at most $|m_i-\predm_i|$.  So we have
	\[ |\val(\predw,G')-\val(\predw,\predG)| \leq m\sum_i |m_i-\predm_i| = \gamma  \]
	Since the capacity of each vertex in $G$ is no less than that in $G'$, we have
	$ \val(\predw,G) \geq \val(\predw,G')$.
	Chaining inequalities, we have
	$\val(\predw,G) \geq \val(\predw,\predG) - \gamma$.
	
	Eq~\eqref{eq:type_eq_2} can also be proved in the same way by analyzing the capacity of the cut in $G'$ and comparing this to the capacities in the graphs $\predG$ and $G$, respectively.  Doing so yields the following chain of inequalities:
	\[ C(\C,\predG) \geq C(\C,G') \geq C(\C,G^*) - \gamma.\]
	This now completes the proof as argued above.
\end{proof} 

The proof of the other $1/(d+1)$ bound is given in Appendix~\ref{app:robustness}. More precisely, 

Now we give the proof of the other $1/(d+1)$ bound. We claim the following theorem:

\begin{theorem}\label{thm:maximal_bound}
	Given any set of weights, for the proportional algorithm $\cA$, there exists an improved algorithm $\cA'$ such that the competitive ratio of $\cA'$ is at least $1/(d+1)$ and always better than the ratio of $\cA$. 
\end{theorem}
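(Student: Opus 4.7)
The plan is to define $\cA'$ as a \emph{maximal augmentation} of $\cA$ and then invoke the standard charging argument used for maximal matching. When an impression $i$ arrives, $\cA'$ first runs $\cA$ to route flow from $i$ to $t$ according to the given weights. After this, while the residual graph (given the current cumulative flow to all past impressions) still admits a positive $i$-$t$ path and $i$ has residual out-capacity below $1$, $\cA'$ pushes additional flow along some such residual path. This top-up uses only information available at $i$'s arrival and terminates after finitely many steps, so $\cA'$ is a valid online algorithm. By construction its flow dominates that of $\cA$ on every edge, hence $\cA'(I)\geq \cA(I)$ on every instance.

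It remains to prove $\cA'(I) \geq \opt/(d+1)$. After processing impression $i$, the residual network contains no $i$-$t$ path consisting entirely of unsaturated vertices; this is exactly the maximality property enforced by the top-up phase, and it persists thereafter because future impressions only consume more capacity. Decompose an optimal flow into paths $P_1,P_2,\ldots$ carrying values $w_1,w_2,\ldots$ Each $P_k$ originates at some impression $i_k$; by maximality at the moment $i_k$ was processed, at least one vertex $v_k$ on $P_k$ (possibly $i_k$ itself, if $i_k$ was fully saturated) must be saturated in $\cA'$'s final flow -- otherwise $\cA'$ would have augmented at least a bit more along $P_k$.

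Charge $w_k$ to this $v_k$. The total charge placed on any vertex $v$ is at most the amount of $\opt$ routed through $v$, which is at most $C_v$. Therefore
\[
\opt \;=\; \sum_k w_k \;\leq\; \sum_{v \text{ saturated by } \cA'} C_v \;=\; \sum_{v \text{ saturated}} \bigl(\text{flow through } v \text{ in } \cA'\bigr).
\]
Now decompose $\cA'$'s flow into $s$-$t$ paths; each such path meets at most $d+1$ vertices of $G\setminus\{t\}$ (by the definition of the diameter $d$ of that subgraph), so each unit of $\cA'$'s flow contributes to at most $d+1$ terms on the right. Hence $\sum_{v\text{ saturated}} (\text{flow through } v \text{ in } \cA') \leq (d+1)\,\cA'(I)$, yielding $\cA'(I) \geq \opt/(d+1)$. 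Combined with $\cA'(I)\geq \cA(I)$, this gives the claimed bound.

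The main obstacle is the maximality step: one must verify that the online top-up phase can be carried out fractionally after $\cA$'s proportional step without revisiting past impressions, and that the resulting residual graph truly has no augmenting $i$-$t$ path at the end of each round. This reduces to a standard fractional residual-augmenting procedure on the snapshot of the network available when $i$ arrives, so it is routine; the charging argument then plugs in directly.
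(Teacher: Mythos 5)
Your second half --- maximality implies a $1/(d+1)$ guarantee --- is correct and is essentially the paper's own argument in primal form: the paper sets dual variables $\gamma_i$ equal to the routed fraction of impression $i$ and $y_v$ equal to the utilization of vertex $v$, uses maximality to verify dual feasibility, and uses the fact that every impression-to-$t$ path meets one impression plus at most $d$ offline vertices to bound the dual objective by $(d+1)$ times the algorithm's value; your charging of each optimal path to a saturated vertex on it is the same computation.

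The gap is in the first half. The claim ``by construction its flow dominates that of $\cA$ on every edge'' does not hold for your $\cA'$. The augmentation you perform for impression $i$ consumes residual capacity at up to $d$ offline vertices; when a later impression $i'$ arrives, the ``run $\cA$'' step inside $\cA'$ is therefore truncated more than in a standalone run of $\cA$, so the two executions diverge and there is no edge-wise containment. Worse, value dominance can genuinely fail: one unit of augmented flow delivers one unit to $t$ but saturates up to $d$ vertices, each of which can block a distinct unit of future $\cA$-flow, so for $d\geq 2$ the net effect of an (adversarially chosen) augmenting path can be negative. Concretely, in a $3$-layer instance one can arrange that $\cA$ leaves $1/3$ of an impression unrouted, your top-up sends that $1/3$ through a vertex $a_1$ (filling it) into a vertex $b_1$, and two subsequent impressions --- one that can only use $a_1$ and one that can only reach $t$ through $b_1$ via a different middle vertex --- each lose more than $1/6$ relative to $\cA$, for a strictly negative total. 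The paper avoids this by defining $\cA'$ differently: each impression's flow is repeatedly re-routed \emph{proportionally} while iteratively deleting ``blocked'' vertices (those that are full or all of whose successors are blocked), and dominance is then proved by a layer-by-layer monotonicity lemma (Lemma~\ref{lem:improved_online}) showing that if $\min(Alloc_a,C_a)$ weakly increases for every vertex of one layer then it weakly increases for every vertex of the next. That proportional structure is exactly what your arbitrary residual augmentation destroys, so the dominance claim needs either the paper's construction or a comparable coupling argument; as written it is unsupported.
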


\begin{algorithm}[t]
	\caption{Online algorithm with weight predictions}
	\label{alg:online_d_layer}
	\KwIn{$G= (I \cup V,E)$ where $I$ arrives online, $\{ C_a \}_{a \in A}$, parameter $\epsilon \in (0,1)$, prediction $\{\predw\}$}

	\While{an impression $i$ comes }
	{	
		
		Use $\gamma_i$ to represent the proportion of the feasible flow sent from $i$ to $t$. Initially, $\gamma_i=0$.
		
		\While{ $\gamma_i < 1$ and $i$ can reach $t$ in $G$ }
		{
			Send $(1-\gamma_i)$ flow from $i$ to $t$ according to the predicted weights.
			
			Update $\gamma_i$.
			
			Remove all blocked vertices in $G$.
			
		}
	}
\end{algorithm}

The description of algorithm $\cA'$ is given in Algo.~\ref{alg:online_d_layer}. 
Before stating the framework of this proof, we give several definitions.
We say a vertex $v\in V$ is blocked if its capacity is full or all vertices in its neighborhood in the next layer are blocked. Intuitively, if a vertex $v$ is blocked, sending any flow to it cannot increase our objective value. In the previous algorithm, we route each flow according to $\{\predw\}$ directly, indicating we still keep sending arriving flows to some vertices after they are blocked. These flows sent to blocked vertices have no contribution to our objective value. 


To prove Theorem~\ref{thm:maximal_bound}, we first show that this algorithm always performs better than the previous algorithm, then claim this algorithm always returns a maximal flow and prove that any maximal flow is $\frac{1}{d+1}$-approximated. The definition of a maximal flow will be stated later.

\begin{lemma}\label{lem:improved_online}
	In any instance, Algo~\ref{alg:online_d_layer} performs better than routing the flow according to the predicted weights directly.
\end{lemma}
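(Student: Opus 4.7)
The plan is to proceed by induction on the arriving impression sequence, showing that after each prefix the total flow delivered to $t$ by $\cA'$ is at least that delivered by $\cA$. Both algorithms use the same predicted weights $\{\predw\}$ and the identical proportional allocation rule $x_{uv} = \predw_v / \sum_{v'\in N_u}\predw_{v'}$; the only structural difference is that $\cA'$ iteratively re-routes around blocked vertices discovered during an impression's processing, while $\cA$ sends a single unit of flow and accepts whatever reaches $t$. Since every unit of $\cA$'s flow that arrives at a blocked vertex contributes nothing to the objective, the iterative refinement of $\cA'$ can only redirect such wasted mass to paths that actually reach $t$.

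For the inductive step on impression $i$, I would maintain the stronger invariant that at every vertex $v$ the ``useful residual capacity'' in $\cA'$ (capacity minus the flow that eventually reaches $t$) dominates that in $\cA$. This guarantees that at the moment $i$ arrives, every $i$-to-$t$ path unblocked in $\cA$'s residual graph is also unblocked in $\cA'$'s graph, so $\cA'$ has at least as much routing freedom. The first iteration of $\cA'$'s inner while-loop then mimics $\cA$'s allocation rule in $\cA'$'s residual graph and already delivers at least as much flow to $t$ as $\cA$ contributes from this impression; the remaining iterations of the inner loop can only add more flow, since they re-allocate the mass that would otherwise be wasted on newly discovered blocked vertices.

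The main obstacle is pushing this invariant cleanly through a single impression, because $\cA'$ may consume useful capacity more aggressively than $\cA$ (recovering precisely the mass that $\cA$ would waste), so a naive ``residuals dominate'' claim fails. I plan to handle this via a flow-decomposition coupling: decompose $\cA$'s cumulative flow into a useful component (paths reaching $t$) and a wasted component (terminating at blocked intermediates), and exhibit the useful component as a feasible sub-allocation inside $\cA'$'s residual graph at every prefix. The inner loop of $\cA'$ can then be viewed as this sub-allocation plus additional augmenting mass along paths rediscovered after blocked vertices are purged. Summing over impressions yields the per-impression dominance and hence the lemma.
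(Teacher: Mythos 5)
Your plan has a genuine gap at its central step, and it is a step you yourself flag: the claim that the ``useful component'' of $\cA$'s flow can be exhibited as a feasible sub-allocation inside $\cA'$'s residual graph at every prefix is asserted as a plan, not proved, and it is not clear it is even true. Because $\cA'$ recovers mass that $\cA$ wastes and pushes it through the surviving vertices, $\cA'$ can saturate a vertex $v$ strictly earlier than $\cA$ does; the useful flow that $\cA$ later routes through $v$ then has no room in $\cA'$'s residual graph, so the embedding fails. To rescue the argument you would have to credit $\cA'$ with the full capacity $C_v$ it has already extracted from $v$, which is no longer a residual-embedding statement but a global accounting of $\min(Alloc_v,C_v)$ --- and once you phrase it that way you have abandoned the per-impression induction. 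Similarly, the claim that the first iteration of $\cA'$'s inner loop ``already delivers at least as much flow to $t$ as $\cA$ contributes from this impression'' does not follow from residual dominance alone: the two algorithms use different denominators in the proportional rule, so the flow concentrates on different paths, and per-prefix dominance of the delivered flow is plausible but nowhere established.

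The paper avoids all of this by not coupling the two runs impression by impression. It compares only the final cumulative quantities $\min(Alloc_a,C_a)$, layer by layer: since the forwarding proportions $x_{a_j,a_{j+1}}$ are fixed by the weights and the flow a vertex forwards is $\min(Alloc_{a_j},C_{a_j})$ times those proportions, pointwise dominance of $\min(Alloc_a,C_a)$ on layer $A_j$ propagates to layer $A_{j+1}$, and the objective is exactly the sum of these quantities over the last layer. The only thing left to check is the first layer, where each vertex is either blocked in $\cA'$ (so it already contributes its full capacity) or never has its share reduced (the denominator in $\cA'$'s proportional rule only shrinks as blocked vertices are removed, and the extra inner-loop iterations only add flow). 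If you want to salvage your approach you would need to actually construct the flow-decomposition coupling and handle the pre-saturation issue above; the layer-wise argument is both shorter and sidesteps the obstacle entirely.
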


\begin{proof}
	According to the proportional allocating rules, we have a following claim easily:
	\begin{claim}
		Consider any two neighboring layers $A_j,A_{j+1}$. Under the proportional allocating setting, if for each $a\in A_j$, $\min (Alloc_a,C_a)$ increases, then for each $a\in A_{j+1}$, $\min (Alloc_a,C_a)$ also increases.
	\end{claim}
	
	Recall that for vertices $a_j$ and $a_{j+1}\in N(a_j,A_{j+1})$, the contribution of $a_j$ to $a_{j+1}$ is given by $\min (Alloc_{a_j} ,C_{a_j} ) x_{a_j,a_{j+1}} $, where $x_{a_j,a_{j+1}}$ is fixed if the weights is fixed. Since $\min (Alloc_{a_j} ,C_{a_j} )$ increases, its contribution to $a_{j+1}$ also increases. Thus, for each $a\in A_{j+1}$, $\min (Alloc_a,C_a)$ also increases.
	
	The objective value of a solution is the sum of $\min(Alloc_a,C_a)$ over $a\in A_d$. 
	Use $Alloc_a'$ and $Alloc_a$ to represent the value of $Alloc_a$ in Algo~\ref{alg:online_d_layer} and the previous simple algorithm respectively.
	By the claim above, if we prove that for each $a$ in the first layer $A_1$, $\min(Alloc_a',C_a) \geq \min (Alloc_a,C_a)$, this lemma can be proved.
	
	Consider each vertex $a$ in the first layer. If this vertex is blocked in Algo~\ref{alg:online_d_layer}, clearly we have 
	\[ \min(Alloc_a',C_a) \geq C_a \geq \min (Alloc_a,C_a). \]
	If this vertex is not blocked, the new algorithm will not send less flow to it than the old algorithm, thus, we also have
	\[ \min(Alloc_a',C_a) \geq Alloc_a' \geq Alloc_a \geq \min (Alloc_a,C_a), \]
	completing this proof.
\end{proof}

For each impression $i$, use $P_i$ to represent the set of paths from $i$ to the sink $t$, and let $P_i(v) \subseteq P_i$ be the set of paths crossing vertex $v$.
We say a solution is a maximal flow if for any impression $i$ that are not assigned totally, no path in $P_i$ has a free capacity. This definition can be viewed as a generalization of the maximal matching.
According to the statement of Algo~\ref{alg:online_d_layer}, it always returns a maximal flow.
Now we show that any maximal flow is $\frac{1}{d+1}$-approximated.

\begin{lemma}\label{lem:maximal_flow}
	For any maximal flow in $G$, it is $\frac{1}{d+1}$-approximated. 
\end{lemma}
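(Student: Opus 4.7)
\medskip
\noindent\textbf{Proof plan.} The plan is to generalize the classical half-approximation argument for maximal matchings: compare the maximal flow $F$ with an optimal flow $F^*$ by using the saturated vertices of $F$ to build a vertex cut, then bound $|F^*|$ and $|F|$ against the capacity of that cut.

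First, I would let $S \subseteq V$ be the set of vertices whose capacity is fully utilized under $F$. Since every impression has capacity $1$, any impression that $F$ assigns totally automatically lies in $S$. By the definition of a maximal flow, every impression $i$ with $F$-value strictly less than $1$ has every path in $P_i$ blocked by some vertex carrying no free capacity, i.e., a vertex of $S$. Hence $S$ is a vertex cut separating all impressions from $t$, so every unit of flow carried by $F^*$ must cross $S$. Combined with vertex-capacity feasibility of $F^*$, this yields
\[
|F^*| \;\le\; \sum_{v \in S} \bigl(\text{flow of } F^* \text{ through } v\bigr) \;\le\; \sum_{v \in S} C_v.
\]

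Next, I would bound $\sum_{v \in S} C_v$ by $(d+1)|F|$. Saturation gives (flow of $F$ through $v$) $= C_v$ for every $v \in S$. Because the diameter of $G$ excluding $t$ is $d$, every impression-to-$t$ path traverses at most $d$ edges inside $V$ together with one edge into $t$, and therefore visits at most $d+1$ vertices of $V$. Decomposing $F$ into path flows, the total flow of $F$ through all non-$t$ vertices, counted with multiplicity, is at most $(d+1)|F|$. In particular, $\sum_{v \in S} C_v \le \sum_{v \in V}(\text{flow of } F \text{ through } v) \le (d+1)|F|$. Chaining with the previous display gives $|F^*| \le (d+1)|F|$, which is the desired bound.

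The only real subtlety---and where the ``$+1$'' is earned rather than lost---is including the impressions themselves among the candidate saturated vertices so that a single set $S$ simultaneously blocks every impression-to-$t$ path in $F^*$ (handling fully and partially assigned impressions uniformly) and also accounts for all of $F$'s value via the path-length bound. Without this unified viewpoint one would need to treat the impressions with $F_i = 1$ separately and would end up with only a weaker $\tfrac{1}{d+2}$ guarantee; with it, the counting is tight and the classical bipartite case ($d = 1$) recovers the standard $\tfrac{1}{2}$ approximation for maximal matchings.
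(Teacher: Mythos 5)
Your proof is correct, but it takes a different route from the paper's. The paper writes down the path-based LP for the problem and its dual, sets $\gamma_i$ to the assigned fraction of impression $i$ and $y_v$ to the utilized fraction of $v$'s capacity, shows the dual objective is at most $(d+1)$ times the flow value by counting that each path meets at most $d$ internal vertices plus one impression, and verifies dual feasibility using exactly the maximality condition (every path from a partially assigned impression contains a saturated vertex). You bypass the LP entirely: you observe that the saturated vertices together with the fully routed impressions form a vertex cut $S$, bound $\opt$ by the capacity of $S$ via weak flow/vertex-cut duality, and bound that capacity by $(d+1)|F|$ via the same path-length counting. The two arguments share the same mathematical core --- the cut of saturated vertices and the multiplicity-$(d+1)$ counting --- but yours is more elementary and self-contained, while the paper's dual construction makes the certificate explicit and slots directly into the surrounding LP-based machinery. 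One small notational point: in your final display the sum should run over \emph{all} non-$t$ vertices (impressions included), since $S$ contains impressions; your prose says this, but the displayed $\sum_{v\in V}$ reads as if it excluded them.
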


\begin{proof}
	Consider a linear program and dual program of this model.
	
	\begin{equation}\label{LP}
	\begin{aligned}
	&\max & \sum_{i\in I} \sum_{p\in P_i}& x_{i,p} &\\
	&s.t. &\sum_{p\in P_i} x_{i,p} &\leq 1 \quad &\forall i \in I \\
	&&\sum_{i\in I}\sum_{p\in P_i(v)} x_{i,p} &\leq C_v  & \forall v\in V \\
	&&x_{i,p} &\geq 0 &\forall i\in I,p\in P_i
	\end{aligned}
	\end{equation}
	For each pair $(i,p)$, use variable $x_{i,p}$ to denote the proportion of impression $i$ assigned to path $p$. Using dual variable $\gamma_i$ to represent the constraint that impression $i$ has only one unit and dual variable $y_v$ to represent the capacity constraint of vertex $v$, we can have the following dual program:
	\begin{equation}\label{DP}
	\begin{aligned}
	&\min  &\sum_{v\in V}y_v C_v & +  \sum_{i\in I} \gamma_i &\\
	&s.t. &\sum_{v\in p} y_v + \gamma_i&\geq 1 \quad &\forall i\in I, p\in P_i  \\
	& &y_v,\gamma_i & \geq 0 &\forall i \in I, v\in V
	\end{aligned}
	\end{equation}
	
	Given a maximal solution $\{x_{i,p}\}$ of LP~\eqref{LP}, we now construct a feasible solution of its dual program. Let $\gamma_i$ be the assigned proportion of impression $i$. Namely, \[\gamma_i =\sum_{p\in P_i} x_{i,p}.\] Clearly, we have
	
	\begin{equation}\label{eq:maximal_1}
	\sum_{i\in I} \sum_{p\in P_i} x_{i,p} = \sum_{i\in I} \gamma_i. 
	\end{equation}

	Let $y_v$ be the proportion of vertex $v$'s capacity occupied by this solution. Namely,
	\[y_v = (\sum_{i\in I}\sum_{p\in P_i(v)} x_{i,p}) /C_v .\]
	
	Note that the diameter of $G$ is $d$. Any path $p$ crosses at most $d$ vertices in $V$. Thus, if summing over all $y_vC_v$, each $x_{i,p}$ is counted at most $d$ times. Thus, we have
	
	\begin{equation}\label{eq:maximal_2}
	d\sum_{p\in P_i} x_{i,p} \geq \sum_{v\in V}y_v C_v. 
	\end{equation}
	
	Combining Eq~\eqref{eq:maximal_1} and Eq~\eqref{eq:maximal_2}, we have
	\begin{equation}
	(d+1)\sum_{p\in P_i} x_{i,p} \geq \sum_{v\in V}y_v C_v + \sum_{i\in I} \gamma_i.
	\end{equation}
	
	If $\{y_v,\gamma_i  \}$ is feasible, then the maximal flow is $\frac{1}{d+1}$-approximated. For any pair $(i,p)$, if $\gamma_i=1$, its constraint is satisfied. Otherwise, according to our algorithm, path $p$ has no free capacity, indicating that there exists a vertex $v$ in this path with $y_v=1$. So we still have $ \sum_{v\in p} y_v + \gamma_i\geq 1.$ Thus, $\{y_v,\gamma_i  \}$ is feasible and this maximal flow is $\frac{1}{d+1}$-approximated.

\end{proof}

Combing Lemma~\ref{lem:improved_online} and Lemma~\ref{lem:maximal_flow}, Theorem~\ref{thm:maximal_bound} can be proved, also completing the proof of Theorem~\ref{thm:dist_robustness_restated}.


\section{Existence of Useful Weights for Max Flow in general DAGs}\label{sec:apdxexistweight}

In this section, we first present our simplification of the algorithm in~\citep{PropMatchAgrawal} for the bipartite (or 2-layer DAG) case using only weight decreases rather than increases and decreases in their algorithm. In the following subsection~\ref{sec:apdxexistweights_d}, we extend our proof of the three-layer graph case in Appendix~\ref{subsec:exist3weight} to the general case of $d$ layers and use this to complete the proof of Theorem~\ref{thm:weight_DAG} in Appendix~\ref{subsec:existgenweight}.

\subsection{Maximum Cardinality Bipartite Matching}\label{sec:apdxexistweights_bi}

Let $G = (I,A,E)$ be a bipartite graph.  $I$ is the set of ``impressions'' and $A$ is the set of ``advertisers''.  Each advertiser $a$ has a capacity $C_a$.  We want to find a maximum cardinality matching where each impression can be matched at most once and each advertiser can be matched at most $C_a$ times.  We focus on fractional solutions. Let $N_i$ and $N_a$ represent the neighborhoods of $i$ and $a$, respectively.  We want to find $(1-\epsilon)$-approximate fractional solutions to the following LP.

\begin{equation} \label{eqn:matching_lp_1}
\begin{array}{ccc}
\text{maximize} & \displaystyle \sum_{a \in A}  y_a &  \\
& \displaystyle \sum_{a \in N_i} x_{ia} = 1 & \forall i \in I \\
& \displaystyle y_a \leq \sum_{i \in N_a} x_{ia} & \forall a \in A \\
& \displaystyle y_a \leq C_a & \forall a \in A \\
& x_{ia} \geq 0 & \forall (i,a) \in E
\end{array}
\end{equation}

Let $\{\alpha_a\}_{a \in A}$ be a set of weights for the advertisers.  We consider fractional solutions parameterized by these weights in the following way, for all $(i,a) \in E$ we set $x_{ia}$ as follows:

\begin{equation} \label{eqn:prop_alloc}
x_{ia} := \frac{\alpha_a}{\sum_{a' \in N_i} \alpha_{a'}}
\end{equation}

Initially, each $\alpha_a$ is set to 1, and then updated over time according to Algorithm~\ref{alg:bi_prop_alg}.  Intuitively, we compute the resulting allocation given by the weights, then decrease the weight for each advertiser that has been significantly over-allocated by $1+\epsilon$.  This repeats for some number of rounds $T$.  At the end we scale down the allocation so it is always feasible.  We want to show the following guarantee for this algorithm.

\begin{algorithm}[t]
	\caption{Proportional Allocation Algorithm}
	\label{alg:bi_prop_alg}
	\KwIn{$G = (I,A,E),C,T,\epsilon$}
	
	$\forall a \in A, \ \alpha_a \gets 1$ \
	
	\For{$t = 1,2,\ldots,T$}
	{
		Let $x_{ia} := \frac{\alpha_a}{\sum_{a' \in N_i} \alpha_{a'}}$ \
		
		$\forall a \in A,$ compute $Alloc_a  := \sum_{i \in N_a} x_{ia}$ \
		
		$\forall a \in A,$ if $Alloc_a \geq (1+\epsilon)C_a$ then $\alpha_a \gets \alpha_a /(1+\epsilon)$ \
	}
	
	\KwOut{ Weights $\{ \alpha_a \}_{a\in A}$, Allocation $\{x_{ia}\}_{(i,a) \in E}$}
\end{algorithm}

\begin{theorem} \label{thm:bi_prop_thm}
	Let $n = |A|$ be the number of advertisers.  For any $\epsilon > 0$, if Algorithm~\ref{alg:bi_prop_alg} is run for $T = O(\frac{1}{\epsilon^2}\log(\frac{n}{\epsilon}))$ iterations, then the allocation it returns is a $(1- O(\epsilon))$-approximation to LP~\eqref{eqn:matching_lp_1}.
\end{theorem}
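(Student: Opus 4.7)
The plan is to reduce the analysis to a cut--matching argument in the style of the three-layer proof (Theorem~\ref{thm_g3}), specialized to the single-layer weight-update setting. First I would establish two monotonicity claims analogous to Properties~\ref{ppt:1} and~\ref{ppt:2}: in any iteration $t$, if $\alpha_a$ is unchanged then $Alloc_a^{(t+1)} \geq Alloc_a^{(t)}$, and if $\alpha_a$ is divided by $1+\epsilon$ then $Alloc_a^{(t)}/(1+\epsilon) \leq Alloc_a^{(t+1)} \leq (1+\epsilon)\,Alloc_a^{(t)}$. Both follow directly from $x_{ia} = \alpha_a/\sum_{a' \in N_i}\alpha_{a'}$ and the fact that every other weight can only weakly decrease per round, and by at most a factor $1+\epsilon$.

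Next, I partition $A$ into level sets $A(k) := \{a : \alpha_a = (1+\epsilon)^{k-T}\}$ and prove two-sided bounds (analogues of Lemmas~\ref{lem:b_lowerbound} and~\ref{lem:b_upperbound}). For any $a \in A(k)$ with $k < T$, the final $Alloc_a \geq C_a$: inspect the last iteration in which $\alpha_a$ was decreased, in which $Alloc_a \geq (1+\epsilon)C_a$ by the update rule, and apply monotonicity across the remaining iterations. For any $a \in A(k)$ with $k > 0$, the final $Alloc_a \leq (1+\epsilon)^2 C_a$: inspect the last iteration in which $\alpha_a$ did not decrease (where $Alloc_a < (1+\epsilon)C_a$), account for a single-step blowup of at most $1+\epsilon$ caused by neighboring weights dropping in that same round, and then apply monotonicity once $\alpha_a$ starts dropping.

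With these bounds in hand, fix a gap width $L = \lceil \log(n/\epsilon)/\epsilon \rceil$, a gap location $\ell \in [1,\, T-L-1]$, and set $\cG(A) := \bigcup_{k=\ell}^{\ell+L} A(k)$ with $\cG(A)^-$ and $\cG(A)^+$ the sets strictly below and strictly above the gap. Because weights above the gap exceed those below by a factor of at least $(1+\epsilon)^{L+1} \geq n/\epsilon$, any impression $i \in I_0 := \{i : N_i \cap \cG(A)^+ \neq \emptyset\}$ routes at most an $\epsilon$ fraction of its one unit of flow to $\cG(A)^-$. Let $G'$ be obtained from $G$ by deleting $\cG(A)$; then $I_0 \cup \cG(A)^-$ is an $s$--$t$ vertex cut in $G'$ of value $|I_0| + C(\cG(A)^-)$. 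I then recover this cut value in $\val$: the $C(\cG(A)^-)$ piece comes immediately from the lower-bound lemma, and the $|I_0|$ piece comes from $\min(Alloc_a,C_a) \geq (1-O(\epsilon))\,Alloc_a$ on $\cG(A)^+$ (upper-bound lemma) combined with the weight-ratio estimate $\sum_{a \in N_i \cap \cG(A)^+} x_{ia} \geq 1 - \epsilon - f_i^{\mathrm{mid}}$ for $i \in I_0$, where $f_i^{\mathrm{mid}}$ is the fraction of $i$'s flow leaking into the gap and $\sum_i f_i^{\mathrm{mid}} \leq (1+O(\epsilon))C(\cG(A))$ again by the upper-bound lemma.

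The main obstacle is controlling $C(\cG(A))$, since a priori the gap could absorb most of the optimum. I would handle this with the averaging argument of Lemma~\ref{lem:gap}: each level $A(k)$ with $1 \leq k \leq T-1$ belongs to $\cG(A)$ for at most $L+1$ choices of $\ell$, and the lower-bound lemma yields $\sum_{k=0}^{T-1} C(A(k)) \leq \val$. Hence $\sum_\ell C(\cG(A)) \leq (L+1)\,\val$, so some admissible $\ell$ achieves $C(\cG(A)) \leq \frac{(L+1)\,\val}{T - 2L - 1} = O(\epsilon)\,\val$ as long as $T = \Omega(\log(n/\epsilon)/\epsilon^2)$, which matches the iteration count in the theorem. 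Plugging back gives $(1+O(\epsilon))\val \geq (1-O(\epsilon))(|I_0| + C(\cG(A)^-)) \geq (1-O(\epsilon))\,\opt(G')$ by max-flow--min-cut on $G'$, and combining with $\opt(G) - \opt(G') \leq C(\cG(A)) \leq O(\epsilon)\,\val$ yields the desired $\val \geq (1-O(\epsilon))\,\opt(G)$.
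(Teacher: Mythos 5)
Your proposal is correct and follows essentially the same route as the paper's proof in Appendix~\ref{sec:apdxexistweights_bi}: level sets, a gap of width $\Theta(\log(n/\epsilon)/\epsilon)$, the cut $N(\g(A)^+)\cup\g(A)^-$, the two allocation bounds via the last-decrease/last-non-decrease iteration, and the averaging argument over gap locations. Your explicit bookkeeping of the flow leaking into the gap via $f_i^{\mathrm{mid}}$ is in fact slightly more careful than the paper's Lemma~\ref{lem:bi_gap_argument}; just tighten your stated decrease-step monotonicity to $Alloc_a^{(t+1)}\leq Alloc_a^{(t)}$ (which your own one-line computation already gives), since the $(1+\epsilon)$-per-round version is too weak to propagate the upper bound through all subsequent iterations.
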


We start by defining some notation.
We use superscript $(t)$ to denote the value of variables in the end of iteration $t$. For example, $x_{ia}^{(t)}$ represents the value of $x_{ia}$ in the end of iteration $t$.
Use $Alloc_a := \sum_{i \in N_a} x_{ia}$ to denote the amount allocated to advertiser $a$ using the weights.
Define $\alpha_{min} = \frac{1}{(1+\epsilon)^T}$.
Since any weight can decrease at most $(1+\epsilon)$ per iteration, all weights are at least $\alpha_{min}$.
Let $A(k) = \{ a \in A \mid \alpha_a= (1+\epsilon)^k \alpha_{min}\}$ for $0 \leq k \leq T$ be the advertisers with weight at ``level'' $k$.

The maximum cardinality bipartite matching can be seen as a special maximum flow problem. Consider the natural flow network where there is a sink $s$ and source $t$.  $s$ is connected to each $i \in I$ with capacity 1 and each $a \in A$ is connected to $t$ with capacity $C_a$.  $I$ is connected to $A$ according to $G$ with infinite capacity on each edge (or capacity 1 if you don't like infinite capacity).  Let this network be $G'$.  Note that any feasible flow in $G'$ corresponds to a feasible solution to our problem and vice-versa. 

To prove Theorem~\ref{thm:bi_prop_thm}, we consider a subgraph $G''$, a smaller graph by removing some vertices and edges of $G'$.
We first construct an $s$-$t$ cut in $G''$. Then prove that the value of our solution is at least $(1-O(\epsilon))$ times the value of this cut, thus at least $(1-O(\epsilon))$ times the maximum flow in $G''$. Finally, we show that
the optimal value in $G''$ is close to that in $G'$, completing the proof.

\begin{figure}[t]
		\centering
	    \captionsetup[subfigure]{justification=centering}
		\begin{subfigure}[t]{0.45\linewidth}
			\centering
			\includegraphics[height=7.2cm]{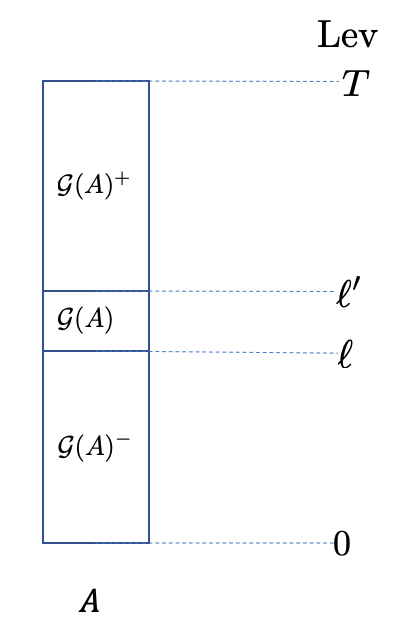}
			\caption{}
			\label{fig:bigap}
		\end{subfigure}
		\begin{subfigure}[t]{0.5\linewidth}
			\centering
			\includegraphics[height=6.7cm]{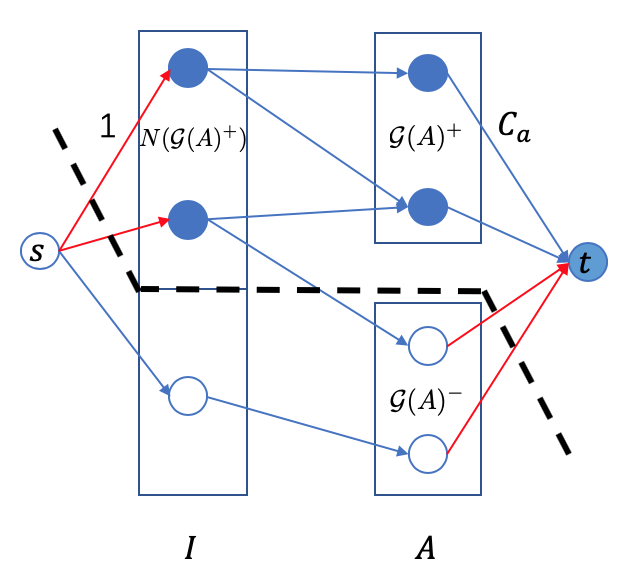}
			\caption{}
			\label{fig:bicut}
		\end{subfigure}
		\caption{Fig~(a) is an illustration of the partition of the $A$ layer. Removing $\g(A)$ can get the graph $G''$. Fig~(b) is an illustration of the $s$-$t$ cut in the graph. We color the vertices staying with $t$ blue and the edges in the cut red.}
		\label{fig:bi_gap_and_cut}
\end{figure}

We now define some vertex sets in order to construct $G''$. Given any integer $1 \leq \ell \leq T-1-\frac{\log(n/\epsilon)}{\epsilon}$, we define a gap in the $A$ layer: $\g(A):= \bigcup_{k=\ell}^{\ell'} A(k)$, where $\ell'=\ell+\frac{\log(n/\epsilon)}{\epsilon }$. As shown in Fig~\ref{fig:bigap}, all vertices in the $A$ layer are partitioned into three parts: $\g(A)$, $\g(A)^-$ and $\g(A)^+$, where $\g(A)^-:=\bigcup_{k=0}^{\ell-1} A(k)$ and $\g(A)^+:=\bigcup_{k=\ell'+1}^{T} A(k)$.

The subgraph $G''$ is created by removing $\g(A)$.
Now, we show how to construct an eligible $s$-$t$ cut.
As shown in Fig~\ref{fig:bicut}, we group the advertisers with higher weights $\g(A)^+$ and their neighboring impressions $N(\g(A)^+)$ with $t$.
The remaining vertices are grouped with $s$.
This results in cutting the $(a,t)$ edges for the lower weight advertisers and the $(s,i)$ edges for the neighbors of the higher weight advertisers.

We define some more notation.  Let $N(S) := \bigcup_{a \in S} N_a$ and $C(S) = \sum_{a \in S} C_a$ be the collective neighborhood and capacity of $S \subseteq A$, respectively.  Let $\opt(G)$ be the maximum flow in graph $G$.
According to the construction of the cut above, we have
\begin{equation}\label{eq:bi_upper_bound}
\opt(G'') \leq | N(\g(A)^+) | + C( \g(A)^- ).
\end{equation}

Our first lemma shows that ``low'' weight advertisers have at least their capacity allocated to them and ``high'' weight advertisers are not too over-allocated.

\begin{lemma} \label{lem:bi_lemma_one}
	After iteration $T$, for each $a \in \bigcup_{k=0}^{T-1}A(k)$, we have $Alloc_a \geq C_a$.  Similarly, for $a \in \bigcup_{k=1}^{T}A(k)$ we have $Alloc_a \leq (1+\epsilon)^2C_a$
\end{lemma}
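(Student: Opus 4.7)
The plan is to prove both inequalities by pinning a \emph{witness iteration} to each advertiser $a$ and then propagating bounds on $Alloc_a$ from that iteration to the end, using the fact that weights only ever decrease in Algorithm~\ref{alg:bi_prop_alg}. This mirrors the strategy used for Lemmas~\ref{lem:b_lowerbound}–\ref{lem:a_upperbound} in the 3-layer case, but is significantly cleaner here because there is no downstream layer to complicate the monotonicity argument.

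For the lower bound, take $a$ with $\alpha_a$ at level $k<T$, so $\alpha_a$ was decreased at least once. Let $t$ be the \emph{last} iteration in which $\alpha_a$ decreases. By the trigger condition of the algorithm, $Alloc_a^{(t-1)} \geq (1+\epsilon)C_a$. In iteration $t$ the numerator of $x_{ia} = \alpha_a/\sum_{a'\in N_i}\alpha_{a'}$ drops by a $(1+\epsilon)$ factor while the denominator can only shrink (other weights weakly decrease), so $x_{ia}^{(t)} \geq x_{ia}^{(t-1)}/(1+\epsilon)$ for every neighbor $i$, hence $Alloc_a^{(t)} \geq C_a$. After iteration $t$, $\alpha_a$ is held fixed while all other $\alpha_{a'}$ only weakly decrease, so $x_{ia}$ is non-decreasing from iteration $t+1$ onward, and $Alloc_a^{(T)} \geq Alloc_a^{(t)} \geq C_a$.

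For the upper bound, take $a$ with $\alpha_a$ at level $k\geq 1$, so $\alpha_a$ failed to decrease at least once. Let $t$ be the \emph{last} iteration in which $\alpha_a$ does not decrease. The non-trigger at iteration $t$ yields $Alloc_a^{(t-1)} \leq (1+\epsilon)C_a$. In iteration $t$ the denominator $\sum_{a'\in N_i}\alpha_{a'}$ can shrink by at most a factor $(1+\epsilon)$ while the numerator $\alpha_a$ is unchanged, so $x_{ia}^{(t)} \leq (1+\epsilon)\,x_{ia}^{(t-1)}$ and therefore $Alloc_a^{(t)} \leq (1+\epsilon)^2 C_a$. From iteration $t+1$ onward, $\alpha_a$ is decreased in every iteration; a short computation (the bipartite analogue of Property~\ref{ppt:2}) shows that when $\alpha_a$ strictly decreases while other weights can only also decrease, the ratio $x_{ia}$ is non-increasing, so $Alloc_a$ only shrinks and the bound $Alloc_a^{(T)} \leq (1+\epsilon)^2 C_a$ survives to the end.

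The main obstacle is the monotonicity after the witness iteration: one must verify that in a round where $\alpha_a$ decreases and an arbitrary subset of other weights also decreases, $x_{ia}$ truly does not increase (and dually, when $\alpha_a$ is held fixed, $x_{ia}$ does not decrease). Both follow from a one-line calculation on the proportional-allocation formula, but they are what makes the simplification to ``decreases only'' legal and are exactly the bipartite shadows of Properties~\ref{ppt:1}–\ref{ppt:2}. Once these monotonicity statements are isolated as a small claim, the two bounds in Lemma~\ref{lem:bi_lemma_one} drop out directly from the witness-iteration analysis above.
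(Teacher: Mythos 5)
Your proof is correct and follows essentially the same route as the paper: pick the last iteration in which $\alpha_a$ did (resp.\ did not) decrease, use the trigger condition to bound $Alloc_a$ there, bound the one-iteration change of $x_{ia}$ by a $(1+\epsilon)$ factor, and then invoke monotonicity of $x_{ia}$ for the remaining iterations. The only difference is that you spell out the proportional-allocation monotonicity calculation that the paper leaves implicit, which is a harmless (indeed helpful) elaboration rather than a different argument.
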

\begin{proof}
	For the first part, let $a \in \bigcup_{k=0}^{T-1}A(k)$ and note that such an advertiser had its weight decreased at least once.  Consider the last iteration $t$ where this occurred. In this iteration, prior to decreasing $\alpha_a$, we had \[ Alloc^{(t-1)}_a \geq (1+\epsilon)C_a\] by definition of the algorithm.  After decreasing $\alpha_a$, $Alloc_a$ decreased at most $(1+\epsilon)$. So we had
	\[ Alloc_a^{(t)} \geq \frac{Alloc_a^{(t-1)}}{1+\epsilon} \geq  C_a. \]
	$Alloc_a$ did not decrease in subsequent iterations since this was the last round $\alpha_a$ decreased.
	Thus, after the final iteration, $Alloc_a$ is still at least $C_a$.
	
	The second part is similar. Let $a \in \bigcup_{k=1}^{T}A(k)$ and that such an advertiser had at least one iteration where it didn't decrease its weight.  Consider the last such iteration $t$.
	Clearly, we had
	\[ Alloc^{(t-1)}_a \leq (1+\epsilon)C_a. \]
	in the beginning of this iteration.
	In this iteration, $Alloc_a$ increased at most $(1+\epsilon)$. So we have
	\[ Alloc^{(t)}_a \leq (1+\epsilon)Alloc_a^{(t-1)} \leq (1+\epsilon)^2C_a. \]
	Since advertiser $a$ decreased its weight in all subsequent iterations, this inequality is maintained, completing the proof.
\end{proof}



Let $\val$ be the value of our solution.
Now we prove that $\val$ is at least $(1-O(\epsilon))$ times the value of that cut.

\begin{lemma} \label{lem:bi_gap_argument}
	For any $\ell$, we have $\val \geq C(\g(A)^-) + (1-O(\epsilon) )|N(\g(A)^+)|$.
\end{lemma}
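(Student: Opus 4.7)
\begin{proofof}{sketch of Lemma~\ref{lem:bi_gap_argument}}
The plan is to partition the advertiser side according to the gap and bound the contribution of each part to $\val = \sum_{a \in A} \min(Alloc_a, C_a)$ separately. Since $\g(A)^-$, $\g(A)$, and $\g(A)^+$ are disjoint, the three sums are independent and can simply be added.

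First, for the $\g(A)^-$ term. Every advertiser $a \in \g(A)^- = \bigcup_{k=0}^{\ell-1} A(k)$ has weight at some level $k \leq \ell-1 \leq T-1$, so the first half of Lemma~\ref{lem:bi_lemma_one} gives $Alloc_a \geq C_a$ and hence $\min(Alloc_a, C_a) = C_a$. Summing over $\g(A)^-$ yields exactly $C(\g(A)^-)$.

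Second, for the $\g(A) \cup \g(A)^+$ term. Every advertiser here has weight level $\geq \ell \geq 1$, so the second half of Lemma~\ref{lem:bi_lemma_one} gives $Alloc_a \leq (1+\epsilon)^2 C_a$, which I rearrange to $\min(Alloc_a, C_a) \geq Alloc_a / (1+\epsilon)^2$. Summing, the contribution is at least $\frac{1}{(1+\epsilon)^2} \sum_{a \in \g(A) \cup \g(A)^+} Alloc_a$, and expanding $Alloc_a = \sum_{i \in N_a} x_{ia}$ and swapping sums gives $\sum_i \sum_{a \in N_i \cap (\g(A) \cup \g(A)^+)} x_{ia}$. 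Restricting to $i \in N(\g(A)^+)$ lower-bounds this by $\sum_{i \in N(\g(A)^+)} \left(1 - \sum_{a' \in N_i \cap \g(A)^-} x_{ia'}\right)$.

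The crux of the argument, and in my view the main thing to verify carefully, is the gap calculation bounding $\sum_{a' \in N_i \cap \g(A)^-} x_{ia'}$ for $i \in N(\g(A)^+)$. Such an $i$ has some neighbor $a^* \in \g(A)^+$ with $\alpha_{a^*} \geq (1+\epsilon)^{\ell'+1}\alpha_{min}$, while any $a' \in \g(A)^-$ has $\alpha_{a'} \leq (1+\epsilon)^{\ell-1}\alpha_{min}$. By the definition of proportional allocation, $x_{ia'} \leq \alpha_{a'}/\alpha_{a^*} \leq (1+\epsilon)^{-(\ell'-\ell+2)}$, and with at most $n$ such neighbors the total is at most $n(1+\epsilon)^{-(\ell'-\ell)}$. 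Plugging in $\ell' - \ell = \log(n/\epsilon)/\epsilon$ and using $(1+\epsilon)^{1/\epsilon} \geq 2$ (or a similar bound, adjusting $\ell'-\ell$ by a constant factor if needed), this is at most $\epsilon$. Consequently the $\g(A) \cup \g(A)^+$ contribution is at least $\frac{1-\epsilon}{(1+\epsilon)^2} |N(\g(A)^+)| = (1 - O(\epsilon))|N(\g(A)^+)|$.

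Adding the two bounds gives $\val \geq C(\g(A)^-) + (1 - O(\epsilon))|N(\g(A)^+)|$, as desired. No double counting occurs because the two sums range over disjoint advertiser sets, and feasibility of the allocation is built into the $\min(Alloc_a, C_a)$ formulation of the LP objective, so no further scaling is needed.
\end{proofof}
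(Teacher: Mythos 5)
Your proof is correct and follows essentially the same route as the paper's: split $A$ at the gap, apply Lemma~\ref{lem:bi_lemma_one} to get $Alloc_a \geq C_a$ below the gap and $Alloc_a \leq (1+\epsilon)^2 C_a$ for advertisers above level $0$, and use the $n/\epsilon$ weight separation across the gap to show each $i \in N(\g(A)^+)$ sends at most $\epsilon$ of its flow to $\g(A)^-$. The only (harmless, and arguably more careful) difference is that you credit the flow landing in the gap set $\g(A)$ itself toward the $(1-O(\epsilon))|N(\g(A)^+)|$ term, whereas the paper's write-up counts only the flow reaching $\g(A)^+$.
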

\begin{proof}
	By Lemma~\ref{lem:bi_lemma_one}, after iteration $T$, we have that each $a \in \g(A)^-$ has \[Alloc_a \geq C_a.\]
	Thus, we get exactly $C( \g(A)^- )$ for these advertisers.
	Next note that some impressions $i \in N(\g(A)^+)$ may have an edge to both $\g(A)^-$ and $\g(A)^+$, we show that we can neglect such edges from $I$ to $\g(A)^-$ to avoid double counting.  Suppose that $i$ has edges $(i,a_1)$ and $(i,a_2)$ where $a_1 \in \g(A)^-$ and $a_2 \in \g(A)^+$.  By definition of the gap, we have
	\[\alpha_{a_2} \geq (1+\epsilon)^{\log(n / \epsilon) /\epsilon} \alpha_{a_1} \geq \frac{n}{\epsilon} \alpha_{a_1}.\]
	It follows that for any impression $i$, \[x_{ia_1} \leq \frac{\epsilon}{n} x_{ia_2}\]
	and that
	\[  \sum_{a \in N_i \cap \g(A)^-} x_{ia} \leq \epsilon. \]
	Counting the value we allocate to $\g(A)^+$, we have
	\[ \sum_{a\in \g(A)^+} Alloc_a = \sum_{i \in N(\g(A)^+)} \sum_{a \in N_i \cap \g(A)^+} x_{ia} \geq (1-\epsilon) |N( \g(A)^+ )|. \]
	By Lemma~\ref{lem:bi_lemma_one}, the advertisers in $\g(A)^+$ are not too over-allocated.
	Thus finally, we have
	\[ \val \geq \sum_{a \in \g(A)^- } \min(Alloc_a,C_a) + \sum_{a \in \g(A)^+ } \min(Alloc_a,C_a) \geq C(\g(A)^-) + (1-O(\epsilon))|N( \g(A)^+ )|. \]
\end{proof}

By Lemma~\ref{lem:bi_gap_argument} and Eq~\eqref{eq:bi_upper_bound}, we can bounded our solution by the maximum flow in $G''$:
\begin{equation}\label{eq:bi_bound}
\val \geq C(\g(A)^-) + (1-O(\epsilon) )|N(\g(A)^+)| \geq (1-O(\epsilon)) \opt(G'').
\end{equation}

Now we show that $\opt(G'')$ is close to $\opt(G')$ by selecting an appropriate $\ell$.
More specifically, since $G''$ is obtained by removing $\g(A)$ from $G'$, we have
\[ \opt(G')-\opt(G'') \leq C(\g(A)). \]

Next we show by averaging that for all $\ell$, the difference is small when $T$ is large enough, indicating that there must exist one $\ell$ such that $C(\g(A))$ is small.

\begin{lemma} \label{lem:bi_averaging_arg}
	If $T \geq  \frac{2}{ \epsilon^2 }\log(n/\epsilon)$, then there exist an $\ell$ such that
	\[ C(\g(A)) \leq \epsilon \val. \]
\end{lemma}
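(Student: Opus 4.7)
The plan is a straightforward averaging argument over the choice of the gap location $\ell$. First I would observe that each level set $A(k)$ lies in $\g(A)$ for at most $\tfrac{\log(n/\epsilon)}{\epsilon}+1$ distinct values of $\ell$ (since $\g(A)=\bigcup_{k=\ell}^{\ell+\log(n/\epsilon)/\epsilon} A(k)$). Therefore, summing over all admissible $\ell \in \{1,\ldots,T-1-\log(n/\epsilon)/\epsilon\}$,
\[
\sum_{\ell} C(\g(A)) \;\leq\; \left(\frac{\log(n/\epsilon)}{\epsilon}+1\right)\sum_{k=0}^{T-1} C(A(k)).
\]
Note the outer sum omits $A(T)$: this is fine because the range of $\ell$ ensures $\ell' \leq T-1$, so $A(T) \subseteq \g(A)^+$ is never inside the gap.

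Next I would bound $\sum_{k=0}^{T-1} C(A(k))$ by $\val$ using Lemma~\ref{lem:bi_lemma_one}. Every $a$ in a level $k \leq T-1$ had its weight decreased at least once, and the lemma yields $Alloc_a \geq C_a$, so advertiser $a$ contributes $\min(Alloc_a,C_a)=C_a$ to $\val$. Summing,
\[
\sum_{k=0}^{T-1} C(A(k)) \;=\; \sum_{a \in \bigcup_{k=0}^{T-1}A(k)} C_a \;\leq\; \val.
\]

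Combining the two bounds and dividing by the number of admissible $\ell$, some choice of $\ell$ must satisfy
\[
C(\g(A)) \;\leq\; \frac{(\log(n/\epsilon)/\epsilon + 1)\,\val}{T - 1 - \log(n/\epsilon)/\epsilon}.
\]
It remains to check that the hypothesis $T \geq \tfrac{2}{\epsilon^2}\log(n/\epsilon)$ makes the right-hand side at most $\epsilon \val$, which reduces to the elementary inequality $T - 1 - \log(n/\epsilon)/\epsilon \geq \tfrac{1}{\epsilon}(\log(n/\epsilon)/\epsilon + 1)$, easily verified for the stated range (assuming $\epsilon \in (0,1)$ and $n \geq 2$).

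The argument involves no serious obstacle; the only subtle point is making sure the range of $\ell$ is chosen so that (i) the gap fits entirely inside the level range, (ii) $A(T)$ is excluded from the gap so Lemma~\ref{lem:bi_lemma_one} applies to every level contributing to the sum, and (iii) the number of admissible $\ell$ is large enough (linear in $T$) to absorb the multiplicative factor $\log(n/\epsilon)/\epsilon$ and still beat $1/\epsilon$. The choice $T = \Theta(\log(n/\epsilon)/\epsilon^2)$ is precisely what is needed to balance these quantities.
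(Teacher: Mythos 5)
Your proposal is correct and follows essentially the same route as the paper: sum $C(\g(A))$ over all admissible $\ell$, observe each level $A(k)$ is counted at most $\log(n/\epsilon)/\epsilon$ times, bound $\sum_{k\le T-1} C(A(k))$ by $\val$ via Lemma~\ref{lem:bi_lemma_one}, and average. The only cosmetic difference is that the paper starts the level sum at $k=1$ (since $\ell\ge 1$ means $A(0)$ is never in the gap) while you start at $k=0$, which is still a valid upper bound.
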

\begin{proof}
	
	Summing the capacity in the gap over all values of $\ell$, we have
	\[
	\sum_{\ell=1}^{T-1-\frac{1}{\epsilon}\log(n/\epsilon)} \sum_{k=\ell}^{\ell'} \sum_{a \in A(k)} C_a \leq \frac{\log(n / \epsilon) }{\epsilon}\sum_{k=1}^{T-1} \sum_{a \in A(k)} C_a.
	\]
	
	Due to Lemma~\ref{lem:bi_lemma_one}, we have
	\[ \sum_{k=1}^{T-1} \sum_{a \in A(k)} C_a \leq \val \]
	
	Thus, by averaging, there exists an $\ell \in [1,T-1-\frac{1}{\epsilon}\log(n/\epsilon)]$ such that the capacity in the gap is at most
	\[
	\frac{\log(n / \epsilon) / \epsilon }{T-1-\log(n/\epsilon)/\epsilon}\sum_{k=1}^{T-1} \sum_{a \in L_k} C_a \leq \epsilon \val
	\]
	provided $T \geq \frac{2}{ \epsilon^2 }\log(n/\epsilon)$.
\end{proof}

The above lemma will allow us to show that the amount we lose in the gap is a small fraction of our total value.  Showing this and combining the above lemmas will allow us to complete the proof of Theorem~\ref{thm:bi_prop_thm}.

\begin{proof}[Proof of \textnormal{\bf Theorem~\ref{thm:bi_prop_thm}}]
	When $T=O(\frac{1}{ \epsilon^2 }\log(n/\epsilon))$, we have
	\begin{equation}
	\begin{aligned}
	\val &\geq (1-O(\epsilon)) \opt(G'') \\
	\val&\geq (1-O(\epsilon)) (\opt(G') - \epsilon \val) \\
	\val & \geq (1-O(\epsilon)) \opt(G')
	\end{aligned}
	\end{equation}
	Since $\opt(G')$ is equal to the optimal value of LP~\eqref{eqn:matching_lp_1}, our algorithm returns a $(1-O(\epsilon))$-approximated matching.

\end{proof}

\subsection{Max Flow in \texorpdfstring{$d$}{d}-Layered Graphs}\label{sec:apdxexistweights_d}

In this subsection, we generalize our method to the maximum flow problem in the $(s$-$t)$ $d$-layered graph $G(V\cup\{s,t\},E)$.
For simplicity, the graph used in the following is $(d+1)$-layered as shown in Fig~\ref{fig:dlayergraph}. The vertices in $V$ are partitioned into $d+1$ layers, denoted by $I,A_1,A_2,$ ... and $A_d$. Let $A = \bigcup_{j=1}^{d}A_j$.
Each vertex $v$ has capacity $C_v$ and w.l.o.g. we can assume that the capacity of each vertex $i\in I$ is $1$.

Recall the two properties needed: near optimality (Property~\ref{ppt:near_optimal}) and virtual-weight dependence (Property~\ref{ppt:virtual_dependence}). We also first focus on the near optimality property (Theorem~\ref{thm_dlayer}). Then in the end, we give the proof of virtual-weight dependence (Theorem~\ref{thm:d_layer_good_weights}).

\begin{figure}[t]
	\centering
	\captionsetup{justification=centering}
	\includegraphics[width=0.6\linewidth]{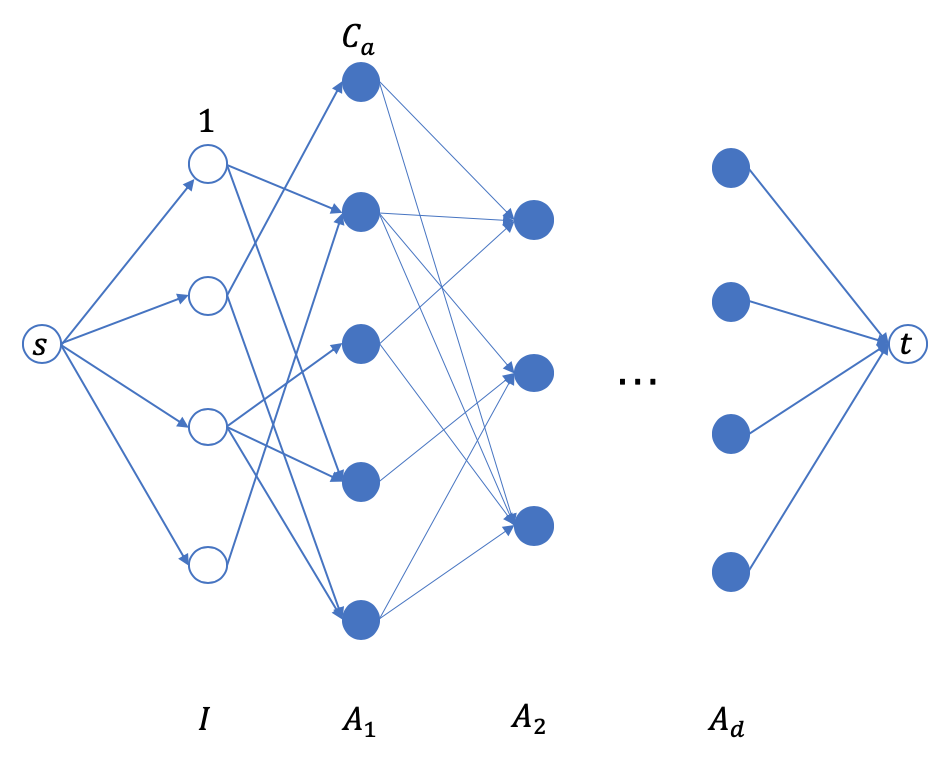}
	\caption{An illustration of the $(s$-$t)$$(d+1)$-layered graph.}
	\label{fig:dlayergraph}
\end{figure}

We define some new notation first.
For each vertex $i\in I$, use $N_{(i,A_1)}$ to represent its neighborhood in $A_1$.
For each vertex $a_j\in A_j$, use $N_{(a_j,A_{j+1})}$ and $N_{(a_j,A_{j-1})}$ to represent its neighborhood in its next layer and previous layer respectively (Note that $I$ is viewed as $A_0$).
For each edge $(a_{j-1},a_j)\in E$, define $x_{(a_{j-1},a_j)}$ be the proportion of flow sent from $a_{j-1}$ to $a_j$.
Define $Alloc_{a_j}$ and $Alloc_{a_j}^{(t)}$ as in the three-layered case.

\begin{algorithm}[t]
	\caption{The framework of the algorithm in $(s$-$t)$$(d+1)$-layered graphs}
	\label{alg:d_layer_framework}
	\KwIn{ $G= (\{s,t\}\cup I\cup A_1\cup ...\cup A_d,E)$, $\{ C_a \}_{a \in \mathbb{A}}$, and parameter $\epsilon_1, ..., \epsilon_d $ }
	
	Initialize $\alpha_a = 1$ $\forall a \in A$. \	

	\For {iteration $1,2,...,T$ }
	{

		For each $(a_{j-1},a_{j})\in E$, let $x_{(a_{j-1},a_{j})} = \frac{\alpha_{a_j}}{\sum_{a_j' \in N_{(a_{j-1}, A_j) }}\alpha_{a_j'}  }$
		
		For each $i\in I$, let $Alloc_i = C_i = 1$. And for each vertex $a_j\in A_j$, let $Alloc_{a_j} = \sum \limits_{a_{j-1} \in N_{(a_j,A_{j-1})}} \min(Alloc_{a_{j-1}}, C_{a_{j-1}}) x_{(a_{j-1},a_j)} $.

		\For{layer $j=d,d-1,...,1$}
		{
			\For{each vertex $a_j\in A_j$}			
			{
				\If{$Alloc_{a_j} > \prod_{j'=1}^{j} (1+\epsilon_{j'}) C_{a_j} $}
				{
					$\alpha_{a_j} \leftarrow \alpha_{a_j} / (1+\epsilon_j)$
				}
				\ElseIf{$j <1$}
				{
					\If{$\cond(N_{(a_j,A_{j+1})}) = \true$}
					{
						$\alpha_{a_j} \leftarrow \alpha_{a_j} / (1+\epsilon_j)$
					}
				}
				
			}
		}

	}
	\KwOut{ $\{\alpha_a\}_{a\in A}$ }
\end{algorithm}

Our generalized framework is stated in Algo~\ref{alg:d_layer_framework}. In each iteration, when we compute $x$ and $Alloc$, we sweep from $A_1$ to $A_d$, but when we update the weights, we reverse sweep from $A_d$ back to $A_1$.
Each vertex $a$ in the last layer only updates its weights due to itself while the vertices in other layers update their weights not only due to themselves, but also due to their neighborhood in the next layer.
We set different values of $\epsilon$ for different layers. For all vertices in $A_j$, we partition them into several layers according to  $\epsilon_j$:
\[ A_j(k) = \{ a_j \in A_j | \alpha_{a_j} = (1+\epsilon_j)^k \cdot \frac{1}{(1+\epsilon_j)^T} \}, \]
and use $\lev(a_j)$ to denote the level of $a_j$.

Now we introduce the generalized version of the four properties:

\begin{property}[Increasing monotonicity]\label{ppt:2_1}
	For any vertex $a\in A$, in iteration $t$, if its weight does not decrease, $Alloc_a^{(t)} \geq Alloc_a^{(t-1)}$.
\end{property}

\begin{property}[Decreasing monotonicity]\label{ppt:2_2}
	For any vertex $a\in A$, in iteration $t$, if its weight decreases, $Alloc_a^{(t)} \leq Alloc_a^{(t-1)}$.
\end{property}

\begin{property}[Layer dominance]\label{ppt:2_3}
	For any vertex $a_j \in A_j$ with $1\leq j \leq d-1$, in any iteration, there exists at least one vertex $a_{j+1} \in N_{(a_j,A_{j+1})}$ such that $\lev(a_{j+1}) \geq \lev(a_j)$.
\end{property}

\begin{property}[Forced decrease exemption]\label{ppt:2_4}
	Let $\epsilon_{max} = \max \limits_j \epsilon_j$, $\epsilon_{min} = \min \limits_j \epsilon_j$ and $n=\max \limits_j |A_j|$. For any vertex $a_j \in A_j $ with $1\leq j \leq d-1$, in any iteration, if there exists one vertex $a_{j+1} \in N_{(a_j,A_{j+1})}$ with $\lev(a_{j+1}) =T$ or satisfying $\lev(a_{j+1})-\lev(a_j) \geq \log(n/\epsilon_{max})/\epsilon_{min}$, then $\cond(N_{(a_j,A_{j+1})}) = \false$.
\end{property}

Similar to the three-layered case, we claim the following theorem:
\begin{theorem}\label{thm_dlayer}
	For any algorithm under our framework with the four properties, if $T=O(\frac{n\log(n/\epsilon_{max})}{\epsilon_{max}\epsilon_{min}} )$, it will return a $\prod_{j=1}^{d}(1-O(\epsilon_j d))$-approximated solution.
\end{theorem}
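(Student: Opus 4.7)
The plan is to lift the three-layer argument of Appendix~\ref{subsec:exist3weight} verbatim, with a single common gap parameter $\ell$ placed in every intermediate layer. First, I will establish per-layer allocation bounds that generalize Lemmas~\ref{lem:b_lowerbound}--\ref{lem:a_upperbound}: using Property~\ref{ppt:2_1}, any $a_j \in \bigcup_{k=0}^{T-1} A_j(k)$ whose last weight change was a self-decrease satisfies $Alloc_{a_j} \geq C_{a_j}$ (repeating the proof of Lemma~\ref{lem:b_lowerbound}, but with the $j$-layer inflation factor $\prod_{j'=1}^{j}(1+\epsilon_{j'})$ replacing the factor $(1+\epsilon_A)(1+\epsilon_B)$); symmetrically, using Property~\ref{ppt:2_2}, any $a_j \in \bigcup_{k=1}^{T} A_j(k)$ satisfies $Alloc_{a_j} \leq \prod_{j'=1}^{j}(1+\epsilon_{j'})^2 C_{a_j}$. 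These give that $\min(Alloc_{a_j},C_{a_j}) \geq (1-O(\epsilon_j \cdot j)) Alloc_{a_j}$ for all above-or-in-gap $a_j$, and $=C_{a_j}$ for below-gap $a_j$ whose last decrease was a self-decrease.

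Second, I fix an integer $\ell \in [1+\log(n/\epsilon_{\max})/\epsilon_{\min},\,T-1-\log(n/\epsilon_{\max})/\epsilon_{\min}]$ and for every layer $j \in \{1,\dots,d\}$ define the gap $\g(A_j) = \bigcup_{k=\ell}^{\ell'} A_j(k)$ with $\ell' - \ell = \log(n/\epsilon_{\max})/\epsilon_{\min}$, partitioning each $A_j$ into $\g(A_j)^+, \g(A_j), \g(A_j)^-$. The gap length guarantees, as in the 3-layer case, that any proportional allocation from a vertex with at least one above-gap out-neighbor sends at most an $\epsilon_{\max}$ fraction of its flow to below-gap out-neighbors. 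Let $G'$ be the graph obtained by deleting every gap vertex, and define inductively the ``interface'' sets $A_0^* := \{i \in I : N_i \cap \g(A_1)^+ \ne \emptyset\}$ and $A_j^* := \{a_j \in \g(A_j)^- : N_{(a_j,A_{j+1})} \cap \g(A_{j+1})^+ \ne \emptyset\}$ for $1\le j \le d-1$. The claimed $s$-$t$ vertex cut in $G'$ is $\C = A_0^* \cup A_1^* \cup \cdots \cup A_{d-1}^* \cup \g(A_d)^-$, whose feasibility follows because any $s$-$t$ path in $G'$ either ends in $\g(A_d)^-$ or makes a first below-to-above transition at some layer $j+1$, at which point the path's layer-$j$ vertex lies in $A_j^*$.

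Third, I will lower bound $\val$ by the cut capacity, pushing the analysis of Section~\ref{subsec:exist3weight} through one layer at a time. Summing the allocation to $A_d$ splits into three pieces: the $\g(A_d)^-$ piece contributes exactly $C(\g(A_d)^-)$ by the lower bound of Step~1; the $\g(A_d) \cup \g(A_d)^+$ piece is at least $(1-O(\epsilon_d d))$ times the $Alloc_{a_d}$ mass, which by the assignment rule and Property~\ref{ppt:2_3} traces back (as in Eq.~\eqref{eq:1} of the 3-layer proof) to give at least $(1-\epsilon_{\max})$ of the $\min(Alloc_{a_{d-1}},C_{a_{d-1}})$ mass on $\g(A_{d-1})^+ \cup A_{d-1}^*$. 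Property~\ref{ppt:2_4} ensures every vertex in $A_{d-1}^*$ experienced only self-decreases, so Lemma-analog of Step~1 gives $C(A_{d-1}^*)$ exactly; and Property~\ref{ppt:2_3} guarantees each vertex in $\g(A_{d-1})^+$ has an above-gap neighbor in $A_d$, which sustains the induction. Iterating down through layers $d-1, d-2, \ldots, 1$ pushes the argument across all the $A_j^*$ sets and eventually to $A_0^* = I_0$, losing a $(1-O(\epsilon_j d))$ factor at each level; the cumulative bound becomes
\begin{equation}
\val \;\geq\; \Bigl(\prod_{j=1}^{d} (1-O(\epsilon_j d))\Bigr)\, C(\C) - \sum_{j=1}^{d-1} \sum_{a_j \in \g(A_j)} Alloc_{a_j},
\end{equation}
and $C(\C) \geq \opt(G')$.

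Finally, I will close the proof with an averaging argument over the choice of $\ell$, generalizing Lemma~\ref{lem:gap}. Splitting each $A_j$ according to whether its out-neighborhood hits $A_{j+1}(T)$ (the analog of the $P/Q$ split) and summing $C(\g(A_j))$ together with the relevant $C(N_{(\g(Q_j),A_{j+1})})$ terms over all admissible $\ell$, each level $A_j(k)$ is charged at most $O(n \log(n/\epsilon_{\max})/\epsilon_{\min})$ times, while $\sum_{k,j} C(A_j(k)) \leq d \cdot \val$ by the Step~1 lower bounds combined with the fact that each layer's total filled capacity is bounded by $\val$. Dividing by the $\Theta(T)$ admissible choices of $\ell$ and using $T = \Omega(n \log(n/\epsilon_{\max})/(\epsilon_{\max}\epsilon_{\min}))$ yields an $\ell$ for which the missing gap mass is $O(\epsilon_{\max}) \val$, and also $\opt(G) - \opt(G') = O(\epsilon_{\max})\val$. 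Substituting back gives $\val \geq \prod_j (1-O(\epsilon_j d)) \opt(G)$, as claimed. The main obstacle is the bookkeeping in the inductive chain: tracking precisely how the proportional-allocation losses compound as flow traverses $d$ layers, and verifying that the same single parameter $\ell$ can be used across all layers without increasing the number of samples required by the averaging argument beyond the stated $T$.
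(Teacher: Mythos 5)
Your proposal follows essentially the same route as the paper's proof: the same per-layer monotonicity lemmas (the paper's Lemma~\ref{lem:lower_bound} and Lemma~\ref{lem:upper_bound}), the same single gap parameter $\ell$ across all layers, the same vertex cut built from $\g(A_d)^-$ together with the interface sets $\g(A_j^-,A_{j+1}^+)$ and $N_{(\g(A_1)^+,I)}$, the same layer-by-layer chaining via Properties~\ref{ppt:2_3} and~\ref{ppt:2_4} (the paper's Lemma~\ref{lem:alloc}), and the same $P/Q$-split averaging over $\ell$ to control the gap mass and $\opt(G)-\opt(G')$. The plan is correct and matches the paper's argument up to constants in the intermediate bounds.
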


Thus, given an appropriate $\{\epsilon\}$, our algorithm will return a near-optimal solution.
The basic idea of proving Theorem~\ref{thm_dlayer} is similar to that of Theorem~\ref{thm_g3}. We first define a gap in each layer and remove them to construct a new graph $G'$. We prove that when $T$ is large enough, the optimal value of the flow in $G'$ is close to that of $G$.
Then we construct an $s$-$t$ vertex cut in $G$ and show that the value of our flow in $G'$ is close to the value of this cut, completing the proof.

Given any integer $1+\frac{\log(n/\epsilon_{max})}{\epsilon_{min} }\leq l \leq T-1-\frac{\log(n/\epsilon_{max})}{\epsilon_{min} }$, for each layer $A_j$, we can define a gap: $\g(A_j):= \bigcup_{k=l}^{l'} A_j(k)$, where $l'=l+\frac{\log(n/\epsilon_{max})}{\epsilon_{min} }$. As shown in Fig~\ref{fig:dlayergap}, all vertices in $A_j$ are partitioned into three parts: $\g(A_j)$, $\g(A_j)^-$ and $\g(A_j)^+$, where $\g(A_j)^-:=\bigcup_{k=0}^{l-1} A_j(k)$ and $\g(A_j)^+:=\bigcup_{k=l'+1}^{T} A_j(k)$.

\begin{figure}[t]
	\centering
	\captionsetup{justification=centering}
	\includegraphics[width=0.6\linewidth]{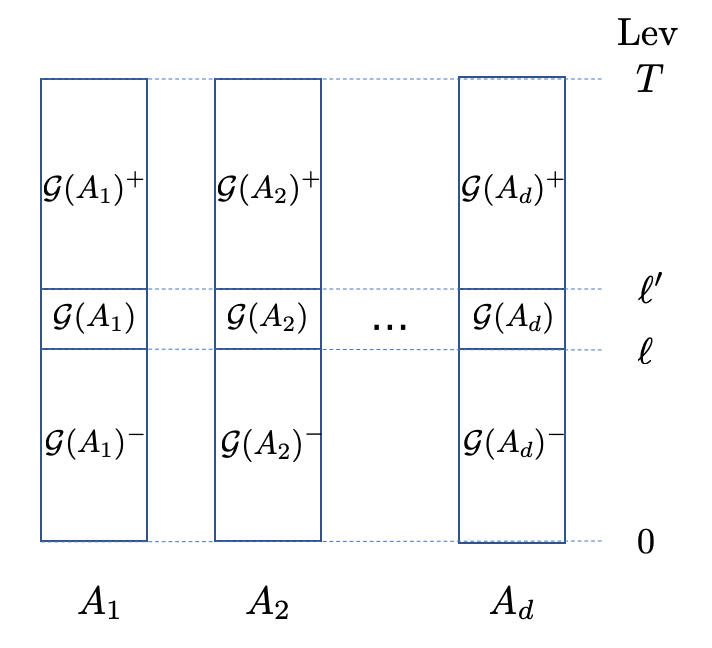}
	\caption{An illustration of the partition of $A_1$,...,$A_d$.}
	\label{fig:dlayergap}
\end{figure}

Define $\g := \bigcup_{j=1}^{d} \g(A_j)$, $\g^+:=\bigcup_{j=1}^d \g(A_j)^+$ and $\g^-:=\bigcup_{j=1}^d \g(A_j)^-$. The graph $G'$ is created by removing all vertices in $\g$ and all edges adjacent to them. The two rules to construct an $s$-$t$ vertex cut $\C$ are the same:

(1) For any $s$-$t$ path always crossing $\g^-$, add the vertex $a_d \in A_d$ to $\C$.

(2) For any $s$-$t$ path crossing at least one vertex in $\g^+$, find the first vertex $a_j$ in $\g^+$ and add the vertex $a_{j-1}$ before it to $\C$. (if $j=1$, add vertex $i$ to $\C$.)

Observe that $\C$ is a feasible $s$-$t$ vertex cut, meaning that all $s$-$t$ paths in $G'$ will be blocked by removing $\C$.
Use $\g(A_j^-,A_{j+1}^+)$ to represent the vertices in $\g(A_j)^-$ which are adjacent to at least one vertex in $\g(A_{j+1})^+$. Then according to the two rules, we can compute the value of this cut:

\begin{equation}
C(\C) =  C(\g(A_d)^-) + C(\g(A_{d-1}^-,A_d^+)) +... + C(\g(A_1^-,A_2^+)) +  |N_{(\g(A_1)^+,I)}|  .
\end{equation}

Also, we can compute the value of our solution:
\begin{equation}\label{eq:val}
\val = \sum_{a_d\in \g(A_d)^-} \min(Alloc_{a_d},C_{a_d}) + \sum_{a_d\in \g(A_d)} \min(Alloc_{a_d},C_{a_d})+ \sum_{a_d\in \g(A_d)^+} \min(Alloc_{a_d},C_{a_d}).
\end{equation}

Using the same technique as in the proof of Lemma~\ref{lem:a_lowerbound} and Lemma~\ref{lem:a_upperbound}, we have the following two lemmas:
\begin{lemma}\label{lem:lower_bound}
	If the increasing monotonicity holds, in any layer $A_j$, $\forall a_j \in \bigcup_{k=0}^{T-1}A_j(k)$, if the last iteration that its weight decreased is due to a self-decrease, we have $Alloc_{a_j} \geq C_{a_j}$.
\end{lemma}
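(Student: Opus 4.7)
The plan is to mirror the proof strategy of Lemma~\ref{lem:b_lowerbound} (and the analogous Lemma~\ref{lem:a_lowerbound}) from the 3-layer case, while carefully tracking how the factor $(1+\epsilon_{j'})$ compounds across layers. First, I would fix $a_j \in \bigcup_{k=0}^{T-1} A_j(k)$ and let $t$ be the last iteration in which $\alpha_{a_j}$ decreased. By the hypothesis that this decrease is a self-decrease, the algorithm's rule in Algo~\ref{alg:d_layer_framework} forces
\[Alloc_{a_j}^{(t-1)} > \prod_{j'=1}^{j}(1+\epsilon_{j'})\, C_{a_j}.\]

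The heart of the argument is a uniform per-iteration bound proved by induction on the layer index $j'$: in any single round,
\[\min\bigl(Alloc_{a_{j'}}^{(t)}, C_{a_{j'}}\bigr) \geq \min\bigl(Alloc_{a_{j'}}^{(t-1)}, C_{a_{j'}}\bigr) \Big/ \prod_{l=1}^{j'}(1+\epsilon_l).\]
The base case $j' = 0$ is trivial since impressions have fixed allocation~$1$. For the inductive step I would expand $Alloc_{a_{j'}} = \sum_{a_{j'-1}} \min(Alloc_{a_{j'-1}}, C_{a_{j'-1}})\, x_{(a_{j'-1}, a_{j'})}$: the inductive hypothesis controls the $\min(\cdot,\cdot)$ factor, losing $\prod_{l=1}^{j'-1}(1+\epsilon_l)$. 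For the proportion, a short calculation using only that weights never increase shows that $x_{(a_{j'-1}, a_{j'})}$ can shrink by a factor of at most $1+\epsilon_{j'}$ per round, with the worst case occurring when $\alpha_{a_{j'}}$ itself decreases while its siblings in $A_{j'}$ stay put. A brief case analysis on whether $Alloc_{a_{j'-1}}^{(t-1)}$ lies above or below $C_{a_{j'-1}}$ confirms the inequality survives the outer $\min$ operator.

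Applied to layer $j$ in iteration $t$, this per-iteration bound gives $Alloc_{a_j}^{(t)} \geq Alloc_{a_j}^{(t-1)} / \prod_{l=1}^{j}(1+\epsilon_l) > C_{a_j}$. Since $t$ was the last iteration in which $\alpha_{a_j}$ decreased, Property~\ref{ppt:2_1} (increasing monotonicity) implies $Alloc_{a_j}$ is non-decreasing for all subsequent iterations, so $Alloc_{a_j} \geq Alloc_{a_j}^{(t)} > C_{a_j}$ at termination, which is the desired conclusion.

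The main obstacle is ensuring that the compounded factor $\prod_{l=1}^{j}(1+\epsilon_l)$ produced by the induction matches \emph{exactly} the threshold used by the self-decrease rule; a looser bound on the per-iteration drop of either $x_{(a_{j'-1},a_{j'})}$ or the cascading $Alloc$ values would break the final inequality. It is also crucial that the last weight decrease be of self-decrease type: a forced-decrease can be triggered even when $Alloc_{a_j}^{(t-1)}$ is well below the threshold, so the starting inequality $Alloc_{a_j}^{(t-1)} > \prod_l(1+\epsilon_l)\, C_{a_j}$ would fail and the conclusion would no longer follow.
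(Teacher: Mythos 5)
Your proposal is correct and follows essentially the same route as the paper, which proves this lemma only by reference to "the same technique as in the proof of Lemma~\ref{lem:a_lowerbound}" (itself modeled on Lemma~\ref{lem:b_lowerbound}): start from the self-decrease threshold $\prod_{j'=1}^{j}(1+\epsilon_{j'})C_{a_j}$ at the last decreasing iteration, bound the one-round drop of the allocation by the same compounded factor, and then invoke increasing monotonicity for all later rounds. Your explicit layer-by-layer induction and the remark on why the self-decrease hypothesis is essential are faithful elaborations of what the paper leaves implicit.
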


\begin{lemma}\label{lem:upper_bound}
	If the decreasing monotonicity property holds, in any layer $A_j$, $\forall a_j \in \bigcup_{k=1}^{T}A_j(k)$, we have $Alloc_{a_j} \leq C_{a_j} \cdot \prod_{j'=1}^{j}(1+3\epsilon_{j'}) $.
\end{lemma}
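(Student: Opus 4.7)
The plan follows the template of Lemma~\ref{lem:b_upperbound} from the three-layered case, lifted to $d$ layers by an induction on the layer index. Fix any $a_j \in \bigcup_{k=1}^{T} A_j(k)$; this means there is at least one iteration in which $\alpha_{a_j}$ does not decrease, and I take $t$ to be the \emph{last} such iteration. Since neither branch of Algorithm~\ref{alg:d_layer_framework} updates $\alpha_{a_j}$ in iteration $t$, in particular the self-decrease test must have failed, which forces $Alloc_{a_j}^{(t-1)} \leq \prod_{j'=1}^{j}(1+\epsilon_{j'})\, C_{a_j}$ at the start of iteration $t$. For every iteration $t' > t$ the weight $\alpha_{a_j}$ does decrease, so decreasing monotonicity (Property~\ref{ppt:2_2}) gives $Alloc_{a_j}^{(t')} \leq Alloc_{a_j}^{(t'-1)}$, and hence $Alloc_{a_j}^{(T)} \leq Alloc_{a_j}^{(t)}$. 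It therefore suffices to bound how much $Alloc_{a_j}$ can grow during iteration $t$ alone.

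The core claim is a single-iteration multiplicative bound, which I prove by induction on $\ell = 0, 1, \ldots, d$: for every $v \in A_\ell$,
\[
\min\!\bigl(Alloc_v^{(t)}, C_v\bigr) \;\leq\; \prod_{j'=1}^{\ell}(1+\epsilon_{j'}) \cdot \min\!\bigl(Alloc_v^{(t-1)}, C_v\bigr).
\]
The base case $\ell = 0$ is immediate since $Alloc_i \equiv 1$. For the inductive step I split on whether $\alpha_v$ decreases in iteration $t$. If it does, Property~\ref{ppt:2_2} already gives the bound with factor $1$. Otherwise I expand $Alloc_v^{(t)} = \sum_u \min(Alloc_u^{(t)}, C_u)\, x_{(u,v)}^{(t)}$, apply the inductive hypothesis to each $\min(\cdot, C_u)$ factor (picking up $\prod_{j'=1}^{\ell-1}(1+\epsilon_{j'})$), and observe that because $\alpha_v$ stays fixed while the other weights in the denominator of $x_{(u,v)}$ can drop by at most a $(1+\epsilon_\ell)$ factor, $x_{(u,v)}^{(t)} \leq (1+\epsilon_\ell)\, x_{(u,v)}^{(t-1)}$. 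Multiplying these and translating back to $\min(\cdot, C_v)$ by a two-case analysis on whether $Alloc_v^{(t-1)}$ exceeds $C_v$ finishes the induction.

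Applying this single-iteration bound at $\ell = j$ together with $Alloc_{a_j}^{(t-1)} \leq \prod_{j'=1}^{j}(1+\epsilon_{j'})\, C_{a_j}$ yields $Alloc_{a_j}^{(t)} \leq \prod_{j'=1}^{j}(1+\epsilon_{j'})^2\, C_{a_j} \leq \prod_{j'=1}^{j}(1+3\epsilon_{j'})\, C_{a_j}$, where the last step uses $(1+x)^2 \leq 1+3x$ for $x \in (0,1)$; by the decreasing-monotonicity observation this bound persists all the way to iteration $T$. The main obstacle is the inductive step, which demands a careful case split on whether $\alpha_v$ decreases: in one case Property~\ref{ppt:2_2} yields the bound directly but the proportional-allocation expansion is unavailable, while in the other case the expansion is clean but monotonicity offers no a priori estimate. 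Unifying these two regimes under a single multiplicative bound on $\min(Alloc_v, C_v)$ is the subtle piece that allows the factors $(1+\epsilon_{j'})$ to compound cleanly across layers.
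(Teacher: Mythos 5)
Your proof is correct and follows essentially the same route as the paper, which proves this lemma by the same technique as Lemma~\ref{lem:b_upperbound}: take the last iteration $t$ in which the weight did not decrease, use the failed self-decrease test to get $Alloc_{a_j}^{(t-1)} \leq \prod_{j'=1}^{j}(1+\epsilon_{j'})C_{a_j}$, bound the one-iteration growth by another factor of $\prod_{j'=1}^{j}(1+\epsilon_{j'})$, and invoke decreasing monotonicity for all later iterations. Your layer-by-layer induction merely makes explicit the single-iteration growth bound that the paper asserts in one line, so there is no substantive difference.
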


Thus, we have
\begin{equation}\label{eq:rel_first}
\sum_{a_d\in \g(A_d)^-} \min(Alloc_{a_d},C_{a_d}) \geq C(\g(A_d)^-).
\end{equation}

Now, we need to establish the relationship between

\begin{equation}\label{eq:val_remain}
\sum_{a_d\in \g(A_d)} \min(Alloc_{a_d},C_{a_d})+ \sum_{a_d\in \g(A_d)^+} \min(Alloc_{a_d},C_{a_d})
\end{equation}

and

\begin{equation}\label{eq:cut_remain}
C(\g(A_{d-1}^-,A_d^+)) +... + C(\g(A_1^-,A_2^+)) +  |N_{(\g(A_1)^+,I)}|.
\end{equation}

To obtain the relationship, similar to Eq~\eqref{eq:1} and Eq~\eqref{eq:alloc}, we claim the following lemma:
\begin{lemma}\label{lem:alloc}
	If the layer dominance property holds,
	for any layer $1<j\leq d$, we have
	\begin{equation}
	\begin{aligned}\label{eq:sum_1}
	\sum_{a_j\in \g(A_j)^+} Alloc_{a_j} + \sum_{a_j\in \g(A_j)} Alloc_{a_j}\geq& \sum_{a_{j-1}\in \g(A_{j-1})^+ } \min(Alloc_{a_{j-1}},C_{a_{j-1}}) (1-\epsilon_{max}) \\
	&+ \sum_{a_{j-1}\in \g(A_{j-1}^-,A_j^+)} \min(Alloc_{a_{j-1}},C_{a_{j-1}}) (1-\epsilon_{max}),
	\end{aligned}
	\end{equation}
	and for the first layer $A_1$, we have
	\begin{equation}\label{eq:sum_2}
	\sum_{a_1\in \g(A_1)^+} Alloc_{a_1} + \sum_{a_1\in \g(A_1)} Alloc_{a_1} \geq (1-\epsilon_{max}) |N_{(\g(A_1)^+,I)}|.
	\end{equation}
\end{lemma}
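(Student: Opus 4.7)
\begin{proofof}{Lemma~\ref{lem:alloc} (sketch)}
The plan is to expand $Alloc_{a_j}$ using the proportional allocation rule, swap the order of summation so that the contribution is indexed by predecessors $a_{j-1}\in A_{j-1}$, then argue that each relevant predecessor sends at least a $(1-\epsilon_{max})$ fraction of its usable mass $\min(Alloc_{a_{j-1}},C_{a_{j-1}})$ across the cut into $\g(A_j)^+\cup\g(A_j)$.

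\textbf{Step 1 (rewrite as sum over predecessors).} By the update rule in Algorithm~\ref{alg:d_layer_framework},
\[
\sum_{a_j\in \g(A_j)^+\cup\g(A_j)} Alloc_{a_j}
=\sum_{a_{j-1}\in A_{j-1}} \min(Alloc_{a_{j-1}},C_{a_{j-1}})\cdot
\Bigl(\sum_{a_j\in N_{(a_{j-1},A_j)}\cap(\g(A_j)^+\cup\g(A_j))} x_{(a_{j-1},a_j)}\Bigr).
\]
I would then discard the non-negative contributions of all $a_{j-1}$ outside $\g(A_{j-1})^+\cup\g(A_{j-1}^-,A_j^+)$, so that only predecessors guaranteed to have a neighbor in $\g(A_j)^+$ remain.

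\textbf{Step 2 (every kept predecessor has a neighbor in $\g(A_j)^+$).} For $a_{j-1}\in\g(A_{j-1}^-,A_j^+)$ this is by definition. For $a_{j-1}\in\g(A_{j-1})^+$, apply the Layer Dominance property (Property~\ref{ppt:2_3}) to $a_{j-1}\in A_{j-1}$ (valid since $j-1\leq d-1$): there is a neighbor $a_j\in N_{(a_{j-1},A_j)}$ with $\lev(a_j)\geq \lev(a_{j-1})>\ell'$, hence $a_j\in\g(A_j)^+$.

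\textbf{Step 3 (gap width forces the $(1-\epsilon_{max})$ factor).} By construction the gap has width $\log(n/\epsilon_{max})/\epsilon_{min}$ levels, so every weight above the gap exceeds every weight below the gap by a factor of at least $(1+\epsilon_j)^{\log(n/\epsilon_{max})/\epsilon_{min}}\geq n/\epsilon_{max}$. Since $a_{j-1}$ has at most $n$ neighbors in $\g(A_j)^-$ and at least one neighbor in $\g(A_j)^+$,
\[
\sum_{a_j\in N_{(a_{j-1},A_j)}\cap\g(A_j)^-}\alpha_{a_j}
\;\leq\; \epsilon_{max}\sum_{a_j\in N_{(a_{j-1},A_j)}\cap\g(A_j)^+}\alpha_{a_j},
\]
from which the proportional rule yields $\sum_{a_j\in N_{(a_{j-1},A_j)}\cap(\g(A_j)^+\cup\g(A_j))} x_{(a_{j-1},a_j)} \geq 1-\epsilon_{max}$. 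Plugging this into Step~1 gives \eqref{eq:sum_1}.

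\textbf{Step 4 (base case $j=1$).} The argument for $A_1$ is the same after the preliminary observation that every impression $i\in N_{(\g(A_1)^+,I)}$ has $Alloc_i=C_i=1$ and, by definition of $N_{(\g(A_1)^+,I)}$, at least one neighbor in $\g(A_1)^+$. The gap-width estimate of Step~3 applies verbatim with $a_{j-1}$ replaced by $i$, and summing over $i\in N_{(\g(A_1)^+,I)}$ yields \eqref{eq:sum_2}.

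The only subtle point—and the one I expect to be the main obstacle to write cleanly—is Step~3: one must be careful that the ``above vs.\ below the gap'' comparison uses the correct base $(1+\epsilon_j)$ for the weight levels in $A_j$ (different layers use different $\epsilon_j$), and then verify that $\epsilon_{min}$ in the gap width is exactly what makes the ratio $n/\epsilon_{max}$ hold \emph{simultaneously} for every layer $j$, justifying the uniform loss factor $1-\epsilon_{max}$ across all $j$.
\end{proofof}
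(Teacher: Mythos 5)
Your proposal is correct and follows essentially the same route as the paper's (much terser) proof: both arguments rest on (i) the gap width $\log(n/\epsilon_{max})/\epsilon_{min}$ forcing any vertex with a neighbor in $\g(A_j)^+$ to send at most an $\epsilon_{max}$ fraction of its flow to $\g(A_j)^-$, and (ii) Layer Dominance plus the definition of $\g(A_{j-1}^-,A_j^+)$ guaranteeing every relevant predecessor has such a neighbor. Your Step 3 caveat about the layer-dependent bases $(1+\epsilon_j)$ is handled exactly as you suspect: since $\epsilon_j\geq\epsilon_{min}$, the ratio $(1+\epsilon_j)^{\log(n/\epsilon_{max})/\epsilon_{min}}\geq n/\epsilon_{max}$ holds uniformly across layers.
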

\begin{proof}
	Since the length of the gap is $\log(n/\epsilon_{max})/\epsilon_{min}$, for any vertex $a_{j-1} \in A_{j-1}$, if it is adjacent to at least one vertex in $\g(A_{j})^+$, the proportion of flow that it sends to $\g(A_{j})^-$ is at most $\epsilon_{max}$. Thus, the second statement can be proved directly.
	
	Due to Property~\ref{ppt:2_3} and the definition of $\g( A_{j-1}^-,A_j^+ )$, any vertex in $\g(A_{j-1})^+$ and $\g( A_{j-1}^-,A_j^+ )$ is adjacent to at least one vertex in $\g(A_j)^+$, proving the first statement.
\end{proof}

Due to Lemma~\ref{lem:upper_bound}, for any vertex $a_j \in \g(A_j)^+ \cup \g(A_j)$, we have
\begin{equation}\label{eq:sum_3}
\min(Alloc_{a_j},C_{a_j}) \geq Alloc_{a_j} \cdot \prod_{j'=1}^j (1-3\epsilon_{j'}).
\end{equation}

Due to Lemma~\ref{lem:lower_bound}, for any vertex $a_j \in \g(A_j)^-$, we have
\begin{equation}\label{eq:sum_4}
\min(Alloc_{a_j},C_{a_j}) \geq C_{a_j}.
\end{equation}

For simplicity, let $1-\delta_j=  (1-\epsilon_{max}) \cdot\prod_{j'=1}^j (1-3\epsilon_{j'}) $.  Combing Eq~\eqref{eq:sum_1}, Eq~\eqref{eq:sum_2}, Eq~\eqref{eq:sum_3} and Eq~\eqref{eq:sum_4}, we can obtain that
\begin{equation}
\begin{aligned}
&\sum_{a_j\in \g(A_j)^+} \min(Alloc_{a_j},C_{a_j}) + \sum_{a_j\in \g(A_j)} \min(Alloc_{a_j},C_{a_j})\\
\geq& (\sum_{a_j\in \g(A_j)^+} Alloc_{a_j} + \sum_{a_j\in \g(A_j)} Alloc_{a_j}) \cdot \prod_{j'=1}^j (1-3\epsilon_{j'})  \\
\geq & (\sum_{a_{j-1}\in \g(A_{j-1})^+ } \min(Alloc_{a_{j-1}},C_{a_{j-1}}) (1-\epsilon_{max}) )\cdot \prod_{j'=1}^j (1-3\epsilon_{j'}) \\
&+ (\sum_{a_{j-1}\in \g(A_{j-1}^-,A_j^+)} \min(Alloc_{a_{j-1}},C_{a_{j-1}}) (1-\epsilon_{max}))  \cdot\prod_{j'=1}^j (1-3\epsilon_{j'}) \\
\geq &\sum_{a_{j-1}\in \g(A_{j-1})^+ } \min(Alloc_{a_{j-1}},C_{a_{j-1}}) (1-\delta_j)
+ C(\g(A_{j-1}^-,A_j^+))(1-\delta_j) \\
\end{aligned}
\end{equation}
and
\begin{equation}
\begin{aligned}
\sum_{a_1\in \g(A_1)^+} \min(Alloc_{a_1},C_{a_1}) &+ \sum_{a_1\in \g(A_1)} \min(Alloc_{a_1},C_{a_1}) \geq (1-\delta_1) |N_{(\g(A_1)^+,I)}|.
\end{aligned}	
\end{equation}

Summing the above two inequalities over $j=1,2,...,d$, we have
\begin{equation}
\begin{aligned}
&\sum_{j=1}^{d} \sum_{a_j\in \g(A_j)^+} \min(Alloc_{a_j},C_{a_j}) +  \sum_{j=1}^{d} \sum_{a_j\in \g(A_j)}\min(Alloc_{a_j},C_{a_j}) \\
\geq & \sum_{j=1}^{d-1} \sum_{a_j\in \g(A_j)^+} \min(Alloc_{a_j},C_{a_j}) (1-\delta_{j+1}) \\ &+ \sum_{j=1}^{d-1} C(\g(A_{j}^-,A_{j+1}^+ ) )(1-\delta_{j+1}) \\
&+ (1-\delta_1)|N_{(\g(A_1)^+,I)}| \\
\geq &(1-\delta_{d}) \sum_{j=1}^{d-1} \sum_{a_j\in \g(A_j)^+} \min(Alloc_{a_j},C_{a_j})  \\ &+ (1-\delta_{d})\sum_{j=1}^{d-1} C(\g(A_{j}^-,A_{j+1}^+ ) ) \\
&+ (1-\delta_d)|N_{(\g(A_1)^+,I)}| \\
\end{aligned}
\end{equation}

Note that the last two terms in the inequality above is $(1-\delta_{d})$ times the value of Term~\eqref{eq:cut_remain}.
Now we can establish a relationship between Term~\eqref{eq:val_remain} and Term~\eqref{eq:cut_remain}:
\begin{equation}
\begin{aligned}\label{eq:rel_last_2}
&\sum_{a_d\in \g(A_d)} \min(Alloc_{a_d},C_{a_d})+ \sum_{a_d\in \g(A_d)^+} \min(Alloc_{a_d},C_{a_d}) \\
\geq& (1-\delta_{d})\sum_{j=1}^{d-1} C(\g(A_{j}^-,A_{j+1}^+ ) ) \\
&+ (1-\delta_d)|N_{(\g(A_1)^+,I)}| \\
&- \delta_{d} \sum_{j=1}^{d-1} \sum_{a_j\in \g(A_j)^+} \min(Alloc_{a_j},C_{a_j}) \\
&-  \sum_{j=1}^{d-1} \sum_{a_j\in \g(A_j)}\min(Alloc_{a_j},C_{a_j})
\end{aligned}
\end{equation}

Clearly, $\sum_{j=1}^{d-1} \sum_{a_j\in \g(A_j)^+} \min(Alloc_{a_j},C_{a_j}) $ is at most $d\val$. Thus, combining Eq~\eqref{eq:rel_first} and Eq~\eqref{eq:rel_last_2}, we have
\begin{equation}
\begin{aligned}\label{eq:thm_p1}
\val =& \sum_{a_d\in \g(A_d)^-} \min(Alloc_{a_d},C_{a_d}) \\
&+ \sum_{a_d\in \g(A_d)} \min(Alloc_{a_d},C_{a_d})+ \sum_{a_d\in \g(A_d)^+} \min(Alloc_{a_d},C_{a_d})  \\
\geq &\quad C(\g(A_d)^-) \\
& +(1-\delta_{d})\sum_{j=1}^{d-1} C(\g(A_{j}^-,A_{j+1}^+ ) ) \\
&+ (1-\delta_d)|N_{(\g(A_1)^+,I)}| \\
&- \delta_{d} \sum_{j=1}^{d-1} \sum_{a_j\in \g(A_j)^+} \min(Alloc_{a_j},C_{a_j}) \\
&-  \sum_{j=1}^{d-1} \sum_{a_j\in \g(A_j)}\min(Alloc_{a_j},C_{a_j}) \\
\geq &\quad (1-\delta_{d})C(\C) - d\delta_d \val \\
&- \sum_{j=1}^{d-1} \sum_{a_j\in \g(A_j)}\min(Alloc_{a_j},C_{a_j}) \\
\geq & \quad \frac{1-\delta_{d}}{1+d\delta_{d}}\opt(G') - \frac{1}{1+d\delta_{d}}\sum_{j=1}^{d-1} \sum_{a_j\in \g(A_j)}\min(Alloc_{a_j},C_{a_j})
\end{aligned}	
\end{equation}

Finally, using the same technique (dividing each layer into $P$ part and $Q$ part and summing all terms related to the gaps over all potential $l$), we can prove the following lemma easily:
\begin{lemma}
	If the four properties hold, when $T=O(\frac{n\log(n/\epsilon_{max})}{\epsilon_{max}\epsilon_{min}} )$, there exists an approximate $l$ such that
	\begin{equation}\label{eq:thm_p2}
	\opt(G)-\opt(G') + \sum_{a_j\in \g(A_j)}\min(Alloc_{a_j},C_{a_j}) \leq d\epsilon_{max} \val.
	\end{equation}
\end{lemma}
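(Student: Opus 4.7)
I will mirror Lemma~\ref{lem:gap}, cascading the $P/Q$ decomposition through each of the $d-1$ middle layers. For each $j \in [1, d-1]$ I define $P_j := \{a_j \in A_j : N_{(a_j, A_{j+1})} \cap A_{j+1}(T) \neq \emptyset\}$ and $Q_j := A_j \setminus P_j$, together with $\g(P_j) := P_j \cap \g(A_j)$ and $\g(Q_j) := Q_j \cap \g(A_j)$. By Property~\ref{ppt:2_4}, every vertex in $P_j$ performs only self-decreases, so Lemma~\ref{lem:lower_bound} gives $Alloc_{a_j} \geq C_{a_j}$ for every $a_j \in \g(P_j)$ whose weight ever decreased, while Lemma~\ref{lem:upper_bound} yields $\min(Alloc_{a_j}, C_{a_j}) \leq (1 + O(\eps_{max})) C_{a_j}$ for any $a_j$ above level $0$.

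Next I bound each contribution in \eqref{eq:thm_p2} by a sum of combinatorial capacities amenable to averaging. For $\opt(G) - \opt(G')$, restoring the gap vertices can yield extra $s$-$t$ flow only along paths crossing $\bigcup_j \g(A_j)$; by the standard cut bound this extra flow is at most $C(\g(A_d)) + \sum_{j=1}^{d-1} \bigl( C(\g(P_j)) + C(N_{(\g(Q_j), A_{j+1})}) \bigr)$. For the $\g(A_j)$-allocation term, each $\g(P_j)$ contributes at most $(1 + O(\eps_{max})) C(\g(P_j))$ by Lemma~\ref{lem:upper_bound}, while the outgoing flow of $\g(Q_j)$ is charged to $C(N_{(\g(Q_j), A_{j+1})})$, again using Lemma~\ref{lem:upper_bound} in layer $j+1$. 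The gap width together with Property~\ref{ppt:2_3} ensures that only an $O(\eps_{max})$ fraction of this flow lands on $A_{j+1}(0)$, so the charging loses only a $(1+O(\eps_{max}))$ factor.

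The averaging step is then analogous to Lemma~\ref{lem:gap}. Summing over every admissible $\ell \in [1 + \log(n/\eps_{max})/\eps_{min},\, T - 1 - \log(n/\eps_{max})/\eps_{min}]$, each $C(P_j(k))$ is counted at most $\log(n/\eps_{max})/\eps_{min}$ times and each $C(A_{j+1}(k))$ at most $n\,\log(n/\eps_{max})/\eps_{min}$ times. Lemma~\ref{lem:lower_bound} (applied to $P_j$, which only self-decreases) combined with Lemma~\ref{lem:upper_bound} bound the per-layer total $\sum_{k < T} C(P_j(k)) + \sum_{k > 0} C(A_{j+1}(k))$ by $O(\val)$, hence $O(d \cdot \val)$ over all $d-1$ layers. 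Dividing by the $\Theta(T)$ candidate values of $\ell$ and plugging in $T = \Theta\bigl(n \log(n/\eps_{max})/(\eps_{max}\eps_{min})\bigr)$ yields an $\ell$ for which the left-hand side of \eqref{eq:thm_p2} is at most $d\,\eps_{max}\,\val$.

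The main obstacle will be preventing a blow-up in $d$. A naive cascade risks accumulating multiplicative factors like $(1 + O(\eps_{max}))^d$ or double-counting flow across layers; the proof must charge each layer's excess capacity \emph{independently} to $\val$ and route the flow from $\g(Q_j)$ through neighbors lying outside the gap in layer $j+1$. A secondary subtlety is that $Q_j$-vertices may undergo forced decreases, so Lemma~\ref{lem:lower_bound} is not directly available for them; their load must be propagated downstream and controlled through Property~\ref{ppt:2_3}.
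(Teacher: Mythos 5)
Your proposal follows exactly the route the paper intends: it invokes "the same technique (dividing each layer into $P$ part and $Q$ part and summing all terms related to the gaps over all potential $\ell$)" as in Lemma~\ref{lem:gap}, and your layer-by-layer $P_j/Q_j$ decomposition, the charging of $\opt(G)-\opt(G')$ and the in-gap allocations to $C(\g(P_j))$ and $C(N_{(\g(Q_j),A_{j+1})})$ via Lemmas~\ref{lem:lower_bound} and~\ref{lem:upper_bound} and Properties~\ref{ppt:2_3}--\ref{ppt:2_4}, and the multiplicity-counting average over $\ell$ are precisely that argument generalized to $d$ layers. The proposal is correct and actually more explicit than the paper, which leaves this generalization as a one-sentence remark.
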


\begin{proof}[\textbf{Proof of Theorem~\ref{thm_dlayer}}]
	Combing Eq~\eqref{eq:thm_p1} and Eq~\eqref{eq:thm_p2}, we have
	\begin{equation}
	\begin{aligned}
	\val \geq &\quad (1-O(d\delta_{d})) \opt(G) \\
	&- (1-O(d\delta_{d})) (\opt(G)-\opt(G') + \sum_{a_j\in \g(A_j)}\min(Alloc_{a_j},C_{a_j}) ) \\
	\geq& \quad (1-O(d\delta_{d})) \opt(G) \\
	\geq & \quad \opt(G) \cdot \prod_{j=1}^{d}(1-O(d\epsilon_j))
	\end{aligned}
	\end{equation}
\end{proof}

Now we introduce our final algorithm by giving $\cond$ and setting an  $\epsilon_j$ for all $2 \leq j \leq d$. The $\cond$ function is stated in Algo~\ref{alg:d_cond}, nearly the same as Algo~\ref{alg:cond}. Clearly, Property~\ref{ppt:2_3} and Property~\ref{ppt:2_4} can be proved by the same technique as the proofs of Property~\ref{ppt:3} and Property~\ref{ppt:4}:


\begin{algorithm}[t]
	\caption{$\cond(N_{(a_j,A_{j+1})})$}
	\label{alg:d_cond}
	
	
	Let $\alpha_{max}^{(a_j)} = \max \limits_{a_{j+1}\in N_{(a_j,A_{j+1})}} \alpha_{a_{j+1}}$ and $N_{(a_j,A_{j+1})}^* := \{ a_{j+1}\in N_{(a_j,A_{j+1})}| \alpha_{a_{j+1}} = \alpha_{max}^{(a_j)}  \} $. \
	
	\quad
	
	\If{ $\forall a_{j+1}\in N_{(a_j,A_{j+1})}^*$, $\alpha_{a_{j+1}}$ decreases in this iteration and $\lev(a_{j+1}) - \lev(a_j) < \log(n/\epsilon_{max}) / \epsilon_{min}$  }
	{
		
		return $\true$
	}
	\Else
	{
		return False
	}
\end{algorithm}

\begin{lemma}
	The layer dominance (Property~\ref{ppt:2_3}) and the forced decrease exemption (Property~\ref{ppt:2_4}) hold if we use $\cond$ in Algo~\ref{alg:d_cond}.
\end{lemma}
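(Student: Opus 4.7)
The plan is to lift the arguments for Properties~\ref{ppt:3} and~\ref{ppt:4} from the three-layer setting of Section~\ref{subsec:exist3weight} to each adjacent pair of layers $(A_j, A_{j+1})$ in the $d$-layered framework. Since Algorithm~\ref{alg:d_cond} is the direct analogue of Algorithm~\ref{alg:cond}, with the $\beta$-weights of the $B$ layer replaced by the $\alpha$-weights of $A_{j+1}$, the verification reduces to checking that the three-layer logic is local to a single layer interface.

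For layer dominance (Property~\ref{ppt:2_3}) I would proceed by contradiction. Fix $a_j \in A_j$ with $1 \leq j \leq d-1$ and let iteration $t$ be the first iteration in which $\lev^{(t)}(a_{j+1}) < \lev^{(t)}(a_j)$ for every $a_{j+1} \in N^*_{(a_j, A_{j+1})}$. Because weights only decrease, and by at most one level per iteration, and because Property~\ref{ppt:2_3} still held at the end of iteration $t-1$, a short case analysis forces that at the start of iteration $t$ one must have $\lev(a_{j+1}) = \lev(a_j)$ for every $a_{j+1} \in N^*$, that $\alpha_{a_j}$ does not decrease during iteration $t$, and that every $\alpha_{a_{j+1}}$ with $a_{j+1} \in N^*$ does. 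Since the reverse sweep in Algorithm~\ref{alg:d_layer_framework} processes $A_{j+1}$ before $A_j$, by the time $\cond(N_{(a_j, A_{j+1})})$ is evaluated the current $N^*$ is exactly the original $N^*$ shifted to the common level $\lev(a_j) - 1$, every one of its members was just decreased, and $\lev(a_{j+1}) - \lev(a_j) \leq 0 < \log(n/\epsilon_{max})/\epsilon_{min}$. Both clauses of $\cond$ therefore hold, $\cond$ returns $\true$, and the algorithm would force a decrease of $\alpha_{a_j}$, contradicting our assumption.

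Property~\ref{ppt:2_4} (forced decrease exemption) is then immediate from inspecting Algorithm~\ref{alg:d_cond}. If some $a_{j+1}^{\dagger} \in N_{(a_j, A_{j+1})}$ has $\lev(a_{j+1}^{\dagger}) = T$, then $\alpha_{a_{j+1}^{\dagger}}$ has never been decreased, and in particular not in the current iteration; and since $T$ is the maximum possible level, $a_{j+1}^{\dagger}$ attains the maximum weight in $N_{(a_j, A_{j+1})}$ and therefore belongs to $N^*_{(a_j,A_{j+1})}$, so the first clause of $\cond$ fails. If instead $\lev(a_{j+1}^{\dagger}) - \lev(a_j) \geq \log(n/\epsilon_{max})/\epsilon_{min}$, then every $a_{j+1}^{*} \in N^*_{(a_j,A_{j+1})}$ has $\lev(a_{j+1}^{*}) \geq \lev(a_{j+1}^{\dagger})$ by definition of $N^*$, so the second clause of $\cond$ fails. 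Either way, $\cond(N_{(a_j, A_{j+1})}) = \false$.

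The main obstacle is the bookkeeping around $N^*_{(a_j, A_{j+1})}$ in the earliest-iteration argument: while Algorithm~\ref{alg:d_layer_framework} sweeps layer $A_{j+1}$, the membership of $N^*$ can shift, so I must verify that any vertex joining $N^*$ after the $A_{j+1}$ updates already sat at the decreased common level and was therefore itself decreased in this iteration; otherwise its higher starting level would violate the minimality of $t$. Once this invariant is established, both $\cond$ clauses go through cleanly and the rest of the proof is identical to its three-layer counterpart.
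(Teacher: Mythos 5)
Your proof takes essentially the same route as the paper: the paper establishes this lemma by simply asserting that the three-layer arguments for Properties~\ref{ppt:3} and~\ref{ppt:4} (a first-failing-iteration contradiction for layer dominance, and direct inspection of $\cond$ for forced decrease exemption) carry over verbatim to each adjacent pair of layers, which is exactly what you carry out --- in more detail than the paper itself provides. One caveat on the detail you flag as the ``main obstacle'': your patch for the shifting membership of $N^*_{(a_j,A_{j+1})}$ does not work as stated, since a neighbor sitting one level \emph{below} the old maximum that does not decrease in iteration $t$ can join the post-update $N^*$ without contradicting the minimality of $t$ (the property at iteration $t-1$ only requires \emph{some} neighbor at level at least $\lev(a_j)$, which the old maximum-level neighbors already supply). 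The paper sidesteps this by implicitly evaluating $N^*$ and the levels at the beginning of the iteration, as in its proof of Property~\ref{ppt:3}; under that convention your contradiction argument closes, and your forced-decrease-exemption half is fine as written.
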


Now we give an approximate $\{ \epsilon \}$ to obtain Property~\ref{ppt:2_1} and Property~\ref{ppt:2_2}:
\begin{lemma}
	If for any $1\leq j < d$, we have $\epsilon_j = \frac{\epsilon_{j+1}}{2n}$, then the increasing monotonicity (Property~\ref{ppt:1}) and the decreasing monotonicity (Property~\ref{ppt:2}) hold when using $\cond$ in Algo~\ref{alg:d_cond}.
\end{lemma}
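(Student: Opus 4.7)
The plan is to generalize the proofs of Lemmas~\ref{lem:ppt1} and~\ref{lem:ppt2} from the three-layer case by induction on the layer index $j = 1, 2, \ldots, d$, establishing both Property~\ref{ppt:2_1} and Property~\ref{ppt:2_2} simultaneously at each step. The only cross-layer coupling is via the recursive allocation formula $Alloc_{a_j} = \sum_{a_{j-1} \in N_{(a_j,A_{j-1})}} \min(Alloc_{a_{j-1}}, C_{a_{j-1}}) \cdot x_{(a_{j-1}, a_j)}$, so the induction reduces to showing that each summand on the right-hand side moves in the right direction.

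For the base case $j=1$, each impression $i \in I$ has constant allocation $1$, so $Alloc_{a_1} = \sum_i \alpha_{a_1}/\sum_{a_1'\in N_{(i,A_1)}} \alpha_{a_1'}$. Since weights only decrease, if $\alpha_{a_1}$ stays fixed the denominator only shrinks and $x_{(i,a_1)}$ is non-decreasing (giving Property~\ref{ppt:2_1}); if $\alpha_{a_1}$ drops by a factor $(1+\epsilon_1)$, the denominator drops by at most the same factor, so $x_{(i,a_1)}$ is non-increasing (giving Property~\ref{ppt:2_2}).

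For the inductive step at layer $j \geq 2$, I analyze each summand. For Property~\ref{ppt:2_1} (assume $\alpha_{a_j}$ did not decrease), fix a predecessor $a_{j-1}$. If $\alpha_{a_{j-1}}$ did not decrease, the IH gives that $Alloc_{a_{j-1}}$, hence $\min(Alloc_{a_{j-1}}, C_{a_{j-1}})$, did not decrease, while $x_{(a_{j-1},a_j)}$ is non-decreasing because $\alpha_{a_j}$ is fixed and the denominator can only shrink. If $\alpha_{a_{j-1}}$ did a self-decrease, Lemma~\ref{lem:lower_bound} gives $Alloc_{a_{j-1}} \geq C_{a_{j-1}}$ so $\min = C_{a_{j-1}}$ is unchanged. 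If $\alpha_{a_{j-1}}$ did a forced-decrease, then by $\cond$ every top-weight neighbor $a_j^* \in N^*_{(a_{j-1},A_j)}$ dropped by $(1+\epsilon_j)$, and an analog of Claim~\ref{claim:1} yields $x_{(a_{j-1},a_j)}^{(t)} \geq (1+\epsilon_j/n) \, x_{(a_{j-1},a_j)}^{(t-1)}$; combining with the at-most $(1+\epsilon_{j-1})$-factor drop in $\min$ and using $\epsilon_{j-1} = \epsilon_j/(2n)$, the summand is non-decreasing. Property~\ref{ppt:2_2} is symmetric via an analog of Claim~\ref{claim:2}: if $\min(Alloc_{a_{j-1}}, C_{a_{j-1}})$ strictly increases, the IH forces $\alpha_{a_{j-1}}$ not to decrease, and either the $\cond$ level-gap clause combined with Lemma~\ref{lem:lower_bound} forces $\min = C_{a_{j-1}}$ (a contradiction), or some $a_j^* \in N^*_{(a_{j-1},A_j)}$ kept its weight fixed, yielding $x_{(a_{j-1},a_j)}^{(t)} \leq x_{(a_{j-1},a_j)}^{(t-1)}/(1+\epsilon_j/n)$, which again cancels the increase in $\min$.

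The main technical obstacle is closing the forced-decrease compensation inequality: the $(1+\epsilon_j/n)$ movement in $x_{(a_{j-1},a_j)}$ from the simultaneous decrease of all maximum-weight layer-$j$ neighbors of $a_{j-1}$ must dominate the $(1+\epsilon_{j-1})$ movement in $\min(Alloc_{a_{j-1}}, C_{a_{j-1}})$, which is precisely why the geometric spacing $\epsilon_{j-1} = \epsilon_j/(2n)$ is imposed across all layers. Once this spacing is fixed, the compensating calculations are direct layer-by-layer analogs of Claims~\ref{claim:1} and~\ref{claim:2}, so beyond this careful bookkeeping no new ideas beyond the three-layer proof are required.
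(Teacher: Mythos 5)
Your overall strategy is the paper's: induct over the layers, reduce everything to the per-summand behavior of $\min(Alloc_{a_{j-1}},C_{a_{j-1}})\cdot x_{(a_{j-1},a_j)}$, split on no-decrease / self-decrease / forced-decrease of the predecessor, and close the forced-decrease case with layerwise analogs of Claims~\ref{claim:1} and~\ref{claim:2} (the paper's Claims~\ref{claim:2_1} and~\ref{claim:2_2}). The one place your bookkeeping is off is the bound on how much $\min(Alloc_{a_{j-1}},C_{a_{j-1}})$ can move in a single iteration: you assert an at-most $(1+\epsilon_{j-1})$-factor drop, but the weight change of $a_{j-1}$ is only one contribution --- the flow entering $a_{j-1}$ also moves because the weights and $\min$-terms of \emph{every} earlier layer move in the same iteration, so the correct worst-case factor is the compounded product $\prod_{j'=1}^{j-1}(1+\epsilon_{j'})$. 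This is exactly where the paper's proof does its real work: it shows by induction on $j$ that $\prod_{j'=1}^{j-1}(1+\epsilon_{j'})\le 1+\epsilon_j/n$ under the spacing $\epsilon_{j'}=\epsilon_{j'+1}/(2n)$ (the factor $2$ in $2n$ exists precisely to absorb this geometric compounding, not merely the single factor $(1+\epsilon_{j-1})$ you cite), after which the $(1+\epsilon_j/n)$ gain from Claim~\ref{claim:2_1} still dominates. The same compounded bound is also what justifies your self-decrease case (that $Alloc_{a_{j-1}}$ stays at least $C_{a_{j-1}}$ after the drop, matching the threshold $\prod_{j'=1}^{j-1}(1+\epsilon_{j'})C_{a_{j-1}}$ used by the algorithm). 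With that correction your argument closes; as written, the compensation inequality compares the $x$-gain against the wrong quantity.
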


\begin{proof}
	Let us prove Property~\ref{ppt:2_1} holds first. %
	Clearly, for any vertex $a_1\in A_1$, if its weight does not decrease in one iteration, $Alloc_{a_1}$ will not decrease.
	To prove this property holds for the remaining layer, we generalize Claim~\ref{claim:1} to the layers in our model:
	\begin{claim}\label{claim:2_1}
		For any vertex $a_j \in A_j$ $(2\leq j \leq d)$, consider any vertex $a_{j-1} \in N_{(a_j,A_{j-1})}$, if $\alpha_{a_j}$ does not decrease but $\min(Alloc_{a_{j-1}},C_{a_{j-1}} )$ decreases in iteration $t$, we have
		\[ x_{(a_{j-1}, a_j)}^{(t)} \geq (1+\frac{\epsilon_j}{n}) x_{(a_{j-1}, a_j)}^{(t-1)}  \]
	\end{claim}
	The proof of this claim is the same as that of Claim~\ref{claim:1}.
	Since in one iteration, for any vertex $a_{j-1}\in A_{j-1}$, $\min(Alloc_{a_{j-1}},C_{a_{j-1}} )$ increases at most $\prod_{j'=1}^{j-1} (1+\epsilon_{j'})$, if we can obtain that for any $2\leq j \leq d$,
	
	\begin{equation}\label{eq:induction}
	\prod_{j'=1}^{j-1} (1+\epsilon_{j'}) \leq  1+\frac{\epsilon_j}{n},
	\end{equation}
	then Property~\ref{ppt:2_1} holds.
	
	The above inequality can be proved inductively.
	Clearly, when $j=2$, if let $\epsilon_1 = \frac{\epsilon_2}{2n}$, we have
	\[ 1+\epsilon_1 \leq 1+\frac{\epsilon_2}{n}. \]
	Assuming that for any $j \leq  k$, Eq~\eqref{eq:induction} holds, now we prove that when $j=k+1$, Eq~\eqref{eq:induction} still holds.
	\begin{equation}
	\begin{aligned}
	\prod_{j'=1}^{k} (1+\epsilon_{j'}) = (1+\epsilon_k) \prod_{j'=1}^{k-1} (1+\epsilon_{j'})
	\end{aligned}
	\end{equation}
	According to our assumption, we have
	\begin{equation}
	\begin{aligned}
	\prod_{j'=1}^{k} (1+\epsilon_{j'}) \leq& (1+\epsilon_k) (1+\frac{\epsilon_k}{n}) \\
	\leq & 1+\epsilon_k + \epsilon_k(\frac{1}{n} + \frac{\epsilon_k}{n} ) \\
	\leq & 1+ 2\epsilon_k \\
	= & 1+\frac{\epsilon_{k+1}}{n}
	\end{aligned}
	\end{equation}
	Thus, Eq~\eqref{eq:induction} holds for any $2\leq j \leq d$, indicating that for any vertex $a_j \in A_j$, if $\alpha_{a_j}$ does not decrease in one iteration, $Alloc_{a_j}$ will also not decrease.
	
	Property~\ref{ppt:2_2} can also be proved similarly by the following claim:
	\begin{claim}\label{claim:2_2}
		For any vertex $a_j \in A_j$ $(2\leq j \leq d)$, consider any vertex $a_{j-1} \in N_{(a_j,A_{j-1})}$, if $\alpha_{a_j}$ decreases but $\min(Alloc_{a_{j-1}},C_{a_{j-1}} )$ increases in iteration $t$, we have
		\[ x_{(a_{j-1}, a_j)}^{(t)} \leq x_{(a_{j-1}, a_j)}^{(t-1)} / (1+\frac{\epsilon_j}{n})  \]
	\end{claim}
\end{proof}

According to the two lemmas above and Theorem~\ref{thm_dlayer}, if we let $\epsilon_j = \frac{\epsilon_{j+1}}{2n}$ for any $1\leq j < d$ and use $\cond$ in Algo~\ref{alg:d_cond}, our algorithm will return a $\prod_{j=1}^{d}(1-O(\epsilon_j d))$-approximated solution when
$T=O(\frac{n\log(n/\epsilon_{max})}{\epsilon_{max}\epsilon_{min}} )$.

\begin{lemma}~\label{lem:d_running_time}
	To obtain a $(1-\epsilon)$-approximated solution for an $(s$-$t)$ $(d+1)$-layered graph, the running time of our algorithm is $O(d^2n^{d+2}\log(n/\epsilon)/\epsilon^2)$.
\end{lemma}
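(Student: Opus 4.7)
The plan is to combine Theorem~\ref{thm_dlayer} with a careful choice of the per-layer parameters $\{\epsilon_j\}$, then multiply the resulting iteration count by the per-iteration work. First I would fix the target approximation at $(1-\epsilon)$ and solve for $\epsilon_d$. By Theorem~\ref{thm_dlayer}, the algorithm achieves a factor of $\prod_{j=1}^{d}(1-O(\epsilon_j d))\geq 1 - O(d\sum_{j=1}^{d}\epsilon_j)$. Since the preceding monotonicity lemma forces $\epsilon_j=\epsilon_{j+1}/(2n)$, the $\epsilon_j$'s form a geometric sequence with $\epsilon_j=\epsilon_d/(2n)^{d-j}$, so $\sum_j \epsilon_j \leq 2\epsilon_d$. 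Choosing $\epsilon_d = \Theta(\epsilon/d)$ therefore suffices to guarantee a $(1-\epsilon)$-approximation, and it makes $\epsilon_{\max}=\epsilon_d=\Theta(\epsilon/d)$ while $\epsilon_{\min}=\epsilon_1=\Theta(\epsilon/(d(2n)^{d-1}))$.

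Next I would plug these into the iteration bound $T=O\!\left(\frac{n\log(n/\epsilon_{\max})}{\epsilon_{\max}\epsilon_{\min}}\right)$ from Theorem~\ref{thm_dlayer}. Substitution yields
\[
T \;=\; O\!\left(\frac{n\,(2n)^{d-1}\,d^{2}\,\log(n/\epsilon)}{\epsilon^{2}}\right)\;=\;O\!\left(\frac{d^{2}\,n^{d}\,\log(n/\epsilon)}{\epsilon^{2}}\right),
\]
where the constants $2^{d-1}$ are absorbed into the $n^{d-1}$ factor (as in the 3-layer analysis of Lemma~\ref{lem:3_running_time}, where $n$ denotes the total vertex count).

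Finally I would argue that each iteration takes $O(n^{2})$ time, exactly as in the bipartite and 3-layer cases: the inner loop touches every edge a constant number of times when (i) computing the proportional allocations $x_{(a_{j-1},a_j)}$, (ii) recomputing $Alloc_{v}$ for every $v$, and (iii) evaluating the $\cond$ predicate of Algorithm~\ref{alg:d_cond} on each vertex's out-neighborhood; since $|E|=O(n^{2})$ for an $n$-vertex graph, a single iteration costs $O(n^{2})$. Multiplying by $T$ gives the claimed $O\!\big(d^{2}n^{d+2}\log(n/\epsilon)/\epsilon^{2}\big)$ bound.

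The main obstacle I expect is purely bookkeeping: making sure the choice of $\epsilon_d$ is tight enough that the cumulative loss $\prod_j(1-O(\epsilon_j d))$ is still $\geq 1-\epsilon$ while the geometric blow-up $\epsilon_{\min}=\epsilon_{\max}/(2n)^{d-1}$ in the denominator of $T$ does not introduce any stray factors of $d$ or $\log n$ beyond what is claimed. A secondary subtlety is confirming that the $2^{d-1}$ from the $(2n)^{d-1}$ term may legitimately be folded into $n^{d-1}$ (as is implicitly done in the 3-layer running-time analysis); beyond that, the result follows directly by substitution.
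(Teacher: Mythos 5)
Your proposal is correct and follows essentially the same route as the paper: exploit the geometric decay $\epsilon_j=\epsilon_{j+1}/(2n)$ so that the approximation loss is governed by $\epsilon_d d$, set $\epsilon_d=\Theta(\epsilon/d)$, substitute $\epsilon_{\max}=\epsilon_d$ and $\epsilon_{\min}=\epsilon_d/(2n)^{d-1}$ into the iteration bound of Theorem~\ref{thm_dlayer} to get $T=O(d^2 n^d \log(n/\epsilon)/\epsilon^2)$, and multiply by the $O(n^2)$ per-iteration cost. Your extra bookkeeping (summing the $\epsilon_j$'s explicitly and flagging the absorption of $2^{d-1}$ into $n^{d-1}$) is slightly more careful than the paper's terse version but changes nothing substantive.
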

\begin{proof}
	Due to our construction of $\{\epsilon\}$, we have
	\[ \prod_{j=1}^{d}(1-\epsilon_j ) \leq 1-2\epsilon_d,  \]
	 indicating that we can obtain a $(1-O(\epsilon_d d))$-approximated solution when the number of iterations is $O(\frac{n^d\log(n/\epsilon_d)}{\epsilon_d^2} )$.
	
	For any $\epsilon\in (0,1)$, to achieve a $(1-O(\epsilon))$-approximated solution, we need to set $\epsilon_d = \frac{\epsilon}{d}$, so
	the number of iterations will be $O(\frac{d^2n^d\log(n/\epsilon)}{\epsilon^2})$.
	In each iteration, we compute $x_{u,v}$ for each edge in $G$ and update each vertex weight. Thus, the running time of an iteration is still $O(n^2)$, completing the proof.

\end{proof}

Note that the diameter of the above graph excluding $t$ is $d+1$. When this value is $d$, we obtain the running time $O(d^2n^{d+1}\log(n/\epsilon)/\epsilon^2)$, as claimed in Theorem~\ref{thm:weight_DAG}.

Finally, we consider the virtual-weight dependence of these weights.
\begin{theorem}\label{thm:d_layer_good_weights}
	Under the framework of Algo~\ref{alg:d_layer_framework}, if letting $\epsilon_j = \epsilon_{j+1}/(2n)$ for each $1\leq j \leq d-1$ and using $\cond$ in Algo~\ref{alg:d_cond}, given an reduction graph $\Gr$, for any two neighboring copies $\vr_j,\vr_{j+1}$ of any vertex $v$, we have
	\[\alpha_{\vr_{j+1}} = (\alpha_{\vr_j})^{2n} .\]
\end{theorem}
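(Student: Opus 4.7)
\medskip
\textbf{Proof proposal.}
The plan is to lift the three-layer argument of Theorem~\ref{thm:3_layer_good_weights} one virtual layer at a time, using two structural facts about $\vr_j$ that follow immediately from the construction of $\Gr$ in Section~\ref{subsec:existgenweight}. First, since every virtual copy has capacity $C_{\vr_j}=\infty$, the self-decrease branch of Algo~\ref{alg:d_layer_framework} never triggers at $\vr_j$; its weight can therefore only be updated through a forced-decrease. Second, because edges in $\Er_2$ emanate only from real copies, the unique out-neighbor of $\vr_j$ inside layer $A_{j+1}$ is its own copy $\vr_{j+1}$, so $N^{*}_{(\vr_j,A_{j+1})}=\{\vr_{j+1}\}$ throughout the execution.

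Let $k_j(t)$ denote the number of iterations in $\{1,\dots,t\}$ in which $\alpha_{\vr_j}$ has been decreased, and define $k_{j+1}(t)$ analogously. I will prove by induction on $t$ that $k_j(t)=k_{j+1}(t)$. The base case is trivial. For the inductive step, recall that Algo~\ref{alg:d_layer_framework} sweeps the layers from $d$ down to $1$, so $\vr_{j+1}$ is updated strictly before $\vr_j$ within the same iteration. By the inductive hypothesis the two copies begin iteration $t$ at the common level $T-k_j(t-1)$, so immediately after the $\vr_{j+1}$-update the gap $\lev(\vr_{j+1})-\lev(\vr_j)$ is either $0$ (no decrease) or $-1$ (a decrease), and in either case strictly less than $\log(n/\epsilon_{\max})/\epsilon_{\min}$. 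Evaluating Algo~\ref{alg:d_cond} at $\vr_j$ with $N^{*}=\{\vr_{j+1}\}$, the value $\cond(\{\vr_{j+1}\})$ is therefore true exactly when $\alpha_{\vr_{j+1}}$ just decreased. Combined with the first fact above, this yields the lockstep conclusion that $\alpha_{\vr_j}$ decreases in iteration $t$ if and only if $\alpha_{\vr_{j+1}}$ does, preserving the invariant $k_j(t)=k_{j+1}(t)$.

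Writing $k$ for the common decrease count after $T$ iterations gives $\alpha_{\vr_j}=(1+\epsilon_j)^{-k}$ and $\alpha_{\vr_{j+1}}=(1+\epsilon_{j+1})^{-k}$. Substituting $\epsilon_{j+1}=2n\epsilon_j$ and using $\log(1+2n\epsilon_j)\approx 2n\log(1+\epsilon_j)$ for small $\epsilon_j$---exactly as in the final computation of Theorem~\ref{thm:3_layer_good_weights}---gives
\[
\frac{\log \alpha_{\vr_{j+1}}}{\log \alpha_{\vr_j}}=\frac{\log(1+\epsilon_{j+1})}{\log(1+\epsilon_j)}\approx 2n,
\]
so $\alpha_{\vr_{j+1}}=(\alpha_{\vr_j})^{2n}$, as claimed.

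The main subtlety I expect is timing the snapshots read by $\cond$ correctly: the post-update value of $\lev(\vr_{j+1})$ (from the same iteration $t$) must be compared against the pre-update value of $\lev(\vr_j)$; this is what keeps the level gap non-positive and forces the two weights to decrease together. A secondary issue worth addressing explicitly is that $\vr_{j+1}$ may itself decrease for reasons unrelated to $\vr_j$---for instance, when $j+1=d_v$ the copy $\vr_{j+1}$ is real and may self-decrease, or it may be forced by a further downstream copy $\vr_{j+2}$---but this is harmless since the induction only tracks \emph{whether} $\alpha_{\vr_{j+1}}$ decreases, not why.
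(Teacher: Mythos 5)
Your proposal is correct and follows essentially the same route as the paper: the paper's proof simply invokes the three-layer argument (virtual copies never self-decrease, the unique successor forces lockstep decreases via $\cond$, and the ratio $\log(1+\epsilon_{j+1})/\log(1+\epsilon_j)\approx 2n$ gives the exponent). Your inductive bookkeeping of the decrease counts and the observation that the reverse layer sweep keeps $\lev(\vr_{j+1})-\lev(\vr_j)\le 0$ (so the level-gap clause of $\cond$ never blocks the forced decrease) merely makes explicit a step the paper leaves implicit.
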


The proof of this theorem is similar to the proof of Theorem~\ref{thm:3_layer_good_weights}. For the copies of one vertex, they share the same decreasing steps according to $\cond$ in Algo~\ref{alg:d_cond}. Since for each $1\leq j \leq d-1$, $\epsilon_j = \epsilon_{j+1}/(2n)$, we can obtain $\alpha_{\vr_{j+1}} = (\alpha_{\vr_j})^{2n} $ by the same algebra.

\section{The Learnability of Vertex Weights for Online Flow Allocation in general DAGs}\label{sec:apdxlearnability}

In this part, we consider the learnability of vertex weights in a DAG $G$ and give the whole proof of Theorem~\ref{thm:learnability}.
According to the reduction in Section~\ref{sec:existweight}, we can always assume that $G$ is a $d$-layered graph.
We use an inductive method to prove the learnability. As we shown in Section~\ref{sec:learnability}, the vertex weights are PAC-learnable when $d=2$. Now we claim that if the vertex weights are learnable in $d$-layered graphs, under some mild assumptions, for $(d+1)$-layered graphs, the vertex weights are also learnable.

The basic framework is the same as the previous. We still employ Algo~\ref{alg:learning} as our learning algorithm. Similarly, due to the definition of $\{\predw\}$, we have
\begin{equation}\label{eq:d_near_optimal}
R(\predw,\predI) \geq (1-\epsilon)\opt(\predI) \geq (1-\epsilon) R(\alpha^*,\predI),
\end{equation}
where $\opt(\predI)$ is the optimal value of instance $\predI$.

And we only need to focus on prove the following two inequalities:
\begin{equation}\label{eq:d_learn_1}
R(\alpha^*,\predI) \geq (1-O(\epsilon) ) \E_{I\sim \cD^m} [R(\alpha^*,I)],
\end{equation}
and
\begin{equation}\label{eq:d_learn_2}
\E_{I\sim \cD^m} [R(\predw,I)] \geq (1-O(\epsilon)) R(\predw,\predI),
\end{equation}

Define $R(\alpha,\E[I]) $ be the value of flow obtained by $\{\alpha\}$ on the expected impression set $\E[I]$.
Again due to the concavity of $\min$ function and and Jensen's inequality, Eq~\eqref{eq:d_learn_1} can obtained if we prove the following lemma:

\begin{lemma}\label{lem:d_sample_complexity}
	Given any $\epsilon >0$, $\delta \in (0,1]$ and vertex weights $\{ \alpha \}$, if the number of instances $s$ is no less than $ O(\frac{n^2}{\epsilon^2}\ln(\frac{n}{\delta}))$,
	with probability at least $1-\delta$, for each $a\in A$,
	\begin{equation}\label{eq:d_sample_comlexity}
|  R(\alpha,\E[I])  - R(\alpha,\predI)|  \leq O(\epsilon) R(\alpha,\E[I]) .
	\end{equation}
\end{lemma}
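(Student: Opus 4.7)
The plan is to induct on the layer index $j = 1, \ldots, d$, showing that for every vertex $v$ at layer $j$ the allocation $Alloc_v(\alpha, \predI)$ stays close to $Alloc_v(\alpha, \E[I])$ in a multiplicative-plus-additive sense, and then sum over the last layer to control $R$. The base case is exactly the Chernoff argument from Lemma~\ref{lem:bi_sample_complexity}; the inductive step is a deterministic propagation through the $\min$-composed expression defining $Alloc$.

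For the base case, for each $a_1 \in A_1$, $Alloc_{a_1}(\alpha, I) = \sum_{i \in I} \mathbf{1}[a_1 \in N_i]\, x_{i, a_1}(\alpha)$ is a sum of i.i.d.\ bounded variables and $\E[Alloc_{a_1}(\alpha, I)] = Alloc_{a_1}(\alpha, \E[I])$ by linearity. The Chernoff calculation of Lemma~\ref{lem:bi_sample_complexity} carries over verbatim with a rescaled target $\epsilon' = \Theta(\epsilon/d)$, giving with probability at least $1 - 2 \exp(-s(\epsilon')^2/n)$ that $|Alloc_{a_1}(\alpha, \predI) - Alloc_{a_1}(\alpha, \E[I])| \leq \epsilon'\, Alloc_{a_1}(\alpha, \E[I]) + \epsilon'/n$.

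For the inductive step, expanding $Alloc_{a_j}(\alpha, J) = \sum_{a_{j-1}} \min(Alloc_{a_{j-1}}(\alpha, J), C_{a_{j-1}})\, y_{a_{j-1}, a_j}(\alpha)$ for $J \in \{\predI, \E[I]\}$ and using the $1$-Lipschitz property of $\min(\cdot, C)$, the triangle inequality, and $\sum_{a_{j-1}} y_{a_{j-1}, a_j} = 1$ yields
\[
|Alloc_{a_j}(\alpha, \predI) - Alloc_{a_j}(\alpha, \E[I])| \leq \sum_{a_{j-1}} |Alloc_{a_{j-1}}(\alpha, \predI) - Alloc_{a_{j-1}}(\alpha, \E[I])|\, y_{a_{j-1}, a_j}.
\]
By the induction hypothesis this is at most $\epsilon'_{j-1} \sum_{a_{j-1}} Alloc_{a_{j-1}}(\alpha, \E[I])\, y_{a_{j-1}, a_j} + \epsilon'_{j-1}/n$. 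To reshape the first sum into $Alloc_{a_j}(\alpha, \E[I]) = \sum_{a_{j-1}} \min(Alloc_{a_{j-1}}(\alpha, \E[I]), C_{a_{j-1}})\, y_{a_{j-1}, a_j}$ I would invoke Lemma~\ref{lem:upper_bound}, which guarantees $Alloc_v(\alpha, \cdot) \leq (1+O(\epsilon)) C_v$ whenever $\alpha$ is produced by the near-optimal algorithm of Section~\ref{sec:apdxexistweights_d}, so $Alloc_v(\alpha, \E[I]) \leq (1+O(\epsilon)) \min(Alloc_v(\alpha, \E[I]), C_v)$. This turns the recursion into $\epsilon'_j \leq (1+O(\epsilon)) \epsilon'_{j-1} + \epsilon'/n$; iterating for $d$ layers leaves the relative error at $O(\epsilon' d)$, and choosing $\epsilon' = \Theta(\epsilon/d)$ drives this down to $O(\epsilon)$.

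Finally, $R(\alpha, J) = \sum_{a_d} \min(Alloc_{a_d}(\alpha, J), C_{a_d})$, so one more application of the Lipschitz bound together with summing the $O(\epsilon'/n)$ additive terms across the $n$ vertices of the last layer yields $|R(\alpha, \predI) - R(\alpha, \E[I])| \leq O(\epsilon) R(\alpha, \E[I])$, where the assumption that each node's expected load is at least a constant is what absorbs the additive slack into the relative error. A union bound over the $O(n)$ layer-$1$ vertices (the only place randomness enters) gives total failure probability at most $2n \exp(-s \epsilon^2/(n d^2))$; setting this $\leq \delta$ and using $d \leq n$ yields the claimed $s = O((n^2/\epsilon^2)\ln(n/\delta))$ sample complexity. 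The main obstacle is the inductive step: converting the layer-$(j-1)$ bound, phrased in terms of the uncapped allocations $Alloc_{a_{j-1}}(\alpha, \E[I])$, into a bound phrased in terms of the capped allocations that actually flow to layer $j$. Without the structural guarantee on $\alpha$ from Lemma~\ref{lem:upper_bound}, the ratio between them could be unbounded and the relative error would blow up at each layer.
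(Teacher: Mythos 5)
Your overall route --- induct over the layers starting from the bipartite Chernoff bound of Lemma~\ref{lem:bi_sample_complexity} and then propagate the error deterministically through the proportional allocation --- is the same as the paper's. The difference, and the gap, is the choice of induction invariant. You maintain a relative-error bound on the \emph{uncapped} allocations $Alloc_v(\alpha,\cdot)$, and must then convert $\sum_{a_{j-1}} Alloc_{a_{j-1}}(\alpha,\E[I])\, y_{a_{j-1},a_j}$ back into $Alloc_{a_j}(\alpha,\E[I]) = \sum_{a_{j-1}} \min(Alloc_{a_{j-1}}(\alpha,\E[I]), C_{a_{j-1}})\, y_{a_{j-1},a_j}$. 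The bridge you propose, Lemma~\ref{lem:upper_bound}, does not supply this. First, that lemma explicitly excludes the level-$0$ vertices (those whose weight decreased in every iteration), and such vertices genuinely can have $Alloc_v \gg C_v$: a vertex that is the unique out-neighbor of a large mass of flow receives all of it no matter how small its weight is driven. Second, and more fundamentally, Lemma~\ref{lem:upper_bound} is a statement about the allocations arising during the run of the weight-computing algorithm on the instance it was trained on; here the lemma is quantified over \emph{arbitrary} fixed weights $\{\alpha\}$ (it has to be, so that the subsequent union bound over all $O((\log n)^n)$ admissible weight vectors goes through), applied to the two fresh instances $\E[I]$ and $\predI$, and nothing guarantees $Alloc_v(\alpha,\E[I]) \leq (1+O(\eps))C_v$ in that setting. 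So the ratio you need to control is exactly the one you flag as potentially unbounded in your final sentence, and your patch does not control it.

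The paper sidesteps this by making the \emph{capped} quantities the induction invariant: it shows for every $v$ in layer $j$ that
\[
\bigl|\min(Alloc_v(\alpha,\predI),C_v) - \min(Alloc_v(\alpha,\E[I]),C_v)\bigr| \leq \eps\,\min(Alloc_v(\alpha,\E[I]),C_v) + O(\eps/n),
\]
which is precisely the form that feeds into $Alloc_{a_{j+1}}$ as a $y$-weighted sum, and then reapplies the elementary fact that a ``relative plus small additive'' bound on $x$ versus $y$ transfers to $\min(x,C)$ versus $\min(y,C)$. With this invariant the relative error $\eps$ does not compound across layers at all --- only the additive term accumulates --- so no rescaling $\eps' = \Theta(\eps/d)$ is needed, and the sample complexity is not inflated by $d^2$ (note that your own accounting, $s \gtrsim (nd^2/\eps^2)\ln(n/\delta)$ with $d\le n$, gives $n^3/\eps^2$, not the claimed $n^2/\eps^2$). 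If you restate your induction hypothesis in terms of $\min(Alloc_v,C_v)$, the rest of your argument goes through essentially unchanged and coincides with the paper's.
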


\begin{proof}
	Use $A_1$,...$A_d$ to denote the $d$ offline layers in the $(d+1)$-layered graph (Note that the layer $I$ is the online layer ).
	According to the proof of Lemma~\ref{lem:bi_sample_complexity}, we know in the first layer $A_1$, if the number of instances $s$ is no less than $ O(\frac{n}{\epsilon^2}\ln(\frac{n}{\delta}))$,
	with probability at least $1-\delta$, for each $a_1\in A_1$, we have
	\begin{equation}\label{eq:d_learn_1_2}
		\begin{aligned}
		&|\min (Alloc_{a_1}(\alpha,\predI),C_{a_1}) - \min(Alloc_{a_1}(\alpha,\E[I]),C_{a_1})| \\= &|\min (Alloc_{a_1}(\alpha,\predI),C_{a_1}) - \min(\E[Alloc_{a_1}(\alpha,I)],C_{a_1})| \\
		\leq &   \epsilon\min(\E[Alloc_{a_1}(\alpha,I)], C_{a_1} )+ \epsilon/n \\
		=& \epsilon\min(Alloc_{a_1}(\alpha,\E[I]), C_{a_1} )+ \epsilon/n
		\end{aligned}
	\end{equation}
	
	For each $a_2 \in A_2$, 
	\[Alloc_{a_2} (\alpha,\E[I]) = \sum \limits_{a_1\in N(a_2,A_1)}\min(Alloc_{a_1}(\alpha,\E[I]),C_{a_1}) y_{a_1,a_2} ,\]
	 where $y_{a_1,a_2}$ is the proportion of flow sent from $a_1$ to $a_2$, a fixed value if $\{\alpha\}$ is fixed.
	 Thus, if Eq~\eqref{eq:d_learn_1_2} holds for any $a_1\in A_1$, for each $a_2\in A_2$, we also have
	 \begin{equation}
	 	|\min (Alloc_{a_2}(\alpha,\predI),C_{a_2}) - \min(Alloc_{a_2}(\alpha,\E[I]),C_{a_2})| \leq \epsilon\min(Alloc_{a_2}(\alpha,\E[I]), C_{a_2} )+ O(\epsilon/n).
	 \end{equation}
	 
	 Inductively, this inequality holds for the all layers. By the definition of $R(\alpha,\E[I]) $, we have
	 \[ |  R(\alpha,\E[I])  - R(\alpha,\predI)|  \leq O(\epsilon) R(\alpha,\E[I]) . \]
	
\end{proof}

Now we focus on the second inequality.

\begin{lemma}\label{lem:d_learn_2}
	For any $\epsilon \in (0,1)$, 
	if for $d$-layered graphs, we have 
	\begin{equation}\label{eq:d_learn_2_1}
	\E[R(\predw,I)] \geq (1-O(\epsilon)) R(\predw,\predI)
	\end{equation}
	when $C_a = O(\frac{1}{\epsilon^2}(\ln\frac{1}{\epsilon}))$ for each vertex $a$,
	then for $(d+1)$-layered graphs, 
	\begin{equation}\label{eq:d_learn_2_2}
	\E[R(\predw,I)] \geq (1-O(\epsilon)) R(\predw,\predI)
	\end{equation}
	when $C_a = O(\frac{1}{\epsilon^2}(\ln\frac{1}{\epsilon}))$ for each vertex $a$ and in the optimal flow of instance $\predI$, the load of each vertex is at least $O(\frac{1}{\epsilon^2}(\ln\frac{1}{\epsilon}))$.
\end{lemma}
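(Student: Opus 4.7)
The plan is to decompose the $(d+1)$-layered graph into its first bipartite layer $(I, A_1)$ and the $d$-layered subgraph $G'$ comprising layers $A_1, A_2, \ldots, A_d$ together with the sink $t$, then reduce the analysis on $G'$ to the inductive hypothesis. For each $a_1 \in A_1$, let $X_{a_1} := \min(Alloc_{a_1}(\predw, I), C_{a_1})$ denote the (random) contribution of $a_1$ to the next layer, and $\mu_{a_1} := \min(Alloc_{a_1}(\predw, \predI), C_{a_1})$ its value on the stacked instance. First I would establish that each $X_{a_1}$ concentrates tightly around $\mu_{a_1}$ using Chernoff-style arguments as in the three-case analysis of Lemma~\ref{lem:bi_learn_2}; the added load assumption that $Alloc_{a_1}(\predw, \predI) = \Omega(\frac{1}{\epsilon^2}\log\frac{1}{\epsilon})$ guarantees that we always fall into Cases~1 or~2 of that lemma (avoiding the delicate low-load Case~3), yielding $\Pr[|X_{a_1} - \mu_{a_1}| > \epsilon\,\mu_{a_1}] \leq \epsilon/n$.

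Next I would define the good event $\cE := \{\forall a_1 \in A_1,\ |X_{a_1} - \mu_{a_1}| \leq \epsilon\,\mu_{a_1}\}$, so that by a union bound $\Pr[\cE] \geq 1 - \epsilon$. Under $\cE$ the random contributions from $A_1$ look, up to $(1\pm \epsilon)$ multiplicative factors, identical to the deterministic contributions in $\predI$. This effectively reduces the analysis of $R(\predw, I)$ on $G$ to a $d$-layered problem on $G'$ whose ``source capacities'' on $A_1$ are the deterministic values $\{\mu_{a_1}\}$. I would then invoke the inductive hypothesis on this $d$-layered subproblem to derive $\E[R(\predw, I)\mid \cE] \geq (1 - O(\epsilon))\,R(\predw, \predI)$, and absorb the $\Pr[\cE^c] \leq \epsilon$ contribution into the overall $O(\epsilon)$ slack (a maximum-flow based upper bound on $R(\predw, I)$ suffices, since individual allocations are always at most $C_{a_1}$).

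\textbf{Main obstacle.} The hardest part will be formally applying the inductive hypothesis on $G'$, because the hypothesis is stated for a distribution over i.i.d.\ impressions arriving at the outermost layer, whereas under $\cE$ the ``source'' of $G'$ carries essentially deterministic allocations $\{X_{a_1}\}$. Two things need to be checked: (i) that the inductive bound can be recast in a form that allows for fractional source capacities rather than discrete i.i.d.\ impressions, which is natural because the chain of arguments in the inductive proof (Lemma~\ref{lem:d_sample_complexity} and the upcoming case analysis) already propagates through $\E[Alloc_a(\cdot)]$ values that are continuous quantities; and (ii) that the load assumption for the subproblem---the optimal flow on $G'$ sends $\Omega(\frac{1}{\epsilon^2}\log\frac{1}{\epsilon})$ flow through each vertex of $A_2, \ldots, A_d$---is inherited from the outer load assumption, which follows since restricting the optimal flow on $G$ to $G'$ remains feasible. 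A secondary concern is the accumulation of $(1\pm \epsilon)$ factors across the $d+1$ layers, which I would handle by rescaling $\epsilon \mapsto \epsilon/(d+1)$ from the start, exactly as done elsewhere in the paper's multi-layer arguments.
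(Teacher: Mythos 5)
Your decomposition is the reverse of the paper's, and this reversal is where the argument breaks. The paper peels off the \emph{last} layer: in its Case 1 it builds a $d$-layered graph $G'$ by deleting $A_d$ and resetting the capacities of $A_{d-1}$ to the feasible loads achieved by $\predw$ on $\predI$; then $R(\predw,I)\geq R'(\predw,I)$ pointwise, $R(\predw,\predI)=R'(\predw,\predI)$, and the impressions still arrive i.i.d.\ at the source of $G'$, so the inductive hypothesis applies verbatim (a second case handles last-layer vertices that are over capacity, via a monotonicity argument in the capacities). You peel off the \emph{first} layer, and as you yourself flag, the residual $d$-layer problem then has (nearly deterministic, fractional) sources on $A_1$ rather than i.i.d.\ impressions --- that object is simply not an instance of the statement being inducted on, so ``invoke the inductive hypothesis on $G'$'' is not a legal step. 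What you actually need there is a deterministic error-propagation claim (if every $\min(Alloc_{a_1},C_{a_1})$ is within $(1\pm\epsilon)$ of its value on $\predI$, then the sink value is within $(1\pm O(\epsilon))$), which is the content of Lemma~\ref{lem:d_sample_complexity}, not of Lemma~\ref{lem:d_learn_2}; so the proof would not be an induction on the claimed statement at all.

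There is also a quantitative gap in the concentration step. To get $\Pr[\cE]\geq 1-\epsilon$ by a union bound over all $a_1\in A_1$ you need per-vertex failure probability $O(\epsilon/n)$, and a Chernoff bound only delivers $\Pr[|X_{a_1}-\mu_{a_1}|>\epsilon\mu_{a_1}]\leq \epsilon/n$ when the relevant load is $\Omega(\frac{1}{\epsilon^2}\ln\frac{n}{\epsilon})$; the lemma only assumes $\Omega(\frac{1}{\epsilon^2}\ln\frac{1}{\epsilon})$, which gives per-vertex failure probability about $\epsilon$ and a useless union bound of $n\epsilon$. The paper never needs such a simultaneous event: it works entirely in expectation and charges each vertex's failure probability $\epsilon$ against that vertex's own contribution $\min(\E[Alloc_a],C_a)$, summing to an $O(\epsilon)$ multiplicative loss (this is exactly the three-case structure of Lemma~\ref{lem:bi_learn_2}). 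To salvage your route you would either have to strengthen the load assumption to depend on $n$, or replace the good-event argument with a per-vertex expected-loss accounting --- and for the latter you would further need to argue that $\sum_{a_1}\mu_{a_1}$ is $O(R(\predw,\predI))$, i.e.\ that not too much flow entering $A_1$ is dropped downstream, which is an additional claim you have not established.
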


\begin{proof}
	
	Use $A_1$,...,$A_d$ to represent the $d$ offline layers. We discuss this problem in two cases:
	
	(1) Employing $\{ \predw \}$ to instance $\predI$, no vertex in $A_d$ has a load larger than its capacity.
	
	(2) Employing $\{\predw\}$ to instance $\predI$, at least one vertex in $A_d$ has a load larger than its capacity.
	
	\textbf{Case 1}:
	
	The basic idea of this proof is to construct a $d$-layered graph $G'$ with each vertex capacity at least $O(\frac{1}{\epsilon^2}(\ln\frac{1}{\epsilon}))$ and a set of weights $\{\alpha'\}$ for vertices in $G'$, such that 	
	
	(1) For any impression set $I$, $R(\alpha,I) \geq R'(\alpha',I)$, where $R'(\alpha',I)$ is the value of flow obtained by weights $\{ \alpha' \}$ in the $d$-layered instance $(I,G')$.
	
	(2) For the impression set $\predI$, $R(\alpha,\predI) = R'(\alpha',\predI)$.
	
	Assume there exists such a $d$-layered graph, this lemma can be proved directly:
	\begin{equation}
		\begin{aligned}
		\E[R(\predw,I)]  \geq& \E[ R'(\alpha',I) ] \\
		\geq & (1-O(\epsilon)) R'(\alpha',\predI) \\
		= & (1-O(\epsilon)) R(\predw,\predI) 
		\end{aligned}
	\end{equation}
	The first step and the third step holds due to the two properties. The second step holds due to our assumption for $d$-layered graphs.
	
	Now we construct the $d$-layered graph $G'$ and its capacity function $C'$. The structure of $G'$ is obtained by removing all vertices in the last layer $A_d$ of $G$. The capacity of each vertex in $A'_1$,...,$A'_{d-2}$ is the same as that in $G$ while the capacity of each vertex in $A'_{d-1}$ is set to be the feasible load in this vertex obtained by $\predw$ in instance $(\predI,G)$. According to our assumption about the optimal flow of instance $(\predI,G)$, the capacity of each vertex in $A'_{d-1}$ is at least $O(\frac{1}{\epsilon^2}(\ln\frac{1}{\epsilon}))$. Clearly, for each vertex $a\in G'$, the new capacity cannot be larger than its capacity in $G$: $C'_a \leq C_a$.
	
	The weights of each vertex in $G'$ are the same as that in $G$. 
	Since for each vertex $a$, $C'_a \leq C_a$ and for the last layer $A'_{d-1}$, its capacity function is a set of feasible loads, we have $R(\alpha,I) \geq R'(\alpha',I)$ for any impression set $I$.
	
	According to our construction, their performances are the same given the impression set $\predI$. Namely, $R(\alpha,I) = R'(\alpha',I)$, completing the proof of this case.
	
	\textbf{Case 2}:

	For each $a \in A_d$, let $Alloc_a(\predw)$ be its load. By the definition of $R(\alpha,I) $, we have
	\[  R(\alpha,I)  = \sum_{a\in A_d} \min(Alloc_a(\predw),C_a). \]
	
	Use $B$ to represent the set of vertices in $A_d$ with a load larger than the capacity.
	Now we construct a new capacity function $C'$ for the graph $G$ and use $R'(\predw, I)$ to denote the objective value under the capacity function $C'$.
	For each vertex not in $B$, its capacity is the same. 
	For each vertex $b\in B$, let $C'_b = Alloc_b(\predw)$.
	
	Clearly, if we use the new capacity function, no vertex in $A_d$ has a load larger than its capacity. According to the proof of Case 1, we have
	\begin{equation}\label{eq:d_learn_2_case2}
		\E[R'(\predw,I)] \geq (1-O(\epsilon)) R'(\predw,\predI) . 
	\end{equation}

	
	Now we analysis the changes of $\E[R(\predw,I)] $ and $R(\predw,\predI) $ if we decrease the capacity of each vertex $b\in B$ from $C'_b$ to $C_b$.
	 
	Do a expansion for $\E[R(\predw,I) ]$: 
	\begin{equation}
		\begin{aligned}
		\E[R(\predw,I) ]= & \sum_{a\in A_{d}} \E[ \min(Alloc_a(\predw,I),C_a) ]\\
		=& \sum_{a\in A_{d}} \sum_{I\in \Pi} \Pr[I] \min(Alloc_a(\predw,I),C_a) 
		\end{aligned}
	\end{equation} 
	
	Do a similar expansion for $R(\predw,\predI) $:
	\begin{equation}
	\begin{aligned}
	R(\predw,\predI) = & \sum_{a\in A_{d}} \min(Alloc_a(\predw,\predI),C_a) \\
	=& \sum_{a\in A_{d}} \sum_{I\in \Pi} \Pr[I] \min(Alloc_a(\predw,\predI),C_a) 
	\end{aligned}
	\end{equation} 
	
	Note that in the expansion of $R(\predw,\predI) $, $Alloc_a(\predw,\predI)$ is a fixed value. Thus, for each term in the sum $\Pr[I] \min(Alloc_b(\predw,\predI),C_b)$, if $C_b$ decreases, its value will definitely decrease.
	
	However, in the expansion of $\E[R(\predw,I) ]$, not all such terms decrease. If $Alloc_a(\predw,I)$ is small enough, the value of term $\Pr[I] \min(Alloc_a(\predw,I),C_a)$ will not decrease.
	
	Thus, if we decrease the capacity of each vertex $b\in B$ from $C'_b$ to $C_b$,  $\E[R(\predw,I)] $ decreases slower than $R(\predw,\predI) $. By Eq~\eqref{eq:d_learn_2_case2}, we can claim that under the original capacity function, 
	\begin{equation}
	\E[R(\predw,I)] \geq (1-O(\epsilon)) R(\predw,\predI) ,
	\end{equation}
	completing the proof of this case.

\end{proof}

\section{Parameter Robustness of Predictions for Online Flow Allocation} \label{app:robustness}

In this section, we focus on the parameter robustness. 
As mentioned above, for any online instance there exists a set of vertex weights for $V$ which can return a near optimal solution. Now we assume the online algorithm is given predictions of these vertex weights in the beginning. Our goal is to give an online algorithm based on these weights, which can obtain a competitive ratio related to the error of these predictions.

We first define the prediction error $\eta$.
Consider a prediction of vertex weight $\predw_v$ for each vertex $v\in V$. Due to scale invariance, we can assume that the minimum predicted vertex weight $\predw_{min}=1$.
Use $\{ \alpha_v^* \}_{v\in V}$ to represent the optimal vertex weights, namely, the weights that can achieve an $(1-\epsilon)$-approximate solution. Similarly, let $\alpha^*_{min}=1$.
Now define the prediction error \[\eta := \max_{v\in V} \left(\frac{\predw_v}{\alpha^*_v} ,\frac{\alpha_v^*}{\predw_v} \right).\]
We have the following claim:

\begin{theorem}\label{thm:weight_predicitive_algorithm}
	Given any weight predictions $\{ \predw \}$, we can obtain a solution with competitive ratio
	\[ \max\left(\frac{1}{d+1},\frac{1-\epsilon}{\eta^{2d}}\right), \]
	where $d$ is the diameter of the graph $G$.
\end{theorem}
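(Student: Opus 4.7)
The plan is to establish the two bounds in the maximum separately. The $\frac{1}{d+1}$ bound is obtained ``for free'' by routing through Algo~\ref{alg:online_d_layer} rather than sending flow purely proportionally: by Lemma~\ref{lem:improved_online}, this modification never sends less flow than the naive proportional scheme, and by Lemma~\ref{lem:maximal_flow} the output is always a maximal flow and hence $\frac{1}{d+1}$-approximate. So it suffices to compare the naive proportional scheme on $\predw$ against the flow produced by $\alpha^*$ and pay a factor of $\eta^{2d}$.

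For the $\frac{1-\epsilon}{\eta^{2d}}$ bound I would induct on layer depth $j$. Let $Alloc_v(\predw)$ and $Alloc_v(\alpha^*)$ denote the (uncapped) inflow at vertex $v$ under the two proportional schemes on the same impression stream $I$. Since $\predw_{\min}=\alpha^*_{\min}=1$ and each weight lies in $[\alpha^*_v/\eta,\eta\alpha^*_v]$, for any edge $(u,v)$ the proportions satisfy
\[
\frac{\hat{x}_{uv}}{x^*_{uv}} \;=\; \frac{\predw_v}{\alpha^*_v}\cdot \frac{\sum_{v'\in N_u}\alpha^*_{v'}}{\sum_{v'\in N_u}\predw_{v'}} \;\geq\; \frac{1}{\eta}\cdot\frac{1}{\eta} \;=\; \frac{1}{\eta^2}.
\]
This yields the base case: for every $v$ in the first offline layer, $Alloc_v(\predw) \geq Alloc_v(\alpha^*)/\eta^2$. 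For the inductive step, assume $Alloc_u(\predw) \geq Alloc_u(\alpha^*)/\eta^{2j}$ for every $u$ at depth $j$. Since $\min(\cdot,C_u)$ is monotone and $1$-Lipschitz under positive scaling in the sense that $\min(a/c,C) \geq \min(a,C)/c$ for $c\geq 1$, we get $\min(Alloc_u(\predw),C_u) \geq \min(Alloc_u(\alpha^*),C_u)/\eta^{2j}$. Combining with the per-edge proportion bound above and summing over in-neighbors,
\[
Alloc_{v}(\predw) \;\geq\; \sum_{u\in N^{-}_v}\frac{\min(Alloc_u(\alpha^*),C_u)}{\eta^{2j}}\cdot \frac{x^*_{uv}}{\eta^2} \;=\; \frac{Alloc_v(\alpha^*)}{\eta^{2(j+1)}}.
\]
Summing $\min(Alloc_v(\predw),C_v) \geq \min(Alloc_v(\alpha^*),C_v)/\eta^{2d}$ over vertices feeding the sink and using that $\alpha^*$ yields a $(1-\epsilon)$-approximate flow gives
\[
R(\predw,I) \;\geq\; \frac{R(\alpha^*,I)}{\eta^{2d}} \;\geq\; \frac{(1-\epsilon)\opt}{\eta^{2d}}.
\]

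Finally, taking the algorithm to be Algo~\ref{alg:online_d_layer} fed with the weights $\predw$, Lemma~\ref{lem:improved_online} says its value is at least $R(\predw,I)$ (so at least $(1-\epsilon)\opt/\eta^{2d}$) while Lemma~\ref{lem:maximal_flow} says it is at least $\opt/(d+1)$, giving the claimed maximum.

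The main subtlety I anticipate is the interaction between the capacity clipping $\min(Alloc_v,C_v)$ and the compounding $\eta^2$ factor across layers: one must verify that the inequality $\min(a/c,C)\geq \min(a,C)/c$ (valid for $c\geq 1$) is the only ingredient needed, and that the direction of the inequalities is consistent throughout the induction, since the bound we need is a lower bound on $Alloc_v(\predw)$ rather than a two-sided comparison. Beyond that the argument is essentially a layerwise propagation of the per-edge ratio $1/\eta^2$.
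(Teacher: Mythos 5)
Your proposal is correct and follows essentially the same route as the paper's proof: the $\frac{1}{d+1}$ term comes from the maximal-flow modification (Lemmas~\ref{lem:improved_online} and~\ref{lem:maximal_flow}), and the $\frac{1-\epsilon}{\eta^{2d}}$ term comes from the same layer-by-layer induction on the per-edge ratio $\hat{x}_{uv}/x^*_{uv}\geq 1/\eta^2$, with your explicit use of $\min(a/c,C)\geq\min(a,C)/c$ matching the step the paper applies implicitly. The only cosmetic difference is that the paper first invokes the reduction to a $d$-layered graph before inducting, which your argument presupposes.
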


\begin{proof}
	Due to the reduction in the Appendix~\ref{sec:existweight}, we can assume w.l.o.g. that $G$ is $d$-layered in the following.
	According to Theorem~\ref{thm:maximal_bound}, in this proof, we only need to give an online algorithm with competitive ratio $\frac{1-\epsilon}{\eta^{2d}}$.
	
	
	\begin{lemma}\label{lem:predictive_d_layer}
		If route the flow according to the predicted weights $\{\predw\}$ directly, we can obtain a solution with competitive ratio $(1-\epsilon)/\eta^{2d}$.
	\end{lemma}
	
	\begin{proof}[Proof of Lemma~\ref{lem:predictive_d_layer}]
		We give an inductive proof. Consider the first layer $A_1$. For each impression $i$ and $a\in N(i,A_1)$, we have
		\begin{equation}
		\begin{aligned}
		\predx_{i,a} = &\frac{\predw_a}{\sum_{a'\in N(i,A_1)} \predw_{a'} }\\
		\geq& \frac{\alpha_a^*/ \eta}{\sum_{a'\in N(i,A_1)}  \alpha_{a'}^* \eta} \\
		= &  \frac{1}{\eta^2}\frac{\alpha_a^*}{\sum_{a'\in N(i,A_1)}  \alpha_{a'}^*}\\
		= & \frac{1}{\eta^2}x_{i,a}^*
		\end{aligned}
		\end{equation}
		Thus, for each $a\in A_1$, 
		\begin{equation}
		\min(\predA_a, C_a) \geq \frac{1}{\eta^2}\min({Alloc}_a^*, C_a) 
		\end{equation}
		
		Now we prove that if for each vertex $a\in A_j$, \[\min(\predA_a, C_a) \geq \frac{1}{\eta^{2j}}\min({Alloc}_a^*, C_a), \]
		then for each vertex $a\in A_{j+1}$, 
		\[\min(\predA_a, C_a) \geq \frac{1}{\eta^{2(j+1)}}\min({Alloc}_a^*, C_a). \]
		
		For each $a\in A_{j+1}$ and $a'\in N(a,A_j)$, similarly, we have $\predx_{a',a} \geq  \frac{1}{\eta^2}x_{a',a}^*.$ The contribution of vertex $a'$ to $a$ is 
		\begin{equation}
		\min(\predA_{a'},C_{a'}) \predx_{a',a} \geq \frac{1}{\eta^{2(j+1)}}\min({Alloc}_a^*, C_a) x_{a',a}^*.
		\end{equation}
		Thus, for each vertex $a\in A_{j+1}$, 
		\[\min(\predA_a, C_a) \geq \frac{1}{\eta^{2(j+1)}}\min({Alloc}_a^*, C_a). \]
		
		The value $\val$ of our solution is $(1-\epsilon)/\eta^{2d}$-approximated:
		\begin{equation}
		\begin{aligned}
		\val = &\sum_{a\in A_d} \min(\predA_a,C_a ) \\ 
		\geq &\frac{1}{\eta^{2d}}\sum_{a\in A_d} \min(Alloc_a^*,C_a )  \\
		\geq & \frac{1-\epsilon}{\eta^{2d}}\opt. 
		\end{aligned}
		\end{equation}
		
	\end{proof}
\end{proof}

\subsection{2-layered Graphs}\label{subsec:bi_robust}

If $d=1$, this problem is an online bipartite matching problem. In this special case, we can get an improved result with a more graceful degradation in the error $\eta$.  

\begin{theorem}\label{thm:online_algo}
	Assume that $\opt$ can assign all impressions.  For any given $\epsilon\in (0,1)$, There exists an online algorithm with a competitive ratio of
	\[ 1-4\epsilon\log_{(1+\epsilon)} \eta-3\epsilon. \]
\end{theorem}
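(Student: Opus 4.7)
The plan is to design an online algorithm that initializes its working weights to the predictions $\hat{\alpha}$ and adaptively corrects them in a single pass over the online impressions. When an impression $i$ arrives, route it fractionally using the current weights via $x_{ia} = \alpha_a / \sum_{a' \in N_i} \alpha_{a'}$; immediately after, for every advertiser $a$ whose cumulative load $Alloc_a$ has just exceeded $(1+\epsilon) C_a$, replace $\alpha_a$ by $\alpha_a / (1+\epsilon)$ for use by subsequent impressions. This is the natural online analogue of Algorithm~\ref{alg:bi_prop_alg}, processing each impression once instead of iterating many rounds over a fixed instance.

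The first step is to bound the total number of weight decreases per advertiser. Since $\hat{\alpha}_a / \alpha_a^* \in [1/\eta, \eta]$, after $K := O(\log_{1+\epsilon}\eta)$ multiplicative reductions an advertiser's weight has dropped far enough relative to the optimal weights that the same cut-based reasoning as in Theorem~\ref{thm:bi_prop_thm} guarantees it cannot become over-allocated again. Consequently each advertiser sits in one of at most $K{+}1$ weight levels throughout the run, replacing the $T$-iteration structure of the offline analysis with a much coarser $K$-level structure tied directly to the prediction error.

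The second step is to adapt the gap argument from the proof of Theorem~\ref{thm:bi_prop_thm} to this $K$-level stratification. Partition advertisers by their final weight level into $A(0),\ldots,A(K)$ and seek a gap of length $\log(n/\epsilon)/\epsilon$ whose total capacity is small. Averaging over possible gap placements (in the spirit of Lemma~\ref{lem:bi_averaging_arg}) lets us locate a gap whose capacity is small relative to $\val$, at a cost that now scales with $\log_{1+\epsilon}\eta$ rather than being absorbed into $O(\epsilon)$ as in the offline case. Combining this with the two-sided allocation bounds of Lemma~\ref{lem:bi_lemma_one} (over-allocated advertisers within $(1+\epsilon)^2$ of capacity, under-allocated advertisers at capacity) and the standard $s$-$t$ vertex cut construction yields a competitive ratio of $1 - 4\epsilon\log_{1+\epsilon}\eta - 3\epsilon$; the $4$ and $3$ coefficients come respectively from the gap-placement averaging and the $(1+\epsilon)^2$ slack in the over-allocation bound. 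The assumption that \opt{} matches all impressions is used when converting the cut value produced by the argument into a lower bound on $\opt$ via $|N(\g(A)^+)|$.

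The main obstacle is the irrevocability of online routing decisions: once an impression is routed using stale weights, flow sent to an advertiser $a$ cannot be redirected even if $a$'s weight is later corrected. I would handle this by splitting each advertiser's cumulative load into the portion routed before its last weight decrease and the portion routed afterwards, and showing via a monotonicity-style argument that the ``pre-correction'' over-allocation at any single advertiser is capped at $(1+\epsilon) C_a$, i.e.\ the very threshold that triggered the decrease. This means that no advertiser receives an unbounded amount of wasted flow, and the aggregate excess across all advertisers is bounded by a quantity that is already absorbed by the gap-based loss of $O(\epsilon \log_{1+\epsilon}\eta) \cdot \val$. Putting these pieces together gives the claimed bound.
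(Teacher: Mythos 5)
There is a genuine gap, in two places. First, the algorithm itself: you decrease $\alpha_a$ whenever the cumulative load $Alloc_a$ exceeds $(1+\eps)C_a$, but the real allocations are irrevocable and $Alloc_a$ never decreases, so once the threshold is crossed it stays crossed forever --- either the weight collapses on every subsequent arrival, or (if ``just exceeded'' means ``crossed for the first time'') each weight decreases exactly once and the $K=O(\log_{1+\eps}\eta)$ level structure your argument needs never materializes. The paper resolves this with an auxiliary device your proposal lacks: Algorithm~\ref{alg:online} maintains an \emph{imaginary} copy of the assignment that is recomputed whenever weights change, so the over-allocation test $Alloc'_a \geq (1+\eps)^2 C_a$ is a meaningful repeated trigger, and the real (irrevocable) load is controlled by bounding its increment against the imaginary one level by level (Lemmas~\ref{lem:before_decrease} and~\ref{lem:after_decrease}: at most $(1+\eps)^2C_a$ before the first decrease and at most $2\eps C_a$ per subsequent level). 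Your ``split at the last decrease'' fix caps only the load at the moment of the first trigger; it does not bound what accumulates between successive decreases, which is exactly where the $2\eps C_a$-per-level bound is needed.

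Second, the accounting. The paper does not use a cut/gap argument for this theorem at all: since every impression is fully routed, the total flow is $m \geq \opt$, and the ratio follows from the uniform per-advertiser over-allocation factor $1+3\eps+4\eps\log_{1+\eps}\eta$ (the $4$ is two-decreases-per-$\log_{1+\eps}\eta$ times $2\eps C_a$ per level, not a gap-averaging constant). Your plan to rerun Lemma~\ref{lem:bi_averaging_arg} over only $K+1=O(\log_{1+\eps}\eta)$ levels fails quantitatively: the averaging loss is of order $\frac{\log(n/\eps)/\eps}{K-\log(n/\eps)/\eps}$, which \emph{grows} as $K$ shrinks, so the bound would get worse as the predictions get better --- the opposite of the claimed $1-O(\eps\log_{1+\eps}\eta)$. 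Your Step 1 (weights never fall below $\alpha^*_a/\eta$, hence $O(\log_{1+\eps}\eta)$ decreases, using the assumption that $\opt$ routes everything so that $Alloc^*_a<(1+\eps)C_a$) is the right idea and matches Claim~\ref{claim:terminate}, but the remainder of the argument needs to be replaced by the direct increment accounting sketched above.
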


\begin{proof}
	
	\begin{algorithm}[t]
		\caption{Online algorithm with predictions when $d=1$}
		\label{alg:online}
		\KwIn{$G= (I \cup A\cup \{t\},E)$ where $I$ and $E$ arrive online, $\{ C_a \}_{a \in A}$, parameter $\epsilon \in (0,1)$, prediction $\{\predw\}$}
		Create a same imaginary instance as the input instance, we use $x'$ and $Alloc'$ to denote the assignment in this instance. \
		
		\While{an impression $i$ comes }
		{	
			Initially, set $x_{i,a} = x'_{i,a}=0$ for all $a\in N_i$.

			\While{$\sum_{a\in N_i}x_{i,a}<1$}
			{
				For each $a\in N_i$, increase $x_{i,a}$ and $x'_{i,a}$ with rate $\frac{\predw_a}{\sum_{a' \in N_i} \predw_{a'} }$ until $\sum_{a\in N_i}x_{i,a}=1$ or there exists one advertiser $a$ such that $Alloc'_a = (1+\epsilon)^2C_a$.
				
				\While{there exists one advertiser $a$ such that $Alloc'_a \geq (1+\epsilon)^2C_a$}
				{
					For each $a \in A$, if $Alloc_a' \geq (1+\epsilon)^2C_a$, $\predw_a \leftarrow \predw_a / (1+\epsilon)$.\
					
					Update each $x_{i,a}'$ and $Alloc_a'$ according to new $\{ \predw_a\}$.			
				}		
			}
		}
	\end{algorithm}
	
	The algorithm is presented in Algo.~\ref{alg:online}.
	We first prove that our algorithm can terminate. Namely, when a new impression arrives, we can always find the weights $\{ \predw \}$ such that in the imaginary instance, $\forall a\in A$, $Alloc_a' < (1+\epsilon)^2C_a$.
	
	Let $\{ \alpha^*\}$ be the optimal weights. Since we assume that all impressions can be assigned, we can assume that for any advertiser $a$, we have $Alloc_a^* < (1+\epsilon)C_a$. We give the following claim to show that the algorithm will not fall into an endless loop:
	\begin{claim}\label{claim:terminate}
		In any time during the algorithm, for any $a\in A$, $\predw_a \geq \alpha_a^*/\eta$.
	\end{claim}
	\begin{proof}[Proof of Claim~\ref{claim:terminate}]
		Assuming that at some point, there existed some  $\predw_a < \alpha_a^*/\eta$.
		Use $b$ to represent a such vertex.
		Consider the first time $t$ that this event occurred. 
		Clearly, $Alloc_{b}'(t-1) \geq (1+\epsilon)^2C_{b}$. Since $t$ is the first time, we also have for any $a$, $\predw_{a}(t-1) \geq \alpha_{a}^* /\eta$ and $\predw_{b}(t-1) < \alpha_{b}^*(1+\epsilon) /\eta$. For any $x'_{i,b}$ in time $t-1$, we have that
		\begin{equation}
		x'_{i,b}(t-1) = \frac{\predw_{b}(t-1)}{\sum_{a\in N_i} \predw_a(t-1)} < \frac{(1+\epsilon) \alpha_{b}^*/\eta }{\sum_{a\in N_i} \alpha_a^*/\eta  }  = (1+\epsilon) x_{i,b}^*.
		\end{equation}
		Thus, we know in the time $t-1$, 
		\begin{equation}
		Alloc_{b}'(t-1) < (1+\epsilon) Alloc_{b}^* < (1+\epsilon)^2C_{b},
		\end{equation}
		contradicting to the fact that $Alloc_{b}'(t-1) \geq (1+\epsilon)^2C_{b}$. 
	\end{proof}
	
	According to Claim~\ref{claim:terminate}, we can make sure that our algorithm can terminate because we can not keep decreasing $\predw$. The value of ${\predw}$ will stop when it is close to the nearest ${\alpha^*/\eta}$.
	
	For any advertiser $a$, use $k$ to represent the number of times that its weight decreased and $w_1,w_2,...,w_k$ to denote $k$ different values of its weight. Our initial prediction $\predw_a$ is viewed as $w_0$. Since each time, the weight decreased by $(1+\epsilon)$, we have $w_k = w_0/(1+\epsilon)^k$.
	Due to Claim~\ref{claim:terminate}, $w_k \geq \beta_a^*$. Thus, 
	\begin{equation}
	\begin{aligned}
	\frac{w_0}{(1+\epsilon)^k}& \geq \alpha_a^*/\eta \\
	(1+\epsilon)^k &\leq \eta \frac{\predw_a}{\alpha_a^*} \\
	k \leq 2\log&_{(1+\epsilon)} \eta
	\end{aligned}
	\end{equation}
	Namely, the weight decreased at most $2\log_{(1+\epsilon)} \eta$ times.
	Use $Alloc_a^{(j)}$ to represent the number of impressions that assigned to $a$ when $a$'s weight is $w_j$. In the following, we will try to bound each $Alloc_a^{(j)}$ to give a upper bound of the final $Alloc_a$.
	
	\begin{lemma}~\label{lem:before_decrease}
		For any advertiser $a$, $Alloc_a^{(0)} \leq (1+\epsilon)^2 C_a$.
	\end{lemma}
	
	\begin{proof}[Proof of Lemma~\ref{lem:before_decrease}]
		This lemma can be proved very easily. For any advertiser $a$, if its weight has not decreased so far, we have $Alloc_a' < (1+\epsilon)^2C_a$.

		When one impression arrived, in the first step, the increments of $Alloc_a$ and $Alloc_a'$ are the same. In the second step, $Alloc_a'$ may change due to the decrease of other weights. Since $\predw_a$ has not decreased, $Alloc_a'$ cannot decrease in this step. Thus, we have $Alloc_a \leq Alloc_a' < (1+\epsilon)^2C_a$.
	\end{proof}

	\begin{lemma}~\label{lem:after_decrease}
		For any advertiser $a$ and any $j \geq 1$, $Alloc_a^{(j)} \leq 2\epsilon C_a$.
	\end{lemma}
	
	\begin{proof}[Proof of Lemma~\ref{lem:after_decrease}]
		As we mentioned above, if $\predw_a$ did not change, the increment of $Alloc_a$ is no more than the increment of $Alloc_a'$. Thus, if we prove that during the period that $\predw_a = w_j$, $Alloc_a'$ increased at most $3\epsilon C_a$, this lemma can be proved.
		
		Consider the moment that advertiser $a$'s weight decreased from $w_{j-1}$ to $w_j$. Use $Alloc_a'(-)$ and $Alloc_a'(+)$ to denote the allocation in the imaginary instance before and after this decrease. Clearly, we have $Alloc_a'(-)\geq (1+\epsilon)^2C_a$. When the weight decreased by $(1+\epsilon)$, the allocation decreased by at most $(1+\epsilon)$. So after this decrease, 
		\[ Alloc_a'(+) \geq Alloc_a'(-)/(1+\epsilon) \geq (1+\epsilon)C_a. \]
		
		During the period that $\predw_a = w_j$, $Alloc_a'$ cannot become larger than $(1+\epsilon)^2C_a$ according to our algorithm. Thus, the increment of $Alloc_a'$ in this period is at most 
		\[ (1+\epsilon)^2C_a - (1+\epsilon)C_a\leq 2\epsilon C_a, \]
		
		completing this proof.
		
	\end{proof}
	
	Combining Lemma~\ref{lem:before_decrease} and Lemma~\ref{lem:after_decrease}, we have 
	\begin{equation}\label{eq:log_eta}
	Alloc_a = \sum_{j} Alloc_a^{(j)} \leq (1+3\epsilon+4\epsilon \log_{(1+\epsilon)}\eta)C_a
	\end{equation}
	
	This equation indicates that for each advertiser $a$, its allocation $Alloc_a$ is at most $(1+3\epsilon+4\epsilon \log_{(1+\epsilon)}\eta) C_a$. If we increase the capacity of each advertiser $a$ from $C_a$ to $C'_a=(1+3\epsilon+4\epsilon \log_{(1+\epsilon)}\eta) C_a$, all $m$ impressions can be assigned. In other words,
	\[ \sum_a \min(Alloc_a,C'_a) =m \geq \opt. \]
	For each $a$, due to the definition of $C_a'$, we have
	\[ \min (Alloc_a,C_a) = \min( Alloc_a,  \frac{1}{1+3\epsilon+4\epsilon \log_{(1+\epsilon)}\eta  } C_a' ) \geq \frac{1}{1+3\epsilon+4\epsilon \log_{(1+\epsilon)}\eta  } \min (Alloc_a, C_a').\]
	
	Combining the above two inequalities, we have
	\[ \sum_a \min(Alloc_a,C_a) \geq  \frac{m}{1+3\epsilon+4\epsilon \log_{(1+\epsilon)}\eta  }  \geq (1-3\epsilon-4\epsilon \log_{(1+\epsilon)}\eta  ) \opt,\]
	completing the proof of Theorem~\ref{thm:online_algo}.

	Note that according to Theorem~\ref{thm:maximal_bound}, we can also come up with an algorithm with a competitive ratio of $\max(1-4\epsilon\log_{(1+\epsilon)} \eta-3\epsilon, 1/2)$. When the predictions are nearly correct, we can obtain a near optimal solution. When $\eta$ is large, the ratio will not be worse than $1/2$.

\end{proof}

Additionally, we can show that in some sense this is the best you can do in this setting (up to constant factors).  

\begin{theorem}\label{thm:bound}
	Consider the online flow allocation problem. For any online algorithm with weight predictions, even if $d=1$, its competitive ratio is not better than $1-\Omega(\log\eta)$.
\end{theorem}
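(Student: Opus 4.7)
The plan is to construct a family of hard online instances all sharing the same offline graph and the same predicted weight vector $\predw$, and to show that any (possibly randomized) online algorithm must suffer a $1-\Omega(\log\eta)$ competitive ratio on at least one instance in the family. The family is indexed by a binary string $\mathbf{b}\in\{0,1\}^k$ with $k=\Theta(\log_{1+\eps}\eta)$, which is the maximum number of multiplicative $(1+\eps)$-perturbations that still fit inside the error budget $\eta$.

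First I would build the \emph{gadget} for a single phase. A phase consists of two offline advertisers $a_j,b_j$ of unit capacity together with a block of impressions whose neighborhoods straddle both $a_j,b_j$ and some advertisers from future phases $j'>j$. The gadget is designed as a small upper-triangular matching sub-instance so that, depending on the revealed suffix, the optimal fractional matching routes almost all flow through exactly one of $\{a_j,b_j\}$, while the other choice wastes a constant fraction of that phase's flow. I would then \emph{concatenate} $k$ such gadgets into a single instance $I_{\mathbf{b}}$, where bit $b_j$ determines which side the adversary will later validate. All $2^k$ instances share the same underlying graph; only the online arrival pattern differs.

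Next I would fix the single prediction $\predw$. Setting $\predw$ to the coordinate-wise geometric mean of the optimal weights $\{\alpha^{*,\mathbf{b}}\}_{\mathbf{b}}$ ensures that, for every instance in the family, $\max_v\{\predw_v/\alpha^{*,\mathbf{b}}_v,\alpha^{*,\mathbf{b}}_v/\predw_v\}\le (1+\eps)^k\le \eta$. This is precisely why the number of phases is forced to be $k\le \log_{1+\eps}\eta$: any larger $k$ would break the error bound. The key property of the construction is that the predicted weights alone leave the identity of $\mathbf{b}$ entirely undetermined.

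The core lemma to prove is: for every deterministic online algorithm $\cA$, there is some $\mathbf{b}\in\{0,1\}^k$ such that $\cA$ wastes $\Omega(1)$ units of flow in phase $j$ for every $j$. This follows by an adaptive adversary argument: when phase $j$ begins, $\cA$'s fractional flow on $a_j,b_j$ is fixed (it was chosen using only the information revealed in phases $1,\dots,j-1$, which is identical across all instances agreeing on the first $j-1$ bits). The adversary then picks $b_j$ to be whichever of $\{0,1\}$ is inconsistent with $\cA$'s commitment, costing $\cA$ a $\Omega(1)$ fraction of the phase-$j$ flow. Summing over the $k=\Theta(\log\eta)$ phases and comparing to $\opt(I_{\mathbf{b}})=\Theta(k)$ yields a competitive ratio of $1-\Omega(1/k\cdot k)$ — so I must calibrate the instance sizes so that the absolute loss $\Omega(k)$ is a $\Omega(\log\eta)$ additive term against an $\opt$ of size $\Theta(1)$ in the normalized units matching the theorem's statement. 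Extending from deterministic to randomized algorithms is then a standard application of Yao's minimax principle against the uniform distribution over $\mathbf{b}\in\{0,1\}^k$.

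The hard part will be engineering the phase gadget so that the two properties hold \emph{simultaneously}: (i) the optimal weights $\alpha^{*,\mathbf{b}}$ differ from $\predw$ by at most $(1+\eps)^{\pm 1}$ in each coordinate from phase $j$, so the $k$ phases fit inside the error budget multiplicatively; and (ii) no matter how $\cA$ hedges between $a_j$ and $b_j$ using fractional allocations, the adversary can extract $\Omega(1)$ loss in phase $j$ without ``re-using'' slack that $\cA$ may have built up in earlier phases. Property (ii) is what really prevents hedging from beating the logarithmic bound, and it will require the gadget's capacities to be tuned so that any fractional commitment $x\in[0,1]$ in phase $j$ loses roughly $\min(x,1-x)+\Omega(1)$ on the adversary's choice, independent of earlier decisions.
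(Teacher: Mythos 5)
Your plan diverges substantially from the paper's proof, and as it stands it has gaps that I do not think can be repaired in the form you describe. The paper's construction is a \emph{single-shot} symmetric instance: $n$ unit-capacity advertisers, a uniformly random permutation $\pi$, the first $s$ impressions adjacent to \emph{all} advertisers and the remaining $n-s$ adjacent only to $\{a_{\pi(j)}\}_{j>s}$, with the all-ones prediction. Symmetry over $\pi$ forces any online algorithm to spread each early impression uniformly in expectation, so the expected value is at most $n(1-p(1-p))$ with $p=(n-s)/n$, while weights $w_1=p/\eps$ on the first $s$ advertisers and $w_2=1$ elsewhere give a $(1-\eps)$-approximation, so $\eta\le p/\eps$; choosing $p,\eps$ with $\log(p/\eps)\le p(1-p)$ yields the bound, and the averaging over $\pi$ already handles randomized algorithms. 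Your multi-phase adaptive-adversary scheme is a legitimately different template, but the quantities do not line up.

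Concretely, two steps would fail. First, if each of your $k=\Theta(\log_{1+\eps}\eta)$ phases carries a $\Theta(1/k)$ share of $\opt$ and the adversary extracts an $\Omega(1)$ \emph{fraction} of each phase's flow, the total relative loss is $\Omega(1)$ — a constant — which would contradict the paper's own upper bound of $1-O(\eps\log_{(1+\eps)}\eta)$ (Theorem~\ref{thm:online_algo}); your proposed recalibration (``absolute loss $\Omega(k)$ against $\opt=\Theta(1)$'') gives a ratio $1-\Omega(\log\eta/\eps)$, which again overshoots and cannot be achieved, so the gadget you defer as ``the hard part'' cannot exist with the stated per-phase loss. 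Second, your accounting of the error budget treats $\eta$ as if it were a \emph{product} of per-phase perturbations, but $\eta$ is defined as a \emph{maximum over coordinates} of $\max(\predw_v/\alpha^*_v,\alpha^*_v/\predw_v)$. If each phase perturbs only its own advertisers' weights by $(1+\eps)^{\pm1}$, then $\eta=1+\eps$ no matter how many phases you concatenate, and nothing forces $k\le\log_{1+\eps}\eta$; to make the perturbations compound you would need them to accumulate on a single coordinate or along a chain, which your gadget description does not provide. The missing idea is the one the paper uses: make the prediction uninformative not by multiplying many small perturbations but by hiding a \emph{single} partition of the advertisers behind a random permutation, so that the required weight ratio between the two sides (namely $p/\eps$) directly equals $\eta$, and the unavoidable loss $p(1-p)$ is then compared to $\log\eta$ by a choice of parameters.
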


\begin{proof}
	
	The basic idea is to construct a set of impressions and predictions such that $\eta$ is very small and the expectation of any algorithm's competitive ratio is at most $1-\log \eta$, indicating the worst ratio among these instances cannot be better than this expectation value.  
	
	More specifically, the graph $G$ has $n$ advertisers $(a_1,...,a_n)$ and $n$ impressions $(i_1,...,i_n)$. Sample a uniform random permutation $\pi$ of set $[n]$. Given any parameter $0< s < n$, define the edge set to be
	\[ \E = \{ (a_{\pi(j)}, i_k)  | j,k \leq s  \} \cup \{ (a_{\pi(j)}, i_k) | j >s \text{ and } 1\leq k \leq n \}. \]
	Note that this parameter $s$ will be served as an bridge between the competitive ratio and the prediction error.
	
	The impressions arrive in the order $i_1,i_2,...,i_n$. For each advertiser $a$, we set its capacity $C_a$ to be $1$ and its predictive weight $\predw_a$ to be $1$.
	
	We can see Fig~\ref{fig:upperbound} as an illustration. Given the parameter $s$, we can partition all advertisers into two sets according to the permutation $\pi$. The first $s$ impressions are adjacent to all advertisers, while the last $n-s$ impressions only connect to all purple advertisers.
	
	\begin{figure}[t]
		\centering
		\captionsetup{justification=centering}
		\includegraphics[width=0.6\linewidth]{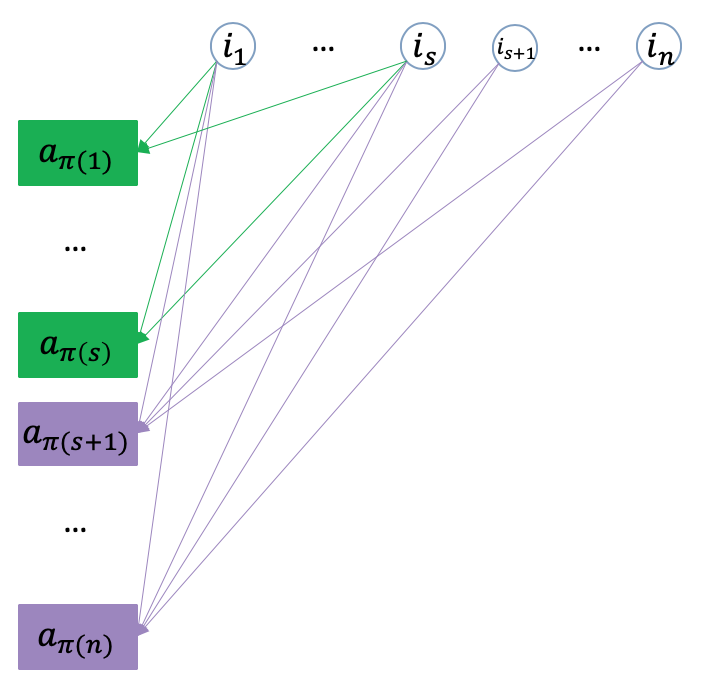}
		\caption{An illustration of the constructed instance. }
		\label{fig:upperbound}
	\end{figure}
	
	We first give the expected competitive ratio of this instance, then analyze $\{ \beta^* \}$ and give $\eta$. As mentioned above, both the ratio and $\eta$ will be related to parameter $s$. Thus, we draw the conclusion that if $s$ satisfies some conditions, the ratio will be at most $1-\log \eta$.
	
	\begin{lemma}\label{lem:bound_ratio}
		Let $p$ be $\frac{n-s}{n}$,  the expected competitive ratio $r$ of any online algorithm is at most $1-p(1-p)$.
	\end{lemma}
	\begin{proof}[Proof of Lemma~\ref{lem:bound_ratio}]
		Clearly, for any permutation $\pi$, there is always a perfect matching in this graph. Namely, the expected optimal value $\opt$ is $n$. Consider the expected value of each $x_{a_{\pi(j)},i_k}$. When $1\leq k\leq s$, this value is at most $\frac{1}{n}$ because for any two advertisers $a_{\pi(j_1)}$ and $a_{\pi(j_2)}$, we have $E(x_{a_{\pi(j_1)},i_k}) = E(x_{a_{\pi(j_2)},i_k})$ by symmetry, and the sum of all advertisers' values is at most $1$. Similarly, when $s<k<n$, $x_{a_{\pi(j)},i_k}$ is $0$ if $j \leq s$ and at most $\frac{1}{n-s}$ if $j>s$. Thus, each green advertiser matches at most $s/n$ impressions while each purple advertiser matches at most $1$ impressions. The expected competitive ratio of any online fractional matching algorithm can be bounded:
		
		\begin{equation}
		\begin{aligned}
		r &= \frac{\sum_{j=1}^{n} \sum_{k=1}^{n}E(x_{a_{\pi(j),i_k}}) }{n}  \\
		&\leq \frac{1}{n}(\frac{s}{n}\cdot s + n-s)\\
		& = 1-\frac{(n-s)s}{n^2} \\
		& = 1- p(1-p)
		\end{aligned}
		\end{equation}
	\end{proof}
	
	\begin{lemma}\label{lem:bound_error}
		Given any permutation $\pi$ and any $\epsilon \in (0,1)$, there exists a set of weights $\{ \beta^* \}$ with $\eta_{\beta^*} = p/\epsilon$ such that it can achieve a $(1-\epsilon)$-approximated fractional matching.  
	\end{lemma}
	
	\begin{proof}[Proof of Lemma~\ref{lem:bound_error}]
		Since the first $s$ advertisers in the permutation are equivalent, we let $\beta^*_{a_{\pi(1)}} = \beta^*_{a_{\pi(2)}} = ... = \beta^*_{a_{\pi(s)}} = w_1.$ Similarly, we have $\beta^*_{a_{\pi(s+1)}} = \beta^*_{a_{\pi(s+2)}} = ... = \beta^*_{a_{\pi(n)}}=w_2.$ Clearly, the last $n-s$ impressions will fill out the last $n-s$ advertisers. For each one in the first $s$ impressions, the unmatched proportion is the proportion that assigned to the last $n-s$ advertisers. Letting $w_2=1$, this proportion is $\frac{(n-s)}{sw_1+(n-s)}$. Thus, we can compute the size of the fractional matching obtained by these weights:
		\begin{equation}
		\begin{aligned}
		\val = n- \frac{s(n-s)}{sw_1+(n-s)}
		\end{aligned}
		\end{equation}
		We desire that these weights can obtain a $(1-\epsilon)$-approximated fractional matching. Thus, we have
		\begin{equation}
		\begin{aligned}
		n- \frac{s(n-s)}{sw_1+(n-s)} \geq (1-\epsilon)n 
		\end{aligned}
		\end{equation}
		Solving the inequality above, we have
		\begin{equation}
		\begin{aligned}
		w_1 \geq \frac{p}{\epsilon} -\frac{p}{1-p}
		\end{aligned}
		\end{equation}
		If $w_1=\frac{p}{\epsilon}$, we can achieve a $(1-\epsilon)$-approximated fractional matching. Since all predicted weights equal one, when $p > \epsilon$, the error is $p/\epsilon$, completing the proof.
	\end{proof}
	
	According to Lemma~\ref{lem:bound_error} and the definition of $\eta$, we have $\eta \leq p/\epsilon$. When $0<\log (\frac{p}{\epsilon}) \leq p(1-p)$, we can bounded the expected competitive ratio:
	\begin{equation}
	r \leq 1-p(1-p) \leq 1-\log (\frac{p}{\epsilon})  \leq 1- \log \eta,
	\end{equation}
	completing the proof of Theorem~\ref{thm:bound}.
	
\end{proof}

\subsection{A Worst-case Bound}\label{subsec:worst_case_bound}

To show that our algorithm is competitive, we presents a worst-case bound in the subsection:

\begin{theorem}\label{thm:worst_case_bound}
	Considering the integral version of the online flow allocation problem, for any deterministic algorithm, its competitive ratio cannot be better than $1/(d+1)$.
\end{theorem}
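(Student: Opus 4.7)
The plan is to construct, for every integer $d \ge 1$, a $d$-layered DAG $H_d$ together with an adversarial arrival order of impressions such that the best deterministic online integral algorithm routes at most one unit of flow to $t$, while the offline optimum routes $d+1$ disjoint unit flows. Since every algorithm's decision at an impression is simply a choice of an $s$-$t$ path of vertex-disjoint capacity-$1$ nodes, this immediately gives the $1/(d+1)$ bound; the construction is adaptive, using only the algorithm's past irrevocable commitments (legal in the online adversarial model).

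The argument is by induction on $d$. The base case $d=1$ is the classical two-impression/two-advertiser matching lower bound: $a_1, a_2$ each of capacity $1$ are connected to $t$; impression $i_1$ arrives adjacent to both; whichever vertex the algorithm commits to, the adversary releases $i_2$ adjacent only to that vertex. The algorithm serves one impression, the optimum serves two, giving ratio exactly $1/2=1/(d+1)$. This matches the hint in the paper that the general bound is a recursive lifting of this simple $1/2$ bound.

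For the inductive step, assuming the existence of $H_{d-1}$ with the stated property and a distinguished ``top'' layer $L_1^{(d-1)}$, I would build $H_d$ by prepending a new top layer with a single capacity-$1$ vertex $r$ that is connected to every vertex of $L_1^{(d-1)}$, and then taking $d+1$ parallel vertex-disjoint ``escape'' paths, each of length $d$, that start at distinct new vertices of layer $1$ and never touch $r$ or the recursive gadget. Impressions are released adaptively: first $i_0$ arrives adjacent to $r$ and to the top vertex of every escape path; whichever single path the algorithm chooses to route $i_0$ along, the adversary next releases impressions designed to block that path one layer at a time and simultaneously kill the remaining $d$ alternatives by invoking $H_{d-1}$ on the unused portion of the graph. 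The recursive hypothesis guarantees the algorithm nets at most $1$ additional unit from the $H_{d-1}$ substructure, while the adversary can always reveal an impression sequence whose optimum uses all $d+1$ vertex-disjoint paths (constructed by combining one escape path with the $d$-path optimum of $H_{d-1}$).

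The main obstacle will be engineering the new layer so that (i) the algorithm is genuinely forced to commit irrevocably to a single path before the adversary exploits it, (ii) the blocking impressions revealed afterwards fit cleanly into the diameter-$d$ layered structure without increasing the diameter, and (iii) the $d+1$ vertex-disjoint optimal paths survive in $H_d$. The most delicate piece is routing the recursive $H_{d-1}$ gadget and the escape paths through the same global sink $t$ while keeping capacities tight (each equal to $1$) so that OPT's $d+1$ paths do not interfere. Once the construction is verified, an inductive count gives $\alg(H_d) \le 1$ and $\opt(H_d) \ge d+1$, which proves Theorem~\ref{thm:worst_case_bound}.
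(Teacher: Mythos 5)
Your base case is right and your overall instinct (adaptive adversary, force the algorithm to commit to one path, then punish that commitment) matches the paper, but the inductive step as you describe it does not give the claimed bound and is left with the hard parts unresolved. The quantitative problem is this: you let the algorithm collect one unit from $i_0$ and then say the recursive hypothesis ``guarantees the algorithm nets at most $1$ additional unit from the $H_{d-1}$ substructure.'' That is $2$ units against an optimum of $d+1$, i.e.\ a ratio of $2/(d+1)$, not $1/(d+1)$; and you cannot hope to drive the recursive contribution to $0$, because any fresh sub-instance on unused, reachable vertices lets the algorithm route at least one more unit. Your concluding claim $\alg(H_d)\le 1$ contradicts your own inductive step. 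The items you flag as ``obstacles'' (forcing irrevocable commitment, fitting the blocking impressions into the layered structure, keeping OPT's $d+1$ paths disjoint) are exactly the content of the proof, so the proposal is also incomplete as written.

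The paper's construction avoids recursion on sub-instances entirely and is worth internalizing. Take $G$ minus $t$ to be two complete binary trees of depth $d$ with all vertex capacities $1$, every depth-$d$ vertex joined to $t$. The first impression $i_0$ is adjacent to the two roots; the deterministic algorithm must commit a unit of flow along some path $(v_1,v_2,\ldots,v_d,t)$. The adversary then releases $d$ impressions $i_1,\ldots,i_d$, where $i_k$ is adjacent \emph{only} to $v_k$. Since $v_k$ is already saturated, the algorithm gains nothing from any of them, so $\alg = 1$ exactly. Offline, one routes $i_0$ through the root the algorithm did not use, and routes each $i_k$ through $v_k$ and then into the child subtree of $v_k$ that avoids $v_{k+1},\ldots,v_d$; these $d+1$ paths are vertex-disjoint, so $\opt = d+1$. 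The key idea your proposal is missing is that the punishing impressions must be attached solely to vertices the algorithm has already saturated (which the online model permits, since impressions are sources that may connect to any subset of offline nodes) rather than to a fresh recursive gadget from which the algorithm can always extract more flow.
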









\begin{figure}[t]
	\centering
	\includegraphics[width=0.6\linewidth]{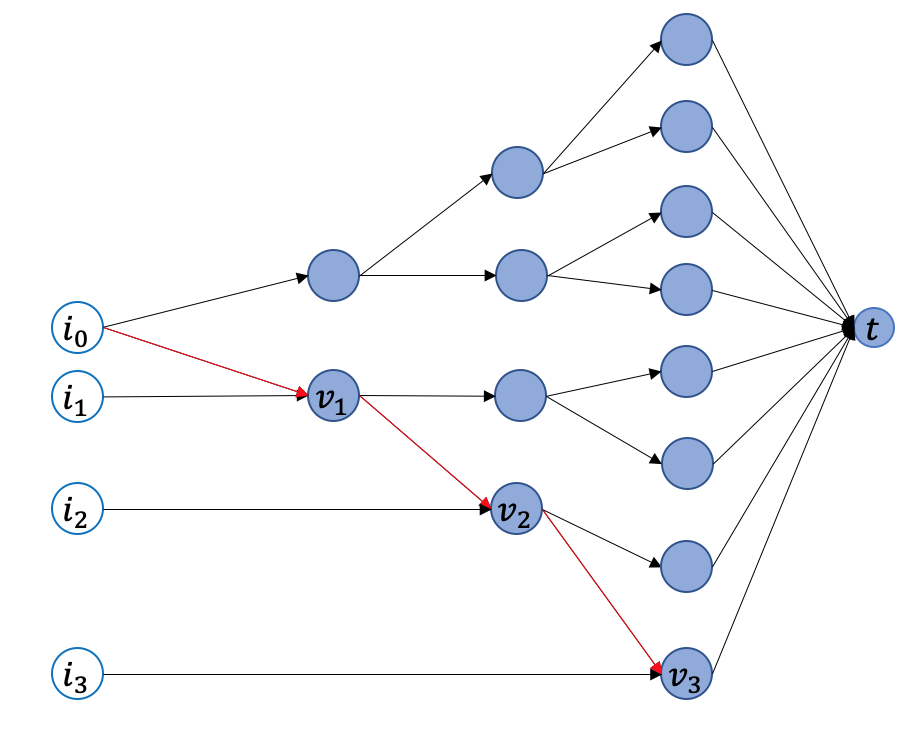}
	\caption{An illustration of the constructed graph with $d=3$. The red path is the path selected by an online algorithm $\cA$ when the first impression arrives. Clearly, the optimal solution sends 4 units flow to $t$ but algorithm $\cA$ only sends one unit. }
	\label{fig:upperbounddag}
\end{figure}

\begin{proof}
	We first construct a directed graph $G(V\cup \{t\},E )$, and then show that given any online algorithm $\cA$, there is a set of impressions such that its competitive ratio is $1/(d+1)$.
	
	As shown in Fig.~\ref{fig:upperbounddag}, the graph has $d+1$ layers. There are exactly two vertices and one vertex (the sink $t$) in the first layer and the last layer respectively. Each vertex except those in the first layer or the last layer is pointed by exactly one vertex in its previous layer, while each vertex except those in the last two layers is adjacent to exactly two vertices in its next layer. All vertices in the $d$-th layer are adjacent to $t$. The capacity of each vertex other than $t$ is $1$.
	
	Clearly, the graph $G$ excluding $t$ consists of two complete binary trees. Now we construct the impression set. The first impression $i_0$ is adjacent to the two vertices in the first layer. For any online algorithm $\cA$, it should select a path to $t$ for this impression. Otherwise, let no impressions arrive any more and the competitive ratio is $0$. Use $(v_1,v_2,...,v_d,t)$ to represent the selected path. Then $d$ impressions $(i_1,i_2,...,i_d)$ arrives sequentially. Each impression $i_k$ is only adjacent to the vertex $v_k$. Since the capacity of $v_k$ is only $1$, $\cA$ sends only one unit of flow to $t$.
	
	However, the optimal solution satisfies all $d+1$ impressions. When impression $i_k$ arrives, the optimal solution can always pick a path that does not contain any vertex in $\{v_{k+1},...,v_d\}$ for it. Thus, the competitive ratio of this instance is $1/(d+1)$, completing this proof. 
\end{proof}	

\section{Learnable and Instance-Robust Predictions for Online Load Balancing} \label{sec:apdx_scheduling}

In this section we prove the results about online load balancing with restricted assignments stated in Section~\ref{sec:load_balancing_results}.  Recall that \cite{PropMatchAgrawal,DBLP:conf/soda/LattanziLMV20} show the existence of useful predictions (machine weights) for this problem.  Thus we focus on studying the instance robustness and learnability of these predictions.

\subsection{Instance Robustness}

Recall that the theorem we want to prove is the following:
\begin{theorem}[Theorem~\ref{thm:makespan-robustness} Restated] \label{thm:makespan-robustness_restate}
	For any instance $S$ and $\epsilon >0$, let $w$ be weights such that $\alg(S,w) \leq (1+\eps)\opt(S)$.  Then for any instance $S'$ we have $\alg(S',w) \leq (1+\eps)^2\eta(S,S')^2\opt(S')$.  
\end{theorem}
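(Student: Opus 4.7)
The plan is to prove the theorem by using two instance-transfer inequalities connected through a ``detour'' to optimal weights $w'$ for $S'$. The key observation is that for any weights $w$ and any instance $T$, the load on machine $i$ is a simple linear function of the job-type sizes:
\begin{equation}
L_i(T, w) \;=\; \sum_{j \,:\, i \in N_j} T_j \cdot \frac{w_i}{\sum_{i' \in N_j} w_{i'}}.
\end{equation}
Since $S'_j \le \eta(S,S') \cdot S_j$ termwise (and symmetrically), we get the core lemma $L_i(S', w) \le \eta(S,S') \cdot L_i(S, w)$ for every machine $i$ and every set of weights $w$. Taking the max over $i$ yields $\alg(S', w) \le \eta(S,S') \cdot \alg(S, w)$ for any $w$, and the symmetric bound $\alg(S, w) \le \eta(S,S') \cdot \alg(S', w)$.

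With this transfer lemma in hand, the proof is a four-step chain. First, apply the transfer lemma to move $w$ from $S$ to $S'$: $\alg(S', w) \le \eta(S,S') \cdot \alg(S, w)$. Second, invoke the hypothesis that $w$ is $(1+\eps)$-approximate on $S$: $\alg(S, w) \le (1+\eps)\opt(S)$. Third, we must bound $\opt(S)$ in terms of $\opt(S')$, and this is where the second factor of $\eta$ enters: let $w'$ be weights given by prior work such that $\alg(S', w') \le (1+\eps)\opt(S')$; then $\opt(S) \le \alg(S, w')$ (since the weighted scheme is a feasible fractional assignment), and applying the transfer lemma again gives $\alg(S, w') \le \eta(S,S') \cdot \alg(S', w') \le \eta(S,S')(1+\eps)\opt(S')$. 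Chaining these inequalities yields
\begin{equation}
\alg(S', w) \;\le\; \eta(S,S')(1+\eps) \cdot \opt(S) \;\le\; \eta(S,S')^2 (1+\eps)^2 \opt(S'),
\end{equation}
which is exactly the claimed bound.

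There is no real obstacle: everything is a straightforward termwise comparison followed by a max. The only subtle point is that one cannot hope to bound $\opt(S)$ directly from $\opt(S')$ via some combinatorial argument on fractional assignments and expect to avoid the $\eta^2$ factor, so routing through the near-optimal weights $w'$ for $S'$ (which exist by \cite{PropMatchAgrawal,DBLP:conf/soda/LattanziLMV20}) is what makes the chain go through cleanly and explains why two $(1+\eps)$ factors appear in the final bound. The matching intuition given in the paper --- ``one factor from $w$ on $S'$, one from $w'$ on $S$'' --- corresponds exactly to the two applications of the load-transfer lemma in the argument above.
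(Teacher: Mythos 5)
Your proposal is correct and follows essentially the same route as the paper's proof: the termwise load-transfer inequality $L_i(S',w)\le \eta(S,S')\,L_i(S,w)$, the detour through near-optimal weights $w'$ for $S'$ via $\opt(S)\le \alg(S,w')$, and the same four-link chain of inequalities. The only difference is cosmetic bookkeeping of where the two $(1+\eps)$ factors are introduced along the chain.
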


\begin{proof}
	The basic idea of this proof is analyzing how much the performance of a set of weights changes when the instance changes.
	For each job type $j$ and machine $i\in N_j$, let $x_{ij}(w)$ be the proportion of this type of job that assigned to machine $i$ using weights $w$.	
	In instance $S$, for each machine $i$, let $L_i(w,S)$ be the load on machine $i$ using weights $w$. Clearly,
	\[ L_i(w,S) = \sum_{j} S_jx_{ij}(w). \]
	
	Now if we turn instance $S$ into a new instance $S'$, the load of each machine increases at most $\eta$:
	\[ L_i(w,S') = \sum_{j} S_j'x_{ij}(w) \leq \eta \sum_{j} S_jx_{ij}(w) = \eta L_i(w,S).\]
	
	Let $w'$ be a $(1+\epsilon)$-approximated weights of the instance $S'$. Similarly, we have
	\[ L_i(w',S) = \sum_{j} S_jx_{ij}(w') \leq \eta \sum_{j} S_j'x_{ij}(w') = \eta L_i(w',S'). \]
	
	According to the near-optimality of weights $w$ on the instance $S$, we have
	\[ \max_{i} L_i(w,S) \leq \max_i L_i(w',S). \]
	
	Thus,
	\begin{equation*}
	\begin{aligned}
	\max_{i} L_i(w,S')& \leq \eta \max_{i} L_i(w,S)\\
	& \leq (1+\epsilon)\eta \max_i L_i(w',S) \\
	&\leq (1+\epsilon)\eta^2 \max_i L_i(w',S')\\
	&\leq (1+\epsilon)^2\eta^2 T, 
	\end{aligned}
	\end{equation*}
	
	completing this proof.

\end{proof}

\subsection{Learnability}

We show that machine weights for makespan minimization are learnable from data in the following formal sense.  There is an unknown distribution $\cD$ over instances of the problem.  A sample $S \sim \cD$ consists of $n$ jobs, where job $j$ has size $p_j$ and neighborhood $N(j) \subseteq [m]$ of machines.  For simplicity, we assume $\cD = \prod_{j=1}^n \cD_j$, i.e. each job is sampled independently from it's own ``private'' distribution and that $p_j = 1$ for all jobs.  Later we show how to generalize to different sizes.  Let $\alg(w,S)$ be the fractional makespan on instance $S$ with weights $w$.  We want to show that we can find weights $w$ given $s$ samples $S_1,S_2,\ldots,S_s$ from $\cD$ such that $\E_{S \sim \cD}[ \alg(w,S)] \leq (1+O(\epsilon)) \E[\opt(S)]$ with high probability (i.e. probability at least $1-\delta$ for $\delta > 0$.  Here $\opt(S)$ is the optimal (fractional) makespan on job set $S$.  Note that such a result also implies that these weights $w$ also satisfy
$\E_{S \sim \cD}[ \alg(w,S)] \leq (1+O(\epsilon)) \min_{w'} \E_{S \sim \cD}[ \alg(w',S)]$ 
with high probability, i.e. they are comparable to the best set of weights for the distribution $\cD$.  Ideally, we want $s = \poly(m,\frac{1}{\epsilon},\frac{1}{\delta})$ number of samples, and lower is better.

\subsubsection{Preliminary Results on Proportional Weights}

We need the following prior results about the weights.  Recall that given a set of jobs $S$ and weights $w \in \R_+^m$ we consider the following fractional assignment rule for job $j$ and $i \in N(j)$.
\begin{equation} \label{eqn:weights_assign}
    x_{ij}(w) := \frac{w_i}{\sum_{i' \in N(j)} w_{i'} }
\end{equation}

For ease of notation we assume that $x_{ij} = 0$ whenever $i \notin N(j)$.  We would like to find weights $w$ such that $x_{ij}(w)$ approximately solves the following LP.

\begin{equation} \label{eqn:matching_lp}
    \begin{array}{ccc}
        \text{maximize} & \displaystyle \sum_{i} \sum_j x_{ij} &  \\
         & \displaystyle \sum_{j} x_{ij} \leq T_i & \forall i \in [m]\\
         & \displaystyle \sum_{i} x_{ij}  \leq 1  & \forall j \in S \\
         & x \geq 0
    \end{array}
\end{equation}

Here, the right hand side values $T_i$ are inputs and can be thought as all being set to the optimal makespan.  Given an assignment via the weights $x_{ij}(w)$ via weights $w$, we can always convert it to a feasible solution to LP~\eqref{eqn:matching_lp} in the following way.  For all $i \in [m]$ let $O_i = \max\{\sum_j x_{ij}(w) / T_i ,1\}$.  It is easy to see that $x'$ is feasible for LP~\eqref{eqn:matching_lp} and that the amount lost is exactly the overallocation $\sum_i \max\{\sum_{j} x_{ij}(w) - T_i,0\}$.  We can then take $x_{ij}' = x_{ij}(w) / O_i$ for all $i,j$.  The following theorem is adapted from Agrawal et al.

\begin{theorem}[Theorem 1 in Agrawal et al.] \label{thm:weights_thm}
For any $\delta \in (0,1)$, there exists an algorithm which finds weights $w$ such that a downscaling of $x_{ij}(w)$ is a $1-\delta$-approximation to LP~\eqref{eqn:matching_lp}.  The algorithm operates in $R = O(\frac{1}{\delta^2} \log( m / \delta))$ iterations and produces weights of the form $w_i = (1+\epsilon)^k$ for $k \in [0,R]$.
\end{theorem}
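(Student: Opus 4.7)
The plan is to observe that LP~\eqref{eqn:matching_lp} is a fractional bipartite $b$-matching LP in disguise: treat each machine $i$ as an ``advertiser'' with capacity $T_i$ and each job $j$ as an ``impression'' with unit demand connected to its neighborhood $N(j)\subseteq[m]$. With this identification, Algorithm~\ref{alg:bi_prop_alg} and the analysis of Theorem~\ref{thm:bi_prop_thm} apply essentially verbatim, with the parameter $\epsilon$ of that theorem set to $\delta$.

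Concretely, I would run Algorithm~\ref{alg:bi_prop_alg} (with $C_a$ replaced by $T_i$) for $R=O(\delta^{-2}\log(m/\delta))$ iterations at multiplicative step $(1+\delta)$. Since every weight starts at $1$ and can only be divided by $(1+\delta)$ at most once per iteration, the resulting weights take the form $w_i = (1+\delta)^{-k_i}$ for some $k_i\in[0,R]$; multiplying all weights by $(1+\delta)^R$ (which leaves the proportional assignment $x_{ij}(w)$ unchanged) puts them into the stated form $w_i=(1+\delta)^{k}$ with $k\in[0,R]$. From $x_{ij}(w)$ I then obtain a feasible solution to LP~\eqref{eqn:matching_lp} by the standard downscaling $x'_{ij} = x_{ij}(w)/O_i$ with $O_i = \max\{\sum_j x_{ij}(w)/T_i,\,1\}$, as described in the paragraph preceding the theorem.

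For the approximation guarantee I would mirror the three-step gap-and-cut template used in the proof of Theorem~\ref{thm:bi_prop_thm}: (i) partition machines by weight level $A(k)$ and invoke the averaging argument of Lemma~\ref{lem:bi_averaging_arg} to select a level $\ell$ such that the total capacity in a gap $\g(A)$ of width $\log(m/\delta)/\delta$ is at most $\delta\cdot\val$; (ii) on the subgraph obtained by deleting $\g(A)$ construct the $s$--$t$ cut that groups $\g(A)^+$ with $t$, whose value $|N(\g(A)^+)|+C(\g(A)^-)$ upper-bounds the LP optimum restricted to this subgraph; (iii) apply Lemma~\ref{lem:bi_lemma_one} to conclude that machines in $\g(A)^-$ are filled to their capacity, while machines in $\g(A)^+$ are overallocated by at most a factor $(1+\delta)^2$, and use the fact that the weight ratio across the gap is at least $m/\delta$ to argue (as in Lemma~\ref{lem:bi_gap_argument}) that cross-edges from $N(\g(A)^+)$ back into $\g(A)^-$ carry only an $O(\delta)$ fraction of mass.

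The main obstacle is the overallocation loss in step (iii): on an overallocated machine the downscaling factor $O_i$ equals $Alloc_i/T_i$, which by the second half of Lemma~\ref{lem:bi_lemma_one} is at most $(1+\delta)^2$, so the downscaled fractional solution still delivers at least a $1/(1+\delta)^2 = 1-O(\delta)$ fraction of the mass originally routed to $\g(A)^+$. Chaining the three steps gives
\[
\val \;\geq\; (1-O(\delta))\,\opt(G'') \;\geq\; (1-O(\delta))\bigl(\opt(\text{LP}) - \delta\cdot\val\bigr),
\]
which rearranges to $\val \geq (1-O(\delta))\,\opt(\text{LP})$. Rescaling $\delta$ by a constant then yields the claimed $(1-\delta)$-approximation within the iteration bound $R=O(\delta^{-2}\log(m/\delta))$ and with the stated discrete weight structure.
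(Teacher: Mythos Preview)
The paper does not supply its own proof of Theorem~\ref{thm:weights_thm}: it is quoted as ``Theorem~1 in Agrawal et al.'' and used as a black box to derive Corollary~\ref{cor:makespan_weights}. So there is no in-paper proof to compare against directly.

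That said, your proposal is correct and in fact recovers the result using machinery the paper \emph{does} develop. Your key observation --- that LP~\eqref{eqn:matching_lp} is exactly LP~\eqref{eqn:matching_lp_1} under the dictionary machines $\leftrightarrow$ advertisers, jobs $\leftrightarrow$ impressions, $T_i \leftrightarrow C_a$, $m \leftrightarrow n$ --- is precisely right, and once that identification is made, Algorithm~\ref{alg:bi_prop_alg} together with Lemmas~\ref{lem:bi_lemma_one}, \ref{lem:bi_gap_argument}, and~\ref{lem:bi_averaging_arg} give you the $(1-O(\delta))$ guarantee with $R=O(\delta^{-2}\log(m/\delta))$ iterations, exactly as in Theorem~\ref{thm:bi_prop_thm}. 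Your handling of the weight form (rescaling by $(1+\delta)^R$ to map $[-R,0]$ exponents to $[0,R]$) and of the downscaling step are both fine. In effect you have noticed that the paper's own simplified bipartite algorithm already re-proves the cited Agrawal et al.\ result, which is a nice internal consistency check; the original Agrawal et al.\ algorithm additionally allows weight \emph{increases}, but as the paper notes in Section~\ref{sec:apdxexistweights_bi}, the decrease-only variant you invoke suffices and is what the paper actually analyzes.
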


Using this theorem, we get the following result as simple corollary.  Again let $S$ be set of $n$ jobs that we want to schedule on $m$ machines to minimize the makespan.  Let $T$ be the makespan of an optimal schedule

\begin{corollary} \label{cor:makespan_weights}
For any $\epsilon > 0$, there exists weights $w \in \R_+^m$ such that $x_{ij}(w)$ yields a fractional schedule with makespan at most $(1+\epsilon)T$.  The weights are computed by running for $R= O( m^2 \log(m / \epsilon) / \eps^2$ iterations and produces weights of the form $w_i = (1+\eps)^k$ for $k \in [0,R]$.
\end{corollary}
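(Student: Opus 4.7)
The plan is to reduce to Theorem~\ref{thm:weights_thm} by setting up LP~\eqref{eqn:matching_lp} appropriately. Let $T = \opt(S)$ be the optimal fractional makespan and set $T_i = T$ for every $i \in [m]$. Because there is a fractional schedule with makespan $T$ that assigns every job fully, the LP is feasible with optimal value exactly $n$ (each job contributes $1$ to the objective). I would then invoke Theorem~\ref{thm:weights_thm} with precision parameter $\delta := \epsilon/m$, producing weights $w$ of the form $w_i = (1+\eps)^k$ with $k \in [0,R]$ for $R = O(\delta^{-2}\log(m/\delta)) = O(m^2 \log(m/\epsilon)/\epsilon^2)$, which matches the iteration count claimed in the corollary.

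The theorem guarantees that the downscaled assignment $x'_{ij} = x_{ij}(w)/O_i$, where $O_i := \max\{\sum_j x_{ij}(w)/T,\,1\}$, has LP objective $\sum_{ij} x'_{ij} \geq (1-\delta)n$. Since the proportional rule gives $\sum_i x_{ij}(w) = 1$ for every job, the total mass is exactly $n$, so the LP objective achieved by $x'$ equals $\sum_i \min\{\sum_j x_{ij}(w),\,T\}$. Rearranging, the total overallocation $\sum_i \max\{\sum_j x_{ij}(w) - T,\,0\}$ is at most $\delta n = \epsilon n / m$. The overallocation of any single machine is at most this total, and since the optimum makespan must be at least the average load, $T \geq n/m$, every machine satisfies $\sum_j x_{ij}(w) \leq T + \epsilon n/m \leq (1+\epsilon)\,T$, which is the makespan bound we want for the (non-downscaled) assignment $x_{ij}(w)$.

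The main subtlety, and the only real step of the argument, is the choice of $\delta$. Picking $\delta = \epsilon$ directly would only bound the \emph{average} overallocation rather than the pointwise maximum, which is the quantity controlling the makespan. Scaling down to $\delta = \epsilon/m$ costs a factor $m^2$ in iterations but lets us convert a sum bound into a max bound via the trivial sum-majorizes-max inequality combined with the lower bound $T \geq n/m$. Everything else—the dyadic discretization of the weights, the counting of iterations, and the fact that this is a pure existence statement—follows immediately from Theorem~\ref{thm:weights_thm} applied as a black box.
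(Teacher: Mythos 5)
Your proposal is correct and follows essentially the same route as the paper's own proof: set $T_i = T$ so the LP optimum is $n$, invoke Theorem~\ref{thm:weights_thm} with $\delta = \epsilon/m$, observe that the value lost in downscaling equals the total overallocation and is therefore at most $\epsilon n/m$, and conclude via $T \geq n/m$ that every machine's load is at most $(1+\epsilon)T$. Your remark about why $\delta = \epsilon$ alone would not suffice is exactly the point the paper's choice of $\delta = \epsilon/m$ is addressing.
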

\begin{proof}
Consider running the algorithm of Theorem~\ref{thm:weights_thm} with $\delta = \eps / m$ and $T_i = T$ for all $i$.  The optimal value of \eqref{eqn:matching_lp} on this instance is exactly $n$ since $T$ is the optimal makespan and thus we are able to assign all the jobs.  After scaling down to be feasible, the solution has value at least $(1- \epsilon / m) n$.  We only scaled down the assignment on machines for which its assignment was greater than $T$, and the amount we lost in this scaling down was at most $\eps n / m$.  Thus in the worst case, any machines assignment using the weights is at most $T + \eps n / m \leq (1+\eps)T$, since $T \geq n / m$.
\end{proof}

\subsubsection{Learning the Weights}

Now we show that computing the weights on a ``stacked'' instance is a reasonable thing to do.  Let's set up some more notation.  Let $\cW(R)$ be the set of possible weights output by $R$ iterations of the proportional algorithm.  Let $\opt(S)$ be the optimal fractional makespan on job set $S$.  We are interested in the case when $\E_{S \sim \cD}[\opt(S)] = \Omega(\log m )$.  Let $L_i(w,S)$ be the fractional load of machine $i$ on instance $S$ with weights $w$.  Thus we have $\alg(w,S) = \max_i L_i(w,S)$.  Note that $L_i(w,S) = \sum_{j \in S} x_{ij}(w)$.  Our first lemma shows that $\E_{S \sim \cD}[\alg(w,S)] \approx \max_i \E_{S \sim \cD}[L_i(w,S)]$.  When it is clear, we will suppress $S \sim \cD$ for ease of notation.

\begin{lemma} \label{lem:commute_avg_max}
    Let $\epsilon > 0$ be given.  If $\E_{S \sim \cD}[\opt(S)] \geq \frac{4+2\eps}{\eps^2} \log( \frac{m}{\sqrt{\eps}})$, then for all $R$ and all weights $w \in \cW(R)$, we have $\E_{S \sim \cD}[\alg(w,S)] \leq (1+ 2\epsilon)\max_i \E_{S \sim \cD}[L_i(w,S)]$.
\end{lemma}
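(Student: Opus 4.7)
The plan is to establish multiplicative concentration for each machine load $L_i(w,S) = \sum_{j} x_{ij}(w)$ and then obtain the conclusion via a union bound and tail integration. Since $\cD$ is a product distribution, the summands $x_{ij}(w)$ are independent across jobs $j$ and bounded in $[0,1]$ (for fixed $w$, only the random neighborhood $N(j)$ controls $x_{ij}(w)$, and $\sum_i x_{ij}(w) = 1$). This puts each $L_i(w,S)$ squarely in the standard Chernoff regime.

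First I would set $\mu_i := \E_{S\sim\cD}[L_i(w,S)]$ and $\mu^* := \max_i \mu_i$, then apply the multiplicative Chernoff bound to each $L_i$. The elementary calculation $((1+\delta)\mu^* - \mu_i)^2/\mu_i \geq \delta^2\mu^*$ for $\mu_i \in (0,\mu^*]$ lets me convert the mean-centered statement into a $\mu^*$-centered one, giving
\[
\Pr\bigl[L_i(w,S) \geq (1+\delta)\mu^*\bigr] \;\leq\; \exp\!\bigl(-\delta^2\mu^*/3\bigr) \qquad \text{for } \delta \in (0,1],
\]
and an analogous linear-exponent bound $\exp(-\delta\mu^*/3)$ for $\delta \geq 1$. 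A union bound over the $m$ machines yields $\Pr[\alg(w,S) \geq (1+\delta)\mu^*] \leq m\,e^{-\delta^2\mu^*/3}$.

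Next I would bound $\E[\alg(w,S)] = \int_0^\infty \Pr[\alg(w,S) \geq t]\,dt$ by splitting at $t = (1+\eps)\mu^*$, changing variables to $u = t/\mu^* - 1$, and using the Gaussian tail estimate $\int_\eps^\infty e^{-au^2}\,du \leq e^{-a\eps^2}/(2a\eps)$. This produces
\[
\E[\alg(w,S)] \;\leq\; (1+\eps)\mu^* \;+\; \frac{3m}{2\eps}\,e^{-\eps^2\mu^*/3} \;+\; 3m\,e^{-\mu^*/3},
\]
where the second term comes from the Gaussian regime $\delta \in [\eps,1]$ and the third from the linear regime $\delta > 1$. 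To reach the desired $(1+2\eps)\mu^*$, it suffices to force $\tfrac{3m}{2\eps}e^{-\eps^2\mu^*/3} \leq \eps\mu^*$, which holds whenever $\mu^* \geq \tfrac{4+2\eps}{\eps^2}\log(m/\sqrt{\eps})$.

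The main obstacle is the last step: the hypothesis lower-bounds $\E[\opt(S)]$, not $\mu^*$, and for arbitrary weights $w\in\cW(R)$ the two quantities are not a priori comparable (nothing rules out $\mu^* < \E[\opt(S)]$ in general). To close this gap I would combine the pointwise inequality $\alg(w,S) \geq \opt(S)$ (so $\E[\alg(w,S)] \geq \E[\opt(S)]$) with an additive Hoeffding-type bound $\E[\alg(w,S)] \leq \mu^* + O(\sqrt{\mu^*\log m} + \log m)$ obtained from the same concentration analysis. Rearranging yields $\mu^* \geq \E[\opt(S)] - O(\sqrt{\mu^*\log m})$, a self-consistent inequality whose solution, given the hypothesis on $\E[\opt(S)]$, forces $\mu^* \geq \tfrac{4+2\eps}{\eps^2}\log(m/\sqrt{\eps})$ and completes the proof. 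Making this last bootstrapping argument clean, while correctly threading the constants between the multiplicative and additive concentration bounds, is where I expect the bulk of the technical care to be required.
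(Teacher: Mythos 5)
Your proposal follows essentially the same route as the paper: per-machine multiplicative Chernoff concentration of $L_i(w,S)$ (valid because the $x_{ij}(w)$ are independent across jobs and lie in $[0,1]$), a union bound over the $m$ machines, and a decomposition of $\E[\alg(w,S)]$ into the event $\{\alg(w,S)\le(1+\eps)\mu^*\}$ plus a tail term. The paper handles the tail more crudely than you do --- it bounds $\alg(w,S)$ by $n$ on the bad event and shows the per-machine failure probability is at most $\eps/m^2$, so the tail contributes at most $\eps n/m\le\eps\mu^*$ --- whereas you integrate the tail; both work, and your version is just a refinement. The one substantive difference is the step you correctly flag as the ``main obstacle'': the paper simply asserts $\mu^*=\max_i\E[L_i(w,S)]\ge\E[\opt(S)]$ and plugs in the hypothesis, but this inequality is not justified (only $\E[\max_i L_i(w,S)]\ge\E[\opt(S)]$ holds pointwise, and the max does not commute with the expectation in the needed direction; one can construct product distributions where $\max_i\E[L_i]$ falls short of $\E[\opt]$ by an additive $\Theta(\sqrt{\E[\opt]})$). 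Your bootstrapping fix --- combining $\E[\alg(w,S)]\ge\E[\opt(S)]$ with the additive upper bound $\E[\alg(w,S)]\le\mu^*+O(\sqrt{\mu^*\log m}+\log m)$ from the same concentration machinery --- is the right repair and is more careful than the paper on this point. The only caveat is quantitative: the self-consistent inequality yields $\mu^*\ge(1-O(\eps))\cdot\frac{4+2\eps}{\eps^2}\log(m/\sqrt{\eps})$ rather than the threshold itself, so your final constant comes out as $(1+O(\eps))$ rather than exactly $(1+2\eps)$; since the lemma is consumed downstream with $(1+O(\eps))$ slack, this is harmless, but to match the stated constant you would need to tighten the Chernoff exponents or slightly strengthen the hypothesis.
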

\begin{proof}
Fix any $R$ and $w \in \cW(R)$.  We have the following simply bound on $\alg(w,S)$.  It can either be at most $(1+\eps)\max_i \E[L_i(w,S)]$, or it is larger in which case it is at most $n$.  Thus we have:
\[
\begin{split}
    \E[\alg(w,S)]  \leq & (1+\epsilon) \max_i \E[L_i(w,S)]  \\ & + n\Pr[\alg(w,S) > (1+\epsilon)\max_{i} \E[L_i(w,S)]] \\
     \leq & (1+\epsilon) \max_i \E[L_i(w,S)] + n \sum_i \Pr[L_i(w,S) \geq (1+\epsilon) \E[L_i(w,S)]]
\end{split}
\]
Now we claim that for each $i$, $\Pr[L_i(w,S) \geq (1+\epsilon) \E[L_i(w,S)]] \leq \epsilon / m^2$.  Indeed, if this is the case then we see that 
\[
\E[\alg(w,S)] \leq (1+\epsilon) \max_i \E[L_i(w,S)] + \frac{\epsilon n}{m} \leq (1+2\eps)\max_i \E[L_i(w,S)]  
\]
since $\max_i \E[L_i(w,S)] \geq n / m$, and thus proving the lemma.  Thus we just need to show the claim.  Recall that $L_i(w,S) = \sum_{j \in S} x_{ij}(w)$ and that each job $j$ is chosen to be part of $S$ independently from distribution $\cD_j$.  Thus $x_{ij}(w)$ is an independent random variable in the interval $[0,1]$ for each $j$.  Applying Theorem~\ref{thm:upper_chernoff} to $L_i(w,S)$ with $\mu = \max_{i'} \E[L_{i'}(w,S)]$, we see that since $\mu \geq \E[\opt(S)] \geq \frac{4+2\eps}{\eps^2} \log(\frac{m}{\sqrt{\eps}})$, we have
\[
\Pr[L_i(w,S) > (1+\eps) \mu] \leq \exp \left( - \frac{\eps^2}{2+\eps} \mu\right) \leq \exp \left( - \frac{\eps^2}{2+\eps} \E[\opt(S)] \right) \leq   \frac{\epsilon}{m^2}
\]
completing the proof of the claim.
\end{proof}

Now that we have  this lemma, we can show that computing the weights on a ``stacked'' instance suffices to find weights that generalize for the distribution.  The result we want to prove is the following.

\begin{theorem} \label{thm:learn_the_weights}
Let $\eps,\delta \in (0,1)$ and $R = O(\frac{m^2}{\eps^2}\log(\frac{m}{\eps}))$ be given and let $\cD = \prod_{j=1}^n \cD_j$ be a distribution over $n$-job restricted assignment instances such that $\E_{S \sim \cD}[\opt(S)] \geq \Omega(\frac{1}{\eps^2} \log(\frac{m}{\eps}))$.  There exists an algorithm which finds weights $w \in \cW(R)$ such that $\E_{S \sim \cD}[\alg(w,S)] \leq (1+O(\eps)) \min_{w' \in \cW(R)}\E_{S \sim \cD}[\alg(w',S)]$ when given access to $s = \poly(m,\frac{1}{\eps},\frac{1}{\delta})$ independent samples $S_1,S_2,\ldots,S_s \sim \cD$.  The algorithm succeeds with probability at least $1-O(\delta)$ over the random choice of samples.
\end{theorem}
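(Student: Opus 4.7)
I would use empirical risk minimization on a ``stacked'' instance. First, draw $s$ i.i.d.\ samples $S_1,\ldots,S_s \sim \cD$ and form the stacked instance $\hat{S} := \bigcup_{t=1}^s S_t$, which contains $sn$ mutually independent jobs by the product structure of $\cD$. Next, apply Corollary~\ref{cor:makespan_weights} with accuracy $\eps$ on $\hat{S}$ to obtain weights $\hat{w} \in \cW(R)$ with $\alg(\hat{w},\hat{S}) \leq (1+\eps)\opt(\hat{S})$. Letting $w^\star := \arg\min_{w \in \cW(R)} \E_{S \sim \cD}[\alg(w,S)]$, the task reduces to showing $\E[\alg(\hat{w},S)] \leq (1+O(\eps))\,\E[\alg(w^\star,S)]$ with probability at least $1-\delta$ over the sample draw.

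\textbf{Chain of inequalities.} The analysis chains $\E[\alg(\hat{w},S)]$ through stacked-instance quantities back to $\E[\alg(w^\star,S)]$. By Lemma~\ref{lem:commute_avg_max}, enabled by the assumption $\E[\opt(S)] = \Omega(\eps^{-2}\log(m/\eps))$, we have $\E[\alg(\hat{w},S)] \leq (1+2\eps)\max_i \E[L_i(\hat{w},S)]$. Uniform concentration (the main obstacle below) would then give $\max_i \E[L_i(\hat{w},S)] \leq \tfrac{1+\eps}{s}\alg(\hat{w},\hat{S}) + \eps\,\E[\opt(S)]$. The near-optimality of $\hat{w}$ on $\hat{S}$ together with $\opt(\hat{S}) \leq \alg(w^\star,\hat{S})$ gives $\alg(\hat{w},\hat{S}) \leq (1+\eps)\alg(w^\star,\hat{S})$. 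Concentration applied to $w^\star$ in the reverse direction yields $\tfrac{1}{s}\alg(w^\star,\hat{S}) \leq (1+\eps)\max_i \E[L_i(w^\star,S)] + \eps\,\E[\opt(S)]$. Finally the elementary inequality $\max_i \E[L_i(w^\star,S)] \leq \E[\max_i L_i(w^\star,S)] = \E[\alg(w^\star,S)]$ closes the chain. Since $\E[\alg(w^\star,S)] \geq \E[\opt(S)]$, all additive $\eps\,\E[\opt(S)]$ slack terms are absorbed into the final $(1+O(\eps))$ factor.

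\textbf{Main obstacle: uniform concentration.} The crux is to show that simultaneously for every $w \in \cW(R)$ and every $i \in [m]$, the empirical average $\tfrac{1}{s}L_i(w,\hat{S})$ is within a $(1\pm\eps)$ factor of $\E[L_i(w,S)]$, up to a small additive slack. For any fixed $(i,w)$, $L_i(w,\hat{S}) = \sum_{t=1}^s \sum_{j \in S_t} x_{ij}(w)$ is a sum of $sn$ independent $[0,1]$-valued random variables with mean $\mu := s\,\E[L_i(w,S)]$, so multiplicative Chernoff (Theorem~\ref{thm:upper_chernoff}) yields relative concentration only when $\mu$ is large enough. Small means would be handled by a threshold argument at $\tau := \Theta(\eps^{-2}\log(m|\cW(R)|/\delta))$: if $\mu \geq \tau$ use multiplicative Chernoff directly, and otherwise use a truncated Chernoff (at the threshold $\tau$ in place of $\mu$) to bound $L_i(w,\hat{S}) \leq (1+\eps)\tau$, each case failing with probability at most $\delta/(m|\cW(R)|)$. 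A union bound over all $m|\cW(R)|$ pairs delivers the uniform statement. Since weights in $\cW(R)$ take one of $R+1$ values per machine with $R = O(m^2\eps^{-2}\log(m/\eps))$, $\log|\cW(R)| = O(m\log(m/\eps))$, and the additive slack $\tau/s$ can be driven below $\eps\,\E[\opt(S)]$ using $s = \poly(m,1/\eps,\log(1/\delta))$ samples. The delicate point is ensuring that both the Lemma~\ref{lem:commute_avg_max} slack and this truncation slack stay $O(\eps)$ relative to the target $\E[\alg(w^\star,S)]$; this is exactly why the lower bound on $\E[\opt(S)]$ is needed.
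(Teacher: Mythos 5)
Your proposal is correct and follows the paper's strategy: build a stacked instance from the samples, run the offline proportional-weights algorithm (Corollary~\ref{cor:makespan_weights}) on it, and transfer the guarantee back to the distribution via Lemma~\ref{lem:commute_avg_max} together with uniform convergence of the machine loads over the finite class $\cW(R)$. Two execution details differ from the paper, both harmless. First, the paper calibrates the stacked LP with $T=\sum_\alpha \opt(S_\alpha)$ and closes the chain by concentrating the per-sample optima around $\E[\opt(S)]$ (Lemma~\ref{lem:opts_converge}); you instead benchmark against $\opt(\hat S)\le \alg(w^\star,\hat S)$ and apply the load-concentration bound to $w^\star$ itself, which is covered by the same union bound since $w^\star\in\cW(R)$ is fixed. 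Second, your concentration is per-job Chernoff with a truncation threshold and an additive slack absorbed by the lower bound on $\E[\opt(S)]$, whereas the paper's Lemma~\ref{lem:loads_converge} applies Hoeffding per sample with deviation $\eps\max_{i'}\E[L_{i'}(w,S)]$ and uses $\max_{i'}\E[L_{i'}(w,S)]\ge n/m$ to keep the error purely multiplicative; both yield $s=\poly(m,1/\eps,\log(1/\delta))$.
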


We will show that \emph{uniform convergence} occurs when we take $s = \poly(m,\frac{1}{\eps},\frac{1}{\delta})$ samples.  This means that for all $i \in [m]$ and all $w \in \cW(R)$ simultaneously, we have with probability $1-\delta$ that $\frac{1}{s} \sum_{\alpha} L_i(w,S_\alpha) \approx \E_S[L_i(w,S)]$.  Intuitively this should happen because the class of weights $\cW(R)$ is not too complex.  Indeed we have that $|\cW(R)| = R^m$, and thus the pseudo-dimension is $\log(|\cW(R)|)= m \log(R)= O(m \log m)$ when $R = O(m^2 \log m)$.  Once we have established uniform convergence, setting up the algorithm and analyzing it will be quite simple.  We start with some lemmas showing uniform convergence.

\begin{lemma} \label{lem:loads_converge}
Let $\eps,\delta \in (0,1)$ and $S_1,S_2,\ldots,S_s \sim \cD$ be independent samples.  If $s \geq 
\frac{m^2}{\eps^2}\log(\frac{2|\cW(R)|m}{\delta})$, 
then with probability at least $1-\delta$ for all $i \in [m]$ and $w \in \cW(R)$ we have 
\[
\left| \frac{1}{s} \sum_{\alpha} L_i(w,S_\alpha) - \E_S[L_i(w,S)] \right| \leq \epsilon \max_{i'} \E_S[L_{i'}(w,S)]
\]
\end{lemma}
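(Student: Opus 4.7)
The plan is a standard Hoeffding plus union bound argument, exploiting the product structure of $\cD$ both across samples and across jobs within a sample. First I would fix a single pair $(i,w) \in [m] \times \cW(R)$ and analyze $Y := \sum_\alpha L_i(w,S_\alpha)$, which has mean $s\mu_i$ where $\mu_i := \E_{S \sim \cD}[L_i(w,S)]$. The key observation is that each $L_i(w,S_\alpha) = \sum_{j=1}^n x_{ij}^{(\alpha)}(w)$ where $x_{ij}^{(\alpha)}(w) \in [0,1]$ is a function of the single independent random job drawn from $\cD_j$ for sample $\alpha$; combined with the independence across samples, $Y$ is a sum of $sn$ independent $[0,1]$-valued random variables.

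Next, I would apply Hoeffding's inequality to obtain $\Pr[\,|Y - s\mu_i| > t\,] \le 2\exp(-2t^2/(sn))$, and set $t = s\eps \mu^*$ where $\mu^* := \max_{i'} \E_S[L_{i'}(w,S)]$. This yields a per-pair failure probability of at most $2\exp(-2s\eps^2 (\mu^*)^2/n)$. To clean this up I would use the identity $\sum_{i'} L_{i'}(w,S) = n$ almost surely (each job's allocation sums to $1$), so $\sum_{i'} \mu_{i'} = n$ and hence $\mu^* \ge n/m$. Substituting gives a per-pair bound of $2\exp(-2s\eps^2 n/m^2) \le 2\exp(-2s\eps^2/m^2)$ for $n \ge 1$.

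Finally, I would take a union bound over the $m \cdot |\cW(R)|$ pairs, requiring $2\exp(-2s\eps^2/m^2) \le \delta/(m \cdot |\cW(R)|)$. Solving for $s$ gives the stated threshold $s \ge \frac{m^2}{\eps^2}\log\bigl(\frac{2|\cW(R)|m}{\delta}\bigr)$ (absorbing the constant $2$ in the exponent into the logarithm). Rewriting the absolute deviation of $Y$ as a deviation of the empirical mean $\frac{1}{s}\sum_\alpha L_i(w,S_\alpha)$ from $\mu_i$ then yields exactly the claimed uniform inequality.

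There is no substantial obstacle here: the only point that might look subtle is getting a sample complexity independent of $n$ and proportional to $m^2$ rather than $m$. That comes from two places working in tandem: (i) the refinement that $Y$ is a sum of $sn$ (not $s$) bounded random variables, which is what delivers the $(\mu^*)^2/n$ factor rather than $(\mu^*)^2/n^2$; and (ii) the mean lower bound $\mu^* \ge n/m$, which converts $(\mu^*)^2/n$ into $n/m^2$, cancelling the $n$. If one instead used the coarser bound that each $L_i(w,S_\alpha) \in [0,n]$ and applied Hoeffding over just $s$ terms, one would still recover the same $m^2/\eps^2$ rate, so either route suffices.
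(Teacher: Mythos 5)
Your proposal is correct and follows essentially the same route as the paper: Hoeffding's inequality applied to the empirical average of the loads, the lower bound $\max_{i'}\E[L_{i'}(w,S)] \geq n/m$ (from $\sum_{i'} L_{i'}(w,S) = n$) to eliminate the dependence on $n$, and a union bound over all $m\cdot|\cW(R)|$ pairs. The only difference is cosmetic: the paper applies Hoeffding to the $s$ variables $\frac{1}{s}L_i(w,S_\alpha)\in[0,n/s]$ (the coarser route you mention at the end), whereas your primary argument exploits within-sample independence to treat the sum as $sn$ variables in $[0,1]$, which gives a slightly stronger exponent but the same $s = \Theta(\frac{m^2}{\eps^2}\log(\frac{|\cW(R)|m}{\delta}))$ threshold.
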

\begin{proof}
Fix a machine $i \in [m]$ and $w \in \cW(R)$.  
We have that $\frac{1}{s}L_i(w,S_\alpha)$ is an independent random variable in $[0,n/s]$ for each $\alpha \in [s]$.  Moreover we have that $\E[ \frac{1}{s}\sum_\alpha L_i(w,S_\alpha)] = \E[L_i(w,S)]$.  Applying Theorem~\ref{thm:hoeffding} to $\frac{1}{s}\sum_{\alpha} L_i(w,S_\alpha)$ with $t = \eps \max_{i'}\E[L_{i'}(w,S)$, we have

\[
\Pr\left[ \left|\frac{1}{s}\sum_{\alpha} L_i(w,S_\alpha) -\E[L_i(w,S)] \right| \geq t \right] \leq 2\exp\left(- s\frac{\eps^2 (\max_{i'}\E[L_{i'}(w,S)])^2}{n^2} \right).
\]
We claim that if $s \geq \frac{m^2}{\eps^2}\log(\frac{2|\cW(R)|m}{\delta})$, then this probability is at most $\frac{\delta}{|\cW(R)|m}$.  Indeed, this claim follows if $m \geq \frac{n}{\max_{i'}\E[L_{i'}(w,S)]}$, which is true since $\max_{i'}\E[L_{i'}(w,S)] \geq n /m$.  Finally, the lemma follows by union bounding over all $i \in [m]$ and $w \in \cW(R)$.
\end{proof}

\begin{lemma} \label{lem:opts_converge}
Let $\eps, \delta \in (0,1)$ and $S_1,S_2,\ldots,S_s \sim \cD$ be independent samples.  For each $\alpha \in [s]$ let $T_\alpha = \opt(S_\alpha)$.  If $s \geq \frac{m^2}{\eps^2}\log(2/\delta)$ then with probability at least $1-\delta$ we have
\[
(1-\eps)\E[\opt(S)] \leq \frac{1}{s} \sum_{\alpha} T_\alpha \leq (1+\eps)\E[\opt(S)]
\]
\end{lemma}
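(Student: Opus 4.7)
The plan is to apply Hoeffding's inequality directly to the i.i.d.\ random variables $T_\alpha = \opt(S_\alpha)$, in the same spirit as Lemma~\ref{lem:loads_converge} but without any union bound, since there is only one quantity to concentrate here. First I would observe that each $T_\alpha$ lies in the bounded interval $[n/m,\, n]$: the upper bound is trivial (assigning one job per time unit uses at most $n$), and the lower bound follows because the total processing time is exactly $n$, so by averaging the maximum machine load is at least $n/m$. In particular, the variables $T_\alpha/s$ are independent, each lying in $[0,n/s]$, with common mean $\E_{S\sim\cD}[\opt(S)]$.

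Next, I would invoke Theorem~\ref{thm:hoeffding} with deviation parameter $t = \eps\,\E[\opt(S)]$ to obtain
\[
\Pr\!\left[\left|\frac{1}{s}\sum_{\alpha} T_\alpha - \E[\opt(S)]\right| \geq \eps\,\E[\opt(S)]\right] \;\leq\; 2\exp\!\left(-\frac{2s\eps^2\,\E[\opt(S)]^2}{n^2}\right).
\]
The critical conversion step then uses the bound $\E[\opt(S)] \geq n/m$ from the previous paragraph, which yields $n^2/\E[\opt(S)]^2 \leq m^2$ and hence the simpler bound $2\exp(-2s\eps^2/m^2)$ on the failure probability. Setting this at most $\delta$ and solving for $s$ gives the sufficient condition $s \geq \tfrac{m^2}{2\eps^2}\log(2/\delta)$, which is implied by the hypothesis $s \geq \tfrac{m^2}{\eps^2}\log(2/\delta)$ stated in the lemma.

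I do not anticipate a genuine obstacle here; the proof is essentially a one-shot Hoeffding bound combined with the elementary observation $\opt(S) \geq n/m$. The only subtlety worth double-checking is that the relative-error formulation (scaling the deviation by $\E[\opt(S)]$ rather than by the range $n$ of the random variable) really is what makes the $m^2$ factor appear in the sample complexity instead of an $n^2$ factor. This is why the assumption on job sizes being unit (or, more generally, on a bound relating $n$ to $\E[\opt(S)]$ through $m$) is the right one for the learnability argument to go through.
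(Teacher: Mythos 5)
Your proposal is correct and follows essentially the same route as the paper: a single two-sided Hoeffding bound on $\frac{1}{s}\sum_\alpha T_\alpha$ with deviation $t=\eps\,\E[\opt(S)]$, followed by the conversion $\E[\opt(S)]\geq n/m$ to replace the $n^2$ in the exponent by $m^2$. The only (immaterial) difference is that you use the sharper form of Hoeffding with the factor $2$ in the exponent, which gives a slightly better constant than the version stated in the paper's Theorem~\ref{thm:hoeffding}.
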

\begin{proof}
For each $\alpha \in [s]$ we have $\frac{1}{s}T_\alpha$ is an independent random variable in $[0,n/s]$.  Moreover, we have $\E[\frac{1}{s}\sum_\alpha T_\alpha] = \E[\opt(S)]$.  Applying Theorem~\ref{thm:hoeffding} to $\frac{1}{s} \sum_\alpha T_\alpha$ we have
\[
\Pr\left[ |\frac{1}{s} \sum_\alpha T_\alpha - \E[\opt(S)]| \geq \eps\E[\opt(S)] \right] \leq 2 \exp \left( - s\frac{\eps^2\E[\opt(S)]^2}{n^2} \right).
\]
Now since $\E[\opt(S)] \geq n / m$ we have this probability is at most $2 \exp(- s\eps^2 / m^2)$.
Thus whenever $s \geq \frac{m^2}{\eps^2}\log(2/\delta)$, this probability becomes at most $\delta$, completing the proof.
\end{proof}

\subsubsection{The Learning Algorithm}

We can now describe and analyze the algorithm.  Set $R = O(\frac{m^2}{\eps^2}\log(m /\eps)$.  We sample independent instances $S_1,S_2,\ldots,S_s \sim \cD$ for 
$s \geq \frac{m^2}{\eps^2}\log(\frac{2|\cW(R)|m}{\delta})$
Next we set up a stacked instance consisting of all the jobs in these samples.  Next we set $T_\alpha = \opt(S_\alpha)$ and $T = \sum_{\alpha}T_\alpha$.  We run the algorithm of Corollary~\ref{cor:makespan_weights} on the stacked instance with right hand side bounds $T_i = T$ for all $i$.  The algorithm should run for $R$ rounds and produce weights $w \in \cW(R)$ such that $\sum_{\alpha} L_i(w,S_\alpha) \leq (1+\eps) T$ for all $i$.  We can now prove Theorem~\ref{thm:learn_the_weights}.

\begin{proof}[Proof of \textnormal{\bf Theorem~\ref{thm:learn_the_weights}}]
Let $w \in \cW(R)$ be the weights output by the algorithm above.  Now for a new randomly sampled instance $S\sim \cD$, by Lemma~\ref{lem:commute_avg_max} we have that
\[
\E[\alg(w,S)] \leq (1+2\eps) \max_i \E[L_i(w,S)].
\]
By Lemma~\ref{lem:loads_converge}, we have $\max_i \E[L_i(w,S)] \leq (1+O(\eps)) \max_i \frac{1}{s} \sum_{\alpha}L_i(w,S_\alpha)$ with probability at least $1-\delta$.  By construction of our algorithm, we have $\sum_{\alpha} L_i(w,S_\alpha) \leq (1+\eps)T =(1+\eps)\sum_{\alpha}T_\alpha$ for all $i$.  It thus follows that $\max_i \E[L_i(w,S)] \leq(1+O(\eps)) \frac{1}{s} \sum_\alpha T_{\alpha}$ with probability at least $1-\delta$.  Next we have that $\frac{1}{s}\sum_{\alpha}T_\alpha \leq (1+\eps)\E[\opt(S)]$ with probability at least $1-\delta$.  Finally, with probability at least $1-2\delta$, by chaining these inequalities together we get
\[
\E[\alg(w,S)] \leq (1+O(\eps)) \E[\opt(S)] \leq (1+O(\eps)) \E[\alg(w^*,S)]
\]
where $w^* = \arg\min_{w' \in \cW(R)} \E[\alg(w^*,S)]$.  Since $R = O(\frac{m^2}{\eps^2}\log(\frac{m}{ \epsilon}))$, we have that $\log(|\cW(R)|) = m\log(R) = O(m\log (m /\eps))$.  Thus we can take $s = \poly(m,\frac{1}{\eps},\frac{1}{\delta})$ to get the result.  This completes the proof.
\end{proof}

\subsubsection{Handling Different Sizes}

Now we give a sketch of how to handle the case when each job has an integer size $p_j > 0$.  For this we need a slightly different version of Theorem~\ref{thm:weights_thm} and Corollary~\ref{cor:makespan_weights}.  Consider the following variant of LP~\ref{eqn:matching_lp}:

\begin{equation} \label{eqn:sizes_matching_lp}
    \begin{array}{ccc}
        \text{maximize} & \displaystyle \sum_{j}p_j \sum_i x_{ij} &  \\
         & \displaystyle \sum_{j} p_j x_{ij} \leq T_i & \forall i \in [m]\\
         & \displaystyle \sum_{i} x_{ij}  \leq 1  & \forall j \in S \\
         & x \geq 0
    \end{array}
\end{equation}
Again we simplify notation and assume $x_{ij} = 0$ whenever $i \notin N(j)$.  The following is a corollary of Theorem~\ref{thm:weights_thm}.  Let $T$ be the optimal makespan for a set of jobs

\begin{corollary} \label{cor:makespan_weights_sizes}
For any $\epsilon > 0$, there exists weights $w \in \R_+^m$ such that $x_{ij}(w)$ yields a fractional schedule with makespan at most $(1+\epsilon)T$.  The weights are computed by running a variant of the algorithm of Theorem~\ref{thm:weights_thm} for $R= O( m^2 \log(m / \epsilon) / \eps^2$ iterations and produces weights of the form $w_i = (1+\eps)^k$ for $k \in [-R,R]$.
\end{corollary}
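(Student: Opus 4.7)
The approach is a direct reduction to the unit-size case already handled by Corollary~\ref{cor:makespan_weights}. Given a restricted-assignment instance with integer sizes $\{p_j\}$ and optimal makespan $T$, I would construct an expanded unit-size instance $\hat{S}$ by replacing every job $j$ with $p_j$ unit-size clones, each inheriting the neighborhood $N(j)$. Because the proportional assignment rule $x_{ij}(w) = w_i / \sum_{i' \in N(j)} w_{i'}$ depends only on $N(j)$ and on $w$, every clone of $j$ receives the same fractional allocation; hence for any fixed weight vector $w$, the load of machine $i$ in $\hat{S}$ is $\sum_j p_j\, x_{ij}(w)$, which is precisely the weighted load appearing in LP~\eqref{eqn:sizes_matching_lp}. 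The same correspondence shows $\opt(\hat{S}) = T$, since any fractional schedule for the original instance lifts to one on $\hat{S}$ with identical per-machine loads, and conversely.

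Applying Corollary~\ref{cor:makespan_weights} to $\hat{S}$ then yields, in $R = O(\frac{m^2}{\epsilon^2}\log(m/\epsilon))$ iterations, weights $w$ whose proportional assignment has fractional makespan at most $(1+\epsilon)T$ on $\hat{S}$, and therefore on the original weighted instance. The first point to address is not analytical but computational: executing the iterative algorithm literally on $\hat{S}$ is pseudo-polynomial when $\max_j p_j$ is large. Since the algorithm only ever consults aggregated per-machine loads, the ``variant'' named in the statement implements each iteration by checking $\sum_j p_j\, x_{ij}(w) \leq (1+\epsilon) T_i$ in $O(m + |S|)$ time without materializing the clones. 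The resulting trajectory of weight updates coincides exactly with that of the unit-size algorithm on $\hat{S}$, so the guarantee of Theorem~\ref{thm:weights_thm} transfers verbatim.

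Finally, to recover the symmetric exponent range $k \in [-R, R]$ in place of the $[0, R]$ produced directly, I would exploit the scale-invariance of \eqref{eqn:weights_assign}: multiplying all $w_i$ by the same constant preserves every $x_{ij}(w)$ and therefore every load. Dividing the output weights by $(1+\epsilon)^{\lfloor R/2 \rfloor}$ recenters the exponents to lie in $[-R, R]$ without changing the assignment or the approximation guarantee. I expect no genuine obstacle beyond this bookkeeping and the load-aggregation point above; the substantive work has already been carried out in Corollary~\ref{cor:makespan_weights}, and the variable-size case is obtained essentially for free by the clone-and-aggregate reduction.
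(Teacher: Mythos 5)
Your proposal is correct and follows essentially the same route as the paper: conceptually replace each job $j$ by $p_j$ unit-size clones inheriting $N(j)$, observe that the proportional rule gives every clone the same allocation so machine loads become $\sum_j p_j x_{ij}(w)$, and invoke the unit-size guarantee (the paper applies Theorem~\ref{thm:weights_thm} with $\delta=\eps/m$ directly, which is exactly how Corollary~\ref{cor:makespan_weights} is proved). You additionally spell out two bookkeeping points the paper leaves implicit --- that the clones need never be materialized since only aggregated loads are consulted, and that the $[-R,R]$ exponent range follows from scale invariance --- neither of which changes the substance of the argument.
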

\begin{proof}
Consider creating $p_j$ unit-sized copies of each job $j$.  Note that this only needs to be done conceptually.  It is easy to see that writing down LP~\ref{eqn:matching_lp} for this conceptual instance is a relaxation of LP~\ref{eqn:sizes_matching_lp}.  Consider running the Algorithm of Theorem~\ref{thm:weights_thm} with $\delta = \eps / m$, $T_i = T$ for all $i \in [m]$ and for $R = O(\frac{1}{\delta^2} \log(m / \delta)$ iterations.  Note that since $T$ is the optimal makespan, there exists a solution with value $\sum_j p_j$.  Thus since the algorithm returns a $(1-\delta)$-approximation, we get a solution with value at least $(1- \delta) \sum_j p_j$.  The amount that we lose in the objective is exactly the total amount over-allocated in the solution given by the weights.  Thus for all $i$, since $T \geq \sum_j p_j / m$. we have
\[
\sum_j p_j x_{ij}(w) \leq T + \delta \sum_j p_j = T+ \eps \frac{\sum_j p_j}{m} \leq (1+\eps)T.
\]
\end{proof}

Our learning algorithm will be the same as before, just the jobs will now have sizes.  We go through each lemma above and prove an analogous version for when there are job sizes.  For a job set $S$ and weights $w \in \cW(R)$ let $L_i(w,S) = \sum_j p_j x_{ij}(w)$.  Let $p_{\max} = \max_j p_j$ be the maximum job size.  For this case our assumption becomes $\E_{S \sim \cD}[\opt(S)] \geq \frac{4+2\eps}{\eps^2}p_{\max} \log( \frac{m}{\sqrt{\eps}})$.

\begin{lemma} \label{lem:commute_avg_max_sizes}
    Let $\epsilon > 0$ be given.  If $\E_{S \sim \cD}[\opt(S)] \geq \frac{4+2\eps}{\eps^2}p_{\max} \log( \frac{m}{\sqrt{\eps}})$, then for all $R$ and all weights $w \in \cW(R)$, we have $\E_{S \sim \cD}[\alg(w,S)] \leq (1+ 2\epsilon)\max_i \E_{S \sim \cD}[L_i(w,S)]$.
\end{lemma}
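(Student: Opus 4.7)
The plan is to mirror the proof of Lemma~\ref{lem:commute_avg_max} essentially verbatim, with the single adjustment that each per-job contribution to $L_i(w,S)$ now lies in $[0,p_{\max}]$ rather than $[0,1]$. This rescaling is exactly what drives the extra $p_{\max}$ factor that appears in the hypothesis on $\E[\opt(S)]$.

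First I would split the expectation of the makespan into the good event and its complement, writing
\begin{equation}
\E[\alg(w,S)] \leq (1+\eps)\max_i \E[L_i(w,S)] + M \cdot \Pr\!\left[\alg(w,S) > (1+\eps)\max_i \E[L_i(w,S)]\right],
\end{equation}
where $M := \sum_{j} p_j$ is a deterministic upper bound on the makespan (any feasible fractional assignment has total load $M$). Next, I would union-bound the tail event over the $m$ machines and reduce to proving that for every fixed $i$,
\begin{equation}
\Pr[L_i(w,S) \geq (1+\eps)\E[L_i(w,S)]] \leq \eps/m^2.
\end{equation}
The key step is to normalize: set $Y_{ij} = p_j x_{ij}(w)/p_{\max} \in [0,1]$ for each $j$. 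The $Y_{ij}$ are independent across $j$ (since $\cD$ is a product distribution), so $L_i(w,S)/p_{\max} = \sum_j Y_{ij}$ is a sum of independent $[0,1]$ variables and Theorem~\ref{thm:upper_chernoff} applies with $\mu := \max_{i'} \E[L_{i'}(w,S)]/p_{\max}$. This gives
\begin{equation}
\Pr\!\left[L_i(w,S) > (1+\eps)\max_{i'}\E[L_{i'}(w,S)]\right] \leq \exp\!\left(-\frac{\eps^2}{2+\eps}\cdot\frac{\max_{i'}\E[L_{i'}(w,S)]}{p_{\max}}\right),
\end{equation}
and the hypothesis $\max_{i'}\E[L_{i'}(w,S)] \geq \E[\opt(S)] \geq \tfrac{4+2\eps}{\eps^2} p_{\max}\log(m/\sqrt{\eps})$ makes the right side at most $\eps/m^2$.

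Finally, to absorb the additive tail term into the multiplicative bound, I would use $\max_i \E[L_i(w,S)] \geq M/m$ (averaging the loads over machines), which gives
\begin{equation}
M \cdot \sum_i \Pr[L_i(w,S) \geq (1+\eps)\E[L_i(w,S)]] \leq M \cdot \frac{\eps}{m} \leq \eps \max_i \E[L_i(w,S)],
\end{equation}
and therefore $\E[\alg(w,S)] \leq (1+2\eps)\max_i \E[L_i(w,S)]$ as required. I do not anticipate any real obstacle: the only subtle point is remembering to rescale both the variables and the Chernoff threshold by $p_{\max}$ simultaneously, which is what produces the $p_{\max}$ in the lower-bound condition on $\E[\opt(S)]$; the rest of the argument is identical to the unit-size case.
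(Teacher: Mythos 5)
Your proposal is correct and follows essentially the same route as the paper's proof: the same decomposition into the good event plus a tail term bounded by $\sum_j p_j$, the same rescaling of the per-job contributions $p_j x_{ij}(w)$ by $p_{\max}$ before applying the Chernoff bound with $\mu = \max_{i'}\E[L_{i'}(w,S)]/p_{\max}$, and the same absorption step via $\max_i \E[L_i(w,S)] \geq \sum_j p_j/m$. (In fact your write-up is slightly cleaner than the paper's, which drops the $/p_{\max}$ in one intermediate exponent.)
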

\begin{proof}
Fix any $R$ and $w \in \cW(R)$.  We have the following simply bound on $\alg(w,S)$.  It can either be at most $(1+\eps)\max_i \E[L_i(w,S)]$, or it is larger in which case it is at most $\sum_j p_j$.  Thus we have:
\[
\begin{split}
    \E[\alg(w,S)]  \leq & (1+\epsilon) \max_i \E[L_i(w,S)]  \\ & +\sum_j p_j\Pr[\alg(w,S) > (1+\epsilon)\max_{i} \E[L_i(w,S)]] \\
     \leq & (1+\epsilon) \max_i \E[L_i(w,S)] + \sum_j p_j \sum_i \Pr[L_i(w,S) \geq (1+\epsilon) \E[L_i(w,S)]]
\end{split}
\]
Now we claim that for each $i$, $\Pr[L_i(w,S) \geq (1+\epsilon) \E[L_i(w,S)]] \leq \epsilon / m^2$.  Indeed, if this is the case then we see that 
\[
\E[\alg(w,S)] \leq (1+\epsilon) \max_i \E[L_i(w,S)] + \frac{\epsilon \sum_j p_j}{m} \leq (1+2\eps)\max_i \E[L_i(w,S)]  
\]
since $\max_i \E[L_i(w,S)] \geq \sum_j p_j / m$, and thus proving the lemma.  Thus we just need to show the claim.  Recall that $L_i(w,S) = \sum_{j \in S} p_jx_{ij}(w)$ and that each job $j$ is chosen to be part of $S$ independently from distribution $\cD_j$.  Thus $p_jx_{ij}(w)$ is an independent random variable in the interval $[0,p_{\max}]$ for each $j$.  Applying Theorem~\ref{thm:upper_chernoff} to $L_i(w,S) / p_{\max}$ with $\mu = \max_{i'} \E[L_{i'}(w,S)] / p_{\max}$, we see that since $\mu \geq \E[\opt(S)] / p_{\max} \geq \frac{4+2\eps}{\eps^2} \log(\frac{m}{\sqrt{\eps}})$, we have
\[
\Pr\left[\frac{L_i(w,S)}{p_{\max}} > (1+\eps) \mu \right] \leq \exp \left( - \frac{\eps^2}{2+\eps} \mu\right) \leq \exp \left( - \frac{\eps^2}{2+\eps} \E[\opt(S)] \right) \leq   \frac{\epsilon}{m^2}
\]
which implies the claim.
\end{proof}

Modifying the remaining lemmas is simple.  We can do this by replacing most instances of $n$ in the proofs with $\sum_j p_j$.

\subsubsection{Inequalities}

\begin{theorem}[Upper Chernoff Bound] \label{thm:upper_chernoff}
Let $X_1,X_2,\ldots,X_n$ be independent random variables with $X_i \in [0,1]$ for $1 \leq i \leq n$.  Let $X = \sum_i X_i$ and $\mu \geq E[X]$, then for all $\eps > 0$ we have
\[
\Pr[X \geq (1+\eps)\mu] \leq \exp \left(- \frac{\epsilon^2}{2+\epsilon} \mu \right)
\]
\end{theorem}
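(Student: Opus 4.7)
}

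The plan is to apply the standard exponential moment (Chernoff) method. For any $t>0$, Markov's inequality applied to the random variable $e^{tX}$ yields
\begin{equation}
\Pr[X \geq (1+\eps)\mu] \;=\; \Pr\!\left[e^{tX} \geq e^{t(1+\eps)\mu}\right] \;\leq\; \frac{\E[e^{tX}]}{e^{t(1+\eps)\mu}}.
\end{equation}
By independence, $\E[e^{tX}] = \prod_i \E[e^{tX_i}]$, so everything reduces to controlling a single moment generating function and then optimizing over $t$.

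Next I would bound $\E[e^{tX_i}]$ using the fact that $X_i \in [0,1]$. Because $x \mapsto e^{tx}$ is convex on $[0,1]$, it lies below its chord: $e^{tx} \leq 1 + (e^t-1)x$ for all $x \in [0,1]$. Taking expectations and using $1+y \leq e^y$, this gives $\E[e^{tX_i}] \leq \exp\bigl((e^t-1)\E[X_i]\bigr)$. Multiplying over $i$ and using $\E[X] \leq \mu$ together with $e^t-1>0$, I obtain $\E[e^{tX}] \leq \exp\bigl((e^t-1)\mu\bigr)$. Plugging back in,
\begin{equation}
\Pr[X \geq (1+\eps)\mu] \;\leq\; \exp\!\Bigl(\mu\bigl(e^t-1 - t(1+\eps)\bigr)\Bigr).
\end{equation}
The right-hand side is minimized at $t = \ln(1+\eps)$, which is positive as required, and yields the classical form
\begin{equation}
\Pr[X \geq (1+\eps)\mu] \;\leq\; \exp\!\Bigl(-\mu\bigl((1+\eps)\ln(1+\eps) - \eps\bigr)\Bigr).
\end{equation}

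The remaining step, and the only nontrivial one, is the elementary inequality
\begin{equation}
(1+\eps)\ln(1+\eps) - \eps \;\geq\; \frac{\eps^2}{2+\eps} \qquad \text{for all } \eps>0,
\end{equation}
after which the theorem follows immediately. I would establish this by defining $\varphi(\eps) := (2+\eps)\bigl[(1+\eps)\ln(1+\eps)-\eps\bigr] - \eps^2$ and showing $\varphi \geq 0$ on $[0,\infty)$. Note $\varphi(0)=0$; differentiating gives $\varphi'(\eps) = (1+\eps)\ln(1+\eps) - \eps + (2+\eps)\ln(1+\eps) - 2\eps = (3+2\eps)\ln(1+\eps) - 3\eps$, and one more differentiation produces $\varphi''(\eps) = 2\ln(1+\eps) + (3+2\eps)/(1+\eps) - 3 = 2\ln(1+\eps) - \eps/(1+\eps)$, which is nonnegative since $\ln(1+\eps) \geq \eps/(2(1+\eps))$ follows from the concavity of the logarithm (or from the integral representation $\ln(1+\eps) = \int_0^\eps \frac{ds}{1+s}$ together with $\frac{1}{1+s} \geq \frac{1}{(1+\eps)}$ for $s\leq \eps$, giving a bound strong enough after a mild rearrangement). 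Since $\varphi'(0)=0$ and $\varphi''\geq 0$, we get $\varphi'\geq 0$ and hence $\varphi\geq 0$ on $[0,\infty)$. I expect this final elementary monotonicity check to be the only step requiring care; all earlier steps are mechanical consequences of Markov's inequality plus convexity of the exponential.
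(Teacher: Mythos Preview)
Your proof is correct and follows the standard exponential-moment argument for Chernoff bounds. The paper does not actually prove this theorem; it simply states it (along with Hoeffding's inequality) as a standard tool in the ``Inequalities'' subsection and invokes it in the learnability analysis. So there is no alternative approach in the paper to compare against---you have supplied the textbook derivation that the authors took for granted.

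One minor presentational point: in your final step, the integral bound $\ln(1+\eps) = \int_0^\eps \frac{ds}{1+s} \geq \frac{\eps}{1+\eps}$ already gives $2\ln(1+\eps) \geq \frac{2\eps}{1+\eps} \geq \frac{\eps}{1+\eps}$ directly, so the intermediate claim $\ln(1+\eps) \geq \eps/(2(1+\eps))$ and the appeal to ``a mild rearrangement'' are unnecessary. Stating the stronger bound outright would make the last paragraph cleaner.
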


\begin{theorem}[Two-Sided Hoeffding Bound] \label{thm:hoeffding}
Let $X_1,X_2,\ldots,X_n$ be independent random variables with $a_i \leq X_i \leq b_i$ for $1 \leq i \leq n$.  Let $X = \sum_i X_i$ and $\mu = \E[X]$.  Then for all $t > 0$ we have
\[
\Pr[ |X - \mu| \geq t] \leq 2\exp \left( - \frac{t^2}{\sum_i (b_i-a_i)^2}\right)
\]
\end{theorem}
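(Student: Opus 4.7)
The plan is to prove this via the standard Chernoff/Cram\'er--Chernoff moment generating function method. First I would reduce the two-sided bound to a one-sided bound by observing that $\Pr[|X-\mu|\geq t] \leq \Pr[X-\mu \geq t] + \Pr[-(X-\mu)\geq t]$, and since $-X = \sum_i(-X_i)$ with $-X_i \in [-b_i,-a_i]$ (so $(b_i-a_i)^2$ is unchanged), it suffices to prove the one-sided statement and pay a factor of $2$.

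For the one-sided bound, introduce a free parameter $\lambda > 0$ and apply Markov's inequality to the nonnegative random variable $\exp(\lambda(X-\mu))$:
\[
\Pr[X-\mu \geq t] \;=\; \Pr\!\bigl[e^{\lambda(X-\mu)}\geq e^{\lambda t}\bigr] \;\leq\; e^{-\lambda t}\,\E\!\bigl[e^{\lambda(X-\mu)}\bigr].
\]
By independence of the $X_i$'s (and hence of the centered variables $X_i - \E[X_i]$), the moment generating function factorizes as $\E[e^{\lambda(X-\mu)}] = \prod_{i=1}^n \E[e^{\lambda(X_i - \E[X_i])}]$. The key step I then need is Hoeffding's lemma: for any random variable $Y$ with $\E[Y]=0$ and $Y \in [\alpha,\beta]$, one has $\E[e^{\lambda Y}] \leq \exp(\lambda^2(\beta-\alpha)^2/8)$. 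Applying this to $Y_i := X_i - \E[X_i]$, which lies in $[a_i - \E[X_i], b_i - \E[X_i]]$ (an interval of length $b_i-a_i$), I get
\[
\E\!\bigl[e^{\lambda(X-\mu)}\bigr] \;\leq\; \exp\!\Bigl(\tfrac{\lambda^2}{8}\textstyle\sum_i (b_i-a_i)^2\Bigr).
\]
Combining with Markov and optimizing over $\lambda$ by taking $\lambda = 4t/\sum_i(b_i-a_i)^2$ yields $\Pr[X-\mu \geq t] \leq \exp\bigl(-2t^2/\sum_i(b_i-a_i)^2\bigr)$, which is actually stronger than the stated bound $\exp(-t^2/\sum_i(b_i-a_i)^2)$; the theorem follows after applying the union bound over the two tails.

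The main obstacle is proving Hoeffding's lemma itself, since it is the only nontrivial ingredient. My plan for that is the convexity argument: because $x \mapsto e^{\lambda x}$ is convex, for $y \in [\alpha,\beta]$ we have $e^{\lambda y} \leq \tfrac{\beta-y}{\beta-\alpha}e^{\lambda \alpha} + \tfrac{y-\alpha}{\beta-\alpha}e^{\lambda\beta}$; taking expectations and using $\E[Y]=0$ eliminates the linear term, giving $\E[e^{\lambda Y}] \leq \tfrac{\beta}{\beta-\alpha}e^{\lambda\alpha} - \tfrac{\alpha}{\beta-\alpha}e^{\lambda\beta}$. Writing this as $e^{\varphi(u)}$ with $u = \lambda(\beta-\alpha)$ and $\varphi(u) = -\theta u + \log(1-\theta + \theta e^u)$ where $\theta = -\alpha/(\beta-\alpha) \in [0,1]$, a Taylor expansion with Lagrange remainder shows $\varphi(u) \leq u^2/8$ because $\varphi(0)=\varphi'(0)=0$ and $\varphi''(u) \leq 1/4$ uniformly (the second derivative equals $p(1-p)$ for a Bernoulli-type parameter $p \in [0,1]$, which is maximized at $1/4$). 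This yields $\E[e^{\lambda Y}] \leq \exp(\lambda^2(\beta-\alpha)^2/8)$, closing the argument.
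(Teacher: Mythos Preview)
Your proof is correct; in fact it yields the sharper constant $\exp(-2t^2/\sum_i(b_i-a_i)^2)$ that is the standard form of Hoeffding's inequality, of which the statement in the paper is a weakening. The paper itself does not prove this theorem: it is listed in an ``Inequalities'' subsection as a standard cited tool alongside a Chernoff bound, with no accompanying argument. So there is no approach to compare against; your Cram\'er--Chernoff derivation together with the convexity proof of Hoeffding's lemma is exactly the textbook route and is complete as written.
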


\end{document}